\documentclass{article}

\usepackage{style/iclr2027/iclr2027_conference,times}
\usepackage{amsmath,amssymb,amsthm,mathtools,bm}
\usepackage{booktabs,array,multirow}
\usepackage{graphicx}
\usepackage{placeins}
\usepackage{float}
\usepackage{xcolor}
\usepackage{microtype}
\usepackage{hyperref}
\usepackage{url}

\newif\ificlranonymous
\iclranonymousfalse
\def\pdfauthorname{Xiaoyang Li; Runni Zhou; Xinghao Yan}
\newcommand{\publicauthorblock}{Xiaoyang Li\thanks{Equal contribution. Xiaoyang Li and Runni Zhou are co-first authors.}\quad Runni Zhou\footnotemark[1]\quad \& Xinghao Yan\thanks{ORCID: Xiaoyang Li, \href{https://orcid.org/0009-0002-4863-5761}{0009-0002-4863-5761}; Runni Zhou, \href{https://orcid.org/0009-0006-8365-4852}{0009-0006-8365-4852}; Xinghao Yan, \href{https://orcid.org/0009-0002-4437-957X}{0009-0002-4437-957X}.}\\
College of Medicine and Biological Information Engineering\\
Northeastern University, Shenyang, China\\
\texttt{\{20246389,20246378\}@stu.neu.edu.cn}\\
\texttt{yanxh@mails.neu.edu.cn}}

\definecolor{linkblue}{HTML}{174A7E}
\hypersetup{colorlinks=true,linkcolor=linkblue,citecolor=linkblue,urlcolor=linkblue,
  pdftitle={Branch Geometry and Finite-Radius Sensitivity of Hard-ReLU Training},
  pdfauthor={\pdfauthorname},pdfsubject={Finite-radius objective responses and discrete branch geometry}}

\newtheorem{theorem}{Theorem}
\newtheorem{proposition}[theorem]{Proposition}
\newtheorem{lemma}[theorem]{Lemma}
\newtheorem{corollary}[theorem]{Corollary}

\theoremstyle{definition}

\newtheorem{remark}[theorem]{Remark}

\newcommand{\AD}{\mathrm{AD}}
\newcommand{\SC}{\mathrm{SC}}
\newcommand{\reg}{\mathrm{reg}}

\newcommand{\tr}{\operatorname{tr}}
\newcommand{\dd}{\mathrm{d}}

\title{Branch Geometry and Finite-Radius\\
Sensitivity of Hard-ReLU Training}
\ificlranonymous
  \author{Anonymous authors}
\else
  \author{\publicauthorblock}
  \iclrfinalcopy
\fi

\begin{document}
\maketitle
\ificlranonymous\else
  \fancyhead{}
  \lhead{Preprint}
\fi

\begin{abstract}
Outer-learning algorithms use infinitesimal sensitivities to propose finite
changes to initialization or training parameters. For hard-ReLU training,
the derivative of the finite program and the derivative of its flow limit
do not by themselves specify the response at a chosen radius.
We characterize the intervening regime in which the perturbation radius
is proportional to the GD step. Integer event rounding then survives at
leading order: smooth Euler bias shifts each discrete phase, and upstream
rounding moves downstream branch boundaries. We derive the crossing indices
and a uniform endpoint expansion for finitely many separated transverse
events in piecewise-$C^2$ dynamics, away from recursive phase boundaries.
In contractive affine regions, an explicit remainder and complete branch
verification certify finite candidate comparisons. Scalar phase frequencies
and a coupled feedback ablation test the mechanism; frozen nonlinear-network
experiments show radius-dependent prediction accuracy, including incomplete
branch matches and failed-word tails. Together with local AD and uniform
flow consistency, the result identifies sufficient response regimes:
differentiating training is a choice of perturbation resolution as well as
a choice of derivative.

\end{abstract}

\section{Introduction}
An outer-learning algorithm turns a sensitivity into a finite proposal:
change an initialization, reweight training examples, or adjust a continuous
hyperparameter, then train again. Differentiation through finite optimization
is used in hyperparameter learning, bilevel optimization and meta-learning
\citep{maclaurin2015reversible,franceschi2018bilevel,finn2017maml,ren2018reweight}.
The operational question is therefore: \emph{given a proposal of radius
$\epsilon$, which sensitivity predicts its objective after finite training?}
We address this question for fixed-step hard-ReLU GD, holding the task,
direction and training horizon fixed.

At a nonresonant input, AD is the exact derivative of the executed training
program. This does not supply an accuracy guarantee for a specified finite
proposal. The program is smooth inside an activation cell but can jump when
the perturbation changes a discrete crossing index. Refining the GD mesh
changes those cells; near resonance their radius can be much smaller than
the step size $\eta$. The general distinction between an exact hypergradient
and an informative finite-update prediction has precedent
\citep[Sec.~4]{maclaurin2015reversible}. Here it becomes a quantitative
question about the geometry of the refining program.

Continuous training-flow sensitivity shifts event times continuously.
Finite GD changes them in integer steps. At $\epsilon=\zeta\eta$ with fixed
$\zeta>0$, a one-step change contributes at the same order as the proposed
perturbation. Smooth Euler error also enters at this order: it changes the
incoming state and hence which integer is selected. With coupled events,
the first rounding decision shifts the next discrete boundary. The response
is then generally phase-dependent and need not equal either infinitesimal
prediction. Figure~\ref{fig:atomic-overview} shows this resolution
between a branch tangent and a continuously shifted event time.

\paragraph{What is classical, and what is added.}
Saltation matrices, event-time sensitivity, finite perturbation propagation
and its downstream event changes are classical
\citep{kong2024saltation,wardi2017pa,cassandras2010ipa}.
So are discretized sensitivity failure and event-aware recovery schemes
\citep{stewart2010optimal,nurkanovic2024finite}.
Our result concerns the joint limit of the changing Euler program and a
perturbation $\eta z$. Theorem~\ref{thm:coupled-response} derives its actual
integer crossings, including smooth Euler bias and ordered upstream
rounding, and proves an endpoint expansion uniform away from the recursively
computed boundaries. A separate affine bound controls a finite comparison.
The distinction is the quantified grid-scale result, not event propagation
itself or the interpretation of an ODE as training.

\paragraph{Three contributions.}
\begin{enumerate}
\item \textbf{Finite-radius response theory.}
For a fixed finite number of separated transverse events,
Theorem~\ref{thm:coupled-response} derives a recursive phase-resolved
response for piecewise-$C^2$ dynamics at $\epsilon=\Theta(\eta)$.
Its uniform expansion supplies the critical part of a radius-resolution
principle connecting branch-preserving AD and flow sensitivity.
\item \textbf{Finite comparison control.}
Theorem~\ref{thm:contractive-main} gives explicit remainder bounds in
contractive affine regions. Complete word verification and a strict error
margin certify two-candidate comparisons; failures remain unresolved.
\item \textbf{Controlled learning evidence.}
Scalar phase frequencies and a coupled feedback ablation test the mechanism.
All frozen width-4 seeds test radius-dependent prediction in nonlinear
training, with branch-conditional errors, failed words and negative
optimization outcomes retained.
\end{enumerate}
The main theorem varies initialization; continuous outer parameters have
separate supporting formulas. Discrete architecture changes are outside
that theorem. The learning claim is a response-resolution law under stated
conditions, not a prevalence estimate or an optimizer improvement guarantee.

\begin{figure}[t]
\centering
\includegraphics[width=\textwidth]{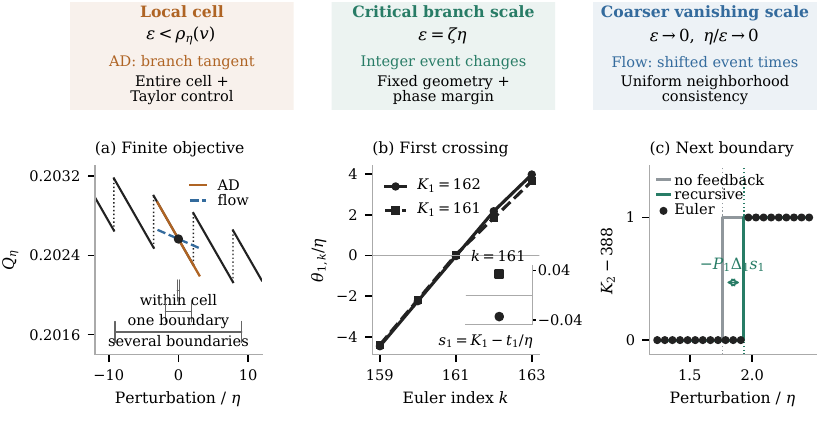}
\caption{\textbf{One objective, three perturbation resolutions.}
Top: sufficient regimes, with their separate hypotheses; these are not a
universal partition of an $\epsilon/\eta$ axis.
(A) Exact scalar finite-GD branches at $N=1000$, with the same base point
and intervals crossing zero, one or several boundaries. Tangents use that
base objective; the flow slope differentiates the continuous objective.
(B) Two nearby inputs cross the first surface at different integer indices.
(C) The transported rounding contribution moves the second discrete
boundary; full recursion, ablation and independently rerun Euler indices
are compared. Windows in (B)--(C) are fixed by geometry, without using the
objective. Curves and points are computed; colored cards and arrows explain
which mechanism each resolution retains.}
\label{fig:atomic-overview}
\end{figure}

\section{The finite objective and its sensitivity objects}
\label{sec:setup}
Write $\eta=T/N$ and
$\theta_{k+1}=\theta_k+\eta f_{\alpha_k}(\theta_k)$, with regional
field $f_\alpha=-\nabla L_\alpha$ and endpoint
$\Theta_{\eta,T}(\theta_0)=\theta_N$. A hard-ReLU activation word
fixes smooth regional formulas. At a nonresonant point every executed
preactivation has a strict sign, so
\begin{equation}
 J_{k+1}=(I+\eta Df_{\alpha_k}(\theta_k))J_k,\qquad J_0=I,
 \qquad J_{\eta,T}^{\AD}=D\Theta_{\eta,T}.
 \label{eq:branchwise-ad}
\end{equation}
At an exact landing the selected AD product may exist even when the
classical finite-program derivative does not.

The foundational limit assumes bounded $C^1$ regional extensions,
regular $C^2$ interfaces, finitely many separated events, isolating
tubes and positive non-event margins. Each interface is locally oriented
from incoming to outgoing, with both one-sided normal speeds bounded
below by $\nu>0$. At a discrete exact hit, the state convention selects
an adjacent field with that positive speed. Appendix~\ref{app:assumptions}
states the full geometric conditions.

\begin{theorem}[Regional limit of finite-program derivatives]
\label{thm:main}
For this fixed itinerary, the GD interpolation has uniform state error
$O(\eta)$, and the selected derivative products converge to the regional
limit $J_{\reg}$. Writing $\Phi_i$ for ordinary variational propagation
between events,
\begin{equation}
 J_{\reg}=\Phi_m\cdots\Phi_0,\qquad
 D\varphi_T=\Phi_m\Xi_m\cdots\Xi_1\Phi_0,
 \quad \Xi_j=I+\frac{\Delta_j n_j^\top}{n_j^\top f_j^-}.
 \label{eq:main-flow-derivative}
\end{equation}
Here $\Delta_j=f_j^+-f_j^-$ and $n_j$ is the oriented normal.
On nonresonant meshes the selected product is the exact finite-program
derivative. Its regional-limit error is $o(1)$, and $O(\eta)$ when
the regional Jacobians are locally Lipschitz.
\end{theorem}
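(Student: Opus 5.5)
The proof splits into three parts: state convergence, convergence of the selected derivative products, and the classical saltation formula for $D\varphi_T$.

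\textbf{State convergence.} I treat the itinerary one smooth segment at a time. On the segment between events $j$ and $j+1$, both the GD iterates and the flow lie in an isolating tube. There the active field is the bounded $C^1$ regional extension $f_{\alpha}$, so the standard Euler global-error argument applies. The local truncation error is $O(\eta^2)$, and a discrete Gr\"onwall inequality with the Lipschitz constant of $f_\alpha$ propagates it to an $O(\eta)$ error over the segment.

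At an event, the normal-speed bound $\nu$ and transversality let me compare the discrete crossing index with the continuous hitting time. Since the interface is $C^2$ and the speed is at least $\nu$, an implicit-function argument gives hitting-time error $O(\eta + \text{incoming state error})/\nu$. The single straddling step uses the wrong field for at most one step, which costs $O(\eta)$. The exact-hit convention selects the outgoing field with positive speed, so this case needs no separate treatment. The positive non-event margins rule out spurious crossings. Since the number of events $m$ is finite, the errors add up to a uniform $O(\eta)$ bound on the piecewise-linear interpolation.

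\textbf{Derivative products.} The selected product in \eqref{eq:branchwise-ad} only ever multiplies regional factors $I+\eta Df_{\alpha_k}(\theta_k)$. At an event no jump factor is inserted, since the index is held fixed inside the AD cell. I group the factors by segment. On segment $i$ the product is the Euler discretization of the variational equation $\dot\Phi = Df_{\alpha}(\varphi)\Phi$, evaluated along GD states that are $O(\eta)$-close to the flow, with segment endpoints that differ from the flow event times by $O(\eta)$.

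I compare this product with $\Phi_i$ as follows. Continuity of $Df_\alpha$ together with state convergence gives an $o(1)$ error. If $Df_\alpha$ is locally Lipschitz, the coefficient perturbation is $O(\eta)$ and, by the same Gr\"onwall argument, the total error is $O(\eta)$. The endpoint mismatch of $O(\eta)$ time contributes a boundedly many bounded factors, hence $O(\eta)$. Multiplying the finitely many segment products then gives convergence to $J_{\reg}=\Phi_m\cdots\Phi_0$.

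On nonresonant meshes, every executed preactivation has a strict sign. The activation word is therefore locally constant in $\theta_0$, $\Theta_{\eta,T}$ is a composition of smooth maps near $\theta_0$, and the chain rule shows that the selected product is the classical derivative.

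\textbf{Saltation formula.} For $D\varphi_T$ I use the classical computation. The hitting time $\tau_j(\theta)$ is differentiable by the implicit function theorem applied to $g_j(\varphi(\tau))=0$, with $n_j^\top f_j^-\ge\nu>0$, and
\[
D\tau_j=-\frac{n_j^\top \delta x^-}{n_j^\top f_j^-}.
\]
Differentiating the post-event state gives the jump $\delta x^+=\delta x^- - \Delta_j D\tau_j$, which is exactly $\Xi_j$. Composing these jumps with the $\Phi_i$ yields \eqref{eq:main-flow-derivative}.

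The main obstacle is the state error at the events. The straddling step and the index mismatch must contribute only $O(\eta)$ without amplification, and the bound must stay uniform, including near exact hits. I would obtain this from a signed-distance argument using the $C^2$ interfaces and the speed bound $\nu$, applied on both sides of each interface.
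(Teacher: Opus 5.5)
Your proposal is correct and follows essentially the paper's proof. The shared ingredients are segmentwise Euler--Gr\"onwall stability, event localization from monotone normal progress under both one-sided fields, finite induction over the itinerary, strict-sign persistence plus the chain rule at nonresonant meshes, and the implicit-function saltation computation. The differences are only organizational. The paper bounds $\|J^{\mathrm{sel}}_{\eta,T}-J_{\reg}\|$ with a single $L^1$ estimate on the coefficient mismatch (event windows of total length $O(m\eta)$) followed by a variation-of-constants comparison, instead of segment by segment. It also records, as you should, that the discrete and continuous switch clocks can differ by $O(1)$ grid steps rather than exactly one; each such step costs $O(\eta)$, so the total is still $O(\eta)$.
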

The state/tangent distinction supplies the reference objects; its mechanism
is classical \citep{stewart2010optimal}. Appendix~\ref{app:general-proofs}
gives the proof under the stated geometry.

For the same terminal objective let
$Q_\eta(\lambda)=q(\Theta_{\eta,T}(\lambda),\lambda)$ and
$Q_0(\lambda)=q(\varphi_T(\lambda),\lambda)$. Distinguish
$g_\eta=\nabla Q_\eta$, its regional limit $g_{\reg}$, and
$g_0=\nabla Q_0$. The finite-radius target is
\begin{equation}
 D_{\eta,\epsilon}(v)=
 \frac{Q_\eta(\lambda+\epsilon v)-Q_\eta(\lambda-\epsilon v)}{2\epsilon},
 \qquad \|v\|=1.
 \label{eq:main-scale-response}
\end{equation}
Its sign compares the two actual proposals. The main theorem treats
initialization perturbations, $\lambda=\theta_0$, and $q=q(\theta_T)$.
Parameter sensitivities have separate supporting formulas. A matrix gap,
an objective-gradient gap and a finite proposal reversal are different
evidence levels.

\section{The radius-resolution principle}
\label{sec:resolution}
Fix the initialization $\lambda$, unit direction $v$ and objective $Q_\eta$;
write $x_T=\varphi_T(\lambda)$.
Define $\rho_\eta(v)$ as the supremum of radii whose entire open
perturbation interval preserves every training and terminal-validation
branch. A phase condition on a few evaluated endpoints is a different test.

\begin{center}
\fbox{\begin{minipage}{.94\linewidth}
\textbf{Response at three sufficient resolutions.}
The following synthesis combines Taylor's formula,
Theorem~\ref{thm:coupled-response}, and uniform objective consistency.
\begin{enumerate}
\item \textbf{Branch-preserving: exact finite-program AD.}
If $0<\epsilon<\rho_\eta(v)$ and
$|\partial_s^3 Q_\eta(\lambda+sv)|\le M_{3,\eta}$ on the interval, then
$|D_{\eta,\epsilon}-g_\eta^\top v|\le M_{3,\eta}\epsilon^2/6$.
The remainder must vanish along a refining-grid sequence.
\item \textbf{Critical: recursively rounded branch response.}
Fix $\zeta>0$ and $\epsilon=\zeta\eta$.
Under Theorem~\ref{thm:coupled-response}'s geometry, with $q\in C^2$ near $x_T$, on sufficiently
fine grids with a common positive recursive phase margin on
$K=\{0,-\zeta v,\zeta v\}$,
\[
 D_{\eta,\zeta\eta}
 =\underbrace{\frac{\nabla q(x_T)^\top
 [e_{p_N}(\zeta v)-e_{p_N}(-\zeta v)]}{2\zeta}}_{\widehat D_{\eta,\zeta}(v)}
 +O_K(\eta/\zeta).
\]
The interval between these points may cross branches.
\item \textbf{Coarser vanishing: flow sensitivity.}
If $\sup_U|Q_\eta-Q_0|\le C_0\eta$ on a fixed neighborhood and
$Q_0\in C^1(U)$, with $\lambda+[-\epsilon,\epsilon]v\subset U$, then
$|D_{\eta,\epsilon}-g_0^\top v|
\le C_0\eta/\epsilon+\omega_{\nabla Q_0}(\epsilon)$.
Thus $\epsilon\to0$ and $\eta/\epsilon\to0$ suffice.
\end{enumerate}
\end{minipage}}
\end{center}
These are sufficient regimes, not an exhaustive partition. Resonance can
make $\rho_\eta/\eta$ arbitrarily small; $\epsilon\ll\eta$ alone is
insufficient. The first regime differentiates one finite program, whose
regional limit need not be the flow gradient in the third regime.
The critical response is generally nonlinear in displacement: it is not
a fourth gradient vector. Appendix~\ref{app:resolution-synthesis}
gives the complete synthesis proof, the $C^2$ local remainder, and an
explicit counterexample to interpreting coordinate responses as a gradient.

\section{Critical response through finitely many events}
\label{sec:coupled-main}
The missing quantity is the error arriving at the next discrete boundary.
It contains both smooth Euler bias and all earlier event-rounding effects.
We derive that quantity before deciding the next crossing index.

Fix $m$ events at $0<t_1<\cdots<t_m<T$, separated from each other
and the endpoints by $\Delta>0$. On segment $i$, let $f_i$ have a
bounded $C^2$ extension and let $x(t)$ be the base path. In each
isolating event tube exactly one $C^2$ surface $g_j=0$ can vanish;
all others have a positive margin. Away from these tubes all surfaces
have a positive margin on a fixed spatial neighborhood of the base path.
Orient each $g_j$ so that both adjacent speeds
$\nabla g_j^\top f_{j-1}$ and $\nabla g_j^\top f_j$ exceed $\nu>0$.
These conditions concern continuous geometry; they assume no discrete indices.

Set $t_0=0,t_{m+1}=T$, $x_i=x(t_i)$ and $L_i=t_{i+1}-t_i$.
Let $P_i$ be regional variational propagation and compute the smooth bias
\begin{equation}
 \dot w=Df_i(x(t))w-\tfrac12Df_i(x(t))f_i(x(t)),\quad
 w(t_i)=0,\quad w_i=w(t_{i+1}).
 \label{eq:main-smooth-bias}
\end{equation}
Define $n_j=\nabla g_j(x_j)$, $u_j=n_j^\top f_{j-1}(x_j)$ and
$\Delta_j=f_j(x_j)-f_{j-1}(x_j)$. At phase
$p\in\mathbb T^m$ and scaled displacement $z$, put $b_0=z$ and recurse:
\begin{align}
 a_j&=P_{j-1}b_{j-1}+w_{j-1}, &
 s_j&=\left\lceil p_j-\frac{n_j^\top a_j}{u_j}\right\rceil-p_j,
 \nonumber\\
 b_j&=a_j-\Delta_js_j,\quad 1\le j\le m,&
 e_p(z)&=P_mb_m+w_m.
 \label{eq:main-coupled-recursion}
\end{align}
The phase margin is the distance of each recursively computed ceiling
argument to the integers. Raw event phases alone omit the incoming error.

\begin{theorem}[Finite-event critical-scale branch response]
\label{thm:coupled-response}
\label{thm:finite-smooth-response}
Under the stated geometry, let $K\subset\mathbb R^d$ be compact,
$0\in K$, and $\mu>0$. For every sufficiently fine canonical grid
$\eta=T/N$ whose phase vector $p_N=(\{t_j/\eta\})_{j=1}^m$ has
recursive phase margin at least $\mu$ uniformly on $K$, every Euler path
from $x_0+\eta z$, $z\in K$, has exactly the base event sequence.
Its first positive indices are
\[
 k_j(z)=\left\lceil t_j/\eta-n_j^\top a_j(p_N,z)/u_j\right\rceil.
\]
All executed signs are strict, and uniformly on $K$,
\begin{equation}
 \Theta_{\eta,T}(x_0+\eta z)
 =x_{m+1}+\eta e_{p_N}(z)+O_K(\eta^2).
 \label{eq:main-coupled-limit}
\end{equation}
For $q\in C^2$ near the endpoint, the rescaled objective change equals
$\nabla q(x_{m+1})^\top[e_{p_N}(z)-e_{p_N}(0)]+O_K(\eta)$.
On phase-convergent subsequences this becomes the corresponding $e_p$
law with error $O_K(\eta+\operatorname{dist}(p_N,p))$.
Constants depend on fixed $m,T,K$, field and surface derivative bounds,
transversality, event separation and tube margins. The mesh threshold
also depends on $\mu$; the endpoint remainder constant need not.
\end{theorem}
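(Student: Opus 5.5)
The plan is an induction over the segments $[t_i,t_{i+1}]$. The invariant is an $O(\eta^2)$-accurate description of the Euler state at the first grid index after each event. Write $k_0=0$. At the start of segment $i$, Euler sits at a grid index $k_i$ with $k_i\eta-t_i=\eta s_i$, where $s_0=0$ and, for $j\ge1$, $s_j\in[0,1)$ is exactly the rounding offset in \eqref{eq:main-coupled-recursion}. The inductive claim is that the state there is $x(k_i\eta)+\eta b_i+O_K(\eta^2)$. Equivalently, measured against the base point $x_i$ and the field $f_i$, it is $x_i+\eta(b_i+f_i(x_i)s_i)+O(\eta^2)$. The base case is $z$ itself, so $b_0=z$.

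\textbf{Smooth propagation inside a segment.} Fix segment $i$. Outside the event tubes all other surfaces have a positive margin on a spatial neighborhood of the base path. An $O(\eta)$ perturbation therefore keeps every Euler iterate on segment $i$ within the region of $f_i$ until the tube of event $i+1$. On this stretch I would use the standard first-order modified-equation argument for explicit Euler with a $C^2$ field. Comparing Euler started at $x(\tau)+\eta c$ with the exact flow gives the state $x(t)+\eta[P_i(t,\tau)c+w(t)]+O(\eta^2)$, where $w$ solves \eqref{eq:main-smooth-bias} from $w(\tau)=0$. The source $-\tfrac12 Df_i f_i$ is the leading local truncation error of Euler. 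The argument is the discrete Gronwall bound applied to the defect $\theta_k-x(k\eta)-\eta(\cdots)$. Each step contributes a local residual of $O(\eta^3)$, using $C^2$ bounds and a second-order expansion of $f_i$ around $x$. Summed over $O(1/\eta)$ steps this gives $O(\eta^2)$. A start at the shifted time $t_i+\eta s_i$ instead of $t_i$ changes $P_i$ and $w$ only by $O(\eta)$. Its multiplier is $\eta$, so the contribution is $O(\eta^2)$. Extending $f_i$ beyond $t_{i+1}$ by its bounded $C^2$ extension, the continued Euler path near $t_{i+1}$ is $x(k\eta)+\eta a_{i+1}+O(\eta^2)$, with $a_{i+1}=P_ib_i+w_i$ up to $O(\eta)$ drift in $P_i,w$ across the tube. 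This matches the recursion.

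\textbf{Crossing index.} Inside the tube only $g_{i+1}$ can vanish. Expand $g_{i+1}$ along the continued Euler iterates:
\[
g_{i+1}(\theta_k)=u_{i+1}(k\eta-t_{i+1})+\eta\,n_{i+1}^\top a_{i+1}+O(\eta^2),
\]
uniformly for $|k\eta-t_{i+1}|=O(\eta)$. The first positive index is then the ceiling of $t_{i+1}/\eta-n^\top a/u+O(\eta)$. Because the $\nu$-transversality makes $g$ strictly increasing at rate at least $\nu\eta$ per step, there are no earlier sign changes and no returns. A uniform recursive phase margin $\mu$, with $\eta$ below a $\mu$-dependent threshold, absorbs the $O(\eta)$ perturbation of the ceiling argument. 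This yields the stated $k_j(z)$ and strict executed signs.

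\textbf{Field switch.} After crossing, the iterate sits $s_{i+1}$ steps past the event. Those steps were taken with $f_{i+1}$, not with the pre-event field $f_i$. Relative to the base path, which switched to $f_{i+1}$ at $t_{i+1}$, the deviation at index $k_{i+1}$ is $a_{i+1}-\Delta_{i+1}s_{i+1}=b_{i+1}$ to $O(\eta)$ relative accuracy. This closes the induction.

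\textbf{Endpoint and objective.} After $m$ events, the same smooth step to $T$ gives \eqref{eq:main-coupled-limit}. The objective statement follows from a Taylor expansion of $q$ at $x_{m+1}$. For phase-convergent subsequences, $e_p$ is piecewise affine in $p$ with a common positive margin, so it is Lipschitz there; this adds $O(\operatorname{dist}(p_N,p))$.

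\textbf{Main obstacle.} I expect the main obstacle to be uniformity in the crossing step. The $O(\eta^2)$ state errors must not alter ceilings. This holds only because of the margin $\mu$, which explains why the threshold, but not the remainder constant, depends on $\mu$. One also has to track carefully that the offsets $s_j$ propagate as $O(1)$ time shifts without degrading the $O(\eta^2)$ remainder across $m$ compositions.
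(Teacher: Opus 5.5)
Your overall route is the paper's. It uses segment-by-segment induction and the comparison curve $x+\eta E$ with $E'=Df\,E-\frac12 Df\,f$, giving $O(\eta^3)$ one-step residuals and a discrete Gronwall bound. The ceiling comes from the first-order surface expansion, with strict signs at $k_j-1$ and $k_j$ plus tube monotonicity, and Taylor and phase-Lipschitz arguments finish. The restart step, however, is wrong as written, and it is exactly the step that creates the rounding term.

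\textbf{The offset range is wrong.} The quantity $s_j=k_j-t_j/\eta=\lceil c_j\rceil-p_j$ does not lie in $[0,1)$. What lies in $[\mu,1-\mu]$ is $s_j+n_j^\top a_j/u_j=\lceil c_j\rceil-c_j$. Hence $s_j<0$ whenever $n_j^\top a_j/u_j>\lceil c_j\rceil-c_j$. This is an early crossing, and it can occur even at $z=0$ through the bias $w_{j-1}$. Likewise $s_j>1$ is possible when $n_j^\top a_j<0$.

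For early crossings, the first form of your invariant, $x(k_j\eta)+\eta b_j$ with $x$ the base path, is off by $\eta s_j\Delta_j$. At the clock $k_j\eta<t_j$ the base path still follows $f_{j-1}$. The reference must instead be the outgoing field's flow through $x_j$, continued \emph{backward} to $t_j+\eta s_j$. Only your second form, $x_j+\eta(b_j+f_j(x_j)s_j)$, is correct for both signs. This is why the paper's restart lemma allows a negative start offset and uses the short backward extensions of the regional fields that the hypotheses provide.

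\textbf{The jump term is justified backwards.} Your justification of $b_j=a_j-\Delta_js_j$ reverses which field the last steps used. Because $k_j$ is the \emph{first} positive index, the sign at $k_j-1$ was negative. So the step into $\theta_{k_j}$, like every earlier step in the segment, was executed with $f_{j-1}$, giving $\theta_{k_j}=x_j+\eta(a_j+f_{j-1}(x_j)s_j)+O(\eta^2)$. It is the reference path that uses $f_j$ over the offset $\eta s_j$. Subtracting $x_j+\eta f_j(x_j)s_j$ yields $\eta(a_j-\Delta_js_j)$. If those steps had really been taken with $f_j$, as you wrote, the jump term would cancel and you would get $b_j=a_j$.

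Once the offset may take either sign and this restart identity is argued correctly, the rest of your induction closes as in the paper.
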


\paragraph{Meaning.} The leading finite-radius response at
$\epsilon=\zeta\eta$ is
$\nabla q(x_{m+1})^\top[e_{p_N}(\zeta v)-e_{p_N}(-\zeta v)]/(2\zeta)$.
This includes discrete changes that neither infinitesimal object records.

\paragraph{Why coupling matters.} With $P_{j-1:r}=P_{j-1}\cdots P_r$,
the $j$th scaled boundary contains
\begin{equation}
 p_j-\frac{n_j^\top A_j^{\rm free}(z)}{u_j}
 +\sum_{r<j}\frac{n_j^\top P_{j-1:r}\Delta_r}{u_j}s_r\in\mathbb Z,
 \label{eq:main-downstream}
\end{equation}
where $A_j^{\rm free}$ contains initial displacement and smooth bias.
Changing one upstream index can therefore change several later integers.
The no-upstream-feedback ablation deletes these threshold terms while
retaining the transported endpoint contributions.

\paragraph{Scope and proof.} Appendix~\ref{app:finite-smooth-response}
derives the smooth bias from the one-step truncation error, identifies
each first positive index by two strict signs and normal monotonicity,
and propagates the restart error $a_j-\Delta_js_j$. A finite induction
excludes extra events and bounds all remainders before fixing the mesh.
$C^2$ regularity suffices: surface curvature enters only $O(\eta^2)$.
No phase independence, contractivity or affine structure is needed.
Exact scaled boundaries, grazing and simultaneous events require a
separate analysis. The finite certificate below retains stronger affine assumptions.

\paragraph{A concrete coupled training risk.}
For $L(\theta)=\tfrac12\|A\theta-b\|^2+
\tfrac12\sum_{j=1}^2[\operatorname{ReLU}(n_j^\top\theta)-c_j]^2$,
$A^\top A\succ0$ and $c_j\le0$ give global strong convexity.
Regional fields are affine, with $M_S=-A^\top A-\sum_{j\in S}n_jn_j^\top$,
and \eqref{eq:main-smooth-bias} reduces to
$w_i=-L_iP_iM_if_i(x_i)/2$. The fixed example
$A^\top A=\left(\begin{smallmatrix}1&.4\\.4&1.3\end{smallmatrix}\right)$,
$A^\top b=(2,2)$, $n_j=e_j$, $c=(-.4,-.6)$,
$\theta_0=(-.4,-1)$ and $T=1$ has two provably separated events.
Its regional commutator has norm $.4$, and its downstream coefficient
$n_2^\top P_1\Delta_1/u_2$ is strictly positive.
Appendix~\ref{app:coupled-response} gives the geometry and exact finite-cell jumps.

\section{An exact scalar response classification}
\label{sec:scales}
Let $L_{a,c}(x)=\tfrac12(x-a)^2+
\tfrac12(\operatorname{ReLU}(x)-c)^2$, with $a>0,c>-a$ and
$x\in(a(1-e^T),0)$. Set $t=\log((a-x)/a)$, $P=(a+c)/2$,
$A=1-\eta$, $B_\eta=1-2\eta$ and $z_k=a(1-A^{-k})$.
On $(z_K,z_{K-1})$,
\begin{equation}
 F_\eta(y)=P+[a+(y-a)A^K-P]B_\eta^{N-K},\qquad
 [F_\eta]_{z_k}=\eta c B_\eta^{N-k-1}.
 \label{eq:main-cell-jump}
\end{equation}
The exact radius is $\rho_\eta=\min(x-z_K,z_{K-1}-x)$ for smooth $q$.
Writing $B=a-x$, $E=e^{-2(T-t)}$, $J=e^{-t}E$ and $F_0=P(1-E)$,
the critical response at phase $p=\lim\{t/[-\log(1-\eta)]\}$ is
\begin{equation}
 R_\zeta(p)=q'(F_0)\left[J+\frac{cE}{2\zeta}H(p,\zeta)\right],
 \quad H(p,\zeta)=\#\{j\in\mathbb Z:B|p-j|<\zeta\}.
 \label{eq:main-critical-phase}
\end{equation}
Endpoint hits are excluded. The corrected scalar phase includes the Euler
bias in \eqref{eq:main-smooth-bias}. Its cell radius can satisfy
$\rho_\eta/\eta\to0$, so $\epsilon\ll\eta$ alone does not guarantee
a branch-preserving perturbation.

For $\ell=2\zeta/B$, the count is $\lfloor\ell\rfloor$ or
$\lceil\ell\rceil$, with uniform-phase masses $1-r,r$, where
$r=\ell-\lfloor\ell\rfloor$. Consequently
\begin{equation}
 \mathbb E_pR_\zeta=Q_0'(x),\qquad
 \operatorname{Var}_pR_\zeta=
 [q'(F_0)cE/(2\zeta)]^2r(1-r).
 \label{eq:main-phase-classification}
\end{equation}
Positive integer $\ell$ cancels phase dependence on the allowed phases.
For irrational $t/T$, an equidistribution proof with uniform error control
away from boundaries yields canonical-mesh response frequencies.
Corollary~\ref{cor:phase-classification} treats zero levels and exact-hit
exceptions. This scalar classification is a consequence of branch counting;
its phase average is not an expectation over training seeds or data.

\section{A finite comparison certificate}
For contractive affine regions, the response law can be accompanied by
an explicit finite error bound. Candidate indices still require checking:
an accurate asymptotic expansion cannot make a wrong word valid.

\begin{theorem}[Contractive finite response]
\label{thm:contractive-main}
Let $m$ be fixed and $f_i(x)=M_ix+d_i$, with $M_i=M_i^\top\prec0$.
Choose supplied ordered knots and their scheduled states $x_i$. For a
candidate word require $0=k_0<\cdots<k_{m+1}=N$ and
$\eta\|M_i\|_2<1$. Put $h_i=-M_i^{-1}d_i$,
$\delta_i=k_{i+1}-k_i-L_i/\eta$,
$A_i=P_i(\delta_iM_i-L_iM_i^2/2)$, and
$B_i=P_i[L_iM_i^3/3-\delta_iM_i^2/2+
(\delta_iM_i-L_iM_i^2/2)^2/2]$.
Starting from $e_0=z,v_0=0$, propagate
\[
 e_{i+1}=P_ie_i+A_i(x_i-h_i),\qquad
 v_{i+1}=P_iv_i+A_ie_i+B_i(x_i-h_i).
\]
If the entire candidate word passes its strict sign test, the endpoint
$\widetilde x=x_{m+1}+\eta e_{m+1}+\eta^2v_{m+1}$ has error at most
the explicit spectral remainder $R_{m+1}$ in
\eqref{eq:finite-event-second-jet}. This is a finite-grid inequality.
For bounded displacements, offsets and regional data, with
$\eta\|M_i\|\le r_*<1$, the bound is uniformly $O(\eta^3)$.

Let $\widetilde x_\pm$ and $R_\pm$ correspond to $z=\pm\zeta v$.
If $\|\nabla^2q\|\le M_q$ on both endpoint-error balls, then
\begin{align}
 \widehat D^{(2)}&=\frac{q(\widetilde x_+)-q(\widetilde x_-)}{2\zeta\eta},
 \nonumber\\
 |D_{\eta,\zeta\eta}-\widehat D^{(2)}|
 &\le\frac{\sum_{\sigma=\pm}
 [\|\nabla q(\widetilde x_\sigma)\|R_\sigma+M_qR_\sigma^2/2]}
 {2\zeta\eta}=:\mathcal E^{(2)}.
 \label{eq:main-certificate}
\end{align}
A strict margin $|\widehat D^{(2)}|>\mathcal E^{(2)}$ certifies the
candidate ordering and a gap of at least
$2\zeta\eta(|\widehat D^{(2)}|-\mathcal E^{(2)})$.
Failed words, violated domain conditions and insufficient margins are unresolved.
\end{theorem}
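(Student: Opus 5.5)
Within each fixed affine region the Euler iteration is exactly linear, so the plan is to compute its powers in closed form and compare them with the matrix exponential. On segment $i$ the candidate word fixes $n_i=k_{i+1}-k_i$ steps with $\theta_{k+1}-h_i=(I+\eta M_i)(\theta_k-h_i)$. The exact discrete segment map is therefore
\[
y\mapsto h_i+(I+\eta M_i)^{n_i}(y-h_i),
\]
while the flow propagator is $P_i=e^{L_iM_i}$. Because $M_i$ is symmetric, I would diagonalize it and work eigenvalue by eigenvalue with $\lambda\in[-\|M_i\|,0)$. Write $n_i\eta=L_i+\delta_i\eta$ and expand
\[
(1+\eta\lambda)^{n_i}=\exp\bigl(n_i\log(1+\eta\lambda)\bigr)
=\exp\bigl(L_i\lambda+\eta(\delta_i\lambda-L_i\lambda^2/2)+\eta^2(L_i\lambda^3/3-\delta_i\lambda^2/2)+\rho\bigr).
\]
Expanding the outer exponential to second order gives $e^{L_i\lambda}[1+\eta a+\eta^2(b+a^2/2)]$, which is exactly the spectral symbol of $P_i+\eta A_i/\cdot+\eta^2B_i$ (with $A_i,B_i$ as defined). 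The remainder is bounded explicitly: the logarithm tail needs $\eta|\lambda|\le r_*<1$, giving $|\rho|\le n_i(\eta|\lambda|)^4/(4(1-r_*))$, and the exponential tail is handled by the bound $|e^u-1-u-u^2/2|\le |u|^3e^{|u|}/6$. Taking the supremum over the spectrum yields explicit per-segment operator bounds $\|(I+\eta M_i)^{n_i}-P_i-\eta A_i-\eta^2B_i\|\le C_i\eta^3$.

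Next comes the composition. I would define the scheduled states through $x_{i+1}-h_i=P_i(x_i-h_i)$; I take this as the meaning of the supplied knots being consistent with region $i$. The actual state at knot $i$ is written $\theta_{k_i}=x_i+\eta e_i+\eta^2v_i+r_i$. Applying the exact segment map gives
\[
\theta_{k_{i+1}}-h_i=\bigl(P_i+\eta A_i+\eta^2B_i+E_i\bigr)\bigl(x_i-h_i+\eta e_i+\eta^2v_i+r_i\bigr).
\]
Collecting orders: the order-$1$ terms return $x_{i+1}-h_i$, the order-$\eta$ terms return $P_ie_i+A_i(x_i-h_i)=e_{i+1}$, and the order-$\eta^2$ terms return $P_iv_i+A_ie_i+B_i(x_i-h_i)=v_{i+1}$. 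What remains defines the recursion
\[
R_{i+1}=\|(I+\eta M_i)^{n_i}\|R_i+\eta^2\|A_i\|\|v_i\|+\eta^3\|B_i\|\bigl(\|e_i\|+\eta\|v_i\|\bigr)+\|E_i\|\bigl(\|x_i-h_i\|+\eta\|e_i\|+\eta^2\|v_i\|\bigr),
\]
with $R_0=0$ and $\|(I+\eta M_i)^{n_i}\|\le1$ by contractivity. I take this to be the spectral remainder in \eqref{eq:finite-event-second-jet}. The strict sign test is what licenses using the affine map $i$ on every step of segment $i$; this is the only place it enters. The $O(\eta^3)$ uniformity follows because each $R_{i+1}$ term is $\eta^3$ times bounded quantities under the stated bounds on displacements, offsets $\delta_i$ and regional data.

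The comparison certificate then follows from Taylor's formula. On the ball of radius $R_\sigma$ around $\widetilde x_\sigma$,
\[
|q(\Theta_\sigma)-q(\widetilde x_\sigma)|\le\|\nabla q(\widetilde x_\sigma)\|R_\sigma+M_qR_\sigma^2/2.
\]
Subtracting the two proposals and dividing by $2\zeta\eta$ gives \eqref{eq:main-certificate}. If $|\widehat D^{(2)}|>\mathcal E^{(2)}$, then $D_{\eta,\zeta\eta}$ has the same sign as $\widehat D^{(2)}$ and magnitude at least $|\widehat D^{(2)}|-\mathcal E^{(2)}$. Multiplying by $2\zeta\eta$ gives the stated gap.

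The main obstacle is bookkeeping the second-order jet so that the cross terms match $B_i$ exactly. The $(\delta_iM_i-L_iM_i^2/2)^2/2$ term arises from squaring the first-order logarithm correction, and every product must commute with $P_i$; both points are guaranteed by the fact that all these operators are functions of the same symmetric $M_i$. The other delicate point is keeping the exponential-tail constant explicit rather than merely asymptotic, so that the bound is a genuine finite-grid inequality.
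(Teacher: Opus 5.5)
Your route is the paper's: exact affine segment maps $y\mapsto h_i+E_i(y-h_i)$ with $E_i=(I+\eta M_i)^{n_i}$, and an eigenvalue-wise expansion of $n_i\log(1+\eta\lambda)-L_i\lambda$ followed by the scalar exponential. From there you subtract the ordered jet using $x_{i+1}=h_i+P_i(x_i-h_i)$, induct with the contraction factor $\|E_i\|\le1$, use the strict word test to identify the candidate product with actual GD, and apply Taylor's formula for $q$ on the two balls. The problems are in the explicit bookkeeping. This matters because the claim is a finite-grid inequality with a specified remainder.

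\emph{The expansion of $\rho$ drops a term.} Substituting $n_i\eta=L_i+\delta_i\eta$ into the cubic logarithm term leaves $\eta^3\delta_i\lambda^3/3$ in your $\rho$. The correct estimate is $|\rho|\le\eta^3[\tfrac13|\delta_i||\lambda|^3+n_i\eta\lambda^4/(4(1-\eta|\lambda|))]$, which is the paper's $t_{3i}$. Your bound on $\rho$ is therefore not valid in general.

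\emph{The exponential expansion is incomplete.} Expanding $e^u$ to second order also requires controlling $\tfrac12|u^2-\eta^2\alpha_i^2|$, not only $|e^u-1-u-u^2/2|$. The paper uses $|u^2-\eta^2\alpha_i^2|\le b_i(2\eta|\alpha_i|+b_i)$ with $b_i\ge|u-\eta\alpha_i|$.

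\emph{One power of $\eta$ is wrong.} The $A_iv_i$ contribution in your recursion is $\eta A_i\cdot\eta^2v_i=\eta^3A_iv_i$, not $\eta^2$. As written your remainder is only $O(\eta^2)$, since $v_i\ne0$ for $i\ge1$ in general. That contradicts your own $O(\eta^3)$ conclusion.

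\emph{Your recursion is not the stated $R_{m+1}$.} Even after these repairs, your recursion is not the $R_{m+1}$ of \eqref{eq:finite-event-second-jet}, so declaring it to be that remainder does not prove the bound as stated. The paper keeps the exact remainder $E_i\rho_i+(E_i-P_i-\eta A_i-\eta^2B_i)(x_i-h_i)+\eta(E_i-P_i-\eta A_i)e_i+\eta^2(E_i-P_i)v_i$. It bounds the three truncations directly by spectral calculus: $\|E_i-P_i\|\le d_{1i}$, $\|E_i-P_i-\eta A_i\|\le d_{2i}$ and $\|E_i-P_i-\eta A_i-\eta^2B_i\|\le d_{3i}$. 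This gives $R_{i+1}=\chi_iR_i+d_{3i}\|x_i-h_i\|+\eta d_{2i}\|e_i\|+\eta^2d_{1i}\|v_i\|$.

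You instead write $E^{(3)}_i=E_i-P_i-\eta A_i-\eta^2B_i$. You then split $E_i-P_i-\eta A_i=\eta^2B_i+E^{(3)}_i$ and $E_i-P_i=\eta A_i+\eta^2B_i+E^{(3)}_i$ by the triangle inequality. That needs only a cubic truncation bound, but it produces a different explicit constant. Once corrected, your variant is a valid finite inequality of the same $O(\eta^3)$ order. Your objective certificate and gap argument then go through verbatim with it in place of $R_\pm$. To recover the theorem as stated, you also need the first- and second-order truncation bounds $d_{1i},d_{2i}$.

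Finally, for the finite inequality use $1-\eta|\lambda|$ rather than $1-r_*$ in the logarithm tail; $r_*$ is needed only for the uniform rate.
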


\paragraph{Meaning.} At fixed positive $\zeta$, the response error is
$O(\eta^2)$ under the displayed uniform bounds. This turns a critical
response into a specified finite-decision guarantee.
\paragraph{Why coupling matters.} The jets and defects propagate in
order. Mixed terms such as $A_ie_i$ contain prior event rounding;
regional matrices are never interchanged.
\paragraph{Scope and cost.} Appendix~\ref{app:contractive-certificate}
proves the spectral tails and contractive defect recurrence. For affine switching surfaces in the plane,
each surface along a block is a constant plus two geometric sequences.
Its forward difference has at most one zero, so at most four integer
extrema verify all its executed signs. For $S$ surfaces, two candidates
need at most $8(m+1)S$ scalar sign evaluations, plus regional algebra.
The bound holds in any dimension; this short word test is planar.
The experiments use a numerically evaluated deterministic certificate,
with independent high-precision checks. The guarantee is in real
arithmetic, not formal interval arithmetic. Base reference data are
disclosed; exact affine word evaluation is also cheap, so the certificate
targets response diagnosis rather than a computational advantage over it.

\section{Experiments: response at a specified radius}
Each experiment fixes the task, direction and radius rule before evaluating
responses. E1--E2 use preserved scalar and affine arrays. E3 reanalyses
the frozen network candidate pairs using a reference-assisted smooth
predictor. All failures remain in the denominators
(Appendix~\ref{app:response-experiments}).

\paragraph{What counts as learning evidence.}
We distinguish (1) matrix mismatch, (2) outer-gradient magnitude or
direction, (3) coordinate or candidate ranking, (4) finite-proposal
response error, (5) actual proposal reversal or regret, and
(6) multi-step outcome. Theorems~\ref{thm:coupled-response}--\ref{thm:contractive-main}
control levels 4--5 under their respective hypotheses. E1--E2 demonstrate
finite comparisons; E3 tests level 4 in nonlinear training. Stored full
gradients support levels 2--3, and the historical outer audit supplies
levels 5--6. None of these levels implies the next.

\paragraph{E1: a fixed-ratio response distribution.}
For the scalar task $a=4,c=-3,T=1,t=\sqrt2/4$,
$q(y)=(y-1)^2/2$ and $\epsilon=0.3/N$, the limiting responses are
$-0.122817$ and $0.751714$. Both infinitesimal signs are negative.
The proposal ordering reverses on $1{,}034/9{,}801$ frozen meshes
($10.5499\%$), versus the proved asymptotic frequency $10.5328\%$.
Independent 60-digit Euler checks of the 20 closest critical endpoints
differ by at most $2.27\times10^{-12}$. Integer phase lengths
$\ell=1,2,3$ approach the flow response.
Figure~\ref{fig:response-core}(a) shows the complete fixed-ratio population,
rather than an adaptively selected near-resonance subsequence.

\paragraph{E2: upstream feedback and a finite decision.}
The coupled risk in Section~\ref{sec:coupled-main} uses $v=e_1$ and
$q(\theta)=(\theta_1-1)^2/2$, with zero coupling as a control.
Across $N=1000,\ldots,5000$ and three fixed ratios, the full and ablated
recursions predict different signed second-index changes on 224 of
12,003 coupled rows. This selection uses no rerun objective. The full
law has lower error on $223/224$, with mean $5.94\times10^{-5}$ versus
$8.17\times10^{-3}$, and zero sign errors versus eight.
The retained exception has an invalid predicted word.
The second-order test certifies $23{,}997/24{,}006$ dense comparisons
across both couplings, with zero incorrect decisions; nine words fail.
Of 110 separately fixed coarse comparisons, 94 are certified and 16
unresolved. Two coupled certified reversals have actual objective gaps
$7.08\times10^{-5}$ and $2.01\times10^{-5}$, respectively
$0.097\%$ and $0.027\%$ of base loss. A third coupled reversal remains
unresolved. These are modest two-candidate effects, not optimization gains.

\begin{figure}[t]
\centering
\includegraphics[width=\textwidth]{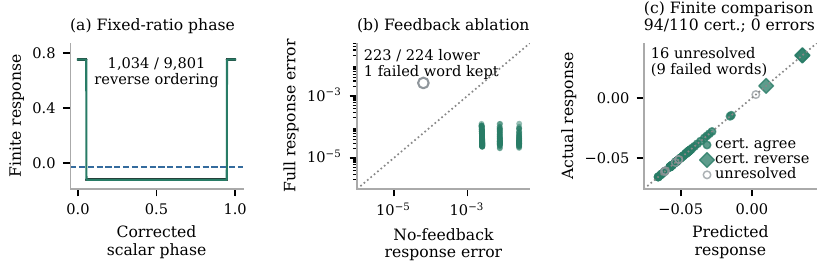}
\caption{\textbf{Mechanism and finite comparison evidence.}
(A) All 9,801 scalar responses, the critical phase law and flow response.
(B) The 224 feedback-active coupled rows; the hollow failed-word exception
is retained. The diagonal denotes equal prediction error.
(C) All 101 valid coarse words compared with GD reruns; the finite test
certifies 94, including three reversals across both couplings. Seven
valid words and nine failed words are unresolved. Counts refer to a
numerically evaluated real-arithmetic bound, not interval certification.}
\label{fig:response-core}
\end{figure}

\paragraph{E3: accuracy changes with proposal radius.}
All 20 fixed width-4 seeds, $N=500,2000$, radii
$10^{-5},10^{-4},10^{-3},10^{-2}$ and direction $e_1$ are retained.
Figure~\ref{fig:resolution-network} compares errors on the frozen validation half-MSE.
One seed has no admissible
reference: every predictor comparison therefore uses the same 38
seed--mesh observations per radius, 152 of 160 rows in total.
The smooth branch law uses only base reference events and regional
variational/bias solves; perturbed GD is rerun independently for evaluation.

At the smallest radius, AD has median error $9.56\times10^{-12}$
and the smallest error among the four stored predictors on $25/38$ rows.
The critical predictor has the smallest error on $25/38$ and $30/38$
at $10^{-4}$ and $10^{-3}$. At $10^{-2}$, flow improves substantially
over AD: median errors $1.87\times10^{-4}$ versus $1.72\times10^{-3}$.
The critical predictor still wins on $22/38$ rows and flow on $11/38$.
Thus the data show a radius-dependent crossover, but do not establish
a final regime in which flow is uniformly or predominantly best.

\paragraph{Conditioning exposes the failure tail.}
Of 152 reference rows, 81 match both complete training words, nine have
strictly ordered proposed events but an incorrect word, and 62 have
unordered proposed events. The critical predictor's median/maximum
errors in these strata are respectively
$9.03\times10^{-6}/7.77\times10^{-4}$,
$8.16\times10^{-5}/5.27\times10^{-4}$, and
$7.17\times10^{-5}/4.87\times10^{-2}$.
The strata also differ in radius and event count. Conditioning does not
replace the full-cohort curve or certify the matched rows; two of the 81
change the terminal validation branch. Pooled recursive phase margin has
only modest discrimination of word correctness (AUC $0.647$), and one
mesh--radius group has AUC $0.324$ (19 rows, two failures). Its pooled rank correlation with
event count is $-0.452$. Margin alone is not an empirical validity test.
Width-dependent recursive margins and critical coordinate rankings are
unavailable from this frozen one-direction scan.
No network sign reversal occurs; complete tables, tail errors and
absolute objective gaps are in Appendix~\ref{app:resolution-network}.

\begin{figure}[t]
\centering
\includegraphics[width=\textwidth]{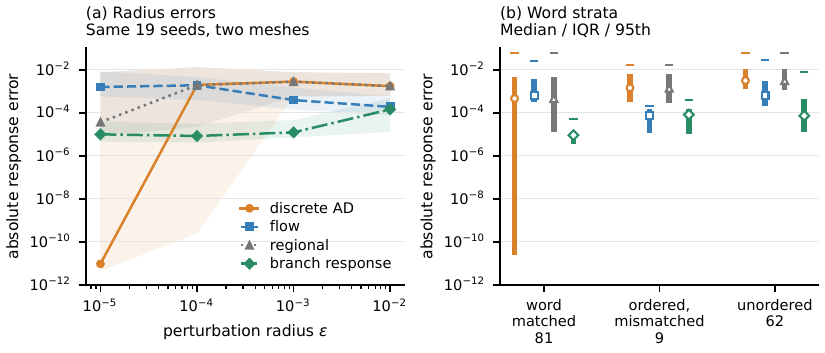}
\caption{\textbf{Nonlinear response fidelity and its branch hypothesis.}
(A) The same 19 seeds and two meshes at every radius; lines are medians
and bands interquartile ranges. All 152 reference rows enter.
(B) Complete-word matched (81), strict ordered but mismatched (9), and
unordered (62) rows. Marks show medians, IQR and 95th percentiles;
the strata diagnose validity after rerunning the candidates.
Eight uncovered rows remain in the released 160-row population. These
descriptive summaries are not sampling confidence intervals or a finite
nonlinear guarantee. Phase-margin diagnostics appear in
Appendix~\ref{app:resolution-network}.}
\label{fig:resolution-network}
\end{figure}

\paragraph{Decision information beyond a scalar sign.}
On the same 38 seed--mesh pairs, the stored full initialization gradients
of flow and finite GD select different largest absolute coordinates on
$6/38$ and different ordered top-two coordinates on $16/38$; their
minimum cosine is $0.858$. These are derivative-ranking diagnostics,
not evaluated vector proposals. The critical scan has only direction
$e_1$, so it cannot establish a critical-predictor ranking or cosine.
The separate frozen four-weight outer audit has no top-one disagreement,
one top-two disagreement among 15 seeds, and small finite-proposal regrets.
Its five-step flow paths outperform AD on $9/11$ covered seeds.

\section{Related work}
\label{sec:related}
\paragraph{Differentiating finite training.}
Hyperparameter optimization differentiates validation performance through
a finite training run \citep{maclaurin2015reversible}; bilevel formulations
make this finite inner objective explicit \citep{franceschi2018bilevel}.
Initialization meta-learning updates parameters through one or more
adaptation steps \citep{finn2017maml}, while example reweighting uses a
validation derivative through a one-step training surrogate
\citep{ren2018reweight}. These applications motivate a finite outer change,
but do not by themselves provide its prediction error at a specified radius.
\citet[Sec.~4, Fig.~10]{maclaurin2015reversible} already observed that
hypergradients can be uninformative about a broader objective trend in
unstable training. We quantify a different, transverse-switching mechanism
that also occurs in our strongly convex examples. The cited stochastic,
momentum and meta-learning algorithms are motivation, not applications of
our fixed-horizon Euler theorem without checking its assumptions.

\paragraph{Event and finite perturbation analysis.}
Saltation, endogenous event sensitivity and ordered flow selections are
classical \citep{hiskens2000trajectory,cassandras2010ipa,burden2016event,
kong2024saltation}. Finite perturbation analysis already reconstructs
changed sample paths and propagates downstream branch decisions
\citep{ho1983finitepa,ho1991perturbation,ho1992concepts,
cassandras1985eventdomain,wardi2017pa}. Stewart--Anitescu demonstrate
discretized sensitivity failure and artificial optimization structure
\citep{stewart2010optimal}; FESD recovers flow sensitivity using
event-aligned discretization \citep{nurkanovic2024finite}.
Our fixed-grid program can itself be encoded in FPA. The additional
analysis derives its integer crossings and a uniform endpoint expansion
when perturbations scale with the refining mesh, then gives a finite
comparison bound in the contractive affine subclass.

\paragraph{Derivative targets and limiting objects.}
Fixed-program AD correctness \citep{lee2020correctness,lee2023machine,
bolte2020mathematical,park2024what}, contractive fixed-point
differentiation \citep{bolte2022iterative,grazzi2024nonsmooth}, and
training-flow approximations \citep{elkabetz2021continuous} concern
distinct objects. Architectural Euler limits may share the numerical
structure \citep{xu2023correcting,gao2025global,brandle2026continuous};
renaming a grid index does not establish novelty. Our radius-dependent
response complements these derivative targets. Its three sufficient
regimes combine classical local/consistency bounds with the critical-scale
theorem; their synthesis is not a separate sensitivity calculus.
Appendix~\ref{app:prior-art-comparison} records exact source locators and
the remaining primary-text access gaps.

\section{Discussion and limitations}
Differentiating training is a choice of perturbation resolution as well
as a derivative. A finite proposal asks about two trained objectives.
Inside a verified cell, AD supplies its local tangent. At the Euler scale,
integer crossings, smooth numerical bias and downstream boundary changes
remain first order. Under uniform consistency, a coarser vanishing
perturbation approaches the flow response. This is a quantitative
sensitivity diagnosis for finite outer updates, rather than a prescription
to replace one gradient by another.

The critical theorem assumes a fixed finite number of separated,
same-direction transverse events, bounded derivatives, isolating tubes
and a common recursive phase margin. Its constants and mesh threshold
reflect these conditions. Grazing, sliding, simultaneous events and event
counts growing with refinement are not characterized by this theorem;
this does not assert that resolution dependence disappears there.
The finite certificate additionally requires contractive affine regions
and complete word verification. The nonlinear predictor is reference-assisted,
has no finite certificate, and is not presented as a scalable algorithm.

The frozen network study demonstrates finite-response fidelity and its
dependence on radius and branch validity. It neither establishes a universal
three-way winner ordering nor observes a sign reversal. Stored outer
experiments also retain the five-step flow advantage on $9/11$ covered
cases. The practical question left open is how often consequential outer
proposals encounter these scales in larger networks, and whether their
validity can be diagnosed cheaply. Response fidelity, finite proposal
ordering and multi-step optimizer performance remain distinct questions.

\label{scientificmainend}

\section*{Reproducibility statement}
The appendix states every assumption used by the theorems, gives complete
proofs, and records the numerical definitions.  Numerical marks are read from
preserved JSON or NPZ arrays; analytic curves and schematics are regenerated
from the stated formulas.  No plotted value is digitized from a raster image.

\section*{Statement on the use of AI assistants}
Generative AI tools assisted with literature search; formulating, checking,
drafting, and editing mathematical claims and proofs; experimental design and
implementation; result interpretation; source-code editing; figure
preparation; and reproducibility audits.  The tools did not serve as sources
of empirical measurements.  Every reported value was checked against
preserved machine-readable output, selected finite-program and flow derivatives were independently cross-checked
by centered finite differences with coverage detailed in Appendix~\ref{app:repro},
and event-aware forward and reverse
calculations were cross-checked in the analytic examples. The human authors retain responsibility for
the final manuscript, including every claim, proof, citation, numerical result,
and artifact.

\label{mainend}

\bibliography{references}

\begin{thebibliography}{27}
\providecommand{\natexlab}[1]{#1}
\providecommand{\url}[1]{\texttt{#1}}
\expandafter\ifx\csname urlstyle\endcsname\relax
  \providecommand{\doi}[1]{doi: #1}\else
  \providecommand{\doi}{doi: \begingroup \urlstyle{rm}\Url}\fi

\bibitem[Ambrosio et~al.(2000)Ambrosio, Fusco, and Pallara]{ambrosio2000bv}
Luigi Ambrosio, Nicola Fusco, and Diego Pallara.
\newblock \emph{Functions of Bounded Variation and Free Discontinuity
  Problems}.
\newblock Oxford Mathematical Monographs. Oxford University Press, Oxford,
  2000.
\newblock \doi{10.1093/oso/9780198502456.001.0001}.
\newblock URL \url{https://academic.oup.com/book/53762}.

\bibitem[Bolte \& Pauwels(2020)Bolte and Pauwels]{bolte2020mathematical}
J{\'e}r{\^o}me Bolte and Edouard Pauwels.
\newblock A mathematical model for automatic differentiation in machine
  learning.
\newblock In \emph{Advances in Neural Information Processing Systems},
  volume~33, pp.\  10809--10819. Curran Associates, Inc., 2020.
\newblock URL \url{https://arxiv.org/abs/2006.02080}.

\bibitem[Bolte et~al.(2022)Bolte, Pauwels, and Vaiter]{bolte2022iterative}
J{\'e}r{\^o}me Bolte, Edouard Pauwels, and Samuel Vaiter.
\newblock Automatic differentiation of nonsmooth iterative algorithms.
\newblock In \emph{Advances in Neural Information Processing Systems},
  volume~35, pp.\  26404--26417, 2022.
\newblock \doi{10.52202/068431-1915}.
\newblock URL
  \url{https://proceedings.neurips.cc/paper_files/paper/2022/hash/a9077da44185792cb63599cc9e0357bc-Abstract-Conference.html}.

\bibitem[Br{\"a}ndle et~al.(2026)Br{\"a}ndle, Eisenmann, G{\"o}tz, and
  Durstewitz]{brandle2026continuous}
Alena Br{\"a}ndle, Lukas Eisenmann, Florian G{\"o}tz, and Daniel Durstewitz.
\newblock Continuous-time piecewise-linear recurrent neural networks, 2026.
\newblock URL \url{https://arxiv.org/abs/2602.15649}.
\newblock arXiv preprint, version 2, August 6, 2026.

\bibitem[Burden et~al.(2016)Burden, Sastry, Koditschek, and
  Revzen]{burden2016event}
Samuel~A. Burden, S.~Shankar Sastry, Daniel~E. Koditschek, and Shai Revzen.
\newblock Event-selected vector field discontinuities yield
  piecewise-differentiable flows.
\newblock \emph{SIAM Journal on Applied Dynamical Systems}, 15\penalty0
  (2):\penalty0 1227--1267, 2016.
\newblock \doi{10.1137/15M1016588}.
\newblock URL \url{https://epubs.siam.org/doi/10.1137/15M1016588}.

\bibitem[Cassandras \& Ho(1985)Cassandras and Ho]{cassandras1985eventdomain}
Christos~G. Cassandras and Yu-Chi Ho.
\newblock An event domain formalism for sample path perturbation analysis of
  discrete event dynamic systems.
\newblock \emph{IEEE Transactions on Automatic Control}, 30\penalty0
  (12):\penalty0 1217--1221, 1985.
\newblock \doi{10.1109/TAC.1985.1103875}.

\bibitem[Cassandras et~al.(2010)Cassandras, Wardi, Panayiotou, and
  Yao]{cassandras2010ipa}
Christos~G. Cassandras, Yorai Wardi, Christos~G. Panayiotou, and Chenyang Yao.
\newblock Perturbation analysis and optimization of stochastic hybrid systems.
\newblock \emph{European Journal of Control}, 16\penalty0 (6):\penalty0
  642--661, 2010.
\newblock \doi{10.3166/ejc.16.642-661}.
\newblock URL \url{https://people.bu.edu/cgc/Published/EJC_12_10.pdf}.

\bibitem[Demengel(1984)]{demengel1984hessien}
Fran{\c{c}}oise Demengel.
\newblock Fonctions {\`a} hessien born{\'e}.
\newblock \emph{Annales de l'Institut Fourier}, 34\penalty0 (2):\penalty0
  155--190, 1984.
\newblock \doi{10.5802/aif.969}.
\newblock URL \url{https://www.numdam.org/item/AIF_1984__34_2_155_0/}.

\bibitem[Elkabetz \& Cohen(2021)Elkabetz and Cohen]{elkabetz2021continuous}
Omer Elkabetz and Nadav Cohen.
\newblock Continuous vs. discrete optimization of deep neural networks.
\newblock In \emph{Advances in Neural Information Processing Systems},
  volume~34, pp.\  4947--4960, 2021.
\newblock URL
  \url{https://papers.nips.cc/paper/2021/hash/274ad4786c3abca69fa097b85867d9a4-Abstract.html}.

\bibitem[Finn et~al.(2017)Finn, Abbeel, and Levine]{finn2017maml}
Chelsea Finn, Pieter Abbeel, and Sergey Levine.
\newblock Model-agnostic meta-learning for fast adaptation of deep networks.
\newblock In \emph{Proceedings of the 34th International Conference on Machine
  Learning}, volume~70 of \emph{Proceedings of Machine Learning Research}, pp.\
   1126--1135. PMLR, 2017.
\newblock URL \url{https://proceedings.mlr.press/v70/finn17a.html}.

\bibitem[Franceschi et~al.(2018)Franceschi, Frasconi, Salzo, Grazzi, and
  Pontil]{franceschi2018bilevel}
Luca Franceschi, Paolo Frasconi, Saverio Salzo, Riccardo Grazzi, and
  Massimiliano Pontil.
\newblock Bilevel programming for hyperparameter optimization and
  meta-learning.
\newblock In \emph{Proceedings of the 35th International Conference on Machine
  Learning}, volume~80 of \emph{Proceedings of Machine Learning Research}, pp.\
   1568--1577. PMLR, 2018.
\newblock URL \url{https://proceedings.mlr.press/v80/franceschi18a.html}.

\bibitem[Gao et~al.(2025)Gao, Sun, Liu, and Gao]{gao2025global}
Tianxiang Gao, Siyuan Sun, Hailiang Liu, and Hongyang Gao.
\newblock Global convergence in neural {ODE}s: Impact of activation functions.
\newblock In \emph{The Thirteenth International Conference on Learning
  Representations}, 2025.
\newblock URL
  \url{https://proceedings.iclr.cc/paper_files/paper/2025/hash/2fd37205c429a30cc1bb0afdbbf96857-Abstract-Conference.html}.

\bibitem[Grazzi et~al.(2024)Grazzi, Pontil, and Salzo]{grazzi2024nonsmooth}
Riccardo Grazzi, Massimiliano Pontil, and Saverio Salzo.
\newblock Nonsmooth implicit differentiation: Deterministic and stochastic
  convergence rates.
\newblock In \emph{Proceedings of the 41st International Conference on Machine
  Learning}, volume 235 of \emph{Proceedings of Machine Learning Research},
  pp.\  16250--16274. PMLR, 2024.
\newblock URL \url{https://proceedings.mlr.press/v235/grazzi24a.html}.

\bibitem[Hiskens \& Pai(2000)Hiskens and Pai]{hiskens2000trajectory}
Ian~A. Hiskens and M.~A. Pai.
\newblock Trajectory sensitivity analysis of hybrid systems.
\newblock \emph{IEEE Transactions on Circuits and Systems I: Fundamental Theory
  and Applications}, 47\penalty0 (2):\penalty0 204--220, 2000.
\newblock \doi{10.1109/81.828574}.
\newblock URL \url{https://ieeexplore.ieee.org/document/828574}.

\bibitem[Ho(1992)]{ho1992concepts}
Yu-Chi Ho.
\newblock Perturbation analysis: Concepts and algorithms.
\newblock In \emph{Proceedings of the 1992 Winter Simulation Conference}, pp.\
  231--240, 1992.
\newblock \doi{10.1145/167293.167340}.
\newblock URL \url{https://www.informs-sim.org/wsc92papers/1992_0024.pdf}.

\bibitem[Ho \& Cao(1991)Ho and Cao]{ho1991perturbation}
Yu-Chi Ho and Xi-Ren Cao.
\newblock \emph{Perturbation Analysis of Discrete Event Dynamic Systems},
  volume 145 of \emph{The Kluwer International Series in Engineering and
  Computer Science}.
\newblock Kluwer Academic Publishers, 1991.
\newblock \doi{10.1007/978-1-4615-4024-3}.

\bibitem[Ho et~al.(1983)Ho, Cao, and Cassandras]{ho1983finitepa}
Yu-Chi Ho, Xi-Ren Cao, and Christos~G. Cassandras.
\newblock Infinitesimal and finite perturbation analysis for queueing networks.
\newblock \emph{Automatica}, 19\penalty0 (4):\penalty0 439--445, 1983.
\newblock \doi{10.1016/0005-1098(83)90060-2}.

\bibitem[Kong et~al.(2024)Kong, Payne, Zhu, and Johnson]{kong2024saltation}
Nathan~J. Kong, J.~Joe Payne, James Zhu, and Aaron~M. Johnson.
\newblock Saltation matrices: The essential tool for linearizing hybrid
  dynamical systems.
\newblock \emph{Proceedings of the IEEE}, 112\penalty0 (6):\penalty0 585--608,
  2024.
\newblock \doi{10.1109/JPROC.2024.3440211}.
\newblock URL \url{https://ieeexplore.ieee.org/document/10638633}.

\bibitem[Lee et~al.(2020)Lee, Yu, Rival, and Yang]{lee2020correctness}
Wonyeol Lee, Hangyeol Yu, Xavier Rival, and Hongseok Yang.
\newblock On correctness of automatic differentiation for non-differentiable
  functions.
\newblock In \emph{Advances in Neural Information Processing Systems},
  volume~33, pp.\  6719--6730. Curran Associates, Inc., 2020.
\newblock URL
  \url{https://proceedings.neurips.cc/paper/2020/hash/4aaa76178f8567e05c8e8295c96171d8-Abstract.html}.

\bibitem[Lee et~al.(2023)Lee, Park, and Aiken]{lee2023machine}
Wonyeol Lee, Sejun Park, and Alex Aiken.
\newblock On the correctness of automatic differentiation for neural networks
  with machine-representable parameters.
\newblock In Andreas Krause, Emma Brunskill, Kyunghyun Cho, Barbara Engelhardt,
  Sivan Sabato, and Jonathan Scarlett (eds.), \emph{Proceedings of the 40th
  International Conference on Machine Learning}, volume 202 of
  \emph{Proceedings of Machine Learning Research}, pp.\  19094--19140. PMLR,
  2023.
\newblock URL \url{https://proceedings.mlr.press/v202/lee23p.html}.

\bibitem[Maclaurin et~al.(2015)Maclaurin, Duvenaud, and
  Adams]{maclaurin2015reversible}
Dougal Maclaurin, David Duvenaud, and Ryan~P. Adams.
\newblock Gradient-based hyperparameter optimization through reversible
  learning.
\newblock In \emph{Proceedings of the 32nd International Conference on Machine
  Learning}, volume~37 of \emph{Proceedings of Machine Learning Research}, pp.\
   2113--2122. PMLR, 2015.
\newblock URL \url{https://proceedings.mlr.press/v37/maclaurin15.html}.

\bibitem[Nurkanovi{\'c} et~al.(2024)Nurkanovi{\'c}, Sperl, Albrecht, and
  Diehl]{nurkanovic2024finite}
Armin Nurkanovi{\'c}, Mario Sperl, Sebastian Albrecht, and Moritz Diehl.
\newblock Finite elements with switch detection for direct optimal control of
  nonsmooth systems.
\newblock \emph{Numerische Mathematik}, 156\penalty0 (3):\penalty0 1115--1162,
  2024.
\newblock \doi{10.1007/s00211-024-01412-z}.
\newblock URL
  \url{https://link.springer.com/article/10.1007/s00211-024-01412-z}.

\bibitem[Park et~al.(2024)Park, Chun, and Lee]{park2024what}
Sejun Park, Sanghyuk Chun, and Wonyeol Lee.
\newblock What does automatic differentiation compute for neural networks?
\newblock In \emph{The Twelfth International Conference on Learning
  Representations}, pp.\  52559--52585, 2024.
\newblock URL
  \url{https://proceedings.iclr.cc/paper_files/paper/2024/hash/e8711daef520be07cb9852390c673de8-Abstract-Conference.html}.

\bibitem[Ren et~al.(2018)Ren, Zeng, Yang, and Urtasun]{ren2018reweight}
Mengye Ren, Wenyuan Zeng, Bin Yang, and Raquel Urtasun.
\newblock Learning to reweight examples for robust deep learning.
\newblock In \emph{Proceedings of the 35th International Conference on Machine
  Learning}, volume~80 of \emph{Proceedings of Machine Learning Research}, pp.\
   4334--4343. PMLR, 2018.
\newblock URL \url{https://proceedings.mlr.press/v80/ren18a.html}.

\bibitem[Stewart \& Anitescu(2010)Stewart and Anitescu]{stewart2010optimal}
David~E. Stewart and Mihai Anitescu.
\newblock Optimal control of systems with discontinuous differential equations.
\newblock \emph{Numerische Mathematik}, 114\penalty0 (4):\penalty0 653--695,
  2010.
\newblock \doi{10.1007/s00211-009-0262-2}.
\newblock URL
  \url{https://link.springer.com/article/10.1007/s00211-009-0262-2}.

\bibitem[Wardi et~al.(2017)Wardi, Cassandras, and Cao]{wardi2017pa}
Yorai Wardi, Christos~G. Cassandras, and Xi-Ren Cao.
\newblock Perturbation analysis: A framework for data-driven control and
  optimization of discrete event and hybrid systems.
\newblock \emph{IFAC-PapersOnLine}, 50\penalty0 (1):\penalty0 3028--3038, 2017.
\newblock \doi{10.1016/j.ifacol.2017.08.671}.
\newblock URL \url{https://people.bu.edu/cgc/Published/IFAC17_PAhistory.pdf}.
\newblock Author-hosted congress preprint has printed pages 3083--3093.

\bibitem[Xu et~al.(2023)Xu, Chen, and Li]{xu2023correcting}
Yewei Xu, Shi Chen, and Qin Li.
\newblock Correcting auto-differentiation in neural-{ODE} training, 2023.
\newblock URL \url{https://arxiv.org/abs/2306.02192}.
\newblock Version 3, revised 2026; accepted for publication in SIAM Journal on
  Applied Mathematics.

\end{thebibliography}
\bibliographystyle{style/iclr2027/iclr2027_conference}

\clearpage
\appendix
\section{Assumptions, notation, and excluded regimes}
\label{app:assumptions}

Let \(\theta\in\mathbb R^d\).  In a neighborhood of a compact tube \(K\)
around the reference trajectory, a finite family of open regions
\(\{\mathcal R_\alpha\}_\alpha\) forms a piecewise-smooth partition.  On
\(\mathcal R_\alpha\), write \(f_\alpha\) for the regional vector field and
\(A_\alpha=Df_\alpha\).  The state and derivative-selection theorem is stated
for this piecewise-\(C^1\) field.  In the hard-ReLU gradient specialization,
\[
 \mathcal L(\theta)=\mathcal L_\alpha(\theta),\qquad
 f_\alpha(\theta)=-\nabla\mathcal L_\alpha(\theta),\qquad
 A_\alpha(\theta)=-H_\alpha(\theta).
\]
The following are the common base assumptions for the sharp-interface
results; individual theorems state any additions.

\begin{enumerate}
  \item[(A1)] Each \(f_\alpha\) has a \(C^1\) extension to a neighborhood of
  \(\overline{\mathcal R_\alpha}\cap K\), and \(f_\alpha\) and
  \(A_\alpha=Df_\alpha\) are uniformly bounded there.  When a result invokes
  gradient, Hessian, convexity, or ReLU structure, each \(\mathcal L_\alpha\)
  additionally has a \(C^2\) extension, its one-sided loss traces agree on
  common interfaces, and \(f_\alpha=-\nabla\mathcal L_\alpha\).

  \item[(A2)] Every interface met by the reference trajectory is locally a
  regular \(C^2\) hypersurface \(\Sigma_j=\{g_j=0\}\), with
  \(n_j(\theta):=\nabla g_j(\theta)\ne0\).  The defining function is oriented
  so that the trajectory moves from \(g_j<0\) to \(g_j>0\).  At the event
  state \(z_j\), the associated unit normal is
  \(\nu_j=n_j(z_j)/\|n_j(z_j)\|\).  We use \(n_j\) in formulas invariant to
  rescaling of \(g_j\), and \(\nu_j\) whenever a coefficient or spectral
  statement requires a unit normal.

  \item[(A3)] The patched Carath\'eodory solution
  \(\theta(t)=\varphi_t(\theta_0)\) exists on \([0,T]\), remains in \(K\), and
  has exactly \(m<\infty\) events
  \[
    0<t_1<\cdots<t_m<T,\qquad z_j=\theta(t_j).
  \]
  Consecutive events and the endpoints of the time interval are separated by
  at least \(\Delta_t>0\).  Each event has an isolating neighborhood containing
  no other interface met by the trajectory.  On a neighborhood of
  \(\theta_0\), this event count, order, separation, and itinerary persist.

  \item[(A4)] The crossings are uniformly same-direction transverse.  For
  some \(c_0>0\), after shrinking the isolating neighborhoods if necessary,
  \[
    n_j(\theta)^\top f_j^-(\theta)\ge c_0,
    \qquad
    n_j(\theta)^\top f_j^+(\theta)\ge c_0
  \]
  on the respective one-sided neighborhoods.

  Here ``same-direction'' is local to the oriented event: the normal points
  from the incoming region to the outgoing region.  It does not require all
  ReLU units to switch in one common global direction; it excludes grazing,
  sliding, and opposing one-sided normal velocities at that crossing.

  \item[(A5)] There is no grazing, sliding, chattering, simultaneous
  multi-surface hit, event creation, Zeno accumulation, or event at the
  terminal time \(T\).
\end{enumerate}

The assumptions in this section govern the derivative-selection foundation. The critical-scale theorem in Appendix~\ref{app:finite-smooth-response} states stronger $C^2$ field bounds and quantitative tubes explicitly, and derives its discrete itinerary from those data.

These assumptions give a unique piecewise-classical trajectory and a locally
stable event order.  At an exact discrete boundary landing, the state update
selects one of the adjacent one-sided fields whose oriented normal component
remains at least \(c_0\).  This
condition is needed for the state estimate.  Whenever ordinary AD is
identified with a classical derivative of the discrete endpoint map, we
additionally require that no iterate land exactly on an interface.  At an exact
hit, software AD still returns a selected branch product, but that product need
not be a classical derivative.

We use the canonical meshes \(\eta_N=T/N\).  Let \(\theta_k^{\eta_N}\)
denote the Euler/GD iterates and define the
endpoint map and its piecewise-linear interpolation by
\[
 \Theta_{\eta_N,T}(\theta_0):=\theta_N^{\eta_N},
 \qquad
 \Theta_{\eta_N}(t;\theta_0):=\text{the interpolation of }
 \{\theta_k^{\eta_N}\}_{k=0}^N.
\]
We use \(J_{\eta,T}^{\mathrm{sel}}\) for the product returned by the executed
branch convention, including at exact hits.  On a nonresonant mesh it is both
ordinary branchwise AD and the classical derivative \(D\Theta_{\eta,T}\), and
we write it as \(J_{\eta,T}^{\AD}\).  We use
\(J_{\reg}\) for the event-free regional continuum product,
\(D\varphi_T\) for the true flow derivative, and
\(J_{\eta,T}^{\SC}\) for the event-aware corrected sensitivity.  The
matrix \(\Phi_j\) propagates the regional variational equation between
\(t_j^+\) and \(t_{j+1}^-\), with \(t_0=0\) and \(t_{m+1}=T\).

\paragraph{Rate assumption.}
\label{ass:general-rate}
Under (A1), the regional Jacobians are uniformly continuous on the relevant
compact sets.  We write
\[
 \omega_A(r)
 =\max_\alpha\sup\left\{
  \|A_\alpha(x)-A_\alpha(y)\|:
  x,y\in K\cap\overline{\mathcal R_\alpha},\ \|x-y\|\le r
  \right\}.
\]
Thus \(\omega_A(r)\to0\) as \(r\downarrow0\).  An \(O(\eta)\) sensitivity
rate additionally requires locally Lipschitz regional Jacobians; in the
gradient setting, \(\mathcal L_\alpha\in C^{2,1}\) suffices.

\subsection{Additional scope and excluded regimes}
\label{app:data-audit}
\label{app:multidimensional}

\paragraph{Multidimensional scope.}
The sharp-interface results are fully multidimensional.  Tangential modes pass
through an interface of a continuous piecewise-smooth loss unchanged, while the
normal mode is multiplied by \((\nu^\top f^+)/(\nu^\top f^-)\).  Regional
dynamics before and after an event may rotate and couple these modes, which is
why the exact multiple-event criterion uses transported factors.  The smoothing theorem is
narrower: it extends directly only when the system admits an autonomous scalar
normal coordinate with separately controlled tangential propagation.  A
general curved layer introduces tangential drift, a varying normal,
noncommuting matrix cocycles, and flattening-curvature terms.  No general
curved-interface \(O(\eta/\tau)\) theorem is claimed.

\paragraph{Excluded regimes and rate qualifications.}
The sharp-interface and smoothing results retain the following restrictions.
\begin{enumerate}
  \item A piecewise \(C^2\) loss, hence a \(C^1\) regional field, gives the modulus estimate for ordinary AD;
  an \(O(\eta)\) sensitivity rate needs Lipschitz regional Hessians or an
  equivalent time-regularity assumption.
  \item At an exact discrete interface landing,
  \(D\Theta_{\eta,T}\) may not exist even though software AD returns a selected
  branch product.
  \item Constants are nonuniform near grazing because event-time and saltation
  formulas contain \((n^\top f^-)^{-1}\).
  \item Opposite-sign normal speeds can produce sliding or nonunique patched
  solutions.  Simultaneous hits require a separate multi-surface analysis and
  cannot be assigned an arbitrary ordered saltation product.
  \item Infinite or nonseparated event sequences invalidate the finite
  induction and finite-product perturbation arguments.
  \item If \(T\) is an event time, the fixed-time derivative may be one-sided or
  convention-dependent.
  \item Multiple nonidentity event factors can cancel after transport; a switch
  count alone is not a mismatch criterion.
  \item The identity \(\|\Xi\|_2=\max\{1,|r|\}\) requires \(d\ge2\); in one
  dimension the norm is \(|r|\).
  \item Positive-definite regional Hessians do not imply global convexity of a
  continuous nonsmooth loss.
  \item The resolved smoothing theorem includes a tail term and does not make
  \(\eta/\tau\) universal across profile families.  Its sharp-first identity
  limit is directed through nonresonant step sizes.
\end{enumerate}

\section{Regional smoothness and classical hybrid sensitivity}
\subsection{Piecewise-smooth ReLU objectives}
\label{app:relu-systems}

\begin{proposition}[Finite-sample ReLU objectives]
\label{prop:relu-objectives}
Consider a finite ReLU network on a finite data set, with a loss smooth in the
network outputs.  On every fixed activation-pattern cell, the empirical risk
is smooth (and polynomial for squared loss).  The risk is continuous across
activation boundaries.  At a point where exactly one preactivation vanishes
and its parameter gradient is nonzero, the boundary is locally a regular
hypersurface and the one-sided parameter gradients have well-defined traces.
\end{proposition}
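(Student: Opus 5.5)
The proof works at the level of a single activation cell and then checks what happens across boundaries. Fix the network architecture, the finite data set $\{(x_s,y_s)\}_{s=1}^n$, and the loss $\ell(\cdot,y)$, assumed smooth (say $C^\infty$, or $C^2$ as needed) in the output. Each preactivation $h_{u,s}(\theta)$, for unit $u$ and sample $s$, is computed by composing affine maps with ReLUs. An activation-pattern cell fixes the sign vector $\sigma=(\sigma_{u,s})\in\{0,1\}$ of all preactivations. On such a cell every ReLU acts as either the identity or zero, so by induction over layers each preactivation and each network output is a polynomial in $\theta$. The layer-$l$ quantities are products of weight entries with earlier polynomials, plus biases. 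Hence the regional risk $\mathcal L_\sigma(\theta)=\frac1n\sum_s\ell(F_\sigma(\theta;x_s),y_s)$ is a composition of smooth maps with polynomials, so it is smooth. For squared loss it is itself a polynomial. These polynomial formulas also give canonical global extensions of $\mathcal L_\sigma$ beyond the cell, which is what (A1) requires.

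Continuity across boundaries follows because the actual risk is a composition of continuous maps: $\operatorname{ReLU}$ is continuous, affine maps are continuous, and $\ell$ is continuous. The formulas also glue. On the closure of two adjacent cells that differ only in unit $(u,s)$, we have $h_{u,s}=0$ on the common face. There the identity and zero branches give the same value $\operatorname{ReLU}(0)=0$, so the two regional polynomials agree on the face. This gives matching one-sided loss traces.

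For the regular-hypersurface claim, fix $\theta^*$ where exactly one preactivation $h=h_{u,s}$ vanishes and $\nabla h(\theta^*)\neq0$. Every other preactivation has a strict sign at $\theta^*$, and by continuity this persists on a ball $U$. On $U$ the activations upstream of $(u,s)$ are therefore fixed, so $h$ coincides on $U$ with a single polynomial $g$. This is the key point: $h$ depends only on earlier units, whose patterns are frozen. By the implicit function theorem, $\{g=0\}\cap U$ is a regular $C^\infty$ hypersurface. It splits $U$ into two open sides, which are exactly two activation cells $\sigma^\pm$. Each regional risk $\mathcal L_{\sigma^\pm}$ is smooth on all of $U$, by its polynomial extension, so the one-sided gradients $\nabla\mathcal L_{\sigma^\pm}$ extend continuously to the interface and have well-defined traces there.

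The main obstacle is the step asserting that $h$ is given by a single smooth formula near $\theta^*$. Circularity has to be avoided: the vanishing unit must not feed back into itself. The layered, feedforward ordering handles this, since $h_{u,s}$ depends only on preactivations in strictly earlier layers. Those all have strict signs under the "exactly one vanishes" hypothesis. Other units that vanish in later layers are excluded by the same hypothesis, so they cause no further branching on $U$.
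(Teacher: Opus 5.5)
Your proposal is correct and follows essentially the same route as the paper's proof. Like the paper, you fix the mask to obtain polynomial (hence smooth) regional formulas, use continuity of ReLU for matching traces, apply the implicit-function theorem to the single vanishing preactivation, and read off the gradient traces from the regional extensions. Your added observation, that the vanishing preactivation is locally a single polynomial because every upstream unit keeps a strict sign near $\theta^*$, fills in a step the paper leaves implicit.
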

\begin{proof}
Fixing every ReLU mask replaces each activation by either zero or its
preactivation.  The network output is then a finite composition of affine and
multilinear operations in the parameters.  It is polynomial on that mask cell,
and so is a squared empirical risk.  ReLU itself is continuous, hence the
network and risk traces agree across a mask boundary.  If a single
preactivation \(s(\theta)\) vanishes with \(\nabla s\ne0\), the implicit-function
theorem makes \(\{s=0\}\) a regular hypersurface.  Smooth regional formulas
provide the gradient traces.
\end{proof}

\begin{proposition}[Local codimension of exceptional activation events]
\label{prop:local-event-codimension}
Work inside a fixed surrounding activation pattern, where a preactivation
\(s:\mathbb R^d\to\mathbb R\) is \(C^2\) and the incoming regional field
\(f^-\) is \(C^1\).  The following statements are local.
\begin{enumerate}
 \item If \(\nabla s(z)\ne0\), then \(\Sigma=\{s=0\}\) is a regular
 hypersurface near \(z\).  The transverse subset
 \(\{x\in\Sigma:\nabla s(x)^\top f^-(x)\ne0\}\) is open in \(\Sigma\).
 \item Let \(h=(\nabla s^\top f^-)|_\Sigma\).  If zero is a regular value of
 \(h\), then the grazing set \(\{x\in\Sigma:h(x)=0\}\) has codimension one
 in \(\Sigma\), hence codimension two in \(\mathbb R^d\).
 \item If two preactivations vanish and their gradients are linearly
 independent, their simultaneous-hit set is a codimension-two submanifold.
 \item Fix an executed-branch cell of an \(N\)-step GD program and one
 evaluated state \(\theta_k^\eta(\theta_0)\).  If
 \(D_{\theta_0}(s\circ\theta_k^\eta)\ne0\) at an exact landing, then the
 initializations producing that landing form a local hypersurface.  For fixed
 \(N\) and finitely many activation surfaces, the union of all such regular
 exact-landing sets has Lebesgue measure zero inside the branch cell.
\end{enumerate}
\end{proposition}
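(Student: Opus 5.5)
The four items are independent local statements, and I would prove each with the implicit-function theorem or the regular-value (preimage) theorem. The only global ingredient is a countable-union measure argument in item 4.

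\emph{Items 1--2.} Since \(s\in C^2\) and \(\nabla s(z)\ne0\), the implicit-function theorem makes \(\Sigma=\{s=0\}\) a regular \(C^2\) hypersurface near \(z\). The map \(x\mapsto\nabla s(x)^\top f^-(x)\) is continuous, because \(\nabla s\) is \(C^1\) and \(f^-\) is \(C^1\). The transverse set is the preimage of \(\mathbb R\setminus\{0\}\) under its restriction to \(\Sigma\), so it is open in \(\Sigma\). For item 2, \(h\) is \(C^1\) on the \(C^2\) manifold \(\Sigma\). If \(0\) is a regular value of \(h\), the preimage theorem on \(\Sigma\) makes \(h^{-1}(0)\) a \(C^1\) submanifold of codimension one in \(\Sigma\), hence of codimension two in \(\mathbb R^d\). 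The regular-value hypothesis is stated, not derived, so no genericity claim is needed.

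\emph{Item 3.} Apply the implicit-function theorem to the map \(F=(s_1,s_2):\mathbb R^d\to\mathbb R^2\). Its Jacobian has rank two at the point, since the gradients are linearly independent. Rank two persists on a neighborhood by continuity, so \(F^{-1}(0)\) is locally a codimension-two submanifold.

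\emph{Item 4.} Inside a fixed executed-branch cell, every step uses fixed regional formulas. The endpoint-\(k\) map \(\theta_0\mapsto\theta_k^\eta(\theta_0)\) is therefore a composition of \(k\) maps \(\theta\mapsto\theta+\eta f_\alpha(\theta)\), so it is \(C^1\), and \(\sigma:=s\circ\theta_k^\eta\) is \(C^1\) on the open cell.
\begin{itemize}
\item At an exact landing with \(D\sigma\ne0\), the implicit-function theorem makes \(\{\sigma=0\}\) a local \(C^1\) hypersurface in initialization space. This gives the first claim.
\item For the measure claim, consider the set \(\{\sigma=0,\ D\sigma\ne0\}\) for one step index \(k\le N\) and one surface \(s\). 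It is a \(C^1\) embedded hypersurface, covered by countably many such local charts (Lindelöf), so it has Lebesgue measure zero.
\item There are finitely many pairs \((k,s)\), so the union over all of them is a finite union of null sets, hence null.
\end{itemize}

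The one point to handle carefully is the regularity bookkeeping in item 4. The cell must be understood as open, or at least each branch map must extend \(C^1\) across its closure, so that \(\sigma\) is \(C^1\) up to the landing. Regional extensions give this, since the statement works inside a fixed surrounding pattern. The claim also covers only \emph{regular} landings, so points where \(D\sigma=0\) need no treatment.
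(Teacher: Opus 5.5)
Your proposal is correct and follows the paper's proof essentially step for step. You use the implicit-function theorem and continuity for item 1, the regular-value theorem on $\Sigma$ for item 2, and the implicit-function theorem for $(s_1,s_2)$ in item 3; item 4 uses $C^1$ regularity of the composed regional updates, the implicit-function theorem, and a finite union of null hypersurfaces. Your Lindel\"of chart argument and your caveat about using regional extensions on an open neighborhood of the branch cell make explicit two points that the paper's proof leaves implicit.
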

\begin{proof}
The first statement is the implicit-function theorem, and continuity of
\(\nabla s^\top f^-\) makes its nonzero set open.  The regular-value theorem
applied to \(h:\Sigma\to\mathbb R\) gives the grazing codimension.  Applying
the implicit-function theorem to the map \((s_1,s_2)\) proves the simultaneous
hit statement when its two gradient rows are independent.  On a fixed branch
cell, every executed regional update is \(C^1\), so
\(s\circ\theta_k^\eta\) is \(C^1\).  The final hypersurface claim is again the
implicit-function theorem; a finite union of regular hypersurfaces has
Lebesgue measure zero.
\end{proof}

For canonical meshes \(\eta_N=T/N\), a useful conditional consequence is
available.  If every exact-landing map in every fixed branch cell has zero as
a regular value, then for each \(N\) the resonant initialization set is a
finite union of measure-zero hypersurfaces.  Taking the countable union over
\(N\in\mathbb N\) shows that almost every initialization is nonresonant for
all canonical meshes.  This statement is about initialization under the
displayed regular-value condition; it does not assert that nonresonant step
sizes are generic or dense for one fixed initialization.

These statements explain only local geometry under the displayed
nondegeneracy conditions.  They do not prove that a modern-network trajectory
has a globally stable finite itinerary, that grazing or simultaneous events
never occur, or that the sensitivity mismatch is prevalent.

The output is generally not piecewise affine in all network parameters.  For a
two-layer unit \(a\,\operatorname{ReLU}(w^\top x+b)\), it is bilinear on an
active cell, and the squared risk may have degree four.  Only piecewise
smoothness is used here.  For a shallow squared-loss sample, orient
\(s(\theta)=q^\top\theta+b\) from inactive to active.  At \(s=0\), with output
coefficient \(c\) and residual \(r=F_\theta(x)-y\), direct differentiation gives
\[
 [\nabla\mathcal L]=r c q
\]
up to the empirical averaging factor.  The jump is normal to the activation
surface.  A simultaneous change of several samples or units is outside the
single-interface result.

\subsection{Event-time derivative and saltation}
\label{app:event-time}

For one event, let \(z\) be the nominal impact at time \(t_*\), let
\(n=\nabla g(z)\), and let \(\zeta^-\) be the incoming perturbation evaluated at
the nominal time.  The perturbed event time \(t_*+\delta t\) obeys
\[
 0=n^\top(\zeta^-+f^-\delta t),
 \qquad
 \delta t=-\frac{n^\top\zeta^-}{n^\top f^-}.
\]
The perturbed impact point changes by \(\zeta^-+f^-\delta t\), while the
outgoing segment loses time \(\delta t\).  Returning to the nominal clock gives
\[
 \zeta^+
 =\zeta^-+(f^--f^+)\delta t
 =\left(I+\frac{(f^+-f^-)n^\top}{n^\top f^-}\right)\zeta^-.
\]
This is the saltation matrix without a state reset.  The event time is \(C^1\)
by the implicit-function theorem because \(n^\top f^-\ne0\).  Iterating this
calculation over the stable itinerary gives the hybrid product in
Theorem~\ref{thm:general-first-variation}.

\paragraph{Normal jump for a continuous loss.}
At a transverse crossing, an incoming perturbation \(\zeta^-\) shifts the
event time and fixed-clock state as follows. For a continuous piecewise-smooth
loss, tangential derivatives agree; with \(\nu=n/\|n\|\) and
\([\nabla\mathcal L]=\beta\nu\),
\begin{align}
 \delta t_*=-\frac{n^\top\zeta^-}{n^\top f^-},
 \qquad
 \zeta^+=\Xi\zeta^-,
 \qquad
 \Xi&=I+\frac{(f^+-f^-)n^\top}{n^\top f^-},                         \label{eq:saltation}\\
 \Xi=I-\frac{\beta\nu\nu^\top}{\nu^\top f^-},
 \qquad
 \lambda_\perp(\Xi)&=\frac{\nu^\top f^+}{\nu^\top f^-}.          \label{eq:gradient-saltation}
\end{align}

\section{State consistency and derivative selection}
\subsection{General state and first-variation theorem}
\label{app:general-proofs}

Put \(\eta=\eta_N=T/N\).  The executed-branch Euler/GD recursion and its
selected branch product are
\begin{align}
 \theta_{k+1}^\eta
 &=\theta_k^\eta+\eta f_{\alpha_k}(\theta_k^\eta),
 \label{eq:general-euler}\\
 J_{\eta,0}^{\mathrm{sel}}&=I,\qquad
 J_{\eta,(k+1)\eta}^{\mathrm{sel}}
 =\bigl(I+\eta A_{\alpha_k}(\theta_k^\eta)\bigr)
  J_{\eta,k\eta}^{\mathrm{sel}}.
 \label{eq:general-ad-recursion}
\end{align}
At an exact hit, \(\alpha_k\) is the admissible adjacent branch selected by
the convention in Appendix~\ref{app:assumptions}.  On the \(j\)th smooth
segment, let \(\Phi_j(t,s)\) solve
\begin{equation}
 \partial_t\Phi_j(t,s)=A_{\alpha_j}(\theta(t))\Phi_j(t,s),
 \qquad \Phi_j(s,s)=I,
 \label{eq:general-regional-fundamental}
\end{equation}
and abbreviate \(\Phi_j=\Phi_j(t_{j+1}^-,t_j^+)\), with \(t_0=0\) and
\(t_{m+1}=T\).

Let \(M_f\) and \(M_A\) bound the regional fields and Jacobians on the
compact tube.  Constants below may depend on
\(T,m,c_0^{-1},\Delta_t^{-1},M_f,M_A\), the \(C^2\) geometry and radii of
the isolating tubes, and the positive distance from the reference path to all
other interfaces outside those tubes.  They do not depend on \(N\).  This
dependence is necessarily nonuniform as \(c_0\downarrow0\).

\begin{theorem}[State consistency and first-variation selection]
\label{thm:general-first-variation}
Under (A1)--(A5), there are \(C<\infty\) and \(N_0\) such that, for every
\(N\ge N_0\),
\begin{equation}
 \sup_{0\le t\le T}
 \|\Theta_{\eta_N}(t;\theta_0)-\varphi_t(\theta_0)\|
 \le C\eta_N
 \label{eq:general-state-rate}
\end{equation}
and
\begin{equation}
 \|J_{\eta_N,T}^{\mathrm{sel}}-J_{\reg}\|
 \le C\bigl(\eta_N+\omega_A(C\eta_N)\bigr)=o(1),
 \qquad J_{\reg}:=\Phi_m\cdots\Phi_0.
 \label{eq:general-ad-modulus-rate}
\end{equation}
If the regional Jacobians are locally Lipschitz, the right-hand side is
\(O(\eta_N)\).

The terminal flow map is \(C^1\) on a neighborhood with the same itinerary,
and
\begin{equation}
 D\varphi_T(\theta_0)
 =\Phi_m\Xi_m\Phi_{m-1}\Xi_{m-1}\cdots
  \Phi_1\Xi_1\Phi_0,
 \qquad
 \Xi_j=I+\frac{(f_j^+-f_j^-)n_j^\top}{n_j^\top f_j^-}.
 \label{eq:general-saltation-product}
\end{equation}
For every nonresonant subsequence \(N_\ell\to\infty\), the endpoint map is
classically differentiable near \(\theta_0\) and
\begin{equation}
 D_{\theta_0}\Theta_{\eta_{N_\ell},T}(\theta_0)
 =J_{\eta_{N_\ell},T}^{\AD}
 =J_{\eta_{N_\ell},T}^{\mathrm{sel}}
 \longrightarrow J_{\reg}.
 \label{eq:general-ad-is-discrete-derivative}
\end{equation}
At a resonant mesh, the selected product still satisfies
\eqref{eq:general-ad-modulus-rate}, but need not be a classical derivative.
\end{theorem}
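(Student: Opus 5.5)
I will prove Theorem~\ref{thm:general-first-variation} by a finite induction over the $m+1$ smooth segments, tracking the Euler iterates and the selected products at the same time. First I fix the discrete event indices. Let $k_j$ be the first index with $g_j(\theta_k^\eta)\ge0$. By the isolating tubes and the positive non-event margins, no other interface can change sign on the segment; this holds once the state error is below the margin. Inside the $j$th tube, $g_j(\theta_{k+1}^\eta)-g_j(\theta_k^\eta)=\eta\,n_j^\top f+O(\eta^2)\ge\eta c_0/2$, so the discrete sequence $g_j(\theta_k^\eta)$ is strictly increasing. Hence it has exactly one sign change, with $|k_j\eta-t_j|\le C\eta$, and no return crossing occurs. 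At an exact landing the convention selects a field with normal speed at least $c_0$, so the same monotonicity holds there.

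\textbf{State estimate.} On a segment strictly inside a region, the standard Euler global error argument for a $C^1$ field with bounded Jacobian applies. The local truncation error is $\le \eta\,\omega_A(C\eta)\cdot\eta+O(\eta^2)$, and a discrete Gronwall argument gives propagation growth at most $e^{M_AT}$. Together these give error $O(\eta)$ at the discrete crossing. Across an event, the discrete state lies in the wrong region for at most one step, i.e.\ during the window between $t_j$ and $k_j\eta$ of length $O(\eta)$. Since both fields are bounded by $M_f$, this window contributes an $O(\eta)$ state discrepancy. Inducting over the finite number of events, with a constant that multiplies by a fixed factor at each event, yields \eqref{eq:general-state-rate}, first for the nodes and then for the interpolation.

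\textbf{Derivative selection.} The selected product factors as $\prod_k(I+\eta A_{\alpha_k}(\theta_k^\eta))$. On each discrete block $k_j\le k<k_{j+1}$ it uses only the regional Jacobian $A_{\alpha_j}$, evaluated along iterates that are $C\eta$-close to $\theta(t)$. I compare this block product with $\Phi_j$ in two parts. The evaluation-point perturbation costs $O(\omega_A(C\eta))$ by a discrete Gronwall argument. The Euler discretization of the linear variational ODE costs $O(\eta+\omega_A(\eta M_f))$. The block endpoints differ from $t_j,t_{j+1}$ by $O(\eta)$, which changes the product by $O(\eta)$ factors of size $I+O(\eta)$. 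The essential point is that no saltation factor appears: every discrete step uses a single regional Jacobian, so the product converges to $J_{\reg}$, not to $D\varphi_T$. Multiplying the finitely many block estimates gives \eqref{eq:general-ad-modulus-rate}. With Lipschitz Jacobians we have $\omega_A(r)\le Lr$, which gives the rate $O(\eta)$.

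\textbf{Flow derivative and nonresonant meshes.} The flow derivative \eqref{eq:general-saltation-product} follows from the implicit-function computation in Appendix~\ref{app:event-time}, iterated over the stable itinerary (A3). At a nonresonant mesh every executed $g$-value is strictly nonzero. By continuity the word is locally constant in $\theta_0$, so on a neighborhood the endpoint map is a fixed composition of $C^1$ regional updates. Its classical derivative is therefore exactly the selected product, and convergence along the subsequence follows from the preceding bound.

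\textbf{Main obstacle.} The hard step is making the induction uniform. The mesh threshold $N_0$ must ensure, simultaneously for all segments, that the accumulated state error stays below both the tube radii and the non-event margins. It must also ensure the discrete monotonicity inside each tube, so that no spurious event is created and no double crossing occurs. I would handle this by fixing constants $C_j$ recursively, segment by segment, before choosing $N_0$.
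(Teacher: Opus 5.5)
Your proposal is correct and follows essentially the same route as the paper. The shared steps are monotone event localization inside each tube, a segment-by-segment Gronwall induction with constants fixed before choosing $N_0$, a comparison of the selected product with the event-free regional propagators, the implicit-function saltation computation, and local constancy of the branch word at nonresonant meshes. The only organizational difference is in that comparison: you do it block by block, whereas the paper bounds $\|A^\eta-A\|_{L^1}$ over the branch-mismatch set and then uses variation of constants plus $\|e^{\eta A_k}-(I+\eta A_k)\|=O(\eta^2)$.

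One small correction: the branch mismatch near $t_j$ can last $O(1)$ steps, not at most one. An $O(\eta)$ incoming error, such as the smooth Euler bias, can shift the crossing index by several steps. Your argument only uses that the mismatch window has length $O(\eta)$, so it goes through unchanged.
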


The proof is split into six lemmas to expose the event-tube induction and the
matrix limit.

\begin{lemma}[Smooth-segment Euler stability]
\label{lem:general-smooth-segment}
Suppose the exact and numerical paths use the same regional extension on grid
indices \(k_0,\ldots,k_1\).  With
\(e_k=\|\theta_k^\eta-\theta(k\eta)\|\),
\begin{equation}
 \max_{k_0\le k\le k_1}e_k
 \le e^{M_A(k_1-k_0)\eta}
 \left[e_{k_0}+\tfrac12M_AM_f(k_1-k_0)\eta^2\right].
 \label{eq:general-smooth-segment-bound}
\end{equation}
The corresponding piecewise-linear interpolation has an additional
\(O(M_f\eta)\) error.
\end{lemma}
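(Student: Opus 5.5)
The plan is the standard discrete Gronwall argument for forward Euler, applied on the grid indices where both paths use the same regional extension. I would split the error at each step into a local truncation part and a propagated part. Write $\theta(t)$ for the exact path on this segment, so $\dot\theta=f(\theta)$ with the common regional field $f=f_\alpha$, and set $E_k=\theta_k^\eta-\theta(k\eta)$. The exact solution satisfies
\[
 \theta((k+1)\eta)=\theta(k\eta)+\eta f(\theta(k\eta))+\tau_k,
\]
where $\tau_k=\int_{k\eta}^{(k+1)\eta}[f(\theta(s))-f(\theta(k\eta))]\,\dd s$. Subtracting this from the Euler update \eqref{eq:general-euler} gives
\[
 E_{k+1}=E_k+\eta[f(\theta_k^\eta)-f(\theta(k\eta))]-\tau_k .
\]

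First, bound the truncation term. Since $\frac{\dd}{\dd s}f(\theta(s))=A(\theta(s))f(\theta(s))$ has norm at most $M_AM_f$, the integrand is bounded by $M_AM_f(s-k\eta)$. Integrating gives $\|\tau_k\|\le\tfrac12M_AM_f\eta^2$. This uses only the $C^1$ extension in (A1) together with the bounds on $f$ and $A$ on the compact tube. The one point to check is that both $\theta(s)$ and the segment joining $\theta_k^\eta$ to $\theta(k\eta)$ lie in the domain of the extension. This is exactly what the hypothesis ``same regional extension'' provides, and otherwise it is arranged by the tube radii.

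Second, bound the propagation term. The mean value inequality gives $\|f(\theta_k^\eta)-f(\theta(k\eta))\|\le M_Ae_k$, so $e_{k+1}\le(1+\eta M_A)e_k+\tfrac12M_AM_f\eta^2$. Unrolling this from $k_0$ yields
\[
 e_k\le(1+\eta M_A)^{k-k_0}\Bigl[e_{k_0}+\tfrac12M_AM_f(k-k_0)\eta^2\Bigr],
\]
since each forcing term is multiplied by a factor no larger than $(1+\eta M_A)^{k-k_0}$. Using $1+x\le e^x$ and taking the maximum over $k\le k_1$ gives \eqref{eq:general-smooth-segment-bound}.

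Third, handle the interpolation. For $t\in[k\eta,(k+1)\eta]$, the piecewise-linear interpolant lies within $\max(e_k,e_{k+1})+\eta M_f$ of $\theta(k\eta)$, because consecutive iterates differ by at most $\eta M_f$. The exact path moves at most $\eta M_f$ on that interval. This produces the additional $O(M_f\eta)$ error.

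The main obstacle is not the estimate itself but the domain bookkeeping. The mean value step requires the convex segment between the numerical and exact states to stay where the $C^1$ extension is defined with the stated bounds. I would ensure this by taking the extension on a fixed neighborhood of the tube and noting that the bound keeps $e_k$ below that neighborhood's radius once $\eta$ is small. In the downstream induction this smallness is what fixes $N_0$.
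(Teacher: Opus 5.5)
Your proposal is correct and follows essentially the same route as the paper: the local defect bound $\tfrac12M_AM_f\eta^2$, the recursion $e_{k+1}\le(1+M_A\eta)e_k+\tfrac12M_AM_f\eta^2$ from the Lipschitz bound on $f$, discrete Gronwall with $1+x\le e^x$, and bounded speed for the interpolation error. Your explicit check that the mean-value segment stays inside the $C^1$ extension's domain is a detail the paper leaves implicit at this point. The paper handles it later through the tube radii and the finite-itinerary induction.
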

\begin{proof}
The exact regional solution has local defect at most
\(M_AM_f\eta^2/2\).  Lipschitz continuity of the field gives
\[
 e_{k+1}\le(1+M_A\eta)e_k+\tfrac12M_AM_f\eta^2.
\]
Discrete Gronwall and \(1+x\le e^x\) prove
\eqref{eq:general-smooth-segment-bound}.  Both the exact path and the
interpolant move at uniformly bounded speed within a step.
\end{proof}

\begin{lemma}[Single-event localization]
\label{lem:general-single-event}
Consider the isolating tube of event \(j\), and suppose the numerical path
reaches its incoming part with error \(e_{\rm in}\).  For sufficiently small
\(\eta\), let \(k_{j,\rm in}^\eta\) be its first grid index in that incoming
part.  The first nonnegative interface index
\[
 \kappa_j^\eta:=\min\{k\ge k_{j,\rm in}^\eta:
                    g_j(\theta_k^\eta)\ge0\}
\]
exists and is unique inside the tube, the numerical path does not recross
there, and
\begin{equation}
 |\kappa_j^\eta\eta-t_j|
 +\|\theta_{\kappa_j^\eta}^\eta-z_j\|
 \le C(e_{\rm in}+\eta).
 \label{eq:general-event-localization}
\end{equation}
If an exact landing executes the incoming branch once at the zero iterate,
the executed branch-switch index is \(\kappa_j^\eta+1\); otherwise it is
\(\kappa_j^\eta\).  Thus its clock obeys the same estimate after increasing
the constant by one mesh unit.
In particular, an \(O(\eta)\) incoming error gives first-order event
localization.
\end{lemma}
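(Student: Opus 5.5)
The approach is to analyze the scalar function $\psi_k:=g_j(\theta_k^\eta)$ along the numerical path inside the isolating tube, using uniform transversality (A4) to make it strictly increasing at a rate proportional to $\eta$.

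\textbf{Monotone increments.} Within the tube, while the path executes the incoming field $f_j^-$, a Taylor expansion of $g_j\in C^2$ gives
\[
 \psi_{k+1}-\psi_k=\eta\,n_j(\theta_k^\eta)^\top f_j^-(\theta_k^\eta)+O(\eta^2)\ \ge\ \eta c_0/2
\]
for $\eta$ small. The $O(\eta^2)$ constant depends only on the $C^2$ bound of $g_j$ and on $M_f$. After $g_j\ge0$ the outgoing field is executed. At an exact landing the convention selects an admissible adjacent field, which also has normal speed at least $c_0$. So the same increment bound holds on the outgoing side, and $\psi$ stays strictly increasing throughout the tube. Hence once $\psi_k\ge0$ it never returns below zero inside the tube, which gives no recrossing. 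Uniqueness of $\kappa_j^\eta$ follows from its definition as a minimum.

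\textbf{Existence and localization.} Lemma~\ref{lem:general-smooth-segment} keeps the numerical path within $C(e_{\rm in}+\eta)$ of the exact path on the incoming part. Combined with the isolating property and the positive margin to other interfaces, this keeps the numerical path inside the tube and in the incoming region until $g_j$ changes sign. Since $\psi$ increases by at least $\eta c_0/2$ per step, it must become nonnegative within finitely many steps. To locate that step, compare $\psi_k$ with $g_j(\theta(k\eta))$:
\[
 |\psi_k-g_j(\theta(k\eta))|\le \|\nabla g_j\|_\infty\,C(e_{\rm in}+\eta).
\]
Along the exact path, $\frac{\dd}{\dd t}g_j(\theta(t))\ge c_0$ near $t_j$, so $g_j(\theta(t))$ has a unique zero at $t_j$ and changes by at least $c_0|t-t_j|$ away from it. Therefore $\psi_k\ge0$ forces $k\eta\ge t_j-C'(e_{\rm in}+\eta)/c_0$. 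Conversely, $\psi_{k-1}<0$ at $k=\kappa_j^\eta$ forces $(\kappa_j^\eta-1)\eta\le t_j+C'(e_{\rm in}+\eta)/c_0$. Together these bound $|\kappa_j^\eta\eta-t_j|$. The state bound then follows from
\[
 \|\theta_{\kappa}^\eta-z_j\|\le\|\theta_\kappa^\eta-\theta(\kappa\eta)\|+M_f|\kappa\eta-t_j|.
\]

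\textbf{Exact landing.} If $\psi_{\kappa}=0$ and the convention executes the incoming branch once, the switch occurs at $\kappa+1$. This shifts the clock by $\eta$ and is absorbed into the constant.

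\textbf{Main obstacle.} The delicate point is the a priori confinement: showing that the numerical path stays in the tube, meets no other interface, and executes only the incoming field before $\kappa$. This needs a bootstrap. I would assume the error bound holds up to index $k$, use it together with the tube radius and the margins to deduce correct branch execution at step $k$, and then propagate the bound via Lemma~\ref{lem:general-smooth-segment}. This step requires $\eta$ and $e_{\rm in}$ small relative to the tube radius.
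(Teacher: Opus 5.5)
Your route is the one the paper takes. A Taylor step for $g_j$ plus (A4) gives increments of at least $c_0\eta/2$ on both sides of the interface, including the admissible branch at an exact hit. That yields existence, a single crossing and no recrossing. A smooth-segment comparison plus transversality then localizes the clock, and an exact landing costs one extra step. The confinement bootstrap you flag is handled the same way in the paper, by shrinking the tube to cross-sections on which $g_j\le-\gamma_j$ and $g_j\ge\gamma_j$.

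There is a gap in the late-crossing direction, $(\kappa_j^\eta-1)\eta\le t_j+C'(e_{\rm in}+\eta)/c_0$. Your comparison $|\psi_k-g_j(\theta(k\eta))|\le C(e_{\rm in}+\eta)$ comes from Lemma~\ref{lem:general-smooth-segment}. That lemma needs both paths to execute the same field, so the comparison holds only for $k\eta\le t_j$. If $(\kappa_j^\eta-1)\eta>t_j$, the exact path has already switched to $f_j^+$ while the numerical path still executes $f_j^-$, so the lemma says nothing at $k=\kappa_j^\eta-1$. Bridging the mismatch with bounded speed adds an error of up to $2M_f\sup\|\nabla g_j\|\,(k\eta-t_j)$ in the normal coordinate. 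This can never be absorbed by the gain $c_0(k\eta-t_j)$, because $c_0\le M_f\sup\|\nabla g_j\|$. As written, the step therefore presupposes the clock bound it is meant to prove.

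The paper's proof handles exactly this point. Between the two clocks it compares with the regional \emph{continuation} of the exact path under the field the numerical path actually executes. The normal speed of that continuation stays bounded below near $z_j$ by continuity of the $C^1$ extension.

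Your own increment bound also repairs it. Let $k_0$ be the last index with $k_0\eta\le t_j$. The comparison is valid there, so $\psi_{k_0}\ge-C''(e_{\rm in}+\eta)$. Since $\psi$ rises by at least $c_0\eta/2$ per step while negative, $\kappa_j^\eta-k_0\le1+2C''(e_{\rm in}+\eta)/(c_0\eta)$, which is the missing bound.

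Your state estimate uses $\|\theta_\kappa^\eta-\theta(\kappa\eta)\|$ in the same unjustified way when $\kappa\eta>t_j$. Route it through $\theta_{k_0}^\eta$ or the continuation instead; once the clock bound holds, it is $O(e_{\rm in}+\eta)$.
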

\begin{proof}
Uniformly on either one-sided part of the tube, Taylor expansion gives
\begin{equation}
 g_j(x+\eta f_j^\pm(x))
 =g_j(x)+\eta\nabla g_j(x)^\top f_j^\pm(x)+O(\eta^2).
 \label{eq:general-discrete-transverse-progress}
\end{equation}
The remainder is uniform because \(g_j\) is \(C^2\) and \(f_j^\pm\) is
bounded.  Shrink the isolating tube so that its compact incoming and outgoing
cross-sections satisfy \(g_j\le-\gamma_j\) and \(g_j\ge\gamma_j\), respectively,
for some \(\gamma_j>0\).  By (A4), the numerical increment of \(g_j\) is at
least \(c_0\eta/2\) for all small \(\eta\), including an exact hit under the
admissible convention.  Starting on the negative cross-section, the discrete
normal coordinate therefore reaches zero before the positive cross-section,
does so at most once, and cannot recross before leaving the tube.

Up to the earlier of the exact and numerical crossing clocks, both paths use
the incoming regional extension.  Lemma~\ref{lem:general-smooth-segment} and
the smoothness of \(g_j\) give an \(O(e_{\rm in}+\eta)\) normal-coordinate
error there.  Between the two clocks, either one-sided field has bounded
speed, while both one-sided normal velocities are at least \(c_0\).  Comparing
with the corresponding incoming or outgoing regional continuation therefore
bounds the clock difference by \(C(e_{\rm in}+\eta)/c_0\).  Bounded state speed
gives \eqref{eq:general-event-localization}.  The possible extra incoming
update at an exact zero delays the executed switch by at most one step.
\end{proof}

\begin{lemma}[Finite-itinerary induction]
\label{lem:general-itinerary}
For all sufficiently small \(\eta\), the numerical path visits the same
regions and isolating tubes as the reference path, in the same order, and has
exactly one branch transition in each tube.  Its error on leaving every tube
is \(O(\eta)\), with a constant uniform over the finite itinerary.
\end{lemma}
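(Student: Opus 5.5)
The proof is a finite induction over the ordered itinerary $0=t_0<t_1<\cdots<t_m<t_{m+1}=T$. It alternates Lemma~\ref{lem:general-smooth-segment} on smooth segments with Lemma~\ref{lem:general-single-event} inside the isolating tubes. Each step carries a single quantity: an error bound $e_j$ at the exit of tube $j$ (with $e_0=0$ at the start), and we will show $e_j\le C_j\eta$ with $C_j$ independent of $\eta$. Because $m$ is fixed, the finite list of constants $C_0,\dots,C_m$ yields one uniform constant and one mesh threshold $N_0$, namely the maximum of the finitely many thresholds produced along the way.

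\textbf{Smooth segments.} Between the exit of tube $j$ and the entry of tube $j+1$, the reference path stays a positive distance from every interface. This holds outside the isolating tubes by the margin assumption, and inside tubes for the interfaces not met there. Assume inductively that the numerical state is within $C_j\eta$ of the reference. As long as it stays in a tubular neighborhood of radius smaller than that margin, it uses the same regional field. Lemma~\ref{lem:general-smooth-segment} then gives an error of at most $e^{M_AT}[C_j\eta+\tfrac12M_AM_fT\eta]$ on the whole segment, so the state neither leaves the neighborhood nor changes region. This is a continuity (bootstrap) argument over grid indices: let $k^\ast$ be the first index at which the bound could fail, and note that the bound itself forbids that index for small $\eta$. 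Bounded speed ensures that the numerical path actually reaches the incoming cross-section of tube $j+1$, and it does so within $O(\eta)$ of the reference arrival clock.

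\textbf{Event tubes.} Inside tube $j+1$, Lemma~\ref{lem:general-single-event} with $e_{\rm in}=O(\eta)$ shows the following. There is exactly one first nonnegative index, the discrete normal coordinate never recrosses, and the switch clock and switch state are within $C(C'_j\eta+\eta)$ of $(t_{j+1},z_{j+1})$. The remaining interfaces in the tube keep a positive margin, so no other transition occurs there. After the switch, the numerical path and the outgoing regional solution started at the reference event both use the outgoing extension. The time offset between them is $O(\eta)$ and speeds are bounded, so their difference at the start is $O(\eta)$, and one more application of Lemma~\ref{lem:general-smooth-segment} over the remaining bounded tube time carries this $O(\eta)$ error to the exit cross-section. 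This closes the induction step with $C_{j+1}$ depending only on $C_j$, $c_0$, $M_f$, $M_A$, $T$ and the tube geometry.

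\textbf{Main obstacle.} The hardest part is keeping the bootstrap honest at the tube boundaries. We must show that the numerical path enters the tube through the incoming cross-section, and does not slip past it or touch another surface near the edge of the tube. My plan is to choose nested tubes and cross-sections first, depending only on the continuous geometry (the values $\gamma_j$, $c_0$, and the margins). Only after that do we take $\eta$ so small that every accumulated $O(\eta)$ error is below half of each margin. With that ordering (geometry, then constants, then mesh threshold), the same-order visit of regions and tubes and the single transition per tube follow directly. The exit errors are also $O(\eta)$, uniformly over the finite itinerary.
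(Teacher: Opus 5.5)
Your proposal is correct and follows essentially the paper's argument: a finite induction alternating Lemma~\ref{lem:general-smooth-segment} on smooth segments with Lemma~\ref{lem:general-single-event} in each tube. The outgoing restart error stays $O(\eta)$ because the two switch clocks differ by $O(\eta)$ and speeds are bounded; the paper phrases this as $O(1)$ grid steps, each contributing at most $2M_f\eta$. Your explicit bootstrap, and your ordering of choices (geometry, then constants, then mesh threshold), spell out what the paper's terser proof leaves implicit.
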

\begin{proof}
Before the first tube, Lemma~\ref{lem:general-smooth-segment} gives
\(O(\eta)\) error.  Lemma~\ref{lem:general-single-event} localizes the
transition.  Substituting this incoming \(O(\eta)\) error into
\eqref{eq:general-event-localization} shows that the switch clocks differ by
\(O(\eta)\), hence by only \(O(1)\) grid steps.  Each such step contributes at
most \(2M_f\eta\) between the two one-sided updates; thus the outgoing restart
error remains \(O(\eta)\).  The positive distance from all other
interfaces fixes the next region.  Restart Lemma~\ref{lem:general-smooth-segment}
on the outgoing segment.  The \(\Delta_t\)-separated tubes and finite event
count allow this argument to be iterated; a finite composition of its
Gronwall constants is still independent of \(N\).  The final segment and
the interpolation estimate prove \eqref{eq:general-state-rate}.
\end{proof}

\begin{lemma}[Branch-mismatch set]
\label{lem:general-branch-mismatch}
Let \(\alpha(t)\) be the exact branch and let
\(\alpha_\eta(t)=\alpha_k\) on \([k\eta,(k+1)\eta)\).  Define
\[
 E_\eta=\{t\in[0,T]:\alpha_\eta(t)\ne\alpha(t)\}.
\]
Then \(|E_\eta|\le Cm\eta\).  For
\[
 A(t)=A_{\alpha(t)}(\theta(t)),\qquad
 A^\eta(t)=A_{\alpha_k}(\theta_k^\eta),
 \quad t\in[k\eta,(k+1)\eta),
\]
we have
\begin{equation}
 \|A^\eta-A\|_{L^1(0,T)}
 \le C\eta+T\omega_A(C\eta).
 \label{eq:general-coefficient-l1}
\end{equation}
\end{lemma}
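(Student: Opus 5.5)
We split $[0,T]$ into the smooth segments determined by the exact event times and the short windows between the exact and discrete switch clocks. The measure bound on $E_\eta$ then follows from event localization, and the $L^1$ coefficient bound from a two-part decomposition.

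\emph{Measure of $E_\eta$.} By Lemma~\ref{lem:general-itinerary}, for small $\eta$ the numerical path has the same itinerary as the reference path, with exactly one branch transition per isolating tube. Outside the tubes, the positive margin to all other interfaces fixes both $\alpha(t)$ and $\alpha_\eta(t)$ to the same region. Inside the $j$th tube, $\alpha$ switches at $t_j$, and $\alpha_\eta$ switches at the executed index, which is $\kappa_j^\eta$ or $\kappa_j^\eta+1$. By Lemma~\ref{lem:general-single-event}, fed with the $O(\eta)$ incoming error from Lemma~\ref{lem:general-itinerary}, this switch clock lies within $C\eta$ of $t_j$. The two branch labels can therefore disagree only on an interval of length at most $C\eta$ around $t_j$, plus one mesh cell to absorb the piecewise-constant convention. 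Summing over the $m$ events gives $|E_\eta|\le Cm\eta$.

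\emph{Coefficient bound.} We split $\|A^\eta-A\|_{L^1}$ into the integrals over $E_\eta$ and over its complement. On $E_\eta$, both matrices are bounded by $M_A$, so this part is at most $2M_A|E_\eta|\le C\eta$. On the complement, the same regional index $\alpha$ is used by both, and for $t\in[k\eta,(k+1)\eta)$ we write
\[
\|A_\alpha(\theta_k^\eta)-A_\alpha(\theta(t))\|\le\omega_A\bigl(\|\theta_k^\eta-\theta(k\eta)\|+\|\theta(k\eta)-\theta(t)\|\bigr).
\]
The first argument is $O(\eta)$ by \eqref{eq:general-state-rate}, and the second is at most $M_f\eta$ by bounded speed. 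The pointwise error is therefore at most $\omega_A(C\eta)$, and integrating over a set of measure at most $T$ gives $T\omega_A(C\eta)$.

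\emph{Main obstacle.} The delicate step is ensuring that $\theta_k^\eta$ and $\theta(t)$ lie in the common domain where the single extension $A_\alpha$ and the modulus $\omega_A$ are defined. The numerical iterate may sit slightly past an interface while its label still equals the exact branch. We handle this by using the $C^1$ extensions of (A1) to a neighborhood of $\overline{\mathcal R_\alpha}\cap K$. We also define $\omega_A$ (enlarging it if necessary) over that neighborhood, which for small $\eta$ contains all iterates by the $O(\eta)$ state error. A uniformly continuous extension keeps the modulus finite and vanishing as $r\downarrow0$.
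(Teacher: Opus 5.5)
Your proposal is correct and follows the paper's proof. Event localization from Lemmas~\ref{lem:general-single-event} and \ref{lem:general-itinerary} confines $E_\eta$ to $m$ intervals of length $O(\eta)$ around the $t_j$. The $L^1$ bound then splits into the disagreement set, where both coefficients are bounded by $M_A$, and its complement, where the state error plus within-step motion gives $\omega_A(C\eta)$.

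The ``main obstacle'' you raise does not actually arise. The executed label $\alpha_k$ is by definition the region containing $\theta_k^\eta$, or its closure at an exact hit. So whenever $\alpha_k=\alpha(t)$, both $\theta_k^\eta$ and $\theta(t)$ already lie in $K\cap\overline{\mathcal R_\alpha}$ for small $\eta$, and the paper's $\omega_A$ applies as defined. Enlarging the modulus is unnecessary, and doing so would only prove a formally weaker inequality than the one stated.
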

\begin{proof}
Lemma~\ref{lem:general-single-event} confines a branch disagreement at event
\(j\) to an \(O(\eta)\) interval around \(t_j\); the intervals are
disjoint for small \(\eta\).  Their total length is at most \(Cm\eta\).
Outside them, both paths use the same regional extension, and
Lemma~\ref{lem:general-itinerary} plus within-step motion gives
\(\|A^\eta(t)-A(t)\|\le\omega_A(C\eta)\).  Inside them, both
coefficients have norm at most \(M_A\).  Integration proves
\eqref{eq:general-coefficient-l1}.
\end{proof}

\begin{lemma}[Fundamental-matrix stability]
\label{lem:general-fundamental}
Let \(U_\eta\) and \(U\) be the fundamental matrices for the stepwise
coefficient \(A^\eta\) and the exact coefficient \(A\), respectively.  Then
\begin{align}
 \|U_\eta(T)-U(T)\|
 &\le e^{2M_AT}\|A^\eta-A\|_{L^1(0,T)},
 \label{eq:general-fundamental-l1}\\
 \|J_{\eta,T}^{\mathrm{sel}}-U_\eta(T)\|
 &\le C\eta.
 \label{eq:general-euler-cocycle}
\end{align}
\end{lemma}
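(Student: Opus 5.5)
We prove the two estimates separately; both are classical linear-ODE and one-step cocycle comparisons.

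For \eqref{eq:general-fundamental-l1}, let $D(t)=U_\eta(t)-U(t)$. Subtracting the two variational equations gives
\[
 \dot D=A^\eta D+(A^\eta-A)U,\qquad D(0)=0 .
\]
Here $U_\eta$ is absolutely continuous, since $A^\eta$ is piecewise constant in $t$ and hence in $L^\infty$. Since $\|A\|\le M_A$, Gronwall gives $\|U(t)\|\le e^{M_At}$, and likewise $\|A^\eta\|\le M_A$. The variation-of-constants formula, written with the propagator of $A^\eta$ (norm at most $e^{M_A(t-s)}$), then yields
\[
 \|D(T)\|\le\int_0^T e^{M_A(T-s)}\|A^\eta(s)-A(s)\|\,e^{M_As}\,ds\le e^{M_AT}\|A^\eta-A\|_{L^1(0,T)} .
\]
This is even sharper than the stated factor $e^{2M_AT}$, which absorbs any cruder bookkeeping.

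For \eqref{eq:general-euler-cocycle}, note that on $[k\eta,(k+1)\eta)$ the coefficient $A^\eta$ is the constant matrix $A_k=A_{\alpha_k}(\theta_k^\eta)$. Hence $U_\eta(T)=\prod_k e^{\eta A_k}$, ordered with later steps to the left. Meanwhile $J^{\mathrm{sel}}_{\eta,T}=\prod_k(I+\eta A_k)$ in the same order, by \eqref{eq:general-ad-recursion}.

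Each factor obeys $\|e^{\eta A_k}-(I+\eta A_k)\|\le \tfrac12 M_A^2\eta^2 e^{M_A\eta}$, and all partial products of either kind have norm at most $e^{M_AT}$. Telescope: replace the factors one at a time, and in each summand bound the prefix and the suffix by these product bounds. This gives
\[
 \|J^{\mathrm{sel}}-U_\eta(T)\|\le N\cdot e^{M_AT}\cdot\tfrac12M_A^2\eta^2e^{M_A\eta}=C\eta
\]
with $N\eta=T$, and the constant does not depend on $N$.

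Neither step has a real obstacle. The only care needed is in the telescoping: keep the noncommuting factors in the correct temporal order, and use uniform bounds on the mixed partial products instead of commuting exponentials. Resonant meshes and exact hits need no special treatment, because only the executed (selected) branch matrices $A_k$ enter and they are bounded by $M_A$ regardless.

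Together with Lemma~\ref{lem:general-branch-mismatch}, these two bounds give \eqref{eq:general-ad-modulus-rate}, using $U(T)=J_{\reg}$.
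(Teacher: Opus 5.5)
Your proposal is correct and follows essentially the same route as the paper. The paper also uses variation of constants with propagator norm bounds for the first estimate; your splitting $e^{M_A(T-s)}e^{M_As}$ even sharpens its factor $e^{2M_AT}$ to $e^{M_AT}$. For the second estimate it likewise uses the one-step bound $\|e^{\eta A_k}-(I+\eta A_k)\|\le\tfrac12\eta^2M_A^2e^{\eta M_A}$ together with an order-preserving telescoping product estimate over the $N=T/\eta$ factors.
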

\begin{proof}
Variation of constants gives
\[
 U_\eta(T)-U(T)
 =\int_0^T U_\eta(T,s)(A^\eta(s)-A(s))U(s)\,\dd s.
\]
Both propagators have norm at most \(e^{M_AT}\), which proves
\eqref{eq:general-fundamental-l1}.  On one step, the exact cocycle of
\(A^\eta\) is \(e^{\eta A_k}\), and
\[
 \|e^{\eta A_k}-(I+\eta A_k)\|
 \le\tfrac12\eta^2M_A^2e^{\eta M_A}.
\]
A telescoping product estimate over \(N=T/\eta\) uniformly bounded factors
proves \eqref{eq:general-euler-cocycle}.
\end{proof}

\begin{lemma}[Flow-map differentiability and saltation]
\label{lem:general-flow-differentiability}
On a neighborhood with the stable itinerary, every event time and event state
is \(C^1\).  If \(\psi_t^-\) denotes the incoming regional flow and
\(\tau_j(x)\) its hitting time, then
\begin{equation}
 D\tau_j(x)
 =-\frac{n_j^\top D\psi^-_{\tau_j(x)}(x)}
         {n_j^\top f_j^-}.
 \label{eq:general-hitting-time-derivative}
\end{equation}
The fixed-clock derivative across the event is \(\Xi_j\), and finite
chronological composition gives \eqref{eq:general-saltation-product}.
\end{lemma}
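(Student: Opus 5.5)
The plan is to treat a neighborhood $U$ of $\theta_0$ on which, by (A3), the event count, order, separation and itinerary persist, and to reduce everything to the single-event computation of Appendix~\ref{app:event-time}, composed along the itinerary.

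\textbf{Single event.} Fix $j$ and let $\psi^-_s$ be the flow of the $C^1$ incoming regional extension $f_j^-$. For $x$ near the point entering the $j$th isolating tube, define
\[
 G(s,x)=g_j(\psi^-_s(x)).
\]
This $G$ is $C^1$ because $g_j\in C^2$ and the regional flow is $C^1$ in $(s,x)$ by (A1). At the nominal point, $G(\tau_j,x)=0$ and
\[
 \partial_sG=n_j^\top f_j^-\ge c_0>0
\]
by (A4). The implicit-function theorem therefore gives a unique $C^1$ hitting time $\tau_j(x)$ in the tube. Differentiating $G(\tau_j(x),x)=0$ yields $\partial_sG\,D\tau_j+n_j^\top D\psi^-_{\tau_j}=0$, which is exactly \eqref{eq:general-hitting-time-derivative}.

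Uniqueness of the crossing within the tube follows from the same positive normal speed on both one-sided neighborhoods. The absence of other interfaces (isolating neighborhoods and (A5)) shows that the Carath\'eodory solution follows $\psi^-$ up to $\tau_j(x)$ and the outgoing flow $\psi^+$ afterward. The event state $\psi^-_{\tau_j(x)}(x)$ is then $C^1$ by composition.

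\textbf{Fixed-clock derivative across the event.} Consider the map from the state at a nominal pre-event time $t_j-\delta$ to the state at a post-event time $t_j+\delta$:
\[
 x\mapsto \psi^+_{2\delta-\tau_j(x)}\bigl(\psi^-_{\tau_j(x)}(x)\bigr),
\]
with $\tau_j$ measured from $t_j-\delta$. Differentiate by the chain rule. Then let $\delta\downarrow0$, using that regional propagators tend to $I$ and that $f_j^\pm$ are continuous. This gives
\[
 D\psi^-+f_j^-D\tau_j-f_j^+D\tau_j\to I+\frac{(f_j^+-f_j^-)n_j^\top}{n_j^\top f_j^-}=\Xi_j .
\]
Equivalently, one can write the derivative at finite $\delta$ as $\Phi_j(t_j+\delta,t_j)\,\Xi_j(\delta)\,\Phi_{j-1}(t_j,t_j-\delta)$ exactly, with $\Xi_j(\delta)$ evaluated at the nominal event. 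This avoids the limit altogether. I will use this exact factorization at $\delta$ equal to half the separation $\Delta_t$.

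\textbf{Composition.} Split $[0,T]$ into alternating event-free segments and event windows. On event-free segments the flow is a regional $C^1$ flow with derivative $\Phi_i$, the solution of \eqref{eq:general-regional-fundamental}. Chronological composition of $C^1$ maps on $U$ (each well defined by persistence of the itinerary) then gives $C^1$ regularity of $\varphi_T$ and the product \eqref{eq:general-saltation-product}.

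\textbf{Main obstacle.} The delicate point is making sure the implicit-function neighborhoods are compatible along the chain. Each perturbed event state must stay inside the next isolating tube and in the domain of the $C^1$ extensions. I plan to shrink $U$ finitely many times, once per event, using continuity of the earlier composed maps. Finiteness of $m$ makes this terminate.
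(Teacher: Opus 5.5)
Your proposal is correct and follows the paper's proof: the implicit-function theorem applied to the hitting equation $g_j(\psi^-_{s}(x))=0$ gives the $C^1$ event time and \eqref{eq:general-hitting-time-derivative}, the event-time shift returned to the nominal clock gives $\Xi_j$, and finite chronological composition gives \eqref{eq:general-saltation-product}. Your explicit window-map derivative factors exactly as $D\psi^+_{2\delta-\tau_j}\,\Xi_j\,D\psi^-_{\tau_j}$, with $\Xi_j$ evaluated at the hitting state and using $\partial_s\psi^+_s=D\psi^+_s f_j^+$ for the autonomous outgoing flow; this is a more careful version of the paper's first-order perturbation step, not a different route.
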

\begin{proof}
The hitting equation is
\(g_j(\psi^-_{\tau_j(x)}(x))=0\).  Its derivative with respect to time is
\(n_j^\top f_j^-\ne0\), so the implicit-function theorem proves the first
claim and \eqref{eq:general-hitting-time-derivative}.  An incoming
perturbation \(\zeta^-\) changes the event time by
\[
 \delta t_j=-\frac{n_j^\top\zeta^-}{n_j^\top f_j^-}.
\]
Returning the outgoing perturbation to the nominal clock gives
\[
 \zeta^+
 =\zeta^-+(f_j^--f_j^+)\delta t_j
 =\left(I+\frac{(f_j^+-f_j^-)n_j^\top}
                  {n_j^\top f_j^-}\right)\zeta^-.
\]
Regional flow maps and event maps are \(C^1\); composing the finitely many
maps in chronological order proves \eqref{eq:general-saltation-product}.
\end{proof}

\begin{proof}[Proof of Theorem~\ref{thm:general-first-variation}]
Lemma~\ref{lem:general-itinerary} proves the state estimate.
Lemmas~\ref{lem:general-branch-mismatch} and
\ref{lem:general-fundamental} give
\[
 \|J_{\eta,T}^{\mathrm{sel}}-J_{\reg}\|
 \le C\{\eta+\|A^\eta-A\|_{L^1}\}
 \le C\bigl(\eta+\omega_A(C\eta)\bigr).
\]
Lemma~\ref{lem:general-flow-differentiability} proves the flow derivative.
If the mesh is nonresonant, every evaluated regional sign is strict and
persists in a neighborhood of \(\theta_0\).  The endpoint is then a finite
composition of \(C^1\) regional Euler maps, and the chain rule identifies its
classical derivative with the selected branch product.  At an exact landing,
this neighborhood argument can fail, while the already proved product
convergence remains valid.
\end{proof}

For completeness, applying the reverse triangle inequality to
\(J_{\eta,T}^{\mathrm{sel}}-D\varphi_T
=(J_{\reg}-D\varphi_T)+(J_{\eta,T}^{\mathrm{sel}}-J_{\reg})\) and then using
\eqref{eq:general-ad-modulus-rate} gives
\eqref{eq:finite-mesh-gap-norm-convergence}.  This statement concerns the
selected product on every canonical mesh.  It becomes a statement about the
classical derivative \(D\Theta_{\eta,T}\) only along nonresonant meshes, by
\eqref{eq:general-ad-is-discrete-derivative}.

In the continuous-loss gradient specialization,
\([\nabla\mathcal L]_j=\beta_j\nu_j\) implies
\begin{equation}
 \Xi_j=I-
 \frac{\beta_j\nu_j\nu_j^\top}{\nu_j^\top f_j^-},
 \qquad
 r_j=\frac{\nu_j^\top f_j^+}{\nu_j^\top f_j^-}.
 \label{eq:general-gradient-saltation}
\end{equation}
Thus \(\Xi_j-I\) is symmetric with rank at most one, tangent directions have
eigenvalue one, and the normal eigenvalue is \(r_j\).  For \(d\ge2\),
\(\|\Xi_j\|_2=\max\{1,|r_j|\}\); for \(d=1\),
\(\|\Xi_j\|_2=|r_j|\).

\subsection{Supporting derivative identities}
The finite-mesh comparison separates into an asymptotic event gap and a
discretization remainder:
\begin{equation}
 J_{\eta,T}^{\mathrm{sel}}-D\varphi_T
 =\bigl(J_{\reg}-D\varphi_T\bigr)
  +\bigl(J_{\eta,T}^{\mathrm{sel}}-J_{\reg}\bigr).
 \label{eq:finite-mesh-gap-decomposition}
\end{equation}
Consequently, the reverse triangle inequality gives the two-sided spectral-norm
estimate
\begin{equation}
 \left|
  \|J_{\eta,T}^{\mathrm{sel}}-D\varphi_T\|_2
  -\|J_{\reg}-D\varphi_T\|_2
 \right|
 \le C\{\eta+\omega_A(C\eta)\}.
 \label{eq:finite-mesh-gap-norm-convergence}
\end{equation}
In particular,
\(\|J_{\eta,T}^{\mathrm{sel}}-D\varphi_T\|_2
\to\|J_{\reg}-D\varphi_T\|_2\).  Along a nonresonant subsequence the first
term is exactly \(\|D\Theta_{\eta,T}-D\varphi_T\|_2\).  Thus any fixed nonzero
event gap remains visible on sufficiently fine meshes;
the constants deteriorate near grazing through their dependence on
\(c_0^{-1}\).  This is a trajectory-local visibility statement, not a
uniform lower bound over changing event geometries.

\paragraph{The selection is not specific to Euler.}
Theorem~\ref{thm:event-blind-one-step} proves the same event-free derivative
limit for a class of fixed-step, event-blind one-step schemes.  It assumes
regional consistency, mixed-event Jacobians \(I+O(\eta)\), a uniformly bounded
number of mixed steps per event, and oriented progress through each event
tube.  Higher regional order alone therefore does not recover saltation; the
claim does not cover a Runge--Kutta, adaptive, implicit, or event-localizing
method unless these hypotheses are verified for that method.

\subsection{A restricted event-blind one-step extension}
\label{app:one-step}

The Euler calculation is not the only way to obtain the event-free derivative
limit.  Regional consistency alone, however, is insufficient: it places no
restriction on the derivative of a step that samples more than one region.
The following result states the additional hypotheses needed for a safe
extension.

\begin{theorem}[Conditional event-blind one-step derivative criterion]
\label{thm:event-blind-one-step}
Assume (A1)--(A5).  Consider a fixed-step method whose executed update is a
piecewise-\(C^1\) map \(x_{k+1}=\Psi_{\eta,\sigma_k}(x_k)\), where
\(\sigma_k\) records a locally fixed finite pattern of regional evaluations.
Suppose the following bounds hold uniformly on the relevant compact tube.

\begin{enumerate}
 \item On a step whose evaluations remain in one region \(\alpha\),
 \begin{align}
  \Psi_{\eta,\alpha}(x)
  &=x+\eta f_\alpha(x)+O(\eta^2),
  \label{eq:one-step-regional-state}\\
  D\Psi_{\eta,\alpha}(x)
  &=I+\eta A_\alpha(x)+O(\eta^2).
  \label{eq:one-step-regional-jacobian}
 \end{align}
 \item Every mixed step in an isolating event tube satisfies
 \begin{equation}
  \|\Psi_{\eta,\sigma}(x)-x\|\le C\eta,
  \qquad
  \|D\Psi_{\eta,\sigma}(x)-I\|\le C\eta.
  \label{eq:one-step-mixed-bound}
 \end{equation}
 \item The method makes oriented interface progress at least \(c\eta\) in
 each event tube and executes at most \(M\) mixed steps per event, with
 \(c>0\) and \(M<\infty\) independent of \(\eta\).
 \item No evaluation used by the executed program lies exactly on an
 interface.
\end{enumerate}

Then the numerical state converges uniformly on \([0,T]\), and the exact
Jacobian of the executed program satisfies
\begin{equation}
 D\Theta_{\eta,T}^{\Psi}\longrightarrow J_{\reg}.
 \label{eq:one-step-derivative-limit}
\end{equation}
The derivative of the limiting flow remains the saltation-interleaved product
in Theorem~\ref{thm:main}.  Thus the two derivatives differ whenever the
transported event product is nonidentity.
\end{theorem}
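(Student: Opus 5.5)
The argument mirrors the proof of Theorem~\ref{thm:general-first-variation}: replace the Euler step's regional consistency by the abstract bounds \eqref{eq:one-step-regional-state}--\eqref{eq:one-step-mixed-bound}, and rerun the event-tube induction and the cocycle comparison. The plan has three parts: state consistency, then derivative comparison on smooth blocks, then control of the mixed steps.

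\textbf{State consistency.} On a smooth block, \eqref{eq:one-step-regional-state} gives local defect $O(\eta^2)$ against the exact regional flow. Condition~1 also makes $\Psi_{\eta,\alpha}$ Lipschitz with constant $1+C\eta$, by integrating \eqref{eq:one-step-regional-jacobian}. A discrete Gronwall argument, as in Lemma~\ref{lem:general-smooth-segment}, then gives $O(\eta)$ error on each smooth segment.

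Inside an isolating tube, the oriented progress assumption replaces \eqref{eq:general-discrete-transverse-progress}. The normal coordinate advances by at least $c\eta$ per step, so the numerical path crosses exactly once and cannot recross. There are at most $M$ mixed steps. By \eqref{eq:one-step-mixed-bound}, each moves the state by $O(\eta)$, so together they contribute $O(M\eta)$ to the restart error. The clock-comparison argument of Lemma~\ref{lem:general-single-event} localizes the transition to $O(1)$ steps around $t_j$. Iterating over the finitely many events, as in Lemma~\ref{lem:general-itinerary}, gives uniform $O(\eta)$ state error and the same itinerary. Condition~4 guarantees that every evaluation has a strict sign that persists nearby. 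Hence the executed program is a composition of $C^1$ maps near $\theta_0$, and its Jacobian is the chronological product of the step Jacobians.

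\textbf{Derivative limit.} Write the product as regional-block products interleaved with at most $mM$ mixed factors $I+O(\eta)$.

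On a regional block, \eqref{eq:one-step-regional-jacobian} gives factors $I+\eta A_\alpha(x_k)+O(\eta^2)$. The remainders telescope to $O(\eta)$ over $O(1/\eta)$ uniformly bounded factors. The remaining product differs from $\prod(I+\eta A_\alpha(x_k))$ by $O(\eta)$, and Lemmas~\ref{lem:general-branch-mismatch} and \ref{lem:general-fundamental} compare the latter with the regional propagator $\Phi_i$. Those lemmas need only the $O(\eta)$ state error already established and the $O(\eta)$ measure of the branch-mismatch set.

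The mixed factors are where saltation could in principle reappear. Each is $I+O(\eta)$, there are boundedly many, and the surrounding products are uniformly bounded. So inserting or deleting them changes the total product by $O(mM\eta)$. The mismatch intervals of length $O(\eta)$ contribute $O(\eta)$ in $L^1$. The total product therefore converges to $\Phi_m\cdots\Phi_0=J_{\reg}$, with error $C(\eta+\omega_A(C\eta))$.

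\textbf{Main obstacle.} The delicate step is the tube analysis. Without an explicit Euler formula, the first crossing index, the uniqueness of the crossing, and the fact that all non-regional steps are confined to the tube must come from condition~3 alone. I would first show that any step whose evaluations sample two regions must occur while the state lies within $O(\eta)$ of the interface; this uses bounded stage displacements from \eqref{eq:one-step-mixed-bound}. Monotone normal progress then bounds the number of such steps and localizes them near the crossing. Finally, the closing statement follows from Lemma~\ref{lem:general-flow-differentiability}: the flow derivative is the saltation product, so the two limits differ exactly when that product differs from $J_{\reg}$.
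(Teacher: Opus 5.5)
Your proposal is correct and follows the same route as the paper's proof. Both arguments use regional consistency plus Gronwall on pure blocks, oriented progress and the mixed-step bound to get $O(\eta)$ timing and restart errors in each tube, regional factors $I+\eta A_\alpha+O(\eta^2)$ converging to the $\Phi_i$, and boundedly many mixed factors $I+O(\eta)$ collapsing to $I$ rather than to a saltation matrix, with strict evaluations identifying the product as the classical derivative. The confinement lemma you flag as the main obstacle is not needed, and \eqref{eq:one-step-mixed-bound} bounds the update displacement rather than stage locations, so it could not supply that lemma anyway: hypothesis~3 already bounds the number of mixed steps, and each contributes only $O(\eta)$ to the state and Jacobian errors.
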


\begin{proof}
On pure regional steps, \eqref{eq:one-step-regional-state} gives the standard
first-order consistency and stability recursion.  In each isolated event
tube, positive progress and the mixed-step bound imply an \(O(\eta)\) timing
error and only \(O(1)\) local defects of size \(O(\eta)\).  Restarting the
regional recursion after every separated event gives uniform state
convergence by the same finite induction as
Theorem~\ref{thm:general-first-variation}.

For the derivative, pure regional factors have the expansion
\(I+\eta A_\alpha+O(\eta^2)\).  Their ordered products converge to the
corresponding regional fundamental matrices.  Every mixed factor is
\(I+O(\eta)\), and only finitely many such factors occur, so their combined
product converges to \(I\), not to a nonidentity saltation matrix.  A finite
product perturbation argument across the separated itinerary proves
\eqref{eq:one-step-derivative-limit}.  Strict branch evaluations make this
product the classical derivative of the executed finite program.
\end{proof}

The theorem includes Euler/GD.  It also includes any event-blind regional
one-step construction that satisfies the mixed-step hypotheses, irrespective
of its higher regional order.  It does not cover an event-localizing method,
whose mixed-step derivative is designed to retain the event-time transfer,
or an adaptive, implicit, or stage-switching scheme unless those hypotheses
are verified separately.  Smooth preconditioned Euler is covered relative to
its own limiting field \(f=-P(\theta)\nabla\mathcal L(\theta)\), provided the
same event and regularity assumptions hold; this is not a statement about the
unpreconditioned gradient flow.  Fixed-coefficient momentum and heavy-ball
limits are not covered.

\section{Finite-event critical-scale branch law}
\label{app:finite-smooth-response}

This result concerns a joint limit of the step size and the initial
perturbation. Event-wise propagation itself is standard in perturbation
analysis. The statement below additionally derives the integer indices
selected by explicit Euler, including accumulated Euler bias, and proves
a uniform endpoint expansion away from the resulting scaled boundaries.
No independence, density, or equidistribution of event phases is used.

\subsection{Geometric and regularity hypotheses}
Fix $T>0$ and a finite number $m\geq0$ of events. A finite collection of
$C^2$ functions $g_s$, $1\leq s\leq S$, defines the switching surfaces.
The executed field is selected by their signs. Every regional field used
below has a $C^2$ extension to an open neighborhood of the relevant path
and its event tubes. The Euler program evaluates the field at the current
grid state and then takes one explicit Euler step. At a zero sign it may
use any fixed convention; our conclusion concerns grids with strict signs.

The base continuous trajectory $x(t)$ is continuous and follows the
successive regional fields $f_0,\ldots,f_m$ on the intervals separated by
\[
 0=t_0<t_1<\cdots<t_m<t_{m+1}=T,
 \qquad L_i=t_{i+1}-t_i,\qquad x_i=x(t_i).
\]
Its events are isolated single-surface crossings. More precisely, assume
the following quantitative neighborhood geometry.
\begin{enumerate}
\item $L_i\geq\Delta>0$. There is $a\in(0,\Delta/4)$ such that the
time windows $[t_j-a,t_j+a]$ are disjoint. Within a neighborhood $U_j$
of each such path arc, just one surface can vanish, and its two sign
regions select exactly $f_{j-1}$ and $f_j$. Orient its local defining
function as $g_j^*=\sigma_jg_{s_j}$, $\sigma_j\in\{-1,1\}$, so that
the crossing is from negative to positive. Thus the same $G_1,G_2$
bounds apply to the oriented functions. The other defining functions have absolute values at least
$\gamma>0$ on $U_j$.
\item Both adjacent smooth extensions satisfy
\[
 \nabla g_j^*(y)^\top f_{j-1}(y)\geq\nu>0,
 \qquad \nabla g_j^*(y)^\top f_j(y)\geq\nu>0
 \quad(y\in U_j).
\]
Outside these time windows, every surface has absolute value at least
$\gamma$ on the base path. These base arcs and event arcs have a common
positive spatial tube radius $\rho$ within the indicated neighborhoods.
The same physical surface may occur at different, separated events;
its orientation is chosen separately at each occurrence.
\item On a closed neighborhood containing these tubes, the short smooth
extensions at their endpoints, and straight segments used in Taylor
estimates, there are bounds
\[
 \|f_i\|\leq F_0,\quad\|Df_i\|\leq F_1,
 \quad\|D^2f_i\|\leq F_2,\qquad
 \|\nabla g_s\|\leq G_1,\quad\|\nabla^2g_s\|\leq G_2.
\]
The indicated neighborhood is fixed independently of the mesh.
\end{enumerate}
For example, isolated transverse events of a fixed finite base path give
these constants after shrinking the windows and tubes, provided the
outgoing normal speed has the same strictly positive sign. Merely having
a nonzero incoming speed is not sufficient: the outgoing condition
excludes immediate return, sticking, or sliding. No discrete crossing
index, discrete event order, or perturbed itinerary is assumed.

On segment $i$, write $\Phi_i(t,s)$ for the ordinary smooth variational
propagator along its base path, and define $P_i=\Phi_i(t_{i+1},t_i)$.
Let $w_i(t)$ solve
\begin{equation}
 \dot w_i=Df_i(x(t))w_i-\tfrac12Df_i(x(t))f_i(x(t)),
 \qquad w_i(t_i)=0,
 \qquad w_i=w_i(t_{i+1}).
 \label{eq:smooth-euler-bias}
\end{equation}
Also set
\[
 n_j=\nabla g_j^*(x_j),\qquad
 u_j=n_j^\top f_{j-1}(x_j)\geq\nu,
 \qquad \Delta_j=f_j(x_j)-f_{j-1}(x_j).
\]
These are base-path quantities. In particular, $P_i$ contains no
saltation matrix, and $w_i$ is the smooth-region Euler bias.

\subsection{Recursive branch law and theorem}
For $p\in\mathbb T^m$ and $z\in\mathbb R^d$, put $b_0=z$ and recurse
chronologically:
\begin{align}
 a_j&=P_{j-1}b_{j-1}+w_{j-1},\nonumber\\
 c_j(p,z)&=p_j-n_j^\top a_j/u_j,\nonumber\\
 s_j&=\lceil c_j(p,z)\rceil-p_j,
 \qquad b_j=a_j-\Delta_js_j,
 \qquad 1\leq j\leq m,\nonumber\\
 e_p(z)&=P_mb_m+w_m.
 \label{eq:finite-smooth-recursion}
\end{align}
Integer changes in a lift of $p_j$ leave $s_j$ unchanged. At an integer
$c_j$ the formula provides a candidate value but the following theorem
does not identify the actual branch. For $m=0$ the recursion is simply
$e(z)=P_0z+w_0$.

\paragraph{Detailed statement of Theorem~\ref{thm:finite-smooth-response}.}
Under the preceding hypotheses, let $K\subset\mathbb R^d$ be compact,
$Z=\sup_{z\in K}\|z\|<\infty$, and $0<\mu<1/2$.
There are finite constants $C$, $B$ and $\eta_*>0$, depending only on
\[
 (m,T,Z,F_0,F_1,F_2,G_1,G_2,\nu^{-1},\Delta^{-1},
 a^{-1},\rho^{-1},\gamma^{-1},\mu^{-1}),
\]
such that the following holds. For every integer $N$ with
$\eta=T/N\leq\eta_*$, define
$p_N=(\{t_1/\eta\},\ldots,\{t_m/\eta\})$. If
\begin{equation}
 \inf_{z\in K}\min_{1\leq j\leq m}
 \operatorname{dist}(c_j(p_N,z),\mathbb Z)\geq\mu,
 \label{eq:finite-smooth-margin}
\end{equation}
then every perturbed Euler path starting at $x_0+\eta z$, $z\in K$,
has exactly the base sequence of $m$ events, with first positive indices
\begin{equation}
 k_j(z)=\left\lceil\frac{t_j}{\eta}
                    -\frac{n_j^\top a_j(p_N,z)}{u_j}\right\rceil.
 \label{eq:finite-smooth-indices}
\end{equation}
All its executed signs are strict. Uniformly on $K$,
\begin{equation}
 \left\|\Theta_{\eta,T}(x_0+\eta z)-x_{m+1}
                          -\eta e_{p_N}(z)\right\|\leq C\eta^2,
 \qquad \sup_{z\in K}\|e_{p_N}(z)\|\leq B.
 \label{eq:finite-smooth-expansion}
\end{equation}
For $m=0$ the phase condition is vacuous and there are no event indices.
The constants $C,B$ can be chosen independently of $\mu$; the sufficient
mesh threshold depends on $\mu$.

If $0\in K$, $q\in C^2$ near $x_{m+1}$, and
$\mathcal R_\eta(z)=[q(\Theta_{\eta,T}(x_0+\eta z))
-q(\Theta_{\eta,T}(x_0))]/\eta$, then uniformly on $K$,
\begin{equation}
 \mathcal R_\eta(z)=\nabla q(x_{m+1})^\top
       [e_{p_N}(z)-e_{p_N}(0)]+O_K(\eta).
 \label{eq:finite-smooth-objective}
\end{equation}
If $p_N\to p$ along a subsequence and every $c_j(p,z)$ on $K$ has
distance at least $\mu$ from the integers, the same conclusion holds on
that subsequence for all sufficiently large $N$, with
$e_{p_N}$ replaced by $e_p$ and error
$O_K(\eta+\operatorname{dist}_{\mathbb T^m}(p_N,p))$.

\subsection{A restart-aware smooth Euler estimate}
We first prove the regional estimate, including its bias sign. Let
$x'=f(x)$ on a smooth reference segment. For an initial fixed-clock
error $b$, let
\[
 E'=Df(x)E-\tfrac12Df(x)f(x),\qquad E(t_i)=b.
\]
Thus $E(t_{i+1})=P_ib+w_i$. Define the comparison curve
$y(t)=x(t)+\eta E(t)$. Since
$x''=Df(x)f(x)$, its one-step residual is
\begin{align*}
 y(t+\eta)-y(t)-\eta f(y(t))
 &=\eta^2[\tfrac12Df(x)f(x)+E'-Df(x)E]+O(\eta^3)\\
 &=O(\eta^3).
\end{align*}
This proves the negative sign in \eqref{eq:smooth-euler-bias}.
Only $C^2$ regularity is needed: $x'''=D^2f(x)[f,f]+Df(x)Df(x)f$
is bounded, and differentiating the equation for $E$ once uses only
$D^2f$. At an early grid restart, $t_i+\eta\sigma<t_i$, both $x$ and
$E$ mean their hypothetical smooth continuation under the outgoing
regional extension; no claim is made that the switched base flow follows
that extension before $t_i$. The fixed neighborhoods permit these short
backward continuations. The bounds below use absolute elapsed time and
therefore cover them as well as forward continuations. For explicit bounds, on a time interval of length at most
$D=T+2$ put
\begin{align*}
 \Gamma&=e^{F_1D}, & X_3&=F_2F_0^2+F_1^2F_0,\\
 V(U)&=\Gamma(U+\tfrac12F_1F_0D),&
 V'(U)&=F_1V(U)+\tfrac12F_1F_0,\\
 V''(U)&=F_2F_0V(U)+F_1V'(U)+\tfrac12X_3,&
 A(U)&=X_3/6+V''(U)/2+F_2V(U)^2/2,\\
 H_U(s)&=\tfrac12F_1F_0s^2+V'(U)|s|.&&
\end{align*}
For $\|b\|\leq U$, the one-step residual is at most $A(U)\eta^3$.
This follows from Taylor's integral remainders for $x$, $E$, and $f$;
the last contributes at most $F_2\eta^3V(U)^2/2$.

Suppose a regional Euler block starts at the grid clock
$t_i+\eta\sigma$ with state
\[
 x_i+\eta\{b+f_i(x_i)\sigma\}+r,
 \qquad\|r\|\leq C_i\eta^2.
\]
Taylor's formula gives an initial discrepancy from $y(t_i+\eta\sigma)$
of at most $\eta^2[C_i+H_U(\sigma)]$. A discrete Gronwall estimate for
the one-step residual, in the actual order of Euler steps, then gives
at the grid clock $t_{i+1}+\eta s$ the expansion
\begin{equation}
 x_{i+1}+\eta\{P_ib+w_i+f_i(x_{i+1})s\}+r',
 \quad
 \|r'\|\leq\eta^2\{
 \Gamma[C_i+H_U(\sigma)+D A(U)]+H_U(s)\}.
 \label{eq:restart-smooth-lemma}
\end{equation}
Indeed, each discrepancy grows by at most $1+\eta F_1$ and there are
at most $D/\eta$ residuals. This argument also holds at every intermediate
grid clock, not just at the endpoint. Short extensions of length
$\eta|\sigma|$ or $\eta|s|$ are allowed inside the fixed neighborhoods.
A stopped-neighborhood argument closes the estimate: derive it up to the
first possible exit, and choose the mesh so its error is smaller than
the available tube radius; that exit is then impossible.

\subsection{Proof of the theorem and explicit dependency recursion}
\begin{proof}
First bound the proposed offsets without choosing any mesh or branch.
Set $U_0=Z$. For $j=1,\ldots,m$ recursively define
\begin{align*}
 A_j&=e^{F_1L_{j-1}}U_{j-1}
       +\tfrac12F_1F_0L_{j-1}e^{F_1L_{j-1}},\\
 S_j&=1+G_1A_j/\nu,\qquad
 U_j=A_j+2F_0S_j.
\end{align*}
Then $\|a_j\|\leq A_j$, $|s_j|\leq S_j$, and $\|b_j\|\leq U_j$
for every phase and $z\in K$. These follow from
$|\lceil p-\xi\rceil-p|\leq1+|\xi|$, the variational bound
$\|P_i\|\leq e^{F_1L_i}$, the corresponding integral bound for $w_i$,
and $\|\Delta_j\|\leq2F_0$. Thus
$B=e^{F_1L_m}(U_m+F_1F_0L_m/2)$ suffices for the endpoint coefficient.
Put $S_0=S_{m+1}=0$, $C_0=0$, and set
\begin{equation}
 C_{i+1}=\Gamma[C_i+H_{U_i}(S_i)+D A(U_i)]
              +H_{U_i}(S_{i+1}+\mathbf1_{i<m}),
 \qquad 0\leq i\leq m.
 \label{eq:finite-smooth-constants}
\end{equation}
The extra one covers the grid point just before a proposed crossing.
This recursion uses only the displayed data and bounds every remainder
needed in \eqref{eq:restart-smooth-lemma}. In particular $C=C_{m+1}$
is an admissible endpoint constant.

We now identify the actual first event. The smooth Euler extension of
$f_0$ at a grid point $k\eta=t_1+\eta s$ has the expansion
\[
 x_1+\eta[a_1+f_0(x_1)s]+r,\qquad\|r\|\leq C_1\eta^2.
\]
Applying the nonlinear surface and using $g_1^*(x_1)=0$ gives
\begin{equation}
 g_1^*(\theta_k)=\eta u_1(s+n_1^\top a_1/u_1)
                                      +O(\eta^2).
 \label{eq:nonlinear-surface-expansion}
\end{equation}
For $|s|\leq S_1+1$ and $\eta C_1\leq1$, the error in this equation
is at most $J_1\eta^2$, where generally
\[
 J_j=G_1C_j+\tfrac12G_2[A_j+F_0(S_j+1)+1]^2.
\]
Thus curvature enters only the $O(\eta^2)$ remainder. It produces no
additional term in the first-order ceiling argument.

At the integer $k_1$ proposed by \eqref{eq:finite-smooth-indices} the
leading surface value is at least $\eta u_1\mu$; at $k_1-1$ it is at
most $-\eta u_1\mu$. For $\eta J_1<\nu\mu/2$ these are strictly
positive and negative, respectively. Outside the first event window,
the smooth regional estimate and surface margin exclude any crossing.
Inside its tube, for either adjacent regional extension,
\[
 g_1^*(y+\eta f(y))-g_1^*(y)
 \geq\eta\nu-\tfrac12\eta^2G_2F_0^2\geq\eta\nu/2
\]
when $\eta G_2F_0^2\leq\nu$. Consequently no earlier zero or positive
grid state can precede the negative value at $k_1-1$: $k_1$ is exactly
the first positive index. The other surfaces cannot change in this tube.
The outgoing monotonicity excludes a return across the same surface.

At this actual crossing state the expansion can be rewritten exactly at
first order as
\[
 x_1+\eta[a_1+f_0(x_1)s_1]+r
 =x_1+\eta[b_1+f_1(x_1)s_1]+r,
 \qquad b_1=a_1-\Delta_1s_1.
\]
This is the restart error at the fixed reference clock; it is not an
independent continuously shifted event time. Apply
\eqref{eq:restart-smooth-lemma} in region $1$, starting at the actual
grid clock $k_1\eta$. It gives the incoming error
$a_2=P_1b_1+w_1$, including the first index's rounding error.
The same two-sign and monotonicity argument at the second surface
derives $k_2$. Induction derives every $k_j$, its restart $b_j$, and
the expansion on the next smooth block. No commutation of variational
propagators is used.

Here are explicit sufficient types of mesh restrictions completing that
induction. In addition to $\eta\leq1$, require
\[
 \eta\max_j(S_j+1)<\min\{a/4,1\},
 \quad \eta\max_j C_j\leq1,
 \quad \eta\max_j J_j<\nu\mu/2,
 \quad \eta G_2F_0^2\leq\nu.
\]
The first puts all candidate crossings inside their separated windows,
so $k_1<\cdots<k_m$ and $0<k_1$, $k_m<N$.
The regional lemma bounds the discrepancy from the base path throughout
all blocks by $\eta B_{\rm tube}$ for a finite constant depending on
the same recursion; for example, after decreasing the mesh to make the
second-order errors at most $\eta$, one may take
\[
 B_{\rm tube}=1+\max_iV(U_i)+2F_0\max_j(S_j+1).
\]
Choose additionally
$\eta B_{\rm tube}<\rho/4$,
$\eta F_0<\rho/4$, and
$\eta G_1B_{\rm tube}<\gamma/2$.
The tube bounds keep every relevant state and Euler step in the stipulated
neighborhoods. The surface bounds exclude extra events in the intervening
corridors and changes of other surfaces in event tubes. The intermediate
regional estimate invoked here is proved on the candidate smooth
extension before assuming its signs, so this is not circular.
There are only finitely many restrictions, all computable from the
displayed data, and they give a positive $\eta_*$.
The final block has no further ceiling and gives
\eqref{eq:finite-smooth-expansion} with $C_{m+1}$.

For the objective statement, take a ball of radius $r_q>0$ about
$x_{m+1}$ where $q$ is $C^2$ and $\|\nabla^2q\|\leq M_q$.
Decrease the objective's mesh threshold, if needed, so that
$\eta B+\eta^2C<r_q$. This additional threshold does not change the
state constants or its geometric hypotheses. If
$\|\nabla q(x_{m+1})\|\leq G_q$, Taylor's formula gives
an error in \eqref{eq:finite-smooth-objective} bounded by
\[
 2\eta\{G_q C+\tfrac12M_q(B+\eta C)^2\}.
\]
Finally, the ceiling arguments are affine in $(p,z)$ on each preceding
ceiling cell. A uniform positive margin preserves all their integer
values under a sufficiently small change of a consistent local phase
lift, by induction over $j$. The coefficients are bounded by the same
propagator and jump bounds. Therefore $e_p$ is uniformly Lipschitz in
phase on these neighborhoods, proving the subsequence conclusion.
\end{proof}

\subsection{Downstream boundaries, reductions, and excluded limits}
For $j>r$, write
$P_{j-1:r}=P_{j-1}P_{j-2}\cdots P_r$ in chronological order.
Unrolling the recursion gives
\[
 a_j=A_j^{\rm free}(z)-\sum_{r<j}P_{j-1:r}\Delta_rs_r,
\]
where $A_j^{\rm free}$ includes the initial displacement and all smooth
Euler biases but no event-rounding updates. Hence the $j$th scaled
boundary is
\begin{equation}
 p_j-\frac{n_j^\top A_j^{\rm free}(z)}{u_j}
 +\sum_{r<j}\frac{n_j^\top P_{j-1:r}\Delta_r}{u_j}s_r
 \in\mathbb Z.
 \label{eq:all-downstream-boundaries}
\end{equation}
This expression states the contribution of every upstream index to a
later threshold. Deleting these terms defines the
\emph{no-upstream-feedback ablation}; it is not a published competing
algorithm. Inside a fixed complete ceiling cell, $D_ze_p=P_m\cdots P_0$.
Across a cell change with integer offset differences $h_j$, the terminal
coefficient changes by
$-\sum_jP_{m:j}\Delta_jh_j$, with the downstream integers themselves
determined recursively. Nonlinear surface curvature affects the finite
location of the boundaries at the next order, so their leading scaled
hyperplanes need not be the exact finite-grid boundaries.

For $f_i(x)=M_ix+d_i$, $\Phi_i(t,s)=e^{(t-s)M_i}$ and
$f_i(x(t))=e^{(t-t_i)M_i}f_i(x_i)$. Variation of constants in
\eqref{eq:smooth-euler-bias} therefore gives exactly
\[
 w_i=-\tfrac12L_i e^{L_iM_i}M_if_i(x_i),
\]
recovering the affine recursion. No symmetry or contractivity is needed
for the smooth first-order theorem. The sharper finite spectral
certificate below retains those stronger affine hypotheses.

With one event this is the single ceiling law. With zero events it is
the ordinary first-order Euler-error expansion. If an event has
$\Delta_j=0$, its rounding update vanishes at first order, although
the variational field may still change. With all jumps zero the
first-order response is phase-independent and equals the continuous
variational response. Independent coordinate fields remove the relevant
cross-coordinate coefficients in \eqref{eq:all-downstream-boundaries};
commutativity alone does not imply that every such coefficient is zero.

The theorem makes no uniform assertion when $m$ grows, $\nu$ or event
separation tends to zero, a new surface enters a tube, or a derived
phase margin collapses. At an exact scaled boundary, the second-order
surface terms can choose the integer; the leading ceiling law alone
does not determine it. Coincident events, sliding, grazing, and reset
maps are outside the stated class. A numerical nonlinear network trace
does not verify these neighborhood hypotheses merely by having a
plausible event list.

\subsection{Two coupled events as an explicit example}
\label{app:coupled-response}
The general theorem specializes to the following exact affine geometry. Its two-index asymptotic proof is the $m=2$ case above; the finite boundaries and invariant-region checks below supply the example-specific information.
\subsubsection{A strongly convex affine realization}
Let $H=A^\top A\succ0$, $d=A^\top b$, and consider
\begin{equation}
 L(\theta)=\tfrac12\|A\theta-b\|^2+
 \tfrac12\sum_{j=1}^2[\operatorname{ReLU}(n_j^\top\theta-\tau_j)-c_j]^2,
 \qquad c_j\le0.
 \label{eq:coupled-risk}
\end{equation}
Each scalar residual term is convex: its derivative jumps from zero to
$-c_j\ge0$ at zero and has nonnegative derivative slope on each side.
Thus $L$ is globally $\lambda_{\min}(H)$-strongly convex. In active set
$S$, the negative gradient is affine,
\[
 f_S(\theta)=M_S\theta+d_S,\quad
 M_S=-H-\sum_{j\in S}n_jn_j^\top,\quad
 d_S=d+\sum_{j\in S}(\tau_j+c_j)n_j.
\]
The regional matrices are symmetric negative definite; this does not make
them commute. The jump on activating surface $j$ is $\Delta_j=c_jn_j$.

Assume the base flow from $\theta_0$ has precisely two activations, in
order $1,2$, at $0<t_1<t_2<T$, and no deactivations. There are disjoint
event tubes, positive distance from every other surface outside its tube,
and a positive minimum separation $\Delta$ of events and endpoints.
In each tube both adjacent normal speeds are at least $\nu>0$.
These are hypotheses about the continuous base path and its neighborhood,
not about a desired discrete word. They exclude grazing, recrossing,
simultaneity and new events. Fixed nonzero coupling is allowed throughout.
Write $t_0=0,t_3=T$, $x_i=\varphi_{t_i}(\theta_0)$, and let $M_i,d_i$
be the fields on the three successive regions, indexed $i=0,1,2$.
Define
\begin{align}
 L_i&=t_{i+1}-t_i,& P_i&=e^{L_iM_i},&
 w_i&=-\tfrac12L_iP_iM_if_i(x_i),\\
 u_j&=n_j^\top f_{j-1}(x_j)>0,&
 \Delta_j&=f_j(x_j)-f_{j-1}(x_j),&&j=1,2.
 \label{eq:coupled-data}
\end{align}
Here $w_i$ is the first-order Euler drift at a fixed clock, not an event
transfer. Assume $q\in C^2$ on a neighborhood of the terminal states.

\subsubsection{The two-event recursion}
For $p=(p_1,p_2)$ on the two-dimensional torus and a scaled displacement
$z\in\mathbb R^d$, construct $e_p(z)$ as follows:
\begin{align}
 a_1&=P_0z+w_0,&
 s_1&=\lceil p_1-n_1^\top a_1/u_1\rceil-p_1,&
 b_1&=a_1-\Delta_1s_1,\nonumber\\
 a_2&=P_1b_1+w_1,&
 s_2&=\lceil p_2-n_2^\top a_2/u_2\rceil-p_2,&
 b_2&=a_2-\Delta_2s_2,\nonumber\\
 e_p(z)&=P_2b_2+w_2.&&&&
 \label{eq:coupled-recursion}
\end{align}
Ceilings at integer arguments are excluded in the local-limit statement.
The phases $p_j$ alone are not corrected event phases: the drift and
incoming error inside each ceiling are essential.

\paragraph{The coupling and the branch jumps.}
There are exact finite-grid boundaries behind this expansion. Write
$\mathcal E_i(y)=(I+\eta M_i)y+\eta d_i$. Before the first event,
the boundary for landing at index $k$ is the affine hyperplane
$n_1^\top\mathcal E_0^k(\theta_0+\eta z)=\tau_1$, restricted by all
earlier strict signs. On a fixed first-index cell $k_1$, the analogous
second boundary is
$n_2^\top\mathcal E_1^{k_2-k_1}\mathcal E_0^{k_1}
(\theta_0+\eta z)=\tau_2$. These equations follow by solving the
executed affine recurrences; their expansion gives the boundaries below.
If a first boundary changes the first positive index from $k+1$ to $k$
while the second index stays $k_2$, the right (earlier-crossing) minus
left endpoint jump is exactly
\[
 \eta(I+\eta M_2)^{N-k_2}(I+\eta M_1)^{k_2-k-1}\Delta_1.
\]
Indeed the incoming state at index $k$ is on the surface, and the two
one-step continuations differ by $\eta\Delta_1$; all subsequent affine
differences propagate in the displayed order. If the second index also
changes, this expression alone is insufficient; the recursive second
ceiling determines its additional leading jump.

Inside a fixed first ceiling cell, the first scaled boundary is
$p_1-n_1^\top(P_0z+w_0)/u_1\in\mathbb Z$. The second boundary is
\begin{equation}
 p_2-\frac{n_2^\top[P_1(P_0z+w_0)+w_1]}{u_2}
 +\frac{n_2^\top P_1\Delta_1}{u_2}s_1\in\mathbb Z.
 \label{eq:coupled-second-boundary}
\end{equation}
The last term is the first event's rounding displacement of the second
boundary. Deleting it is the no-upstream-feedback ablation used in E2.
When a first-event boundary changes $s_1$ by an integer $h$, the direct
terminal error changes by $-P_2P_1\Delta_1h$; the second ceiling can also
change by an integer $h_2$, adding $-P_2\Delta_2h_2$. These formulas give
the endpoint jumps of the scaled map and preserve multiplication order.
On a fixed complete word the slope is $P_2P_1P_0$, as required by the
regional derivative limit. Saltation instead continuously changes the
event time; it contains neither ceiling nor downstream boundary shift.

\paragraph{Nonseparability and limiting checks.}
In E2, $H=\left(\begin{smallmatrix}1&0.4\\0.4&1.3\end{smallmatrix}\right)$,
$d=(2,2)$, $n_1=e_1,n_2=e_2$, $\tau_j=0$, $c=(-0.4,-0.6)$ and
$\theta_0=(-0.4,-1)$, $T=1$. Its $H$ is positive definite, and
$\|[M_1,M_0]\|_2=0.4$. A fixed linear change producing independent
scalar regional equations would simultaneously diagonalize these
matrices, contradicting their nonzero commutator. The second threshold
coefficient $n_2^\top P_1\Delta_1/u_2$ is nonzero (numerically $0.01336$).
The argument uses the specified fixed coupling, not its continuity at zero.
Its two-event geometry can be checked without selecting a trajectory by
its observed response. While both coordinates are negative, they increase,
with $2\le\dot x\le2.8$ and $2\le\dot y\le3.46$. Hence
$1/7\le t_1\le1/5$ and $-5/7\le y(t_1)\le-0.308$; the second
coordinate cannot cross first. In the next region the rectangle
$[0,1]\times[-1,0]$ is invariant until the second crossing, and
$1.6\le\dot y\le3.3$. Its duration is between $0.308/3.3$ and
$(5/7)/1.6$, so $t_2<0.647<T$. Afterward
$[0,1]\times[0,14/23]$ is invariant: all four boundary fields point
inward. Outgoing normal speeds at the two crossings are at least $1.6$
and $1$, respectively. Compact subsegments and continuity supply the
stated isolating tubes and margins. Finally the off-diagonal entry of
$e^{L_1M_1}$ is strictly negative for $L_1>0$ (diagonalizing this symmetric
two-by-two matrix gives a negative multiple of a hyperbolic sine).
Since $\Delta_1=-0.4e_1$, the coupling coefficient is strictly positive,
not merely nonzero to numerical precision.
For $H$ diagonal and these coordinate normals, this coefficient is zero
and each event depends only on its own coordinate. Deleting the second
event recovers the single-event ceiling law; in the scalar residual case
$a_1=e^{-t}z+mt/2$, so the effective phase is
$\{t/\eta-t/2\}$, agreeing with $\{t/[-\log(1-\eta)]\}$ to $O(\eta)$.
Its jumps reproduce Theorem~\ref{thm:scale-critical}. If $\Delta_j=0$,
that event's rounding displacement vanishes at first order, though the
regional Hessian can still change. With both jumps zero the response is
the ordinary continuous variational response; no spurious phase effect
survives. No independence or equidistribution of the two event phases
is assumed or needed.

\section{Finite-grid comparison certificates in the affine class}
\label{app:finite-event-certificate}
The asymptotic branch law applies to smooth regional fields. The explicit spectral certificates in this section retain symmetric negative-definite affine regional matrices; the nonlinear theorem does not inherit these bounds. All sign guarantees below are real-arithmetic statements, evaluated numerically in the experiments.
\subsection{A first-order comparison bound for the two-event example}
The following finite bound is conservative but does not hide an
unspecified asymptotic constant. It also handles inexact event localization.
Choose any strictly ordered \emph{supplied} knots $0<\widehat t_1<
\widehat t_2<T$. Construct scheduled states $\widehat x_i$ by exact regional
flows for these durations, whether or not the knots are exact event times.
Compute the recursion with this scheduled data, including the actual field
differences $f_j(\widehat x_j)-f_{j-1}(\widehat x_j)$ and positive incoming
speeds. Its candidate integers need not be right. Verify the candidate
word by evaluating all its regional affine grid extensions and checking
the required strict signs at every executed state. If this fails, return
\emph{unresolved}; do not repair the prediction with observed endpoint
objectives. A validated root/tube method can replace this exhaustive check,
but is not assumed here.

Set $k_0=0,k_3=N$, $s_j=k_j-\widehat t_j/\eta$, $s_0=s_3=0$,
$\delta_i=s_{i+1}-s_i$ and $\widehat L_i=\widehat t_{i+1}-\widehat t_i$.
Let $g_i=\|G_i\|$, $r_i=\eta g_i<1$, and define
\begin{align}
 B_i&=\frac{|\delta_i|g_i}{1-r_i}
 +\frac{(\widehat L_i+\eta|\delta_i|)g_i^2}{2(1-r_i)^2},\nonumber\\
 C_i&=\frac{|\delta_i|g_i^2}{(1-r_i)^2}
 +\frac{2(\widehat L_i+\eta|\delta_i|)g_i^3}{3(1-r_i)^3},\nonumber\\
 K_i&=1+2\|M_i^{-1}d_i\|,\qquad B_*={\textstyle\sum_i}B_i,
 \quad C_*={\textstyle\sum_i}C_i,\nonumber\\
 r_\eta(z)&=\frac{\eta^2}{2}\prod_{i=0}^2K_i
 \left[(B_*^2+C_*)(\|Y_0\|+\eta\|z\|)+2B_*\|z\|\right].
 \label{eq:word-remainder}
\end{align}
Require $\widehat L_i+\eta\delta_i>0$. Let $\widehat e(z)$ denote the
scheduled-data recursion and $\widehat g=\nabla q(\widehat x_3)$.

\begin{proposition}[Finite comparison certificate]
\label{prop:coupled-certificate}
If the preceding word checks pass, then
\[
 \|\Theta_{\eta,T}(\theta_0+\eta z)-\widehat x_3-\eta\widehat e(z)\|
 \le r_\eta(z).
\]
For $z_\pm=\pm\zeta v$, $\|v\|=1$, define, using only the supplied
base data and the recursion,
\[
 \widehat D=\frac{\widehat g^\top[\widehat e(z_+)-\widehat e(z_-)]}{2\zeta}.
\]
If $\|\nabla^2q\|\le M_q$ on the endpoint Taylor segments, then
\begin{equation}
 |D_{\eta,\zeta\eta}(v)-\widehat D|\le
 \mathcal E=\frac{\|\widehat g\|(r_++r_-)
 +\frac12M_q\sum_{\sigma=\pm}(\eta\|\widehat e(z_\sigma)\|+r_\sigma)^2}
 {2\zeta\eta},\qquad r_\sigma=r_\eta(z_\sigma).
 \label{eq:coupled-certificate}
\end{equation}
Consequently $|\widehat D|>\mathcal E$ certifies the comparison sign;
otherwise the certificate returns unresolved. For a fixed positive margin,
bounded offsets and a nonzero limiting response this test succeeds on all
sufficiently fine meshes. It is not necessarily effective at coarse meshes.
\end{proposition}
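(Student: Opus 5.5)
The plan is to exploit that, once the candidate word $(k_1,k_2)$ has passed its strict sign checks, the executed program is an exact composition of affine maps:
\[
\Theta_{\eta,T}(\theta_0+\eta z)=\mathcal E_2^{N-k_2}\mathcal E_1^{k_2-k_1}\mathcal E_0^{k_1}(\theta_0+\eta z),
\]
with no asymptotic localization needed. In centered coordinates $y-h_i$, where $h_i=-M_i^{-1}d_i$, each block acts linearly: $\mathcal E_i^k(y)-h_i=(I+\eta M_i)^k(y-h_i)$. The exact regional flow over the supplied duration $\widehat L_i$ acts as $e^{\widehat L_iM_i}$ in the same coordinates. Block $i$ has length $k_{i+1}-k_i=\widehat L_i/\eta+\delta_i$. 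So the whole comparison reduces to expanding
\[
(I+\eta M_i)^{\widehat L_i/\eta+\delta_i}=e^{\widehat L_iM_i}\bigl[I+\eta X_i+\eta^2 Y_i\bigr],\qquad X_i=\delta_iM_i-\tfrac12\widehat L_iM_i^2 .
\]
Since $M_i$ is symmetric with $\eta\|M_i\|<1$, I will prove this with a scalar argument on each eigenvalue, writing $(1+\eta\lambda)^{k}=\exp\bigl(k\log(1+\eta\lambda)\bigr)$ and using the series for $\log(1+x)$. Geometric tails in $r_i=\eta g_i$ should give exactly the factors $(1-r_i)^{-1},(1-r_i)^{-2},(1-r_i)^{-3}$ appearing in $B_i$ and $C_i$. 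Contractivity makes all factors $e^{\widehat L_iM_i}$ and $(I+\eta M_i)^k$ of norm at most one, so no Gronwall growth enters.

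Next, the three blocks are chained in chronological order without commuting any matrices. At each knot, re-centering from $h_{j-1}$ to $h_j$ produces the offset factors $K_i=1+2\|M_i^{-1}d_i\|$, since the norm of the state relative to each new center is controlled by the previous one plus $\|h_{j-1}\|+\|h_j\|$. Collecting the first-order terms should reproduce the recursion exactly:
\begin{itemize}
\item the $\eta P_iX_i(\widehat x_i-h_i)$ pieces give the drifts $w_i$ together with the rounding shifts;
\item the rounding shift at knot $j$, namely $s_j$ times the field difference, is $f_j(\widehat x_j)-f_{j-1}(\widehat x_j)=(M_j-M_{j-1})\widehat x_j+(d_j-d_{j-1})$, which appears when $\delta_j$ is rewritten as $s_{j+1}-s_j$;
\item the displacement term $\eta z$ is propagated by $P_i$.
\end{itemize}
Together these give $\widehat x_3+\eta\widehat e(z)$. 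The leftover consists of products of first-order pieces from distinct blocks, bounded by $B_*^2$; intrinsic second-order pieces, bounded by $C_*$; and the cross term between $\eta z$ and $X_i$, giving $2B_*\|z\|$. Together these should assemble into $r_\eta(z)$. I expect this bookkeeping to be the main obstacle: checking that the first-order collection is exactly $\widehat e(z)$ for inexact knots, and that the telescoping $\delta_i=s_{i+1}-s_i$ absorbs the re-centering. I will first verify it for a single event and then induct over the blocks.

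The objective bound is then routine. Write $\Theta_\pm=\widehat x_3+\eta\widehat e(z_\pm)+\rho_\pm$ with $\|\rho_\pm\|\le r_\pm$, and Taylor expand $q$ about $\widehat x_3$:
\[
q(\Theta_\pm)=q(\widehat x_3)+\widehat g^\top\bigl(\eta\widehat e(z_\pm)+\rho_\pm\bigr)+R_\pm,\qquad |R_\pm|\le\tfrac12M_q\bigl(\eta\|\widehat e(z_\pm)\|+r_\pm\bigr)^2 .
\]
Subtracting the two expansions and dividing by $2\zeta\eta$ gives \eqref{eq:coupled-certificate}. It follows that $|\widehat D|>\mathcal E$ fixes the sign of $D_{\eta,\zeta\eta}(v)$.

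For the asymptotic effectiveness claim I will argue as follows:
\begin{itemize}
\item $r_\eta=O(\eta^2)$ for bounded offsets, so $\mathcal E=O(\eta)$;
\item under a fixed positive phase margin, Theorem~\ref{thm:coupled-response} guarantees that the recursion's word is the executed one, so the check passes on fine meshes;
\item $\widehat D$ converges to a nonzero limiting response.
\end{itemize}
Hence the strict margin $|\widehat D|>\mathcal E$ eventually holds.
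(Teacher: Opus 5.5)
\textbf{Where the proposal falls short: the remainder constant.} Your objective step (Taylor expansion of $q$ about $\widehat x_3$, subtraction, division by $2\zeta\eta$) matches the paper. So does your identification of the first-order term: drifts come from the $-\tfrac12\widehat L_iM_i^2$ part, and restart terms $-\Delta_js_j$ come from $\delta_i=s_{i+1}-s_i$. The gap is the state inequality itself. That is exactly the step you defer (``should assemble into $r_\eta(z)$''), and the mechanism you describe would not produce that constant.

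In the paper, $G_i$ is the homogenized generator $(y,t)\mapsto(M_iy+td_i,0)$ on $\mathbb R^{d+1}$, $Y_0=(\theta_0,1)$ and $Z=(z,0)$. The key device is the analytic interpolation $Y(s)=U_2(s)U_1(s)U_0(s)(Y_0+sZ)$, with $U_i(s)=\exp[(\widehat L_i+s\delta_i)s^{-1}\log(I+sG_i)]$ for $0\le s\le\eta$. It equals the candidate Euler word at $s=\eta$ and the scheduled flow at $s=0$.
\begin{itemize}
\item The factors $(1-r_i)^{-2}$ and $(1-r_i)^{-3}$ in $B_i,C_i$ are uniform bounds on the first and second $s$-derivatives of $s^{-1}\log(I+sG_i)$ on $[0,\eta]$. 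Tails of the logarithm series at the single value $\eta$ only give $(1-r_i)^{-1}$.
\item The bracket $(B_*^2+C_*)(\|Y_0\|+\eta\|z\|)+2B_*\|z\|$ is $\sup\|Y''\|$ from the product rule, using $\|U_i\|\le K_i$, $\|U_i'\|\le K_iB_i$ and $\|U_i''\|\le K_i(B_i^2+C_i)$. It enters through a single Taylor remainder $\tfrac{\eta^2}{2}\sup\|Y''\|$.
\end{itemize}
If you instead multiply out three truncated expansions $e^{\widehat L_iM_i}[I+\eta X_i+\eta^2Y_i]$, you get leftover terms such as $\eta^3X_jY_i$, $\eta^4Y_jY_i$ and triple products that are not on your list. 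Also, nothing in your centered formulation produces $\|Y_0\|$.

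\textbf{How to close your route.} It works, with a bound no larger than $r_\eta(z)$, if you organize the remainder recursively rather than by expansion. Set $E_i=(I+\eta M_i)^{k_{i+1}-k_i}$ and $X_i=\delta_iM_i-\tfrac12\widehat L_iM_i^2$. Let $\epsilon_i$ be the actual minus scheduled state at knot $i$, and $\eta e_i$ its first-order part, with $e_0=z$ and $e_{i+1}=P_ie_i+P_iX_i(\widehat x_i-h_i)$. Then $\rho_i=\epsilon_i-\eta e_i$ satisfies exactly $\rho_{i+1}=E_i\rho_i+(E_i-P_i)\eta e_i+(E_i-P_i-\eta P_iX_i)(\widehat x_i-h_i)$, with $\rho_0=0$.
\begin{itemize}
\item On each eigenvalue $\lambda$, apply Taylor's theorem in $s$ to $\phi(s)=(\widehat L_i+s\delta_i)s^{-1}\log(1+s\lambda)$. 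Since $\phi(0),\phi(\eta)\le0$, the exponential factor drops out, giving $\|E_i-P_i\|\le\eta B_i$, $\|P_iX_i\|\le B_i$ and $\|E_i-P_i-\eta P_iX_i\|\le\tfrac{\eta^2}{2}(B_i^2+C_i)$.
\item Use $\|E_i\|\le1$ and $V=\max_i\|\widehat x_i-h_i\|\le\prod_iK_i\|Y_0\|$. Here $\|Y_0\|\ge\max\{1,\|\theta_0\|\}$ is what absorbs the re-centering.
\item Summing gives $\|\rho_3\|\le\tfrac{\eta^2}{2}[(B_*^2+C_*)V+2B_*\|z\|]\le r_\eta(z)$, because $B_i,C_i$ increase with $g_i$ and $\|M_i\|\le\|G_i\|$.
\end{itemize}
This is the first-order form of the paper's later spectral-jet recursion. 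The homogenized interpolation is shorter and delivers the displayed constant directly; your version gives a slightly smaller one.

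Finally, your effectiveness argument via Theorem~\ref{thm:coupled-response} needs the supplied knots to be the exact base events, or $o(\eta)$-accurate, as the paper notes.
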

\begin{proof}
The word check and induction over its grid states identify the actual
GD endpoint with the stated ordered affine product. For $0\le s\le\eta$
define the analytic family
\[
 Y(s)=U_2(s)U_1(s)U_0(s)(Y_0+sZ),\qquad
 U_i(s)=\exp[(\widehat L_i+s\delta_i)G_{i,s}].
\]
\[
 G_{i,s}=s^{-1}\log(I+sG_i),\qquad G_{i,0}=G_i.
\]
At $s=\eta$ this is exactly the candidate Euler word, because
$\widehat L_i/\eta+\delta_i=k_{i+1}-k_i$. At zero it is the scheduled
flow endpoint. Differentiating at zero, including the duration derivatives,
gives exactly $\widehat e(z)$ in the first $d$ coordinates: the terms from
$G_{i,s}'|_0=-G_i^2/2$ give $w_i$, and adjacent duration terms combine
as $-\Delta_js_j$ at each restart. This calculation does not require the
supplied knots to be exact surface roots.

The absolutely convergent logarithm series gives
\[
 \|G_{i,s}\|\le\frac{g_i}{1-r_i},\quad
 \|G_{i,s}'\|\le\frac{g_i^2}{2(1-r_i)^2},\quad
 \|G_{i,s}''\|\le\frac{2g_i^3}{3(1-r_i)^3}.
\]
For example the first derivative has coefficients $k/(k+1)\le k/2$;
the second has $k(k-1)/(k+1)\le k(k-1)/3$. Summing the geometric-series
derivatives proves the bounds. The exponent and its derivatives commute,
being functions of the same $G_i$. Its first two derivatives are bounded
by $B_i,C_i$. The top-left block of $U_i(s)$ is a contraction: the
eigenvalues of $M_i$ are negative, $1+s\lambda>0$, and the duration is
positive on $[0,\eta]$. The affine equilibrium is $-M_i^{-1}d_i$, so
$\|U_i(s)\|\le1+2\|M_i^{-1}d_i\|=K_i$. Hence
\[
 \|U_i'\|\le K_iB_i,\qquad
 \|U_i''\|\le K_i(B_i^2+C_i).
\]
Apply the product rule in its actual order, and the norm triangle
inequality, to obtain
\[
 \sup_{0\le s\le\eta}\|Y''(s)\|
 \le\prod K_i[(B_*^2+C_*)(\|Y_0\|+\eta\|z\|)+2B_*\|z\|].
\]
Taylor's integral remainder proves the state bound. Taylor expansion of
$q$ about $\widehat x_3$ bounds each objective error by
$\|\widehat g\|r_\sigma+\tfrac12M_q
(\eta\|\widehat e(z_\sigma)\|+r_\sigma)^2$.
Subtract and divide by $2\zeta\eta$ to obtain the certificate.
\end{proof}

\paragraph{Localization, numerical arithmetic, and cost.}
No root error is silently dropped in the finite bound: supplied knots
define the expansion center, and their entire timing displacement occurs
in $\delta_i$ and the computed scheduled states. Poor localization can
make the word check fail or the bound large. In the asymptotic theorem
the knots are the exact base events; if approximate knots are used for
that conclusion their errors must vanish relative to $\eta$, or be
retained as phase errors. The finite certificate is a real-arithmetic
statement. Experiments evaluate its bounds and sign tests numerically,
with independent Euler and higher-precision checks; they are not
interval-arithmetic machine certificates. Reference trajectories are
base-point information, explicitly disclosed as controlled theory inputs.
Three regional propagators, two ceilings, and a terminal gradient suffice
for the prediction. The exhaustive word check costs $O(N)$ sign work and
removes a claim of computational advantage over evaluating two candidates.
This describes the original exhaustive test. The planar spectral test
in Appendix~\ref{app:contractive-certificate} replaces it with a bounded
number of sign evaluations; exact affine candidate evaluation is also
cheap, so no general training speedup follows.

\subsection{Ordered spectral jets for a finite number of events}
\label{app:contractive-certificate}

The preceding homogenized bound pays for affine translations repeatedly.
The regional symmetric negative definiteness permits a sharper bound in
physical coordinates. We state a second-order version; dropping its
quadratic term gives the first-order bound used in the ablation. Neither
version assumes that a predicted word is correct.

\paragraph{Data and finite-grid domain.}
For a fixed finite number $m$ of events, use any supplied ordered knots and their scheduled exact regional states
$x_i$ as in the preceding construction, now indexed by $i=0,\ldots,m$; hats are suppressed
in this subsection. For each proposed displacement $z$, compute candidate
indices by the finite-event recursion. Put $m_i=k_{i+1}-k_i>0$,
$L_i=t_{i+1}-t_i>0$, $\delta_i=m_i-L_i/\eta$, and
$h_i=-M_i^{-1}d_i$. Require $\eta\|M_i\|_2<1$ for each region.
All norms below are Euclidean/operator norms. For each eigenvalue
$\lambda<0$ of $M_i$, define
\begin{align}
 \alpha_i(\lambda)&=\delta_i\lambda-\tfrac12L_i\lambda^2,&
 \beta_i(\lambda)&=\tfrac13L_i\lambda^3-\tfrac12\delta_i\lambda^2,\nonumber\\
 t_{2i}(\lambda)&=\eta^2\left[\tfrac12|\delta_i|\lambda^2+
 \frac{m_i\eta|\lambda|^3}{3(1-\eta|\lambda|)}\right],&
 a_{2i}(\lambda)&=\eta|\alpha_i(\lambda)|+t_{2i}(\lambda),\nonumber\\
 t_{3i}(\lambda)&=\eta^3\left[\tfrac13|\delta_i||\lambda|^3+
 \frac{m_i\eta\lambda^4}{4(1-\eta|\lambda|)}\right],&
 b_i(\lambda)&=\eta^2|\beta_i(\lambda)|+t_{3i}(\lambda),\nonumber\\
 a_{3i}(\lambda)&=\eta|\alpha_i(\lambda)|+b_i(\lambda).&&
 \label{eq:spectral-tail-data}
\end{align}
The following maxima are over that finite spectrum:
\begin{align}
 d_{1i}&=\max_\lambda e^{L_i\lambda}(e^{a_{2i}}-1),\nonumber\\
 d_{2i}&=\max_\lambda e^{L_i\lambda}
       (t_{2i}+\tfrac12a_{2i}^2e^{a_{2i}}),\nonumber\\
 d_{3i}&=\max_\lambda e^{L_i\lambda}
       [t_{3i}+\tfrac12b_i(2\eta|\alpha_i|+b_i)
                      +\tfrac16a_{3i}^3e^{a_{3i}}],\nonumber\\
 \chi_i&=\max_\lambda(1+\eta\lambda)^{m_i}\le1.
 \label{eq:spectral-defects}
\end{align}
Functional calculus for the symmetric $M_i$ defines
$A_i=P_i\alpha_i(M_i)$ and
$B_i=P_i[\beta_i(M_i)+\alpha_i(M_i)^2/2]$.
Set $e_0=z$, $v_0=0$, $R_0=0$, and apply the ordered recursion for $i=0,\ldots,m$
\begin{align}
 e_{i+1}&=P_ie_i+A_i(x_i-h_i),\nonumber\\
 v_{i+1}&=P_iv_i+A_ie_i+B_i(x_i-h_i),\nonumber\\
 R_{i+1}&=\chi_iR_i+d_{3i}\|x_i-h_i\|
                    +\eta d_{2i}\|e_i\|+\eta^2d_{1i}\|v_i\|.
 \label{eq:finite-event-second-jet}
\end{align}
These $e_i$ are errors at the moving grid endpoints, unlike the
intermediate fixed-clock errors in the ceiling recursion. At the final
clock $e_{m+1}=e_{p_N}(z)$ with the supplied scheduled data. Matrix order is
never interchanged. In particular, $A_1e_1$ includes the first rounding
offset inside the second region's second-order error.

\paragraph{Detailed finite statement of Theorem~\ref{thm:contractive-main}.}
For every finite $\eta$, displacement $z$ and candidate word satisfying
the preceding domain conditions, if its strict signs are verified, then
\begin{equation}
 \|\Theta_{\eta,T}(\theta_0+\eta z)
       -(x_{m+1}+\eta e_{m+1}+\eta^2v_{m+1})\|\le R_{m+1}.
 \label{eq:finite-event-second-bound}
\end{equation}
This statement does not require a limiting phase or an unreported
``sufficiently small'' mesh. For fixed $m$, bounded $z,\delta_i,L_i,M_i,h_i,x_i$
and a common bound $\eta\|M_i\|\le r_*<1$, it has
$R_{m+1}=O(\eta^3)$ uniformly. The explicit formulas state every constant;
there is no inverse phase margin in the bound. Near-boundary failure is
handled by the word test, not by declaring the candidate itinerary true.
Unbounded scaled displacements retain the finite inequality but not this
uniform asymptotic rate.

\begin{proof}
The exact regional matrix is $E_i=(I+\eta M_i)^{m_i}$. On an eigenvalue,
write $a=m_i\log(1+\eta\lambda)-L_i\lambda$. The absolutely convergent
logarithm series yields both
\[
 |a-\eta\alpha_i|\le t_{2i},\qquad
 |a-\eta\alpha_i-\eta^2\beta_i|\le t_{3i}.
\]
For the second inequality, the cubic logarithm term contributes
$\eta^3\delta_i\lambda^3/3$, and the remaining series is bounded by
$m_i\eta^4|\lambda|^4/[4(1-\eta|\lambda|)]$.
The quadratic truncation gives the first inequality in the same way.
Taylor's integral remainder for the scalar exponential gives
$|e^a-1-a|\le a^2e^{|a|}/2$ and
$|e^a-1-a-a^2/2|\le |a|^3e^{|a|}/6$.
Moreover,
$|a^2-\eta^2\alpha_i^2|\le
b_i(2\eta|\alpha_i|+b_i)$.
Multiplication by $e^{L_i\lambda}$ and the spectral theorem therefore give
\[
 \|E_i-P_i\|\le d_{1i},\quad
 \|E_i-P_i-\eta A_i\|\le d_{2i},\quad
 \|E_i-P_i-\eta A_i-\eta^2B_i\|\le d_{3i}.
\]
The exact regional affine step is $h_i+E_i(y-h_i)$, whereas
$x_{i+1}=h_i+P_i(x_i-h_i)$. Substituting
$y=x_i+\eta e_i+\eta^2v_i+\rho_i$ and subtracting the displayed jet
leaves exactly
\[
 E_i\rho_i+(E_i-P_i-\eta A_i-\eta^2B_i)(x_i-h_i)
 +\eta(E_i-P_i-\eta A_i)e_i+\eta^2(E_i-P_i)v_i.
\]
Its norm is at most $R_{i+1}$ because $\|E_i\|=\chi_i$.
Induction from the exact initial state proves the result for the
candidate affine product. The strict word test identifies that product
with actual GD by induction over executed states. It does not use the
two rerun objective values. The rate follows directly from
$d_{ji}=O(\eta^j)$ and bounded ordered recursions.
\end{proof}

\paragraph{First-order ablation and objective control.}
For the unchanged first-order predictor, set $r_0=0$ and use
\[
 r_{i+1}=\chi_ir_i+d_{2i}\|x_i-h_i\|+\eta d_{1i}\|e_i\|.
\]
The same subtraction proves
$\|\Theta_{\eta,T}-(x_{m+1}+\eta e_{m+1})\|\le r_{m+1}$; inserting it into
\eqref{eq:coupled-certificate} isolates the effect of the tighter bound.
For second order let $\widetilde x_\pm=x_{m+1}+\eta e_{m+1}(\pm\zeta v)
+\eta^2v_{m+1}(\pm\zeta v)$ and $R_\pm=R_{m+1}(\pm\zeta v)$.
If $q$ is $C^2$ on the two closed balls of radii $R_\pm$ about these
points and $\|\nabla^2q\|\le M_q$ there, then
\begin{align}
 \widehat D^{(2)}&=
   \frac{q(\widetilde x_+)-q(\widetilde x_-)}{2\zeta\eta},\nonumber\\
 \mathcal E^{(2)}&=
   \frac{\sum_{\sigma=\pm}
    [\|\nabla q(\widetilde x_\sigma)\|R_\sigma
                        +M_qR_\sigma^2/2]}{2\zeta\eta}.
 \label{eq:second-objective-certificate}
\end{align}
Taylor's integral formula on each ball proves
$|D-\widehat D^{(2)}|\le\mathcal E^{(2)}$.
At fixed positive $\zeta$ and bounded offsets this response error is
$O(\eta^2)$, compared with the first-order $O(\eta)$ bound.
If both words pass and $|\widehat D^{(2)}|>\mathcal E^{(2)}$, the smaller
predicted objective is the smaller actual objective, and the actual
gap is at least $2\zeta\eta(|\widehat D^{(2)}|-\mathcal E^{(2)})$.
Thus a contrary AD/flow ordering has at least this two-candidate regret.
This last decision statement is an elementary corollary, not an
optimization convergence theorem. Equality is unresolved.

\paragraph{Constant-size planar word verification.}
For a two-dimensional symmetric negative definite regional matrix,
diagonalize $M=V\operatorname{diag}(\lambda_1,\lambda_2)V^\top$.
For a proposed block of $n_b$ executed states and any affine surface,
its signed value at integer $k\in\{0,\ldots,n_b-1\}$ is exactly
\[
 g(k)=a+b_1r_1^k+b_2r_2^k,\qquad r_l=1+\eta\lambda_l\in(0,1).
\]
The coefficients are obtained from the known block initial state and
equilibrium; no continuous event root within this block is assumed.
Its forward difference is
$g(k+1)-g(k)=b_1(r_1-1)r_1^k+b_2(r_2-1)r_2^k$.
If the two coefficients have the same sign, or one vanishes, this
sequence has no sign change. If the rates coincide, combine coefficients
and reach the same conclusion. Otherwise its unique possible zero is
\[
 k_*=
 \frac{\log|b_2(r_2-1)|-\log|b_1(r_1-1)|}
      {\log r_1-\log r_2}.
\]
Checking the endpoints and the clipped integers
$\lfloor k_*\rfloor+1,\lceil k_*\rceil+1$ gives the exact minimum and
maximum on all integer grid states: the forward difference has constant
sign on either side of its sole zero, so no other integer extremum is
possible. At an integer zero, the two equal neighboring values are
covered. For $n_b=1$ only that state is needed.
Check the required strict sign of every surface at these at most four
indices, then propagate the block initial state by its exact spectral
power to start the next block. Each restart belongs to the next region;
the terminal state has no executed mask. These tests are necessary and
sufficient for the entire proposed strict word. Zero signs return
unresolved regardless of the convention used for ReLU at zero.
Let $S$ be the total number of surfaces selecting a region, including those absent from the event list. For two candidates and $m+1$ regions, at most $8(m+1)S$ scalar sign evaluations suffice, independently of $N$. The number $48$ applies only when $m=2$ and $S=2$. This is a real-arithmetic
operation count; bit precision near a zero is not uniformly bounded.

Endpoint masks alone cannot replace this test, even for a contracting
symmetric regional matrix. Take $\eta=1$,
$M=\operatorname{diag}(-0.1,-0.19)$, zero affine forcing, initial state
$(1,1)$ and surface $g(x)=0.24-x_1+x_2$.
Then $g(k)=0.24-0.9^k+0.81^k$ is positive at $k=0$ and $k=20$ but
negative at $k=6$. Thus a proposed word that maintains the positive
side throughout is false despite its matching endpoint signs. The
spectral-extremum test rejects it. This example concerns verification of
a regional extension, not a claim that the true switched path follows
that invalid extension after its first crossing.

\paragraph{Scope, cost, and limiting cases.}
The spectral remainder is valid in any dimension with symmetric negative
definite regional matrices; the constant-size test just proved is planar.
The experiment uses base-point event roots, scheduled states and regional
spectra. It does not use perturbed rerun states in its prediction or word
test. Checking a later block requires its exact initial affine state;
therefore the check is not a claim to avoid all candidate-state work.
It avoids an $N$-step replay, but exact word evaluation in this solvable
class is equally inexpensive. No speed advantage over that stronger
baseline is asserted. Matrix-log expansions, contractivity and the
two-exponential sign calculation are standard tools; their role here is
to turn the coupled response into a nonvacuous finite decision guarantee.

With no event the recursion is the ordinary second-order Euler-error
expansion. With commuting independent coordinate fields the coordinates
decouple. Repeated eigenvalues are covered by the spectral formulas and
the merged-coefficient word test. Vanishing jumps remove the first-order
rounding effect, but may leave a second-order effect when regional
Hessians change; setting both fields equal removes that artificial event
at every order. At a failed word, no error guarantee is issued. None of
these finite results asserts uniformity as $\eta\|M_i\|$ approaches one,
as a phase margin vanishes in the local-limit theorem, or as the number
of events grows. Floating-point and high-precision validations below are
not formal interval-arithmetic certificates.

\paragraph{General-dimensional work and arithmetic.}
The spectral remainder and jet recursion hold in any dimension $d$.
Regional eigendecompositions cost $O((m+1)d^3)$; after these and the
base data are available, ordered matrix-vector jets cost
$O((m+1)d^2)$ per candidate, excluding evaluation of $q$ and its
gradient. The planar extremum formula does not establish a
dimension-independent word test. A general safe alternative is to check
all $N$ executed states against all $S$ surfaces, costing
$O(N(d^2+Sd))$ per candidate for dense affine fields. A scalar surface
sequence in dimension $d$ contains up to $d$ exponential modes; any
faster general root isolation method requires its own proof and precision
analysis. Neither cheap word evaluation in this solvable affine class
nor the certificate proves a speedup for nonlinear network training.

The theorem is exact in real arithmetic. Its numerical implementation is
a numerically evaluated deterministic certificate, with floating-point
and independent high-precision checks. Those checks do not constitute
formal interval-arithmetic certification. The piecewise-smooth endpoint
theorem does not inherit this affine finite-grid certificate.

\section{Scalar branch cells, response levels, and frequencies}
\subsection{Finite-sample ReLU empirical-risk realization}
\label{app:relu-erm}

\begin{proposition}[Exact residual-ReLU flow and discrete derivatives]
\label{prop:relu-erm}
Fix \(m>0\), \(c>-m\), and \(T>0\), and consider
\begin{equation}
 \mathcal L_{m,c}(\theta)
 =\tfrac12(\theta-m)^2
  +\tfrac12(\operatorname{ReLU}(\theta)-c)^2.
 \label{eq:app-relu-erm-objective}
\end{equation}
For \(\theta_0\in(m(1-e^T),0)\), let
\(t_*=\log[(m-\theta_0)/m]\) and
\(r=(m+c)/m>0\).  The flow has exactly one stable transverse crossing
before \(T\), and
\[
 J_{\reg}=e^{-t_*}e^{-2(T-t_*)},
 \qquad
 D\varphi_T=rJ_{\reg}.
\]
The regional Hessians are \(H^-=1\) and \(H^+=2\), while the interface
coefficient is \(\beta=-c\).  If \(c\le0\), the objective is globally
1-strongly convex; in the crossing range its unique minimizer is
\((m+c)/2>0\).

For \(\eta=T/N<1/2\), define
\[
 \kappa_\eta=\frac{t_*}{-\log(1-\eta)}.
\]
If \(\kappa_\eta\notin\mathbb N\), the step size is nonresonant, the first
positive iterate is \(K_\eta=\lceil\kappa_\eta\rceil\), and
\begin{equation}
 D\Theta_{\eta,T}=J_{\eta,T}^{\AD}
 =(1-\eta)^{K_\eta}(1-2\eta)^{N-K_\eta}
 \longrightarrow J_{\reg}.
 \label{eq:relu-erm-discrete-jacobian}
\end{equation}
Moreover, the piecewise-linear GD interpolation converges uniformly to the
flow at \(O(\eta)\) for each fixed pair \((m,c)\).
\end{proposition}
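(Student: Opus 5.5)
The plan is to solve the scalar dynamics in closed form on each side of the interface $\theta=0$. I would first read off the continuous objects, then the finite-grid objects, and obtain uniform convergence either directly or from Theorem~\ref{thm:general-first-variation} after checking (A1)--(A5).

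\emph{Flow and regional derivatives.} For $\theta<0$ the ReLU term is constant, so the field is $f^-(\theta)=m-\theta$ with $H^-=1$. The flow is $\theta(t)=m-(m-\theta_0)e^{-t}$, which increases and reaches $0$ exactly at $t_*=\log[(m-\theta_0)/m]$. The hypothesis $\theta_0>m(1-e^T)$ is equivalent to $t_*<T$.

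For $\theta>0$ the field is $f^+(\theta)=m+c-2\theta$ with $H^+=2$. At the interface the two normal speeds are $f^-(0)=m>0$ and $f^+(0)=m+c>0$, so the crossing is transverse in the same direction. Afterwards the trajectory relaxes monotonically toward $(m+c)/2>0$ and never returns to $0$. This gives a single stable event, and $t_*$ depends smoothly on $\theta_0$ nearby.

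The regional propagators are $e^{-t_*}$ and $e^{-2(T-t_*)}$, which gives $J_{\reg}$. The saltation factor in one dimension is $f^+(0)/f^-(0)=(m+c)/m=r$, so $D\varphi_T=rJ_{\reg}$ by Lemma~\ref{lem:general-flow-differentiability}. The gradient jump at $0$ is $\nabla\mathcal L^+(0)-\nabla\mathcal L^-(0)=-c$, hence $\beta=-c$.

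For convexity with $c\le 0$: $\tfrac12(\theta-m)^2$ is $1$-strongly convex. The ReLU term has derivative $(\operatorname{ReLU}(\theta)-c)\mathbf 1_{\theta>0}$, which is nondecreasing because it jumps up by $-c\ge0$ at $0$. The unique stationary point lies in $\theta>0$ and equals $(m+c)/2$.

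\emph{Discrete iterates.} While the iterates are negative, $\theta_k=m-(m-\theta_0)(1-\eta)^k$, and $\theta_k<0$ iff $(1-\eta)^k>m/(m-\theta_0)$, i.e.\ iff $k<\kappa_\eta$. If $\kappa_\eta\notin\mathbb N$, every pre-crossing sign is strict and the first positive index is $K_\eta=\lceil\kappa_\eta\rceil$. Since $\kappa_\eta\eta\to t_*<T$, we have $K_\eta<N$ for small $\eta$.

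For $\theta>0$ the update is $\theta\mapsto(1-2\eta)\theta+2\eta\cdot\tfrac{m+c}{2}$. With $\eta<1/2$ this is a convex combination of two positive numbers, so all later iterates stay strictly positive. The executed word is therefore fixed and every executed sign is strict. Strictness is an open condition, so the same word persists for $\theta_0$ in a neighborhood. The endpoint is then a composition of affine maps with slopes $1-\eta$ and $1-2\eta$, and its classical derivative equals the AD product $(1-\eta)^{K_\eta}(1-2\eta)^{N-K_\eta}$.

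For the limit, $K_\eta\eta=t_*+O(\eta)$ because $-\log(1-\eta)=\eta+O(\eta^2)$ and the ceiling adds at most one step. Writing the product as $\exp[K_\eta\log(1-\eta)+(N-K_\eta)\log(1-2\eta)]$ shows that it tends to $e^{-t_*-2(T-t_*)}=J_{\reg}$.

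\emph{Uniform state rate.} I would either invoke Theorem~\ref{thm:general-first-variation}, since (A1)--(A5) hold with constant regional Jacobians, or argue directly. The direct route uses two estimates: an $O(\eta)$ Euler error on each affine segment (Lemma~\ref{lem:general-smooth-segment}), and an $O(1)$ mismatch in the number of steps near $t_*$, each mismatched step costing $O(\eta)$.

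The main obstacle is mild. It is confirming that no recrossing occurs and that the ceiling index is exactly the first positive iterate. This is handled by the monotone closed forms above together with the condition $\eta<1/2$.
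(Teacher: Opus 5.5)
Your proof is correct and follows essentially the same route as the paper's: explicit regional solutions, the ceiling index from the monotone pre-crossing closed form, the convex-combination argument excluding recrossing, and the logarithmic limit of the affine product. The paper proves the $O(\eta)$ state rate by direct power-versus-exponential comparisons plus the overshoot bound $0<\theta_{K_\eta}<\eta m$, which your segment-lemma route reproduces. One small tightening is available: since $-\log(1-\eta)>\eta$, you have $\kappa_\eta<t_*/\eta<N$ for every $\eta\in(0,1)$, so $K_\eta\le N$ and the product formula holds at every nonresonant $\eta<1/2$ as stated, not only for small $\eta$.
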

\begin{proof}
For the constant input \(x=1\), define the standard two-output residual-ReLU
network and target
\[
 z_\theta(1)=\theta,
 \qquad
 F_\theta(1)=
 \begin{bmatrix}\theta\\\operatorname{ReLU}(\theta)\end{bmatrix},
 \qquad
 y=\begin{bmatrix}m\\c\end{bmatrix}.
\]
Then \(\frac12\|F_\theta(1)-y\|_2^2\) is exactly
\eqref{eq:app-relu-erm-objective}.  Its regional vector fields and Hessians
are
\[
 f^-(\theta)=m-\theta,\quad H^-=1,
 \qquad
 f^+(\theta)=m+c-2\theta,\quad H^+=2.
\]
Before the event,
\(\theta(t)=m+(\theta_0-m)e^{-t}\), which reaches zero at \(t_*\).
The initialization interval is exactly the condition \(0<t_*<T\).  The
one-sided speeds at zero are \(m\) and \(m+c\), so the crossing is stable,
same-direction transverse, and has saltation factor
\(r=(m+c)/m\).  After the event,
\[
 \theta(t)=\frac{m+c}{2}\bigl(1-e^{-2(t-t_*)}\bigr)>0,
\]
so there is no recrossing.  Multiplication of the regional propagators and the
event factor gives the stated formulas for \(J_{\reg}\) and \(D\varphi_T\).

The normal derivative jumps from \(-m\) to \(-(m+c)\), hence
\(\beta=-c\).  If \(c\le0\), the function
\(h(\theta)=\frac12(\operatorname{ReLU}(\theta)-c)^2\) is convex: its
left derivative at zero is \(0\), its right derivative has trace
\(-c\ge0\), and the latter then increases with slope one.  Since
\(\frac12(\theta-m)^2\) is 1-strongly convex, their sum is globally
1-strongly convex.  For \(c>-m\), its positive-region critical point
\((m+c)/2\) is positive and is therefore the unique minimizer.  For \(c>0\),
the downward derivative jump \(-c<0\) instead proves global nonconvexity.

For the discrete dynamics, while \(\theta_k^\eta\le0\),
\[
 \theta_k^\eta=m+(\theta_0-m)(1-\eta)^k.
\]
Thus an exact landing occurs precisely when \(\kappa_\eta\) is an integer,
and otherwise the first positive iterate is
\(K_\eta=\lceil\kappa_\eta\rceil\).  The incoming update gives
\(0<\theta_{K_\eta}^\eta<\eta m\).  Since \(0<1-2\eta<1\), the outgoing
recurrence
\[
 \theta_k^\eta=\frac{m+c}{2}
 +\left(\theta_{K_\eta}^\eta-\frac{m+c}{2}\right)
  (1-2\eta)^{k-K_\eta}
\]
is a convex combination of positive quantities at each step and therefore
stays positive.  Hence the discrete itinerary also has one crossing.  Its
branch sequence is locally fixed at nonresonant sizes, and differentiating the
two affine recurrences gives \eqref{eq:relu-erm-discrete-jacobian}.
Because \(-\log(1-\eta)=\eta+O(\eta^2)\),
\(K_\eta\eta=t_*+O(\eta)\).  Taking logarithms in
\eqref{eq:relu-erm-discrete-jacobian} therefore gives the claimed Jacobian
limit.  The standard uniform comparisons
\((1-\eta)^k=e^{-k\eta}[1+O(\eta)]\) and
\((1-2\eta)^k=e^{-2k\eta}[1+O(\eta)]\), together with the
\(O(\eta)\) crossing-time and overshoot errors, prove the uniform state bound.
Its constant may depend on \((m,c)\).
\end{proof}

\begin{corollary}[Uniform-margin amplification and convex attenuation]
\label{cor:app-uniform-relu-phases}
Fix \(a>0\), \(t_*\in(0,T)\), and a prescribed event ratio \(r>0\).  Choose
\[
 (m,c)=
 \begin{cases}
  (a/r,(r-1)a/r),&0<r\le1,\\
  (a,(r-1)a),&r\ge1,
 \end{cases}
 \qquad
 \theta_0=m(1-e^{t_*}).
\]
Then the event occurs at \(t_*\),
\(\min\{f^-(0),f^+(0)\}=a\), and
\(D\varphi_T=rJ_{\reg}\).  If \(r\le1\), the objective is globally
1-strongly convex.  In particular, for \(R>1\), the choice
\(m=Ra\), \(c=-(R-1)a\) gives
\begin{equation}
 D\varphi_T=R^{-1}J_{\reg},
 \qquad
 \frac{|J_{\reg}|}{|D\varphi_T|}=R,
 \label{eq:app-strongly-convex-reciprocal-gap}
\end{equation}
while a common interface tube \(|\theta|<a/4\) has normal-speed lower bound
\(a/2\) for every \(R\).  The limit is taken for each fixed \(R\) and then
\(\eta\downarrow0\); the Euler state-error constant is not asserted to be
uniform in \(R\).
\end{corollary}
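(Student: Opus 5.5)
The corollary is a direct specialization of Proposition~\ref{prop:relu-erm}, so the plan is to substitute the chosen $(m,c,\theta_0)$ into that proposition and check each claimed quantity. First I will verify admissibility: in both cases $m>0$. We have $m+c=a$ when $r\le1$ and $m+c=ra$ when $r\ge1$, so $m+c>0$, i.e.\ $c>-m$. Since $t_*\in(0,T)$, $\theta_0=m(1-e^{t_*})$ lies in $(m(1-e^T),0)$. Proposition~\ref{prop:relu-erm} then gives a single stable transverse crossing. Its crossing time is $\log[(m-\theta_0)/m]=\log e^{t_*}=t_*$, and its ratio is $(m+c)/m$, which equals $r$ in both cases. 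It also gives $D\varphi_T=((m+c)/m)J_{\reg}=rJ_{\reg}$.

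Next I will compute the one-sided speeds at the interface from $f^-(0)=m$ and $f^+(0)=m+c$. For $r\le1$ these are $a/r\ge a$ and $a$. For $r\ge1$ they are $a$ and $ra\ge a$. In either case the minimum is $a$. Strong convexity for $r\le1$ follows from the proposition's criterion $c\le0$, since $c=(r-1)a/r\le0$.

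For the reciprocal-gap specialization I take $m=Ra$ and $c=-(R-1)a$. Then $c\le0$, so the objective is strongly convex, and $r=(m+c)/m=a/(Ra)=R^{-1}$. This gives $D\varphi_T=R^{-1}J_{\reg}$. Because $J_{\reg}=e^{-t_*}e^{-2(T-t_*)}\neq0$, the ratio $|J_{\reg}|/|D\varphi_T|$ equals $R$, which is \eqref{eq:app-strongly-convex-reciprocal-gap}.

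The tube bound is where some care is needed, and I expect it to be the only step that is more than substitution. On $|\theta|<a/4$ the incoming speed is $f^-(\theta)=Ra-\theta>Ra-a/4\ge 3a/4$. The outgoing speed is $f^+(\theta)=a-2\theta>a-a/2=a/2$. Both bounds are independent of $R$, which gives the common lower bound $a/2$.

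Finally, I will note that the $\eta\downarrow0$ limits inherit the proposition's constants, which depend on $(m,c)$ and hence on $R$. The state-error constant is therefore asserted only for fixed $R$, and no uniformity in $R$ is claimed.
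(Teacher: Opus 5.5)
Your proof is correct and follows the paper's argument. Like the paper, you substitute the chosen $(m,c,\theta_0)$ into Proposition~\ref{prop:relu-erm}, read off $m+c=rm$, the crossing time $t_*$ and the speeds $m$ and $m+c$, and get convexity from $c\le0$. The only difference is in the tube bound. The paper restricts each field to its own side: the incoming speed is at least $Ra$ for $\theta\le0$, and the outgoing speed is $a-2\theta\ge a/2$. You instead bound both regional extensions on the whole tube, which is slightly stronger and equally valid.
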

\begin{proof}
The parameter choice gives \(m+c=rm\) and makes the smaller of \(m\) and
\(m+c\) equal to \(a\).  The event-time formula and
Proposition~\ref{prop:relu-erm} prove the ratio.  When \(r\le1\), \(c\le0\),
so global 1-strong convexity follows from that proposition.  In the reciprocal
family, the incoming speed is \(Ra\).  On the incoming side of
\(|\theta|<a/4\) it is at least \(Ra\), while on the outgoing side the speed
is \(a-2\theta\ge a/2\).  This proves the uniform transversality claim.
\end{proof}

\begin{proposition}[Exact-hit set for the residual-ReLU family]
\label{prop:relu-erm-resonance-set}
For the pre-event recurrence in Proposition~\ref{prop:relu-erm}, the positive
step sizes that place an evaluated iterate exactly at the interface are
\begin{equation}
 \eta_k=1-e^{-t_*/k},
 \qquad k\in\mathbb N.
 \label{eq:app-relu-resonance-set}
\end{equation}
This set is countable, has Lebesgue measure zero, and accumulates at zero.
Requiring \(T=N\eta\) selects only its intersection with the mesh family
\(\{T/N:N\in\mathbb N\}\).  For every \(0<t_*<T\), infinitely many
canonical meshes are nonresonant.
\end{proposition}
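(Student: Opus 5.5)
The plan is to solve the pre-event recurrence of Proposition~\ref{prop:relu-erm} explicitly for the exact-landing condition. After that, I show that the canonical resonances are sparse using a monotonicity and increment argument for $\kappa_\eta$. No equidistribution is needed.

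\emph{Step 1: the exact-hit equation.} While the iterates are nonpositive, Proposition~\ref{prop:relu-erm} gives $\theta_k^\eta=m+(\theta_0-m)(1-\eta)^k$. Since $\theta_0<0<m$, we have $m-\theta_0>m$, so $\theta_k^\eta=0$ is equivalent to $(1-\eta)^k=m/(m-\theta_0)=e^{-t_*}$. Here $k=0$ is excluded because $\theta_0<0$. I work in the step-size range $0<\eta<1$, which contains the range $\eta<1/2$ used by the proposition. In this range $1-\eta>0$, so the equation has the unique solution $1-\eta=e^{-t_*/k}$, that is, $\eta=\eta_k$.

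Conversely, at $\eta=\eta_k$ the sequence $\theta_j^\eta$ increases strictly and stays negative for $j<k$, because $(1-\eta)^j>e^{-t_*}$ there. Hence the iterate at index $k$ is genuinely evaluated: it is the first nonnegative one, so the program never leaves the pre-event regime before reaching it. This proves \eqref{eq:app-relu-resonance-set}. I will remark that for $\eta>1$ the base $1-\eta$ is negative, so that regime is outside the stated recurrence analysis.

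\emph{Step 2: measure and accumulation.} The set $\{\eta_k\}$ is indexed by $\mathbb N$, so it is countable and therefore Lebesgue-null. Moreover $\eta_k$ is strictly decreasing in $k$ with $\eta_k\sim t_*/k\to0$, so the set accumulates at zero.

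\emph{Step 3: infinitely many nonresonant canonical meshes.} This is the only step that needs care. A mesh $T/N$ is resonant exactly when $\kappa_N:=t_*/[-\log(1-T/N)]$ is a positive integer. The hit index must also satisfy $k\le N$, but this only shrinks the resonant set.

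The function $\eta\mapsto-\log(1-\eta)$ is increasing, so $\kappa_N$ is strictly increasing in $N$. Expanding the logarithm gives $\kappa_N=t_*N/T-t_*/2+O(1/N)$, and hence
\[
\kappa_{N+1}-\kappa_N\to t_*/T\in(0,1).
\]
So for all large $N$ the increment lies strictly between $0$ and $1$. Two consecutive large $N$ therefore cannot both give integer $\kappa$. Every pair $\{N,N+1\}$ beyond some threshold thus contains a nonresonant mesh, which yields infinitely many.

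The main obstacle is controlling the $O(1/N)$ remainder so that the increment bound is strict. I will handle this with the elementary bounds $\eta\le-\log(1-\eta)\le\eta+\eta^2$ for small $\eta$, rather than an asymptotic statement.
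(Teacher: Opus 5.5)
Your proposal is correct and matches the paper's proof essentially step for step: the exact-hit equation $(1-\eta)^k=e^{-t_*}$, countability with $\eta_k\sim t_*/k$, and the expansion $\kappa_N=\tfrac{t_*}{T}N-\tfrac{t_*}{2}+O(N^{-1})$, which forces $\kappa_{N+1}-\kappa_N\to t_*/T\in(0,1)$ so that two consecutive large meshes cannot both be resonant. The ``main obstacle'' you flag is not actually an obstacle, since that limit already makes the increment strictly smaller than $1$ for large $N$; moreover, applying $\eta\le-\log(1-\eta)\le\eta+\eta^2$ directly to the values $\kappa_N$ only locates $\kappa_N$ in a window of width about $t_*$, which in general is too coarse to bound the increment, so if you want an elementary version you should instead apply $-\log(1-\eta)>\eta$ to the derivative of $x\mapsto t_*/[-\log(1-T/x)]$ and use the mean value theorem, which gives $0<\kappa_{N+1}-\kappa_N<1$ for every $N>T^2/(T-t_*)$.
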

\begin{proof}
An exact hit at iterate \(k\) means
\((1-\eta)^k=e^{-t_*}\), which is equivalent to
\eqref{eq:app-relu-resonance-set}.  The set is countable and
\(1-e^{-t_*/k}\sim t_*/k\), proving accumulation at zero.  On a resonant
canonical mesh, the corresponding index is
\[
 k_N=\frac{t_*}{-\log(1-T/N)}
 =\frac{t_*}{T}N-\frac{t_*}{2}+O(N^{-1}).
\]
If every sufficiently large canonical mesh were resonant, both \(k_N\) and
\(k_{N+1}\) would be integers, while their integer difference would converge
to \(t_*/T\in(0,1)\), which is impossible.  Hence an infinite nonresonant
subsequence exists.
\end{proof}

\subsection{Perturbation radius, discrete branches, and derivative selection}
\label{app:perturbation-scales}

This section separates an elementary consequence of uniform state accuracy
from an exact, phase-dependent calculation for the residual-ReLU construction.
The latter resolves a finite perturbation question that the limiting Jacobian
alone cannot answer. Neither result is a claim of universal priority for
two-scale numerical differentiation.

\subsubsection{One target, three derivatives, and a directional branch radius}
Let $\lambda$ include the initialization and/or parameters of the regional
field, and fix a scalar terminal objective
\[
 Q_\eta(\lambda)=q(\Theta_{\eta,T}(\lambda),\lambda),\qquad
 Q_0(\lambda)=q(\varphi_T(\lambda),\lambda).
\]
\[
 D_{\eta,\epsilon}(v)=\frac{Q_\eta(\lambda+\epsilon v)-Q_\eta(\lambda-\epsilon v)}{2\epsilon},
 \quad \|v\|=1.
\]
Write $g_\eta=\nabla Q_\eta(\lambda)$ for the exact finite-program gradient,
$g_{\reg}=Z_{\reg}^{\top}q_\theta+q_\lambda$ for its regional limit, and
$g_0=\nabla Q_0(\lambda)=Z_{\rm flow}^{\top}q_\theta+q_\lambda$, where
the last two expressions evaluate $q$ at the flow endpoint. These identities
require smooth terminal traces; a terminal ReLU boundary needs a separate
analysis even if every training evaluation is nonresonant.

For a fixed mesh with strict training and terminal-validation masks, define
$\rho_\eta(v)$ as the supremum of $r>0$ for which every $|s|<r$ retains
\emph{all} executed training branches and terminal-validation branches at
$\lambda+sv$, within the parameter domain and regional smooth extensions.
This is a connected directional cell radius, not an endpoint mask test.
If $\psi_i(s)$ are all signed branch tests of the base-branch extensions,
and $|\psi_i'(s)|\le L_i$ on $[-r_0,r_0]$, then
\[
 \rho_\eta(v)\ge \min\{r_0,\min_{i:L_i>0}|\psi_i(0)|/L_i\}.
\]
Constant nonzero tests impose no restriction. The bound follows from the
mean-value theorem and induction over the finite executed program: before
the first changed test the base extension is the actual program. The $L_i$
must be bounds on an interval; a derivative sampled only at zero does not
certify them. They and the margins can depend on $\eta$.

\begin{proposition}[Elementary response bounds]
\label{prop:scale-elementary}
If $\epsilon<\rho_\eta(v)$ and
$s\mapsto Q_\eta(\lambda+sv)$ has third derivative bounded by $M_{3,\eta}$
on $[-\epsilon,\epsilon]$, then
\[
 |D_{\eta,\epsilon}(v)-g_\eta^\top v|
 \le M_{3,\eta}\epsilon^2/6.
\]
With only a second derivative bound $M_{2,\eta}$, the bound is
$M_{2,\eta}\epsilon/2$. Thus a varying-mesh conclusion requires the
corresponding remainder to vanish, as well as branch preservation.

Suppose on a fixed neighborhood $U$ that
$\sup_{\mu\in U}\|\Theta_{\eta,T}(\mu)-\varphi_T(\mu)\|\le C_s\eta$,
and $q$ is uniformly $L_q$-Lipschitz in its state argument on segments
joining these endpoints. If $\lambda+[-\epsilon,\epsilon]v\subset U$ and
$Q_0$ is $C^1$ there, then
\begin{equation}
 |D_{\eta,\epsilon}(v)-g_0^\top v|
 \le L_q C_s\eta/\epsilon+
 \omega_{\nabla Q_0}(\epsilon).
 \label{eq:scale-coarse-bound}
\end{equation}
If the third directional derivative of $Q_0$ is bounded by $M_{3,0}$,
replace the modulus term by $M_{3,0}\epsilon^2/6$.
\end{proposition}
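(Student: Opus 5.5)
For the first claim I will reduce to a scalar function and apply Taylor's theorem. Put $\phi(s)=Q_\eta(\lambda+sv)$ for $|s|\le\epsilon$. Because $\epsilon<\rho_\eta(v)$, every point $\lambda+sv$ with $|s|\le\epsilon$ lies in a common open directional cell. (If needed, I shrink to an $\epsilon'$ with $\epsilon<\epsilon'<\rho_\eta$, so the closed interval is also covered.) On that cell every executed training test and terminal-validation test keeps its base sign. By induction over the finite executed program, $\phi$ is then a composition of the fixed regional smooth extensions. In particular $\phi$ is smooth and $\phi'(0)=g_\eta^\top v$; the latter is the chain rule applied to the selected branch product, which is the classical derivative on this nonresonant cell.

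Next I expand $\phi(\pm\epsilon)$ about $0$ using the integral (or Lagrange) remainder of third order. The even terms cancel in the centered difference, so
\[
 \frac{\phi(\epsilon)-\phi(-\epsilon)}{2\epsilon}-\phi'(0)=\frac{\epsilon^2}{12}\bigl[\phi'''(\xi_+)+\phi'''(\xi_-)\bigr].
\]
Its absolute value is at most $M_{3,\eta}\epsilon^2/6$. With only a second-derivative bound I expand to second order instead. Each side then contributes an error of at most $M_{2,\eta}\epsilon^2/2$, and dividing by $2\epsilon$ gives $M_{2,\eta}\epsilon/2$. The remark about varying meshes simply records that $M_{3,\eta}$ depends on $\eta$.

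For the coarse bound I split
\[
 D_{\eta,\epsilon}-g_0^\top v=\bigl[D_{\eta,\epsilon}-D_{0,\epsilon}\bigr]+\bigl[D_{0,\epsilon}-g_0^\top v\bigr],
\]
where $D_{0,\epsilon}$ is the same centered difference built from $Q_0$. For the first bracket, the Lipschitz property of $q$ in its state argument together with the uniform state bound gives
\[
 |Q_\eta(\mu)-Q_0(\mu)|\le L_qC_s\eta \quad\text{at } \mu=\lambda\pm\epsilon v.
\]
The two endpoint errors add and are divided by $2\epsilon$, which gives $L_qC_s\eta/\epsilon$. For the second bracket, the mean-value form along the segment gives
\[
 D_{0,\epsilon}=\frac{1}{2\epsilon}\int_{-\epsilon}^{\epsilon}\nabla Q_0(\lambda+sv)^\top v\,\dd s.
\]
Each integrand differs from $g_0^\top v$ by at most $\omega_{\nabla Q_0}(|s|)\le\omega_{\nabla Q_0}(\epsilon)$, using $\|v\|=1$. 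Under the third-derivative hypothesis on $Q_0$, I reuse the centered Taylor estimate from the first part, which gives $M_{3,0}\epsilon^2/6$ instead of the modulus term.

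The only delicate step is the first one: justifying that $\phi$ coincides with a single smooth branch composition on the closed interval, so that $\phi'(0)=g_\eta^\top v$ and the derivative bounds apply to that composition. This follows from the definition of $\rho_\eta$ as a connected directional cell radius, combined with the finite-program induction already used for the branch-radius lower bound. I would state it as that induction, noting that it requires strict terminal masks so that $q$ is evaluated on one smooth terminal trace. Everything after this is routine Taylor algebra.
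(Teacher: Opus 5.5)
Your proof is correct and follows essentially the same route as the paper. For the within-cell bounds you use central Taylor cancellation of the even term; for the coarse bound you split off the endpoint consistency error $2L_qC_s\eta/(2\epsilon)$ and use the integral formula for the central difference of $Q_0$. The differences are cosmetic: you get the $M_{2,\eta}\epsilon/2$ bound from second-order Lagrange remainders, where the paper averages $|\phi'(s)-\phi'(0)|\le M_{2,\eta}|s|$ over $[-\epsilon,\epsilon]$, and you spell out the branch-preservation step identifying $\phi'(0)$ with $g_\eta^\top v$, which the paper leaves implicit.
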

\begin{proof}
Taylor expansion at $s=0$ cancels the even quadratic term in the central
difference. Each cubic remainder is bounded by $M_3\epsilon^3/6$, giving
the first claim after division by $2\epsilon$. Alternatively integrate the
directional first derivative over $[-\epsilon,\epsilon]$; its difference
from its value at zero is bounded by $M_2|s|$, whose average is
$M_2\epsilon/2$. Uniform state accuracy and the Lipschitz bound give
$|Q_\eta(\mu)-Q_0(\mu)|\le L_q C_s\eta$. Apply this at the two perturbed
endpoints, subtract, and divide by $2\epsilon$. The integral formula for
the central difference of $Q_0$ supplies the remaining modulus or Taylor
bound. No branch preservation of $Q_\eta$ is needed for this last step.
\end{proof}

\paragraph{When the neighborhood bound is available.}
For initialization perturbations, restrict (A1)--(A5) to a compact closure
of a sufficiently small neighborhood with common isolating tubes, a common
finite itinerary, uniformly positive endpoint/event separation $\Delta_t$,
normal speed $c_0$, and distance from other interfaces outside the tubes.
Require uniform regional field/Jacobian bounds and $C^2$ interface geometry
on a common compact tube. Then the proof of
Theorem~\ref{thm:general-first-variation} is uniform on that neighborhood.
Indeed each smooth-segment recurrence has the same Gronwall constant;
the interface coordinate error divided by $c_0$ bounds each clock error;
bounded speed converts it to a restart error; a finite induction over the
same number of tubes bounds every restart by $C_j\eta$. Taking their maximum
and the common mesh threshold proves the asserted uniform bound. The constants
depend on $T,m,M_f,M_A,c_0^{-1}$, the tube sizes and geometry, and the inverse
separation and non-event margins, not the grid or perturbed initialization.
For field/initialization/surface parameters, require the same assumptions
uniformly in $\lambda$ with jointly smooth regional extensions and smooth
initialization; apply the same argument to $(\theta,\lambda)'=(f(\theta,\lambda),0)$.
The state estimate alone needs no differentiability across discrete cells.
Thus $\epsilon\to0$ and $\eta/\epsilon\to0$ suffice \emph{with} these
uniform hypotheses and terminal regularity. Pointwise accuracy at a single
base point, or a collapsing event separation or crossing speed, does not
supply this conclusion. This is a corollary of numerical consistency and
the triangle inequality, not a new sensitivity mechanism.

\subsubsection{Exact cells and jumps of the residual-ReLU endpoint}
Fix $m>0$, $c>-m$, $T>0$, and $x\in U_T=(m(1-e^T),0)$. Put
\[
 t=\log((m-x)/m),\quad P=(m+c)/2,\quad B=m-x,
\]
\[
 E=e^{-2(T-t)},\quad F_0(x)=P(1-E),\quad J=e^{-t}E.
\]
Here $x$ is the initialization and the direction is the fixed scalar $v=1$.
For $\eta=T/N<1/2$, let $a=1-\eta$, $b=1-2\eta$, $h=-\log a$,
$\kappa=t/h$, and
\[
 z_k=m(1-a^{-k}),\qquad C_K=(z_K,z_{K-1}).
\]
Only cells with $1\le K<N$ are used below; compact subsets of $U_T$ satisfy
this condition for all sufficiently fine meshes. Boundary initializations
are excluded when a classical derivative is asserted.

\begin{theorem}[Exact branch and jump response]
\label{thm:scale-exact}
On $C_K$, the Euler endpoint and its derivative are
\begin{equation}
 F_\eta(y)=F_K(y):=P+\bigl[m+(y-m)a^K-P\bigr]b^{N-K},
 \qquad s_K=F_K'(y)=a^K b^{N-K}.
 \label{eq:scale-exact-cell}
\end{equation}
For a smooth terminal objective the training-cell radius at $x\in C_K$ is
\begin{equation}
 \rho_\eta=\min\{x-z_K,z_{K-1}-x\},\qquad
 |C_K|=m\eta a^{-K}.
 \label{eq:scale-exact-radius}
\end{equation}
If the objective has additional branches this radius must be intersected
with their inverse images. At an interior boundary $z_k$, $1\le k<N$,
the right-minus-left endpoint jump is
\begin{equation}
 [F_\eta]_{z_k}:=F_k(z_k)-F_{k+1}(z_k)
 =\eta c\,b^{N-k-1}.
 \label{eq:scale-jump}
\end{equation}
For $q\in C^1$, perturbation endpoints away from these boundaries, and
an interval contained in the above cells, the exact identity is
\begin{equation}
 D_{\eta,\epsilon}(1)=\frac{1}{2\epsilon}
 \left[\int_{x-\epsilon}^{x+\epsilon}q'(F_\eta(y))s_{K(y)}\,dy
 +\sum_{z_k\in(x-\epsilon,x+\epsilon)}
 \bigl(q(F_k(z_k))-q(F_{k+1}(z_k))\bigr)\right].
 \label{eq:scale-jump-decomposition}
\end{equation}
For $q(z)=\tfrac12(z-y_*)^2$ and $\epsilon<\rho_\eta$, the central response
equals the exact discrete derivative \emph{without a remainder}.
\end{theorem}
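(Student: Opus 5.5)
We solve the two affine Euler recurrences explicitly, using the discrete trajectories already written down in the proof of Proposition~\ref{prop:relu-erm}. For an initialization $y\le0$ the pre-event iterates are $\theta_k=m+(y-m)a^k$. The inequality $\theta_k\le0$ is equivalent to $a^k\ge m/(m-y)$, that is, $y\le z_k=m(1-a^{-k})$. Since $z_k$ decreases in $k$, the first positive index equals $K$ exactly when $z_K<y<z_{K-1}$, which is $y\in C_K$.

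Starting from the state $\theta_K=m+(y-m)a^K>0$, the outgoing recurrence is a convex combination with its fixed point $P$. It therefore stays positive, and after $N-K$ steps it gives $F_K(y)=P+[\theta_K-P]b^{N-K}$, which is \eqref{eq:scale-exact-cell}. Differentiating this affine expression in $y$ gives $s_K=a^Kb^{N-K}$.

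The cell length is $z_{K-1}-z_K=m(a^{-K}-a^{-K+1})=ma^{-K}(1-a)=m\eta a^{-K}$. On $C_K$ every training sign is strict and the word is constant, so the connected directional cell around $x$ is exactly $C_K$. This gives $\rho_\eta=\min\{x-z_K,z_{K-1}-x\}$ when $q$ is smooth, and otherwise $\rho_\eta$ must be intersected with the inverse images of the additional objective branches.

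For the jump at $z_k$, note that $y\to z_k^+$ lies in $C_k$ and $y\to z_k^-$ lies in $C_{k+1}$, so the right limit is $F_k(z_k)$ and the left limit is $F_{k+1}(z_k)$. At $y=z_k$ we have $\theta_k=0$ exactly, because $(z_k-m)a^k=-m$. Along $F_k$, the next step uses the outgoing branch: $0\mapsto\eta(m+c)$. Along $F_{k+1}$, it uses the incoming branch: $0\mapsto\eta m$. Both states are then propagated by $N-k-1$ outgoing steps, so their difference is multiplied by $b^{N-k-1}$. The right-minus-left jump is therefore $\eta c\,b^{N-k-1}$, which is \eqref{eq:scale-jump}. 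One can confirm this by substituting into $F_k$ and $F_{k+1}$: the terms $P+(\cdot-P)b^{\cdot}$ reduce to this value.

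The decomposition \eqref{eq:scale-jump-decomposition} is the fundamental theorem of calculus for the piecewise-$C^1$ function $y\mapsto q(F_\eta(y))$ on $[x-\epsilon,x+\epsilon]$. Only finitely many boundaries $z_k$ lie inside, and the endpoints avoid them by hypothesis. On each open cell the derivative is $q'(F_\eta)s_{K(y)}$. The total increment is the sum of the cell integrals plus the right-minus-left jumps $q(F_k(z_k))-q(F_{k+1}(z_k))$. Dividing by $2\epsilon$ gives the formula.

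For the quadratic $q$ with $\epsilon<\rho_\eta$, the interval contains no boundary. There $Q_\eta$ is $q$ composed with an affine map, so it is a quadratic polynomial in $y$. The central difference of a quadratic equals its derivative at the center exactly, since the third derivative vanishes and Proposition~\ref{prop:scale-elementary} then applies with $M_{3,\eta}=0$.

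The main delicacy is orientation bookkeeping: confirming which cell lies on the right of $z_k$, and that the exact-hit convention does not enter, because boundary points are excluded and only limits are used. I would also record that $1\le k<N$ ensures the continuation after the hit has $N-k-1\ge0$ outgoing steps and no recrossing.
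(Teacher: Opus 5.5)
Your proposal is correct and follows essentially the same route as the paper's proof. You solve the two affine recurrences, identify $C_K$ by the first positive index, and obtain the jump from the two one-sided continuations through the exact zero state at $z_k$. The decomposition then follows from piecewise FTC with telescoping, and exactness for the quadratic. Your only deviation is cosmetic: for the jump you propagate the one-step difference $\eta c$ by $b^{N-k-1}$, whereas the paper writes out $F_k(z_k)=P(1-b^{N-k})$ and $F_{k+1}(z_k)=P+(m\eta-P)b^{N-k-1}$ and subtracts.
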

\begin{proof}
The incoming recurrence solves to $m+(y-m)a^k$. Its first positive state
has index $K$ exactly when $z_K<y<z_{K-1}$. That state lies in $(0,m\eta)$.
Every subsequent update is $b\theta+\eta(m+c)$, a convex combination of
positive values because $0<b<1$ and $P>0$. There is no recrossing. Solving
this recurrence proves \eqref{eq:scale-exact-cell} and the cell endpoints;
subtracting $z_K$ from $z_{K-1}$ gives the width. At $y=z_k$, the limit
from the right uses $k$ incoming steps and then starts the outgoing
recurrence from zero; the limit from the left uses one extra incoming
step and starts it from $m\eta$. Consequently
\[
 F_k(z_k)=P(1-b^{N-k}),\qquad
 F_{k+1}(z_k)=P+(m\eta-P)b^{N-k-1}.
\]
Their difference is $[P(1-b)-m\eta]b^{N-k-1}=\eta c b^{N-k-1}$.
Apply the fundamental theorem of calculus separately on each smooth
subinterval and add the intervening right-minus-left jumps; telescoping
gives \eqref{eq:scale-jump-decomposition}. Inside one cell a quadratic
$q$ composed with affine $F_K$ is quadratic, whose central difference is
exact. The value assigned at an interior jump does not affect this identity;
if a perturbation endpoint itself hits a jump, its selected value must be
used instead of an unspecified one-sided limit.
\end{proof}

\paragraph{Uniform constants in this construction.}
For $t(y)\in[\delta,T-\delta]$, $0<\delta<T/2$, and
$\eta<\min\{1/4,\delta\}$, the preceding formulas give
\begin{equation}
 \sup_y|F_\eta(y)-F_0(y)|\le C_s\eta,
 \qquad C_s=m+P(2+6T).
 \label{eq:scale-explicit-state-bound}
\end{equation}
To verify this bound, $h-\eta\le\eta^2/[2(1-\eta)]$ and $K=\lceil t/h\rceil$
give $|K\eta-t|\le(1+T)\eta$ (at a boundary either adjacent choice obeys
the same bound). Also
$|\log(1-2\eta)+2\eta|\le4\eta^2$. Thus the exponents of
$b^{N-K}$ and $e^{-2(T-t)}$ differ by at most $(2+6T)\eta$.
They are nonpositive, where the exponential is 1-Lipschitz. The incoming
overshoot is at most $m\eta$, proving the bound. All endpoints lie between
zero and $\max\{P,m/4\}$, so for the squared target $y_*$ one may take
$L_q=|y_*|+\max\{P,m/4\}$. No event separation other than the two endpoint
distances is needed for this one-event result. Since
$F_0(y)=P[1-e^{-2T}(1-y/m)^2]$, $Q_0=q\circ F_0$ is explicitly smooth
for a smooth $q$. The constants cease to be uniform if the instance scales
without bound; $c=-m$ is excluded because outgoing transversality fails.

\subsubsection{Critical scales and a fixed-initialization counterexample}
\begin{theorem}[Phase-resolved critical limit]
\label{thm:scale-critical}
For the fixed instance and initialization above, consider nonresonant
canonical meshes with fractional phases $p_N=\{t/h\}\to p\in[0,1]$ and
$\epsilon_N/\eta_N\to\zeta\in(0,\infty)$. Suppose
$\zeta\ne B|p-j|$ for every $j\in\mathbb Z$. For $q\in C^1$ near $F_0(x)$,
\begin{align}
 D_{\eta_N,\epsilon_N}(1)&\longrightarrow
 q'(F_0(x))\left[J+\frac{cE}{2\zeta}H(p,\zeta)\right],
 & H(p,\zeta)&=\#\{j\in\mathbb Z:B|p-j|<\zeta\},
 \label{eq:scale-critical-limit}\\
 \rho_{\eta_N}/\eta_N&\longrightarrow B\min\{p,1-p\}.
 \label{eq:scale-radius-phase}
\end{align}
Thus $\epsilon/\eta$ alone does not determine the limit. At the excluded
endpoint-equality phases, the higher-order endpoint approach and the value
selected at a hit matter, and the formula is not asserted.
\end{theorem}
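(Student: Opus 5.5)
The plan is to work entirely from the exact cell description in Theorem~\ref{thm:scale-exact}, rather than invoking the general finite-event theorem. I will use the jump decomposition \eqref{eq:scale-jump-decomposition} with $\epsilon=\epsilon_N$ and estimate its two pieces separately.

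\textbf{Step 1: locate the boundaries in scaled coordinates.} Write $\kappa=t/h$, so the base point lies in the cell $C_K$ with $K=\lceil\kappa\rceil$. The boundaries near $x$ are $z_k=m(1-a^{-k})$. Since $x=m(1-a^{-\kappa})$,
\[
 z_k-x=m a^{-\kappa}(1-a^{\kappa-k})=B(1-e^{-h(k-\kappa)}).
\]
For $|k-\kappa|=O(1)$ this equals $B\,h(\kappa-k)\bigl(1+O(\eta)\bigr)$. Because $h=\eta+O(\eta^2)$, we get
\[
 (z_k-x)/\eta=B(\kappa-k)+O(\eta).
\]
The offsets $\kappa-k$ are $p_N-j$ for integers $j$, with $p_N\to p$. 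Hence the scaled boundary positions converge to $\{B(p-j):j\in\mathbb Z\}$. Two consequences follow:
\begin{itemize}
\item The boundaries inside $(x-\epsilon_N,x+\epsilon_N)$ are, for large $N$, exactly those with $B|p-j|<\zeta$. This uses the hypothesis $\zeta\neq B|p-j|$, which gives a strict margin, so the count stabilizes at $H(p,\zeta)$. It also keeps the perturbation endpoints off the jumps, which the decomposition requires.
\item The nearest boundaries on either side give \eqref{eq:scale-radius-phase}: the distances are $Bp_N\eta$ and $B(1-p_N)\eta$ up to $O(\eta^2)$, so the radius is governed by $B\min\{p,1-p\}$.
\end{itemize}

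\textbf{Step 2: the smooth part.} On every cell meeting the interval, the index satisfies $K(y)=K+O(1)$. Therefore
\[
 s_{K(y)}=a^{K(y)}b^{N-K(y)}=J\,(1+o(1))
\]
uniformly, using the estimate $|K\eta-t|\le(1+T)\eta$ from the uniform-constants paragraph and continuity of $e^{-t}e^{-2(T-t)}$. By \eqref{eq:scale-explicit-state-bound}, $F_\eta(y)\to F_0(x)$ uniformly on the shrinking interval, and $q'$ is continuous. So the integral divided by $2\epsilon_N$ tends to $q'(F_0(x))J$.

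\textbf{Step 3: the jump part.} Each jump term is $q(F_k(z_k))-q(F_{k+1}(z_k))$. Both arguments converge to $F_0(x)$ and differ by $\eta c\,b^{N-k-1}$, by \eqref{eq:scale-jump}. The mean value theorem then gives
\[
 q'(F_0(x))\,\eta c\,b^{N-k-1}(1+o(1)).
\]
Moreover $b^{N-k-1}\to E$, since $k\eta\to t$. Summing the $H(p,\zeta)$ terms and dividing by $2\epsilon_N\sim2\zeta\eta$ yields $q'(F_0(x))cEH/(2\zeta)$. Adding this to Step 2 gives \eqref{eq:scale-critical-limit}.

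\textbf{Main obstacle.} The only delicate step is Step 1's claim that the boundary count stabilizes. It needs the $O(\eta)$ correction in $(z_k-x)/\eta$ together with the $p_N\to p$ convergence to be dominated by the fixed margin $\min_j|\zeta-B|p-j||>0$. It also needs $\epsilon_N/\eta_N\to\zeta$. I would handle this by choosing $N$ large enough that all three errors are below half that margin. A subtle subcase is $p\in\{0,1\}$: the nearest boundary index then switches sides, but the symmetric set $\{B|p-j|\}$ absorbs this, and the nonresonance of the meshes guarantees the base point is never itself a boundary.
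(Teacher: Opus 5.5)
Your proposal is correct and follows essentially the same route as the paper's proof. You locate the scaled boundaries $(z_k-x)/\eta\to B(p-j)$ from the exact cells and stabilize their count at $H(p,\zeta)$ using the strict endpoint margin; you then show that the integral term of the jump decomposition tends to $q'(F_0(x))J$ and that each jump divided by $\eta$ tends to $q'(F_0(x))cE$. There are three cosmetic fixes. The exponent should read $e^{-h(\kappa-k)}$, although your next line already uses the correct sign. Write the mean-value step additively, as $\eta c\,b^{N-k-1}[q'(F_0(x))+o(1)]$, so that it stays valid when $q'(F_0(x))=0$. Finally, invoke monotonicity of $z_k$ in $k$ (the paper instead uses the uniform lower bound on cell widths) to justify explicitly that only finitely many $j$ need checking.
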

\begin{proof}
Write $\kappa=\lfloor\kappa\rfloor+p_N$. For each fixed integer $j$,
\[
 \frac{z_{\lfloor\kappa\rfloor+j}-x}{\eta}
 =\frac{B[1-\exp(h(j-p_N))]}{\eta}
 \longrightarrow B(p-j).
\]
The two nearest boundaries prove the radius formula. On an $O(\eta)$
interval the cell widths divided by $\eta$ are uniformly bounded below,
so only a bounded number of boundaries can be present. The strict endpoint
condition makes their count eventually $H(p,\zeta)$. On every encountered
cell $K\eta\to t$, hence $s_K\to J$, uniformly over this bounded set;
also $F_\eta(y)\to F_0(x)$ uniformly. The integral term in
\eqref{eq:scale-jump-decomposition} therefore converges to $q'(F_0(x))J$.
For each encountered boundary, \eqref{eq:scale-jump} divided by $\eta$
converges to $cE$. The mean-value theorem and continuity of $q'$ give the
corresponding objective jump divided by $\eta$ as $q'(F_0(x))cE+o(1)$.
Divide their finite sum by $2\epsilon$ to finish the proof.
\end{proof}

There is no unspecified universal constant in
\eqref{eq:scale-critical-limit}: $B,E,J,c$ and the integer count are given
explicitly. For a compact family of strictly separated scaled endpoints,
the proof is uniform when $B$ is bounded away from zero, $m,c,T,t$ stay
in compact admissible sets, and $q'$ has a common continuity modulus.
Without a positive gap from $\zeta=B|p-j|$, stabilization of the count
is not uniform. At $c=0$ the jumps vanish and the leading phase dependence
disappears. For large $\zeta$, $H(p,\zeta)/(2\zeta)\to1/B$ uniformly in
$p$, and $J+cE/B=(m+c)E/B=D F_0(x)$. This checks agreement with the
coarser-radius bound, not an interchange of unproved joint limits.

\begin{corollary}[Small relative radius need not select the discrete gradient]
\label{cor:scale-near-resonance}
If $t/T$ is irrational and $c q'(F_0(x))\ne0$, there is a nonresonant
canonical subsequence and radii $\epsilon_N$ such that
\[
 \epsilon_N/\eta_N\to0,\qquad \epsilon_N>\rho_{\eta_N},\qquad
 |D_{\eta_N,\epsilon_N}(1)|\to\infty,
\]
although $Q_{\eta_N}'(x)\to q'(F_0(x))J$ is finite. In particular, with
$m=4,c=-3,T=1,t=\sqrt2/4$ and the fixed objective
$q(z)=\tfrac12(z-1)^2$, the exact discrete and flow derivatives are both
negative, but along that subsequence $D_{\eta_N,\epsilon_N}(1)>0$.
Consequently the ordering of the two predefined proposals $x\pm\epsilon_N$
is opposite to the ordering predicted by either infinitesimal derivative.
\end{corollary}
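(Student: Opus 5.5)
The plan is to exploit the exact cell and jump formulas of Theorem~\ref{thm:scale-exact} along a mesh subsequence whose phase $p_N=\{t/h_N\}$ tends to zero from above. In that regime the cell radius collapses relative to $\eta$, and one boundary jump of size $\Theta(\eta)$ is divided by a vanishing $2\epsilon_N\ll\eta_N$.

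\textbf{Step 1 (choice of subsequence).} With $h_N=-\log(1-T/N)=T/N+T^2/(2N^2)+O(N^{-3})$, we have
\[
 t/h_N=(t/T)N-t/2+O(N^{-1}).
\]
Since $t/T$ is irrational, Kronecker's theorem makes $\{(t/T)N-t/2\}$ dense in $[0,1)$. The $O(N^{-1})$ error is harmless, so for every $\delta>0$ there are arbitrarily large $N$ with $p_N\in(0,\delta)$. Taking $\delta=1/\ell$ along an increasing sequence gives a subsequence with $p_N\downarrow0$ and $p_N\neq0$. Hence $\kappa$ is never an integer, and these meshes are nonresonant by Proposition~\ref{prop:relu-erm}.

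\textbf{Step 2 (radius and boundary count).} By the proof of Theorem~\ref{thm:scale-critical},
\[
 (z_{\lfloor\kappa\rfloor+j}-x)/\eta_N\to B(p-j)=-Bj \quad\text{uniformly for } |j|\le 2.
\]
More precisely, the nearest boundary lies at distance $\rho_{\eta_N}=B p_N\eta_N(1+o(1))$. The next boundaries on either side lie at distance $\ge B\eta_N/2$ for large $N$. Set $\epsilon_N=2\rho_{\eta_N}$. Then $\epsilon_N/\eta_N\approx 2Bp_N\to0$, $\epsilon_N>\rho_{\eta_N}$, and the interval $(x-\epsilon_N,x+\epsilon_N)$ contains exactly one boundary $z_k$, with neither endpoint on a boundary.

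\textbf{Step 3 (decomposition).} Apply \eqref{eq:scale-jump-decomposition}. The integral term, divided by $2\epsilon_N$, is an average of $q'(F_\eta)s_K$. This stays bounded, and in fact converges to $q'(F_0(x))J$, because $K\eta\to t$ on both adjacent cells and $F_\eta\to F_0(x)$ uniformly. The single jump term equals
\[
 q(F_k(z_k))-q(F_{k+1}(z_k)) = q'(F_0(x))\,cE\,\eta_N(1+o(1)),
\]
using \eqref{eq:scale-jump}, the mean-value theorem and continuity of $q'$. Dividing by $2\epsilon_N=4\rho_{\eta_N}$ gives
\[
 \frac{q'(F_0(x))\,cE}{4Bp_N}(1+o(1)),
\]
which diverges with the sign of $c\,q'(F_0(x))$, since $E>0$. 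Meanwhile
\[
 Q_{\eta_N}'(x)=q'(F_\eta(x))\,a^Kb^{N-K}\to q'(F_0(x))J
\]
by Proposition~\ref{prop:relu-erm}, which is finite.

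\textbf{Step 4 (the numerical instance).} Take $m=4$, $c=-3$, $T=1$, $t=\sqrt2/4$. Here $t/T$ is irrational, $P=1/2$, and
\[
 F_0=\tfrac12(1-E)\in(0,1/2),
\]
so $q'(F_0)=F_0-1<0$ and $c\,q'(F_0)>0$. Therefore $D_{\eta_N,\epsilon_N}(1)\to+\infty$, and in particular it is eventually positive. The discrete derivative tends to $q'(F_0)J<0$, so it is negative for large $N$. The flow derivative is $q'(F_0)\,rJ$ with $r=(m+c)/m=1/4>0$, which is also negative. This gives the claimed reversal of the ordering of the proposals $x\pm\epsilon_N$.

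\textbf{Main obstacle.} The delicate step is Step 2: the boundary count must be exactly one, uniformly along the subsequence, while $p_N\to0$. This requires that the $o(1)$ errors in $(z_{\lfloor\kappa\rfloor+j}-x)/\eta_N$ are small compared with $Bp_N$, not merely small in absolute terms. I would handle it with the exact expression
\[
 z_{\lfloor\kappa\rfloor}-x=B\,[1-e^{-h p_N}],
\]
which gives a relative error $O(\eta)$ in the nearest-boundary distance itself. Then $\epsilon_N=2\rho_{\eta_N}$ is exactly twice the true radius, so no asymptotic approximation of $\rho$ is needed.
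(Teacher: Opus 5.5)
Your proof is correct and follows the paper's argument. You pick a phase subsequence with $p_N\to0$ from above using the irrational rotation, use the collapsing radius $\rho_{\eta_N}/\eta_N\to0$, choose a window containing exactly one boundary, and apply \eqref{eq:scale-jump-decomposition} to get a bounded regular part plus a single $O(\eta_N)$ jump divided by $2\epsilon_N$. The differences are cosmetic: you take $\epsilon_N=2\rho_{\eta_N}$ where the paper takes $\epsilon_N=\sqrt{\rho_{\eta_N}\eta_N}$, and both choices give $\epsilon_N/\eta_N\to0$, $\epsilon_N>\rho_{\eta_N}$ and $\eta_N/\epsilon_N\to\infty$; and you use Kronecker density, which already suffices here, instead of the equidistribution argument the paper cites.
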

\begin{proof}
The expansion
$\kappa_N=(t/T)N-t/2+O(N^{-1})$ and the equidistribution proof in
Corollary~\ref{cor:phase-classification} give arbitrarily large indices
with fractional phase in every open subinterval of $(0,1)$.
A diagonal selection therefore gives strictly positive
$p_N\to0$ without hitting zero. Equation \eqref{eq:scale-radius-phase}
gives $\rho_N/\eta_N\to0$. Set $\epsilon_N=\sqrt{\rho_N\eta_N}$.
For large $N$ this interval contains the nearest boundary and no other
boundary, since neighboring widths are asymptotic to $B\eta_N$.
Its regular response is bounded, whereas its single objective jump divided
by $2\epsilon_N$ is
\[
 \frac{\eta_N}{2\epsilon_N}
 \bigl[cE q'(F_0(x))+o(1)\bigr],
\]
which diverges with the sign of $c q'(F_0(x))$. For the stated squared
objective $0<F_0(x)<1/2$, hence $q'(F_0(x))<0$; $c<0$ makes the jump
contribution positive, while $J>0$ and $DF_0(x)>0$ make both derivatives
negative. The central-response sign is precisely the sign of
$Q_\eta(x+\epsilon)-Q_\eta(x-\epsilon)$, proving the proposal comparison.
\end{proof}

The radii in this counterexample are phase-adaptive mechanism probes. The
task and direction, and the base initialization, are fixed. The result does
not assert that such phases are typical at an externally prescribed radius,
nor that a normalized flow gradient differs in this scalar example. Absolute
objective differences still vanish: a large response here divides a small
$O(\eta)$ jump by an even smaller radius. This explains why infinitesimal
correctness supplies no mesh-independent decision radius. Theorem~\ref{thm:finite-smooth-response}
handles fixed finite separated event sequences; changing itineraries,
colliding events and certified network cell radii remain open.

\subsection{Critical-scale response classification}
\label{app:phase-classification}
The following consolidates consequences of Theorem~\ref{thm:scale-critical};
it is not a separate sensitivity mechanism. Phase measure below is Lebesgue
measure on $[0,1)$, not a distribution over data, seeds, or trained models.

\begin{corollary}[Classification and canonical-mesh frequencies]
\label{cor:phase-classification}
In Theorem~\ref{thm:scale-critical}, fix $\zeta>0$ and put
\[
 \ell=2\zeta/B,\quad n=\lfloor\ell\rfloor,\quad r=\ell-n,
 \quad a=q'(F_0)J,\quad b=q'(F_0)cE/(2\zeta).
\]
Except at the finitely many endpoint phases $B|p-j|=\zeta$, the
count $H$ takes $n$ on a set of measure $1-r$ and $n+1$ on a set of
measure $r$ (the second set is empty when $r=0$). Consequently the critical
response has values $a+bn$ and $a+b(n+1)$ with those masses, and
\begin{equation}
 \int_0^1 R_\zeta(p)\,dp=Q_0'(x),\qquad
 \operatorname{Var}_p R_\zeta=b^2r(1-r).
 \label{eq:phase-moments}
\end{equation}
The variance vanishes exactly when $c q'(F_0)=0$ or $\ell$ is an integer.
For positive integer $\ell$, every allowed phase has response $Q_0'(x)$.

If $t/T$ is irrational and $\epsilon_N=\zeta T/N$, then the empirical
distribution of the actual responses $D_{T/N,\epsilon_N}$ converges to
this two-point distribution. In particular, if every level with positive
mass is nonzero, the frequency of positive responses is
\[
 (1-r)\mathbf1_{a+bn>0}+r\mathbf1_{a+b(n+1)>0}.
\]
This is also the frequency with which the predefined positive proposal
has higher objective than the negative proposal. For comparison to any
fixed nonzero derivative sign $s$, replace each indicator by
$\mathbf1_{s(a+bk)<0}$. If a positive-mass level is zero, its limiting
sign is not determined: the lower and upper frequency bounds for positive
responses are the masses of $R_\zeta>0$ and $R_\zeta\ge0$, respectively.
\end{corollary}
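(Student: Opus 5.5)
The plan is to reduce everything to counting integers in a sliding open interval. By definition $H(p,\zeta)=\#\{j\in\mathbb Z:|p-j|<\ell/2\}$, which is the number of integers in the open interval $(p-\ell/2,p+\ell/2)$ of length $\ell=2\zeta/B$. Such an interval contains either $n=\lfloor\ell\rfloor$ or $n+1$ integers. For $p$ uniform on $[0,1)$ the count is $n+1$ exactly on a set of measure $r$: its complement is a union of arcs of total length $1-r$ on the circle, and the excluded endpoint phases $B|p-j|=\zeta$ are finitely many points mod 1.

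The moments then follow quickly.
\begin{itemize}
\item By Fubini, $\int_0^1 H\,dp=\sum_j|\{p\in[0,1):|p-j|<\ell/2\}|=\ell$.
\item Hence $\int_0^1 R_\zeta\,dp=a+b\ell=q'(F_0)[J+cE/B]$. This equals $Q_0'(x)$ by the identity $J+cE/B=DF_0(x)$ already recorded after Theorem~\ref{thm:scale-critical}.
\item Since $H-n$ is Bernoulli$(r)$, $\operatorname{Var}_pR_\zeta=b^2r(1-r)$. This vanishes iff $b=0$ (equivalently $cq'(F_0)=0$) or $r\in\{0,1\}$, i.e.\ $\ell\in\mathbb Z$.
\item For positive integer $\ell$ an open interval of integer length contains exactly $\ell$ integers unless an endpoint is an integer. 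So every allowed phase gives $H=\ell$ and response $a+b\ell=Q_0'(x)$.
\end{itemize}

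For the canonical-mesh frequencies I would use the expansion $\kappa_N=(t/T)N-t/2+O(N^{-1})$ from the proof of Proposition~\ref{prop:relu-erm-resonance-set}. By Weyl's theorem $\{(t/T)N\}$ is equidistributed for irrational $t/T$. A fixed shift and an $o(1)$ perturbation preserve equidistribution, so $p_N=\{\kappa_N\}$ is equidistributed. Next I would upgrade Theorem~\ref{thm:scale-critical} to a statement uniform in phase. Fix $\delta>0$ and let $G_\delta$ be the set of phases at distance at least $\delta$ from the finitely many bad phases (the endpoint phases and $p=0$, which also covers resonance). On $G_\delta$, with $\epsilon_N/\eta_N=\zeta$ exactly, I would show $\sup|D_{T/N,\epsilon_N}-R_\zeta(p_N)|\to0$ over $N$ with $p_N\in G_\delta$. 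This comes from rerunning that proof with $p_N$ varying. The scaled boundary positions $B[1-\exp(h(j-p_N))]/\eta=B(p_N-j)+O(\eta)$ converge uniformly in $p_N$ for the boundedly many relevant $j$. The $\delta$-gap then freezes the count for large $N$. The slopes $s_K\to J$ and the scaled jumps $\to cE$ uniformly, and continuity of $q'$ near $F_0(x)$ handles the objective.

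With this uniformity in hand, the empirical law of $D_N$ differs from that of $R_\zeta(p_N)$ only on a set of $N$ of upper density $O(\delta)$, by equidistribution. $R_\zeta$ is piecewise constant with finitely many discontinuities, so the empirical law of $R_\zeta(p_N)$ converges to the two-point law. Letting $\delta\downarrow0$ gives weak convergence of the empirical distribution of the actual responses. The positive-response frequency follows by the portmanteau theorem when every positive-mass atom is nonzero, since $\{y>0\}$ then has null boundary. Comparison with a fixed sign $s$ is identical. If some atom is zero, the open/closed-set portmanteau inequalities give exactly the stated lower and upper bounds, via the masses of $R_\zeta>0$ and $R_\zeta\ge0$. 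The proposal-ordering statement is just the observation that $\operatorname{sign}D$ is the sign of $Q_\eta(x+\epsilon)-Q_\eta(x-\epsilon)$. The main obstacle I expect is the phase-uniform version of Theorem~\ref{thm:scale-critical}, specifically controlling the $O(\eta)$ boundary displacements uniformly so the count stabilizes on all of $G_\delta$ at a common mesh threshold. I would handle this through explicit bounds on $h-\eta$ and on $\exp(h(j-p_N))$ for $|j|\le\ell+2$.
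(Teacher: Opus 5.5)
Your proposal is correct and follows essentially the same route as the paper. Both arguments count integers in the sliding window: the paper writes the count as $\lfloor u+\ell\rfloor-\lfloor u\rfloor$ with $u=p-\zeta/B$, while you obtain the mass $r$ from $H\in\{n,n+1\}$ together with Fubini. Both then use Bernoulli moments, equidistribution of $\kappa_N=\alpha N-t/2+O(N^{-1})$ via Weyl, and a $\delta$-excision of the endpoint phases with uniform convergence on the complement, letting $N\to\infty$ before $\delta\downarrow0$. The paper additionally notes that responses stay uniformly bounded on the excised set, which it uses for empirical moment limits; your weak-convergence and portmanteau argument for the stated distribution and sign frequencies does not need this.
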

\begin{proof}
Write $u=p-\zeta/B$. The number of integers in $(u,u+\ell)$, away
from integer endpoints, is $\lfloor u+\ell\rfloor-\lfloor u\rfloor$.
It equals $n+\mathbf1_{\{u\}>1-r}$ outside the endpoint phases.
Translation on the circle preserves measure, proving the two masses.
The mean is $a+b\ell=q'(F_0)(J+cE/B)=Q_0'(x)$; the variance is that
of a Bernoulli variable of mass $r$, multiplied by $b^2$. Since $E,B,\zeta$
are strictly positive, these are exactly the stated degeneracies.

For the frequency statement one needs equidistribution, not density.
Expanding $-\log(1-T/N)$ gives
\[
 \frac{t}{-\log(1-T/N)}=\alpha N-t/2+O(N^{-1}),\qquad \alpha=t/T.
\]
For each nonzero integer $h$, the geometric sum of
$\exp(2\pi i h\alpha N)$ is bounded by $2/|1-e^{2\pi i h\alpha}|$.
Changing its argument by $O(N^{-1})$ changes the normalized sum through
$M$ by $O_h((1+\log M)/M)$. Thus every nonconstant Fourier average of
the actual phases tends to zero. Trigonometric-polynomial approximation
gives convergence for continuous periodic functions; bounding interval
indicators above and below by continuous functions gives equidistribution.

It remains to transfer phase measure to actual finite-grid responses.
Remove a neighborhood of width $\delta$ around each of the finitely many
endpoint phases. On its compact complement, the boundary expansion in
Theorem~\ref{thm:scale-critical} is uniform, the integer counts stabilize,
and $D_{T/N,\zeta T/N}-R_\zeta(p_N)\to0$ uniformly. Inside the removed
set the responses remain uniformly bounded: the cell widths are bounded
below by a constant times $\eta$, there are only boundedly many jumps,
and each jump is $O(\eta)$. Equidistribution bounds the upper density of
removed indices by the set's measure. First let $N\to\infty$, then
$\delta\downarrow0$. This proves the distribution and moment limits and,
when the occupied levels are nonzero, the sign frequencies. The same
argument bounds zero-level sign frequencies without assigning their signs.
An exact finite-grid endpoint hit can only occur in the removed sets
eventually, so its convention cannot change these frequencies.
\end{proof}

At $q'(F_0)=0$ both limiting responses vanish and higher-order terms decide
finite comparisons. At $c=0$ the phase term disappears; a nonzero $q'(F_0)$
still gives the common nonzero derivative sign. Rational $t/T$ is not
covered by the equidistribution statement. Positive integer $\ell$ is a
special critical-scale cancellation, not a phase-independent law for
arbitrary radii. No joint or independent phase distribution is asserted
for the coupled system.

\subsection{Proof and scope of the resolution synthesis}
\label{app:resolution-synthesis}

The three sufficient regimes in Section~\ref{sec:resolution} combine
existing statements; this section supplies the complete deduction and its
scope. Fix $Q_\eta=q\circ\Theta_{\eta,T}$,
$Q_0=q\circ\varphi_T$, a base initialization $x$, and a unit direction
$v$. Write $g_\eta=\nabla Q_\eta(x)$ at a nonresonant program,
$g_{\rm reg}$ for its regional limit, and $g_0=\nabla Q_0(x)$.
Terminal covectors in the latter two objects are evaluated at the flow
endpoint. The central response is
$D_{\eta,\epsilon}(v)=[Q_\eta(x+\epsilon v)-Q_\eta(x-\epsilon v)]/(2\epsilon)$.
The critical clause assumes the fixed finite-event geometry of
Theorem~\ref{thm:finite-smooth-response}, $q\in C^2$ near its terminal
point, and a common positive recursive phase margin on the stated
compact candidate set. The coarse clause assumes a fixed neighborhood
containing the whole proposed perturbation interval.

\paragraph{The directional cell is a whole-interval property.}
At a nonresonant grid, list all signed tests $\psi_i(s)$ of the
base-branch program, including terminal validation and any domain tests.
Assume their smooth extensions exist on $[-r_0,r_0]$, their values at
zero are nonzero, and $|\psi_i'(s)|\le L_i$ throughout that interval.
Then
\[
 \rho_\eta(v)\ge
 \min\{r_0,\min_{i:L_i>0}|\psi_i(0)|/L_i\}.
\]
Tests with $L_i=0$ impose no restriction. The mean-value theorem keeps
each extended test strictly on its original side inside the displayed
radius. Induction through the program then identifies the extended
execution with the actual one. Bounds computed only at the base point,
or agreement of two endpoint masks, do not establish this claim.
All tests, margins, derivative bounds and the resulting radius may
depend on $\eta$.

\paragraph{Within-cell remainder with only $C^2$ regularity.}
Let $h(s)=Q_\eta(x+sv)$. Two integral Taylor formulas give exactly
\[
 \frac{h(\epsilon)-h(-\epsilon)}{2\epsilon}-h'(0)
 =\frac{1}{2\epsilon}\int_0^\epsilon
 (\epsilon-u)[h''(u)-h''(-u)]\,du.
\]
The absolute value is at most
\[
 \frac{1}{\epsilon}\int_0^\epsilon
 (\epsilon-u)\omega_{h''}(u)\,du
 \le \frac{\epsilon}{2}\omega_{h''}(\epsilon).
\]
For fixed $\eta$, continuity of $h''$ makes this $o(\epsilon)$.
Across varying grids, pointwise $C^2$ regularity gives no uniform
modulus: the displayed product must tend to zero.
A simpler bound $\sup|h''|\le M_{2,\eta}$ gives
$M_{2,\eta}\epsilon/2$ directly from the same identity.
If $|h'''|\le M_{3,\eta}$, then
$|h''(u)-h''(-u)|\le2M_{3,\eta}u$, and integration gives
$M_{3,\eta}\epsilon^2/6$.
The critical theorem's $C^2$ regional-field hypotheses do not by
themselves supply this additional third-derivative bound.
Under the derivative-selection theorem, write
$\alpha_\eta=\|g_\eta-g_{\rm reg}\|\to0$ at nonresonant meshes.
The within-cell bound plus $\alpha_\eta$ also bounds
$|D_{\eta,\epsilon}(v)-g_{\rm reg}^\top v|$.
This extra limit concerns the regional object, not $\nabla Q_0$.

\paragraph{Critical response and an explicit objective constant.}
The endpoint theorem provides, uniformly on the specified compact $K$,
\[
 \Theta_{\eta,T}(x+\eta z)=x_T+\eta e_{p_N}(z)+r_\eta(z),
 \quad \|e_{p_N}(z)\|\le B,\quad
 \|r_\eta(z)\|\le C\eta^2.
\]
Let $G=\|\nabla q(x_T)\|$ and bound the Hessian of $q$ by $M_q$ on
a ball around $x_T$. Add to the mesh threshold the requirement that
$\eta B+C\eta^2$ lies inside this ball, and fix an upper mesh
$\bar\eta$. Taylor's formula yields
\[
 \left|Q_\eta(x+\eta z)-q(x_T)
 -\eta\nabla q(x_T)^\top e_{p_N}(z)\right|
 \le C_Q\eta^2,\qquad
 C_Q=GC+\tfrac12M_q(B+C\bar\eta)^2.
\]
Apply this at $z=\pm\zeta v$ and divide their difference by
$2\zeta\eta$. This proves the critical bound. The detailed endpoint
theorem does not need $0\in K$ unless a base-subtracted response is
requested, so a two-point $K=\{-\zeta v,\zeta v\}$ is sufficient
for this central comparison. The three-point version in the main
box matches the main theorem's presentation and is slightly
stronger than necessary.

The constant depends on the fixed event count, time, scaled displacement
bound, field and surface derivative bounds, normal speeds, event
separation and isolating-tube margins. The mesh threshold also depends
on the positive recursive phase margin $\mu$; this controls the actual
integer choices against the $O(\eta^2)$ surface residuals.
It is not enough to require only raw distances
$\operatorname{dist}(t_j/\eta,\mathbb Z)$.
For a phase limit $p_N\to p$ whose recursive margins are positive on
$K$, induction through the finitely many ceilings fixes their integers
for nearby phases and gives
$\sup_K\|e_{p_N}-e_p\|\le C_p\operatorname{dist}(p_N,p)$.
This supplies the additional term
$GC_p\operatorname{dist}(p_N,p)/\zeta$.
The induction does not assume independent, dense or uniform event phases.

The compact set can be finite or disconnected. Requiring a positive
margin on a connected interval crossing a recursive branch boundary
would exclude the very cross-branch comparisons the theorem describes.
By contrast, the within-cell radius deliberately requires every point
of its interval to preserve the branch. These are different conditions.
Fixed $\zeta>0$ is essential to the stated $O(\eta)$ response bound.
For a family $\zeta_\eta$ in a fixed compact subset of $(0,\infty)$,
the same bound follows if one common compact $K$ and one common phase
margin contain all requested endpoints. Letting
$\zeta_\eta\to0$ or $\zeta_\eta\to\infty$ requires separate uniform
remainder and geometry control; it is not an unqualified consequence.

\paragraph{Coarser perturbations.}
At the two endpoints, uniform objective consistency contributes at most
$2C_0\eta$ to the numerator. Also
\[
 \frac{Q_0(x+\epsilon v)-Q_0(x-\epsilon v)}{2\epsilon}
 =\frac{1}{2\epsilon}\int_{-\epsilon}^{\epsilon}
 \nabla Q_0(x+sv)^\top v\,ds.
\]
Subtract $\nabla Q_0(x)^\top v$, use $\|v\|=1$, and add the two
bounds. Uniform state accuracy $C_s\eta$ and a uniform state-Lipschitz
constant $L_q$ imply $C_0=L_qC_s$. Such a state constant follows from
the existing separated-crossing state proof on a fixed neighborhood
with common finite itinerary, event/end-point separation, transverse
speeds, field bounds, and tube/non-event margins. A state estimate at
the base point alone is insufficient. Smoothness of the flow objective
must include the terminal validation trace; training transversality
alone does not imply $Q_0\in C^1$.
This triangle-inequality argument is a direct corollary, not a new
approximation theorem.

\paragraph{A critical response need not be a gradient.}
This failure already occurs for a planar single event. Let $T=1$,
$x=(-1/2,0)$, guard $x_1=0$, incoming field $(1,0)$, outgoing field
$(2,0)$, and $q(y)=y_1$. On odd grids $N$, the event phase is $p=1/2$,
the smooth bias vanishes, and the endpoint response is exactly
\[
 e_p(z)=\bigl(z_1-\lceil1/2-z_1\rceil+1/2,\ z_2\bigr).
\]
All fields and the guard satisfy the finite-event theorem's smoothness,
separation and same-direction speed conditions. Fix $\zeta=3/4$.
Coordinate comparisons give
$\widehat D_{\eta,\zeta}(e_1)=7/3$ and
$\widehat D_{\eta,\zeta}(e_2)=0$.
For the unit vector $v=(1/3,2\sqrt2/3)$, however,
\[
 \widehat D_{\eta,\zeta}(v)=1/3
 \ne (7/3,0)^\top v=7/9.
\]
Every ceiling argument used here is at least $1/4$ from an integer.
Thus the counterexample is strictly away from boundary ambiguity, and
the critical prediction equals the exact finite response on these grids.
One may define a \emph{finite coordinate-response vector}
$h_i^{(\zeta)}=\widehat D_{\eta,\zeta}(e_i)$ and report its cosine
with another explicitly defined vector, but that cosine does not measure
agreement of directional derivatives and $h^{(\zeta)\top}v$ does not
generally predict another direction's finite response.

\paragraph{Decision and parameter scope.}
Any verified error bound $|D-\widehat D|\le E$ certifies the sign only
when $|\widehat D|>E$; otherwise the bound returns unresolved. It also
certifies an absolute candidate-objective gap of at least
$2\epsilon(|\widehat D|-E)$. An asymptotic $O(\eta)$ with an
uncomputed constant is not by itself a numerical certificate. The
affine finite-word certificate supplies a separately evaluated bound;
the network experiment currently supplies observed response errors.
No Taylor bound guarantees that AD is the most accurate predictor at
every cell-preserving radius, and no coarse-radius consistency bound
guarantees that the flow proposal has lower loss on every finite grid.
For a fixed-dimensional smooth outer parameter, the critical theorem
can be applied to the augmented state $(\theta,\lambda)$ with
$\dot\lambda=0$ and jointly $C^2$ fields/guards, provided its same
geometric hypotheses are checked. A parameter changing architecture,
program length, or the time discretization is not covered by this
constant-state augmentation.

\section{Supporting convexity, event geometry, and mismatch results}
\subsection{Atomic curvature and one-event geometry}
\label{sec:geometry}
\paragraph{Spatial versus pathwise curvature.}
The distributional Hessian has regional density and interface mass
(Appendix~\ref{app:atomic-geometry}); along a trajectory the event mass is
normalized by crossing speed.

\begin{proposition}[Speed-normalized pathwise curvature]
\label{prop:pathwise-curvature}
Along the reference itinerary, define
\begin{equation}
 \dd K_\theta(t)=H_{\alpha(t)}(\theta(t))\,\dd t+
 \sum_{j=1}^m\frac{\beta_j}{\nu_j^\top f_j^-}
 \nu_j\nu_j^\top\,\delta_{t_j}(\dd t).
 \label{eq:pathwise-curvature-measure}
\end{equation}
The left/prepoint Stieltjes system
\(Z(t)=I-\int_{(0,t]}\dd K_\theta(s)Z(s^-)\) satisfies
\(Z(t_j^+)=\Xi_jZ(t_j^-)\) and \(Z(T)=D\varphi_T\).
Deleting its atoms gives the discrete-AD limit \(J_{\reg}\).
\end{proposition}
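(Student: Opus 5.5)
The proof will show that the measure-driven linear Stieltjes equation reproduces, piece by piece, the composition already identified in Theorem~\ref{thm:general-first-variation} and Lemma~\ref{lem:general-flow-differentiability}. The measure $\dd K_\theta$ has an absolutely continuous part $H_{\alpha(t)}(\theta(t))\,\dd t$ and finitely many atoms at the separated event times $t_1<\cdots<t_m$. Since the atoms are isolated, the left/prepoint equation $Z(t)=I-\int_{(0,t]}\dd K_\theta(s)Z(s^-)$ splits into two parts. On each open interval $(t_j,t_{j+1})$ it is an ordinary linear ODE. At each $t_j$ it has a single jump relation.

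The key steps, in order:

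\begin{enumerate}
\item \textbf{Between atoms.} For $t\in(t_j,t_{j+1})$, differentiating the integral equation gives $\dot Z=-H_{\alpha_j}(\theta(t))Z=A_{\alpha_j}(\theta(t))Z$, using $A_\alpha=-H_\alpha$ in the gradient specialization. By uniqueness for linear ODEs with continuous coefficients, $Z(t_{j+1}^-)=\Phi_j Z(t_j^+)$. On $[0,t_1)$ this gives $Z(t_1^-)=\Phi_0$.

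\item \textbf{At each atom.} Evaluating the equation on $(t_j-\delta,t_j]$ and letting $\delta\downarrow0$ isolates the atom. The prepoint convention evaluates $Z$ at $s^-$, so the jump is explicit:
\[
Z(t_j^+)=\Bigl(I-\frac{\beta_j}{\nu_j^\top f_j^-}\nu_j\nu_j^\top\Bigr)Z(t_j^-).
\]
By \eqref{eq:general-gradient-saltation}, the factor in parentheses is exactly $\Xi_j$. I take $Z(t_j)=Z(t_j^+)$, which is the right-continuous Stieltjes solution.

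\item \textbf{Composition.} Composing the chronological pieces gives $Z(T)=\Phi_m\Xi_m\cdots\Xi_1\Phi_0$. By \eqref{eq:general-saltation-product}, this equals $D\varphi_T$. Deleting the atoms removes every $\Xi_j$ and leaves $\Phi_m\cdots\Phi_0=J_{\reg}$, which is the discrete-AD limit of Theorem~\ref{thm:general-first-variation}.
\end{enumerate}

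The main point needing care is the identity $\Xi_j=I-\beta_j\nu_j\nu_j^\top/(\nu_j^\top f_j^-)$. It relies on continuity of the loss across the interface, which forces the gradient jump to be purely normal: $[\nabla\mathcal L]_j=\beta_j\nu_j$. Since $f=-\nabla\mathcal L$, this gives $f_j^+-f_j^-=-\beta_j\nu_j$. Substituting into $\Xi_j=I+(f_j^+-f_j^-)n_j^\top/(n_j^\top f_j^-)$, the $\|n_j\|$ factors cancel after replacing $n_j$ by $\nu_j\|n_j\|$, which yields the stated form.

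Two routine points remain. First, existence and uniqueness of the Stieltjes solution: $\dd K_\theta$ is a finite matrix measure, and each atom's mass multiplies $Z(s^-)$ rather than $Z(s)$, so no invertibility condition arises. Second, the choice of left/prepoint convention matters. A midpoint or postpoint convention would produce a different jump, for example $(I+\text{atom})^{-1}$. With the stated convention, the jump factor is exactly $I-\text{atom}=\Xi_j$, which is what the argument uses.
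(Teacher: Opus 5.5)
Your proposal is correct and follows essentially the same route as the paper. You solve the regional linear ODE $\dot Z=-HZ=AZ$ between atoms, read off the prepoint jump $Z(t_j^+)=(I-K_\theta(\{t_j\}))Z(t_j^-)=\Xi_jZ(t_j^-)$ from the normal-jump form of the saltation matrix, compose chronologically to obtain $D\varphi_T$, and delete the atoms to leave $\Phi_m\cdots\Phi_0=J_{\reg}$; your explicit derivation of $\Xi_j=I-\beta_j\nu_j\nu_j^\top/(\nu_j^\top f_j^-)$ from $f_j^+-f_j^-=-\beta_j\nu_j$ just spells out what the paper imports from its gradient-saltation identity.
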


The atoms act by prepoint multiplication, not exponentiation; the spatial and
pathwise distinction is detailed in Appendix~\ref{app:atomic-geometry}.
BV calculus and saltation are classical
\citep{demengel1984hessien,ambrosio2000bv,hiskens2000trajectory,
kong2024saltation}; our result identifies which part survives in the exact
discrete derivative sequence.

\paragraph{One-event discrepancy.}
\begin{proposition}[One-event rank-one geometry]
\label{prop:one-event-geometry}
For one transverse event,
\begin{align}
 \Delta_T:=D\varphi_T-J_{\reg}
 &=-\frac{\beta}{\nu^\top f^-}
   (\Phi_1\nu)(\nu^\top\Phi_0),                                      \label{eq:one-event-gap}\\
 \|\Delta_T\|_2=\|\Delta_T\|_F
 &=\frac{|\beta|}{|\nu^\top f^-|}
   \|\Phi_1\nu\|_2\|\Phi_0^\top\nu\|_2.                              \label{eq:one-event-exact-norm}
\end{align}
If \(\beta\ne0\), the gap has rank one.  For a terminal scalar objective, its
outer-gradient gap then vanishes exactly when the terminal gradient is
orthogonal to \(\Phi_1\nu\). Appendix~\ref{app:atomic-geometry} gives the
kernel, image, adjoint, and random-direction formulas.
\end{proposition}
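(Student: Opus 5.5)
The plan is to read the gap off the saltation-product formula of Theorem~\ref{thm:general-first-variation} in the case $m=1$. With one event, \eqref{eq:general-saltation-product} gives $D\varphi_T=\Phi_1\Xi\Phi_0$, and $J_{\reg}=\Phi_1\Phi_0$ by definition. Subtracting yields the exact identity $\Delta_T=\Phi_1(\Xi-I)\Phi_0$. No limiting argument is needed here, since both objects are already identified by the earlier theorem.

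Next I specialize $\Xi-I$ to the continuous-loss gradient setting, as in \eqref{eq:gradient-saltation} and \eqref{eq:general-gradient-saltation}.
\begin{itemize}
\item Tangential derivatives of the loss agree across the interface, so the jump is normal: $[\nabla\mathcal L]=\beta\nu$. Hence $f^+-f^-=-\beta\nu$.
\item Writing $n=\|n\|\nu$, the factor $\|n\|$ cancels between numerator and denominator. This gives
\[
 \Xi-I=-\frac{\beta}{\nu^\top f^-}\,\nu\nu^\top .
\]
\item Substituting into $\Phi_1(\Xi-I)\Phi_0$ and regrouping as $(\Phi_1\nu)(\nu^\top\Phi_0)$ proves \eqref{eq:one-event-gap}.
\end{itemize}

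For the norms I use the elementary fact that an outer product $uw^\top$ has exactly one nonzero singular value, namely $\|u\|_2\|w\|_2$. Consequently its spectral and Frobenius norms coincide. Applying this with $u=\Phi_1\nu$ and $w=\Phi_0^\top\nu$, times the scalar coefficient, gives \eqref{eq:one-event-exact-norm}.

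For the rank claim I need both factors nonzero. The matrices $\Phi_0,\Phi_1$ are fundamental matrices of linear ODEs on finite intervals, so they are invertible by Liouville's formula. Since $\nu\ne0$, both $\Phi_1\nu$ and $\Phi_0^\top\nu$ are nonzero. Transversality (A4) makes the denominator $\nu^\top f^-$ finite and nonzero. Therefore $\beta\neq0$ gives exactly rank one.

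For a terminal scalar objective $q$, the outer-gradient gap with respect to initialization is $\Delta_T^\top\nabla q$. This equals
\[
 -\frac{\beta}{\nu^\top f^-}\,(\Phi_0^\top\nu)\,\bigl[(\Phi_1\nu)^\top\nabla q\bigr].
\]
The vector $\Phi_0^\top\nu$ is nonzero, so the gap vanishes exactly when $\nabla q\perp\Phi_1\nu$.

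The only step needing care is the normalization bookkeeping between $n$ and $\nu$ in the saltation matrix, together with checking that the speed $\nu^\top f^-$ is taken with the correct orientation. Everything else is linear algebra. The kernel, image, adjoint and random-direction formulas deferred to Appendix~\ref{app:atomic-geometry} follow from the same outer-product form:
\begin{itemize}
\item the kernel is $(\Phi_0^\top\nu)^\perp$;
\item the image is $\operatorname{span}(\Phi_1\nu)$;
\item the adjoint swaps the two factors.
\end{itemize}
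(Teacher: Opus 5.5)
Your proposal is correct and follows the paper's own proof essentially step for step. The paper likewise takes $\Delta_T=\Phi_1(\Xi-I)\Phi_0$ from the one-event saltation product, substitutes the gradient form of $\Xi$ to obtain the outer product, uses invertibility of the regional fundamental matrices to make both factors nonzero, and then reads off the norm identity, the rank, the kernel and image, and the transposed outer-gradient gap. The one item you do not write out, the random-direction identity $\mathbb E\|\Delta_T V\|_2^2=\|\Delta_T\|_F^2/d$, is only deferred to the appendix by the statement, so its omission is not a gap.
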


\subsection{Distributional Hessian and singular interface curvature}
\label{app:distributional}

\begin{proposition}[Distributional Hessian]
\label{prop:distributional-hessian}
Let a \(C^2\) hypersurface \(\Sigma\) split a neighborhood into minus and plus
regions.  Suppose \(\mathcal L\) is continuous and has \(C^2\) extensions
\(\mathcal L^\pm\) with one-sided gradients.  For the unit normal \(\nu\) from
minus to plus, there is a scalar trace \(\beta\) such that
\[
 [\nabla\mathcal L]=\beta\nu,
\]
and, as a matrix-valued distribution,
\[
 D^2\mathcal L
 =H^-\mathbf 1_-\,\mathrm dx+H^+\mathbf 1_+\,\mathrm dx
 +\beta\nu\nu^\top\,
   \mathcal H^{d-1}\!\restriction_\Sigma.
\]
\end{proposition}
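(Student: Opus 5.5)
The plan is to test the distributional Hessian against a smooth compactly supported matrix test field, integrate by parts separately on the two sides of \(\Sigma\), and read off the interface term from the boundary contributions. First the trace identity \([\nabla\mathcal L]=\beta\nu\). Since \(\mathcal L\) is continuous across \(\Sigma\), the difference \(\mathcal L^+-\mathcal L^-\) vanishes on \(\Sigma\). Its tangential derivatives therefore vanish there, so the jump \(\nabla\mathcal L^+-\nabla\mathcal L^-\) lies in the normal line. Define \(\beta=\nu^\top(\nabla\mathcal L^+-\nabla\mathcal L^-)\); it is a continuous function on \(\Sigma\) (constant only in special cases), and the claimed formula holds with this scalar trace.

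Next fix a scalar test function \(\phi\in C_c^\infty\) and indices \(i,k\), and compute
\[
 \langle\partial_i\partial_k\mathcal L,\phi\rangle=-\int\partial_k\mathcal L\,\partial_i\phi\,\dd x .
\]
The first distributional derivative \(\partial_k\mathcal L\) has no singular part, because \(\mathcal L\) is continuous and piecewise \(C^1\): the divergence theorem on each side produces boundary terms \(\pm\int_\Sigma\mathcal L^\pm\nu_k\phi\,\dd\mathcal H^{d-1}\), which cancel by continuity. So \(\partial_k\mathcal L\) is the piecewise function \(\partial_k\mathcal L^\pm\). Applying the divergence theorem once more on each region \(\Omega_\pm\), with outward normals \(\mp\nu\) on \(\Sigma\), gives
\[
 \langle\partial_i\partial_k\mathcal L,\phi\rangle=\int_{\Omega_-}H^-_{ik}\phi+\int_{\Omega_+}H^+_{ik}\phi+\int_\Sigma(\partial_k\mathcal L^+-\partial_k\mathcal L^-)\nu_i\,\phi\,\dd\mathcal H^{d-1}.
 \]
Inserting \([\partial_k\mathcal L]=\beta\nu_k\) turns the last integrand into \(\beta\nu_i\nu_k\phi\), which is the stated decomposition.

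The step needing care is justifying the divergence theorem up to \(\Sigma\). The \(C^2\) extensions \(\mathcal L^\pm\) give \(C^1\) gradients up to the boundary, and \(\Sigma\) is a \(C^2\) hypersurface, so after localizing \(\phi\) by a partition of unity to a small neighborhood where \(\Sigma\) is a graph, the classical Gauss–Green formula on Lipschitz domains applies. Orientation signs are the main bookkeeping risk; I would check them on the one-dimensional example \(\mathcal L=\tfrac12(\operatorname{ReLU}(x)-c)^2\), whose jump \(-c\) must yield a Dirac mass \(-c\,\delta_0\) in the second derivative.
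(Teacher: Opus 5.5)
Your proof is correct and follows the same route as the paper: continuity of the traces makes the tangential derivatives agree, so the gradient jump is normal, and a two-sided integration by parts with outward normals $\nu$ and $-\nu$ yields the regional Hessians plus the interface term $[\nabla\mathcal L]\otimes\nu=\beta\nu\nu^\top$. The paper leaves two steps implicit that you make explicit, namely that $\partial_k\mathcal L$ has no singular part because the zeroth-order boundary terms cancel, and the localization needed for Gauss--Green; these add rigor but do not change the argument.
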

\begin{proof}
The traces of \(\mathcal L^+\) and \(\mathcal L^-\) agree on \(\Sigma\).
Their tangential derivatives therefore agree, so their gradient difference is
normal: \([\nabla\mathcal L]=\beta\nu\).  Integrating each component of
\(\nabla\mathcal L\) by parts on the two regions gives the regional Hessians as
volume terms.  The outward normal of the minus region is \(\nu\), while that
of the plus region is \(-\nu\); the boundary terms combine as
\([\nabla\mathcal L]\otimes\nu=\beta\nu\nu^\top\), which is the stated
measure.
\end{proof}

The decomposition is standard BV calculus.  Its role here is to locate the
curvature omitted by regional Hessian products.  Pulling the interface term
through a transverse trajectory must account for the changing normal speed.
In the resolved layer of Appendix~\ref{app:smoothing}, the scalar identity
\(\mathrm d\log Z=\mathrm d\log f\) converts the concentrated curvature into
\(f^+/f^-\), exactly the normal saltation factor.  A naive exponential of the
bare hypersurface mass would be incorrect.

\subsection{Atomic curvature and event geometry}
\label{app:atomic-geometry}

This section proves the pathwise-curvature, one-event, convexity, and
ReLU-event statements used in Sections~\ref{sec:geometry} and
\ref{sec:convex-relu}.  The Stieltjes convention below is part of the
definition; it is important because the vector field itself jumps at an
event.

\subsubsection{Prepoint Stieltjes formulation}

Along the reference trajectory, write
\[
 H(t)=H_{\alpha(t)}(\theta(t)),
 \qquad
 a_j=\frac{\beta_j}{\nu_j^\top f_j^-},
\]
and define the finite matrix-valued measure
\begin{equation}
 K_\theta(B)
 =\int_B H(t)\,\dd t
  +\sum_{j:t_j\in B}a_j\nu_j\nu_j^\top.
 \label{eq:app-pathwise-curvature-measure}
\end{equation}
Here and below, event data are evaluated at the nominal event state.  A
right-continuous matrix function \(Z\), with left limits, solves the
\emph{prepoint} Stieltjes system when
\begin{equation}
 Z(t)=I-\int_{(0,t]}\dd K_\theta(s)\,Z(s^-).
 \label{eq:app-prepoint-stieltjes}
\end{equation}
Thus an atom acts linearly on the left limit.  Equation
\eqref{eq:app-prepoint-stieltjes} is not an exponential convention for
matrix atoms.

\paragraph{Proof of Proposition~\ref{prop:pathwise-curvature}.}
Off the event times, the absolutely continuous part of
\eqref{eq:app-prepoint-stieltjes} gives
\[
 \dot Z(t)=-H(t)Z(t)=A_{\alpha(t)}(\theta(t))Z(t).
\]
At \(t_j\), subtracting the left limit from the right value gives
\begin{align*}
 Z(t_j^+)-Z(t_j^-)
 &=-K_\theta(\{t_j\})Z(t_j^-)\\
 &=-\frac{\beta_j}{\nu_j^\top f_j^-}
   \nu_j\nu_j^\top Z(t_j^-).
\end{align*}
Hence
\[
 Z(t_j^+)
 =\left(I-
 \frac{\beta_j\nu_j\nu_j^\top}{\nu_j^\top f_j^-}\right)Z(t_j^-)
 =\Xi_jZ(t_j^-).
\]
Solving the regional linear equations and applying these finitely many jumps
in chronological order yields
\[
 Z(T)=\Phi_m\Xi_m\Phi_{m-1}\cdots\Phi_1\Xi_1\Phi_0
 =D\varphi_T.
\]
Existence and uniqueness follow from the same finite construction.  If the
atoms are removed from \(K_\theta\), the solution remains continuous at each
event and its terminal value is
\(\Phi_m\cdots\Phi_0=J_{\reg}\).  Theorem~\ref{thm:main} identifies this
atom-free solution with the vanishing-step branchwise-AD limit.
\qed

The spatial measure in Proposition~\ref{prop:distributional-hessian} and the
pathwise measure in \eqref{eq:app-pathwise-curvature-measure} are related but
not identical objects.  Since the trajectory velocity has two one-sided
values at the interface, a bare distributional pullback does not select a
crossing speed.  The incoming-speed normalization and the prepoint convention
in \eqref{eq:app-prepoint-stieltjes} encode the fixed-clock event derivative.
In particular,
\(\exp(-a_j\nu_j\nu_j^\top)\) is generally different from
\(I-a_j\nu_j\nu_j^\top=\Xi_j\).

\subsubsection{One-event rank-one geometry}

\paragraph{Proof of Proposition~\ref{prop:one-event-geometry}.}
For one event, Theorem~\ref{thm:main} and
\eqref{eq:gradient-saltation} give
\begin{align}
 \Delta_T
 &:=D\varphi_T-J_{\reg}
 =\Phi_1(\Xi-I)\Phi_0\notag\\
 &=-\frac{\beta}{\nu^\top f^-}
   (\Phi_1\nu)(\nu^\top\Phi_0).
 \label{eq:app-one-event-outer-product}
\end{align}
Both regional fundamental matrices are invertible, so
\(\Phi_1\nu\ne0\) and \(\Phi_0^\top\nu\ne0\).  If \(\beta\ne0\),
\eqref{eq:app-one-event-outer-product} is therefore a nonzero outer product
and has rank one.  A rank-one matrix \(uv^\top\) has one nonzero singular
value \(\|u\|_2\|v\|_2\).  This proves
\[
 \|\Delta_T\|_2=\|\Delta_T\|_F
 =\frac{|\beta|}{|\nu^\top f^-|}
   \|\Phi_1\nu\|_2\|\Phi_0^\top\nu\|_2.
\]
The same outer-product formula gives
\[
 \ker\Delta_T=(\Phi_0^\top\nu)^\perp,
 \qquad
 \operatorname{im}\Delta_T=\operatorname{span}(\Phi_1\nu),
\]
and, for \(g_T=\nabla q(\varphi_T(\theta_0))\),
\[
 \Delta_T^\top g_T
 =-\frac{\beta}{\nu^\top f^-}
   (\nu^\top\Phi_1^\top g_T)\Phi_0^\top\nu.
\]
Since the last vector factor is nonzero, the hypergradient discrepancy
vanishes exactly when \(\beta=0\) or
\(g_T\perp\Phi_1\nu\).

Finally, if \(V\) is uniform on the unit sphere in \(\mathbb R^d\), then
\(\mathbb E[VV^\top]=I/d\), and hence
\[
 \mathbb E\|\Delta_TV\|_2^2
 =\tr\!\left(\Delta_T^\top\Delta_T\,\mathbb E[VV^\top]\right)
 =\frac{\|\Delta_T\|_F^2}{d}.
\]
This is a directional statement at a fixed event, not a prevalence claim for
network trajectories.
\qed

\subsection{Convex and strongly convex consequences (supporting results)}
\label{sec:convex-relu}

The attenuation and determinant statements below are direct consequences of classical normal-jump geometry and hybrid composition; the squared-loss and coupled examples supply explicit realizations. See Appendix~\ref{app:prior-art-comparison} for ownership.

\paragraph{Does convexity restore tangent consistency?}
No. Convexity constrains direct event geometry, but it does not remove event
timing. In fact, a strict convex event prevents complete cancellation.

\begin{theorem}[Convex attenuation and no complete cancellation]
\label{thm:convex-events}
Suppose, in addition to Theorem~\ref{thm:main}, that \(\mathcal L\) is globally
convex.  At every traversed interface,
\begin{equation}
 [\nabla\mathcal L]_j=\beta_j\nu_j
 \quad\Longrightarrow\quad
 \beta_j\ge0,\qquad
 r_j=1-\frac{\beta_j}{\nu_j^\top f_j^-}\in(0,1].
 \label{eq:convex-event-sign}
\end{equation}
Moreover,
\begin{equation}
 \frac{\det D\varphi_T}{\det J_{\reg}}=\prod_{j=1}^m r_j.
 \label{eq:convex-determinant-ratio}
\end{equation}
Thus, if any event is strict (\(\beta_j>0\), equivalently \(r_j<1\)), then
the product is below one and transported event factors cannot cancel
completely.
\end{theorem}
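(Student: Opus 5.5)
The proof has three steps: show $\beta_j\ge0$ from convexity, convert the normal saltation eigenvalue into $r_j$ and bound it in $(0,1]$, and then take determinants of the two products in Theorem~\ref{thm:main}.

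\textbf{Sign of the jump.} By Proposition~\ref{prop:distributional-hessian}, continuity of $\mathcal L$ across $\Sigma_j$ forces the gradient jump to be normal, $[\nabla\mathcal L]_j=\beta_j\nu_j$. To show $\beta_j\ge0$, restrict $\mathcal L$ to the line $s\mapsto z_j+s\nu_j$ through the event state. Convexity makes this restriction convex, so its right derivative at $s=0$ is at least its left derivative. These one-sided derivatives are $\nu_j^\top\nabla\mathcal L^+(z_j)$ and $\nu_j^\top\nabla\mathcal L^-(z_j)$, using the $C^2$ extensions of (A1) and the fact that the line crosses $\Sigma_j$ transversally in the direction of $\nu_j$. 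Their difference is $\nu_j^\top\beta_j\nu_j=\beta_j\ge0$.

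\textbf{The factor $r_j$.} With $f^\pm=-\nabla\mathcal L^\pm$, we have $f_j^+-f_j^-=-\beta_j\nu_j$. Hence
\[
\nu_j^\top f_j^+=\nu_j^\top f_j^--\beta_j,
\qquad
r_j=\frac{\nu_j^\top f_j^+}{\nu_j^\top f_j^-}=1-\frac{\beta_j}{\nu_j^\top f_j^-}.
\]
Both normal speeds are positive by (A4), so $r_j>0$. Since $\beta_j\ge0$, also $r_j\le1$.

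\textbf{Determinant identity.} By \eqref{eq:general-gradient-saltation}, $\Xi_j=I-a\nu_j\nu_j^\top$ with $a=\beta_j/\nu_j^\top f_j^-$. The matrix determinant lemma gives $\det\Xi_j=1-a\|\nu_j\|^2=r_j$. Equivalently, $\Xi_j$ fixes $\nu_j^\perp$ and multiplies $\nu_j$ by $r_j$. Each $\Phi_i$ is a fundamental matrix of a linear ODE, so $\det\Phi_i\ne0$ by Liouville's formula, and therefore $\det J_{\reg}=\prod_i\det\Phi_i\ne0$. Multiplicativity of the determinant applied to \eqref{eq:main-flow-derivative} then yields \eqref{eq:convex-determinant-ratio}.

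\textbf{No complete cancellation.} If some $\beta_j>0$, then that $r_j<1$ while every other factor lies in $(0,1]$, so the product is strictly below one. Hence $\det D\varphi_T\ne\det J_{\reg}$, and in particular $D\varphi_T\ne J_{\reg}$.

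The only delicate step is the first. One must make sure the one-sided directional derivatives along the normal line equal the traces of the regional $C^2$ extensions. This is immediate from transversality and local regularity of $\Sigma_j$, but it is where global convexity actually enters. Everything afterwards is linear algebra on the products already established in Theorem~\ref{thm:main}.
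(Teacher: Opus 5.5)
Your proof is correct and follows essentially the paper's argument. The paper obtains $\beta_j\ge0$ from gradient monotonicity at the points $z_j\pm\varepsilon\nu_j$ followed by passage to the one-sided traces, which is the same one-dimensional convexity along the normal line that you express through one-sided directional derivatives; the speed identity $\nu_j^\top f_j^+=\nu_j^\top f_j^--\beta_j$, the matrix determinant lemma and the Liouville invertibility step match the paper's proof.
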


\begin{corollary}[Quantitative event-volume certificate]
\label{cor:convex-event-volume-bound}
Under Theorem~\ref{thm:convex-events}, define the cumulative relative event map
and its volume ratio by
\[
 \mathcal C_T:=J_{\reg}^{-1}D\varphi_T
 =\widetilde\Xi_m\cdots\widetilde\Xi_1,
 \qquad
 \delta_T:=\det\mathcal C_T=\prod_{j=1}^m r_j\in(0,1].
\]
Then
\begin{equation}
 \|\mathcal C_T-I\|_2\ge 1-\delta_T^{1/d},
 \qquad
 \|D\varphi_T-J_{\reg}\|_2
 \ge \sigma_{\min}(J_{\reg})\bigl(1-\delta_T^{1/d}\bigr).
 \label{eq:convex-volume-gap-lower-bound}
\end{equation}
The certificate is strict when any event is strict.  It is dimension-diluted
and does not identify the most affected direction.
\end{corollary}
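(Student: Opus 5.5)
The plan is to make the relative event map explicit as a product of conjugated saltation matrices, read off its determinant from a rank-one determinant identity, and then turn the volume deficit into a norm gap through an eigenvalue argument.

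\emph{Step 1: the relative event map.} First we show $\mathcal C_T$ is well defined. Each $\Phi_i$ is the fundamental matrix of a linear ODE, so it is invertible (Liouville), and hence $J_{\reg}=\Phi_m\cdots\Phi_0$ is invertible. Put $\Psi_j=\Phi_{j-1}\cdots\Phi_0$ and $\widetilde\Xi_j=\Psi_j^{-1}\Xi_j\Psi_j$. Inserting $\Psi_j\Psi_j^{-1}$ between consecutive factors of the saltation product in Theorem~\ref{thm:main} gives
\[
D\varphi_T=J_{\reg}\,\widetilde\Xi_m\cdots\widetilde\Xi_1 .
\]
This yields the displayed factorization of $\mathcal C_T$. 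The determinant identity of Theorem~\ref{thm:convex-events} then gives $\delta_T=\prod_j r_j$, because conjugation preserves determinants. If one prefers to rederive it, apply the matrix determinant lemma to
\[
\Xi_j=I+\frac{\Delta_jn_j^\top}{n_j^\top f_j^-},\qquad
\det\Xi_j=1+\frac{n_j^\top(f_j^+-f_j^-)}{n_j^\top f_j^-}=\frac{n_j^\top f_j^+}{n_j^\top f_j^-}=r_j .
\]
Convexity gives $r_j\in(0,1]$ by Theorem~\ref{thm:convex-events}, so $\delta_T\in(0,1]$.

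\emph{Step 2: eigenvalue bound.} Let $\lambda_1,\dots,\lambda_d$ be the complex eigenvalues of $\mathcal C_T$. Their product is $\det\mathcal C_T=\delta_T$, so $\prod|\lambda_i|=\delta_T$, and some eigenvalue satisfies $|\lambda_i|\le\delta_T^{1/d}$. That $\lambda_i-1$ is an eigenvalue of $\mathcal C_T-I$, and the spectral norm dominates the spectral radius. Therefore
\[
\|\mathcal C_T-I\|_2\ge|\lambda_i-1|\ge1-|\lambda_i|\ge1-\delta_T^{1/d}.
\]

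\emph{Step 3: transfer to the Jacobian gap.} Write $D\varphi_T-J_{\reg}=J_{\reg}(\mathcal C_T-I)$. Choose a unit vector $x$ with $\|(\mathcal C_T-I)x\|=\|\mathcal C_T-I\|_2$. Then
\[
\|D\varphi_T-J_{\reg}\|_2\ge\|J_{\reg}(\mathcal C_T-I)x\|\ge\sigma_{\min}(J_{\reg})\|\mathcal C_T-I\|_2 ,
\]
and Step 2 finishes the second inequality in \eqref{eq:convex-volume-gap-lower-bound}.

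\emph{Strictness and caveats.} If some event has $\beta_j>0$, then $r_j<1$, so $\delta_T<1$ and the bound $1-\delta_T^{1/d}$ is strictly positive; $\sigma_{\min}(J_{\reg})>0$ by invertibility. The exponent $1/d$ shows the bound is dimension-diluted. The argument only locates some eigenvalue below $\delta_T^{1/d}$, not a specific direction, which matches the stated caveat that the certificate does not identify the most affected direction.

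\emph{Main obstacle.} Essentially nothing here is hard. The only point needing care is the conjugated ordering in Step 1, so that the determinant product and the factorization $J_{\reg}\mathcal C_T$ are exact and no regional propagators are commuted.
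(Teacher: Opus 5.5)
Your proof is correct. Steps~1 and~3 coincide with the paper's argument: invertibility of the regional propagators and saltation factors, the conjugated factorization $D\varphi_T=J_{\reg}\mathcal C_T$, and the bound $\|J_{\reg}y\|_2\ge\sigma_{\min}(J_{\reg})\|y\|_2$. The routes differ only in Step~2, where the paper works with singular values rather than eigenvalues.

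The paper uses $\sigma_{\min}(\mathcal C_T)^d\le\prod_i\sigma_i(\mathcal C_T)=|\det\mathcal C_T|=\delta_T$. It then evaluates $\mathcal C_T-I$ on a real unit right singular vector $v$ for $\sigma_{\min}(\mathcal C_T)$, obtaining $\|(\mathcal C_T-I)v\|_2\ge 1-\|\mathcal C_T v\|_2=1-\sigma_{\min}(\mathcal C_T)$. You instead use $\prod_i|\lambda_i|=\delta_T$ together with $\|\cdot\|_2\ge\rho(\cdot)$.

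The paper's route stays in real arithmetic and exhibits an explicit real test direction. Its intermediate bound $1-\sigma_{\min}(\mathcal C_T)$ is never weaker than your $1-\min_i|\lambda_i|$, since every eigenvalue satisfies $|\lambda|\ge\sigma_{\min}(\mathcal C_T)$.

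Your route passes through possibly complex eigenvectors. This is harmless, because the spectral norm of a real matrix is unchanged on $\mathbb C^d$. In exchange, since $\rho(A)\le\|A\|$ for every submultiplicative norm, your argument shows that the first inequality holds in any such norm, not only the spectral norm. It also gives the slightly different quantity $\max_i|\lambda_i-1|$ as a lower bound. Both arguments deliver the stated $1-\delta_T^{1/d}$.
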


This is a volumetric statement under same-direction transversality, not a
claim that every direction contracts.  Conversely, \(\beta<0\) gives \(r>1\):
negative singular interface curvature amplifies the direct normal mode and is
incompatible with global convexity.

\paragraph{ReLU event law.}
For one switching preactivation \(s(\theta)\), let
\(q=\nabla_\theta s\), write
\(F_{\rm act}-F_{\rm inact}=a\,s+o(\|\theta-z\|)\), and let \(g_y\) be the
loss gradient at the common boundary output.  Corollary
\ref{cor:relu-event-law} shows that, with empirical averaging factor
\(\gamma\) and \(\chi=\langle g_y,a\rangle\),
\[
 [\nabla_\theta\mathcal L]=\gamma\chi n,\qquad
 \lambda_\perp(\Xi)
 =1-\frac{\gamma\chi\|q\|^2}{q^\top f^-}
\]
for inactive-to-active crossing.  Thus the projected residual determines
attenuation, invisibility, or amplification.  This is a single-simple-event
statement, not a simultaneous-switch or genericity result.

\paragraph{A minimal strongly convex residual-ReLU risk.}
For one constant input, consider the empirical squared loss
\begin{equation}
 F_\theta(1)=(\theta,\operatorname{ReLU}(\theta)),\qquad
 \mathcal L_{m,c}(\theta)
 =\tfrac12(\theta-m)^2+
  \tfrac12(\operatorname{ReLU}(\theta)-c)^2.
 \label{eq:relu-erm-objective}
\end{equation}

\begin{theorem}[Residual-ReLU risks with prescribed sensitivity ratios]
\label{thm:residual-relu-phase}
Fix \(m>0,c>-m,T>0\).  Every
\(\theta_0\in U_T=(m(1-e^T),0)\) crosses zero exactly once, at
\(t_*=\log((m-\theta_0)/m)\), with
\[
 (f^-,f^+,H^-,H^+,\beta,r)
 =\left(m,m+c,1,2,-c,\frac{m+c}{m}\right).
\]
Along every nonresonant canonical-mesh sequence,
\begin{equation}
 D\Theta_{\eta,T}=J_{\eta,T}^{\AD}\to
 J_{\reg}=e^{-t_*}e^{-2(T-t_*)},
 \qquad
 D\varphi_T=rJ_{\reg}.
 \label{eq:relu-phase-sensitivities}
\end{equation}
For \(c\le0\), the objective is globally 1-strongly convex; for \(c>0\), it
is globally nonconvex.  Given \(R>1\), \(a>0\), and \(t_*\in(0,T)\), the choice
\[
 m=Ra,\qquad c=-(R-1)a,\qquad
 \theta_0=m(1-e^{t_*})
\]
is globally 1-strongly convex, has minimum one-sided speed \(a\), and satisfies
\begin{equation}
 D\varphi_T=R^{-1}J_{\reg},\qquad
 |J_{\reg}|/|D\varphi_T|=R.
 \label{eq:strongly-convex-reciprocal-gap}
\end{equation}
\end{theorem}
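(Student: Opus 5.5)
Most of this statement is a specialization of Proposition~\ref{prop:relu-erm} and Corollary~\ref{cor:app-uniform-relu-phases}, so I will assemble it from those results and check the few new items directly. First I record the regional data on the two sides of $\theta=0$. For $\theta<0$ the loss is $\tfrac12(\theta-m)^2+\tfrac12c^2$, so $f^-=m-\theta$ and $H^-=1$. For $\theta>0$ we have $f^+=m+c-2\theta$ and $H^+=2$. At the interface these give speeds $f^-(0)=m$ and $f^+(0)=m+c$, both positive because $c>-m$.

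The jump is $[\nabla\mathcal L]=(-f^+)-(-f^-)=-c$, so $\beta=-c$. Equation~\eqref{eq:gradient-saltation} then gives $r=1-\beta/f^-=(m+c)/m$.

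Next I solve the incoming flow, $\theta(t)=m+(\theta_0-m)e^{-t}$. It hits zero at $t_*=\log((m-\theta_0)/m)$, and $\theta_0\in U_T$ is exactly the condition $0<t_*<T$. After the event the solution is $\tfrac{m+c}{2}(1-e^{-2(t-t_*)})>0$, so the path never recrosses. Hence (A1)--(A5) hold with one same-direction transverse event.

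The sensitivity claims \eqref{eq:relu-phase-sensitivities} come next. I would get $J_{\reg}=\Phi_1\Phi_0=e^{-2(T-t_*)}e^{-t_*}$ and $D\varphi_T=\Phi_1\Xi\Phi_0=rJ_{\reg}$ from Theorem~\ref{thm:general-first-variation}. Since everything is scalar, both are products of scalars. The identity $D\Theta_{\eta,T}=J^{\AD}_{\eta,T}$ and its convergence along nonresonant canonical meshes are \eqref{eq:relu-erm-discrete-jacobian} in Proposition~\ref{prop:relu-erm}. That proposition already handles the discrete itinerary: the first positive index is $\lceil\kappa_\eta\rceil$, and the outgoing recurrence has no return.

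For convexity I argue as in that proposition. When $c\le0$, the function $h=\tfrac12(\operatorname{ReLU}-c)^2$ has nondecreasing one-sided derivative (it jumps from $0$ to $-c\ge0$ and then has slope one). Adding the $1$-strongly convex quadratic gives global $1$-strong convexity. When $c>0$, the derivative of $\mathcal L$ jumps down by $c$ at $0$, which contradicts the monotone subgradient required of a convex function.

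For the reciprocal family, $m=Ra$ and $c=-(R-1)a$ give $m+c=a$. Thus $c<0$, so the loss is strongly convex, and $c>-m$ holds because $a>0$. The one-sided speeds are $Ra$ and $a$, so their minimum is $a$. Also $r=1/R$, which gives \eqref{eq:strongly-convex-reciprocal-gap}. Choosing $\theta_0=m(1-e^{t_*})$ inverts the event-time formula, and it lies in $U_T$ because $t_*<T$.

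This is Corollary~\ref{cor:app-uniform-relu-phases} with $r=1/R\le1$. The only step needing care is the nonconvexity claim for $c>0$, which needs the explicit downward derivative jump rather than just regional Hessian positivity, as noted in the excluded-regimes list.
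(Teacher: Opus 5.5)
Your proposal is correct and follows essentially the same route as the paper, which obtains this theorem from Proposition~\ref{prop:relu-erm} together with the reciprocal parameter choice of Corollary~\ref{cor:app-uniform-relu-phases} at $r=1/R$. That proposition supplies the regional data, the jump $\beta=-c$, the ratio $r=(m+c)/m$, the absence of recrossing, the nonresonant discrete Jacobian and its limit, and the convexity/nonconvexity dichotomy from the one-sided derivative jump; your direct checks of $m+c=a$, $-m<c<0$, the minimum speed $a$, and $\theta_0\in U_T$ match the paper's arguments.
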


The ratio holds on the open set \(U_T\) for each fixed instance; it is not a
uniform joint \(R,\eta\) limit.  It also does not arise from unstable regional
curvature: the regional Hessians are \(1\) and \(2\).  In this scalar result
\(\|\Xi\|_2=|r|\); after embedding the same normal transfer in \(d\ge2\),
\(\|\Xi\|_2=\max\{1,|r|\}\).

\begin{lemma}[Ratio--conditioning tradeoff]
\label{lem:ratio-conditioning-tradeoff}
At a same-direction event, let
\(r=(\nu^\top f^+)/(\nu^\top f^-)\).  If
\(\nu^\top f^+\ge c_0>0\) and \(\|f^-\|\le M\), then
\begin{equation}
 \frac1r\le\frac{M}{c_0}.
 \label{eq:ratio-conditioning-tradeoff}
\end{equation}
The bounded-data family
\begin{equation}
 \mathcal L_R(\theta)=\tfrac12(\theta-1)^2+
 \tfrac12\bigl(\operatorname{ReLU}(\theta)+1-R^{-1}\bigr)^2
 \label{eq:bounded-data-ratio-family}
\end{equation}
is globally 1-strongly convex.  For any
\(\theta_0\in(1-e^T,0)\), its trajectory crosses zero before \(T\) and realizes
\(D\varphi_T=R^{-1}J_{\reg}\), but its outgoing speed is \(R^{-1}\).
The uniform-margin construction in Theorem~\ref{thm:residual-relu-phase}
instead scales the incoming field, targets, and initialization as
\(\Theta(R)\).  Hence an unbounded reciprocal ratio cannot coexist with both
a uniform outgoing transverse margin and a uniformly bounded incoming field
scale.
\end{lemma}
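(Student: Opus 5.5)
The first claim, $1/r\le M/c_0$, follows in one line. Write $r=(\nu^\top f^+)/(\nu^\top f^-)$, so that $1/r=(\nu^\top f^-)/(\nu^\top f^+)$. By Cauchy--Schwarz with the unit normal, $\nu^\top f^-\le\|f^-\|\le M$, and by hypothesis $\nu^\top f^+\ge c_0$. Same-direction transversality makes both speeds positive, so the quotient is at most $M/c_0$. No earlier result is needed for this part.

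For the family \eqref{eq:bounded-data-ratio-family}, I will identify it as the residual-ReLU risk \eqref{eq:relu-erm-objective} with $m=1$ and $c=-(1-R^{-1})=R^{-1}-1$. For $R>1$ this gives $-1<c<0$, hence $c>-m$ and $c\le0$.

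Theorem~\ref{thm:residual-relu-phase} (equivalently Proposition~\ref{prop:relu-erm}) then supplies the following:
\begin{itemize}
\item global $1$-strong convexity, because $c\le0$;
\item a unique crossing at $t_*=\log(1-\theta_0)$, and the condition $t_*<T$ is exactly $\theta_0>1-e^T$;
\item one-sided speeds $f^-=m=1$ and $f^+=m+c=R^{-1}$;
\item the ratio $r=(m+c)/m=R^{-1}$, and hence $D\varphi_T=R^{-1}J_{\reg}$.
\end{itemize}
The outgoing speed is therefore $R^{-1}$, as claimed.

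The contrast with the uniform-margin construction amounts to reading off the parameters $m=Ra$, $c=-(R-1)a$ and $\theta_0=Ra(1-e^{t_*})$. The incoming field $f^-(\theta)=m-\theta$, the targets, and the initialization all scale linearly in $R$.

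The final impossibility statement follows from the inequality proved at the start. Suppose a family has $1/r_R\to\infty$, a uniform outgoing margin $c_0$, and a uniform bound $M$ on the incoming field. Then $1/r_R\le M/c_0$ bounds $1/r_R$ uniformly, which is a contradiction.

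I expect no real obstacle here. The only point requiring care is checking that the bounded family lies in the admissible parameter range $c>-m$, which ensures that outgoing transversality holds for every finite $R$ while the margin degrades like $R^{-1}$.
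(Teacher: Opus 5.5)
Your proof is correct and follows the paper's argument essentially step for step. You use Cauchy--Schwarz with the unit normal for $1/r\le M/c_0$, identify the family as $m=1$, $c=R^{-1}-1$ in Proposition~\ref{prop:relu-erm}, and read off the linear-in-$R$ parameters of the uniform-margin construction; your explicit final contradiction step, which the paper leaves implicit, is also sound.
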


\paragraph{A qualitative outer decision.}
\begin{corollary}[Strongly convex initialization-gradient ranking reversal]
\label{cor:hypergradient-ranking-reversal}
For positive event-free sensitivities \(a_1,a_2\) and transfer ratios
\(r_1,r_2\), the strict ranking \(a_1>a_2\) reverses after event transfer
exactly when
\begin{equation}
 1<\frac{a_1}{a_2}<\frac{r_2}{r_1}.
 \label{eq:ranking-reversal-criterion}
\end{equation}
This open set is nonempty exactly when \(r_2>r_1\).  A globally 1-strongly
convex realization at \(T=1\) is
\begin{equation}
 \mathcal L(\theta)=\sum_{i=1}^2
 \left[\tfrac12(\theta_i-m_i)^2+
 \tfrac12(\operatorname{ReLU}(\theta_i)-c_i)^2\right],
 \quad
 \begin{aligned}
 (m_1,c_1)&=(4,-3),\\
 (m_2,c_2)&=(4/3,-1/3),
 \end{aligned}
 \label{eq:ranking-reversal-loss}
\end{equation}
initialized at
\begin{equation}
 \theta_0=\left(4(1-e^{0.6}),\tfrac43(1-e^{0.4})\right).
 \label{eq:ranking-reversal-initialization}
\end{equation}
Its separated events have minimum speed one, and for
\(q(z)=z_1+z_2\),
\begin{equation}
 g_{\reg}=(e^{-1.4},e^{-1.6}),\qquad
 g_{\rm flow}=(\tfrac14e^{-1.4},\tfrac34e^{-1.6}).
 \label{eq:ranking-reversal-gradients}
\end{equation}
Thus the largest-magnitude coordinate is 1 for \(g_{\reg}\) and 2 for
\(g_{\rm flow}\).  Exact discrete AD has the first ranking on all sufficiently
fine nonresonant meshes.  The itinerary and strict reversal persist on an open
neighborhood relative to the displayed separable parameter family.
\end{corollary}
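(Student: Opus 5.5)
The statement is Corollary~\ref{cor:hypergradient-ranking-reversal}. I would split it into three parts: an elementary algebraic criterion, an explicit separable realization, and a persistence argument.

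\emph{Algebraic criterion.} After event transfer the sensitivities are $r_1a_1$ and $r_2a_2$. Reversal of the strict ranking $a_1>a_2$ means $r_1a_1<r_2a_2$. Since all quantities are positive, this is equivalent to $a_1/a_2<r_2/r_1$, which together with $a_1/a_2>1$ gives \eqref{eq:ranking-reversal-criterion}. The open interval $(1,r_2/r_1)$ is nonempty exactly when $r_2>r_1$.

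\emph{Realization.} The loss \eqref{eq:ranking-reversal-loss} is a sum of two decoupled scalar residual-ReLU risks of the form \eqref{eq:relu-erm-objective}. I would apply Theorem~\ref{thm:residual-relu-phase} coordinatewise. In each coordinate $c_i\le0$, so each summand is $1$-strongly convex in its own variable, and the sum is globally $1$-strongly convex.
\begin{itemize}
\item Event times: $t_i=\log((m_i-\theta_{0,i})/m_i)$ gives $t_1=0.6$ and $t_2=0.4$. Both lie in $(0,1)$, are distinct, and are separated from each other and from the endpoints.
\item Speeds: the one-sided speeds are $(m_1,m_1+c_1)=(4,1)$ and $(4/3,1)$, so the minimum speed is one.
\item Regional sensitivities: $J_{\reg}$ per coordinate is $e^{-t_i}e^{-2(1-t_i)}=e^{t_i-2}$, i.e.\ $e^{-1.4}$ and $e^{-1.6}$.
\item Transfer ratios: $r_i=(m_i+c_i)/m_i$, which are $1/4$ and $3/4$.
\end{itemize}
With $q=z_1+z_2$ and a diagonal Jacobian, this yields \eqref{eq:ranking-reversal-gradients}. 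Now $e^{-1.4}>e^{-1.6}$, while $\tfrac14e^{-1.4}<\tfrac34e^{-1.6}$ because $e^{0.2}<3$. This matches the criterion with $a_1/a_2=e^{0.2}\in(1,3)$.

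\emph{Discrete AD and persistence.} By Theorem~\ref{thm:residual-relu-phase}, or Proposition~\ref{prop:relu-erm} applied per coordinate, the nonresonant discrete derivatives converge to $J_{\reg}$ coordinatewise. The strict inequality $e^{-1.4}>e^{-1.6}$ therefore survives on all sufficiently fine nonresonant meshes. Resonance can be handled per coordinate: a mesh is nonresonant if neither coordinate lands exactly on its interface, and Proposition~\ref{prop:relu-erm-resonance-set} shows such meshes exist. For persistence, the crossing conditions $\theta_{0,i}\in(m_i(1-e),0)$ with $c_i>-m_i$ and $c_i\le0$ are open in the parameters. The event times, speeds, and both strict inequalities depend continuously on $(m_i,c_i,\theta_{0,i})$, so they hold on a neighborhood within the separable family. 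Separation also persists, since the events occur in different coordinates and cannot interact as simultaneous multi-surface hits in a harmful way.

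The only mildly delicate step is the last one. The two coordinate events are independent, so distinctness of the event times is not even essential, but I would keep it to match the assumptions. Verifying $e^{0.2}<3$ is trivial.
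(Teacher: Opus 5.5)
Your proposal is correct and follows essentially the same route as the paper's proof: the criterion from positivity of $a_i,r_i$, coordinatewise use of the scalar residual-ReLU result (event times $0.6,0.4$, speeds $(4,1)$ and $(4/3,1)$, regional entries $e^{t_i-2}$, ratios $1/4,3/4$, and $e^{0.2}<3$), coordinatewise convergence of the nonresonant discrete derivatives, and continuity of strict inequalities for persistence. There are two small slips, neither of which affects the argument. First, $c_i\le0$ is not an open condition; use the strict $-m_i<c_i<0$ that holds at the base point, as the paper does. Second, the resonance-set proposition is stated for a single event, so it does not by itself give jointly nonresonant meshes---but the claim is conditional on nonresonance, so no existence argument is needed.
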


\begin{corollary}[Coupled strongly convex ranking reversal]
\label{cor:coupled-ranking-reversal}
Let \(\mathcal L_0\) be the preceding loss and set
\begin{equation}
 \mathcal L_{\kappa,\varepsilon}(\theta)
 =\mathcal L_0(\theta)+\frac\kappa2\|\theta\|^2
  -\varepsilon\theta_1\theta_2.
 \label{eq:coupled-ranking-loss}
\end{equation}
There are arbitrarily small \(0<|\varepsilon_0|<\kappa_0\) for which this
finite-sample affine/residual-ReLU squared loss is genuinely coupled, globally
more than 1-strongly convex, and retains the separated itinerary and opposite
rankings.  These properties persist on an open neighborhood of that nonzero
coupling point.
\end{corollary}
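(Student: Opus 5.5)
The plan is a perturbation argument from the separable point $(\kappa,\varepsilon)=(0,0)$ of Corollary~\ref{cor:hypergradient-ranking-reversal}. Every property claimed for $\mathcal L_{\kappa,\varepsilon}$ is either an algebraic statement about the added quadratic, or a strict inequality that depends continuously on $(\kappa,\varepsilon)$. Choosing $(\kappa_0,\varepsilon_0)$ small with $0<|\varepsilon_0|<\kappa_0$ should therefore give all properties at once, and on an open neighborhood.

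\emph{Convexity and coupling (algebraic).} Write $\mathcal L_{\kappa,\varepsilon}=\mathcal L_0+\tfrac12\theta^\top S\theta$ with
\[
 S=\begin{pmatrix}\kappa&-\varepsilon\\-\varepsilon&\kappa\end{pmatrix}.
\]
The eigenvalues of $S$ are $\kappa\pm\varepsilon$, so $S\succeq(\kappa-|\varepsilon|)I\succ0$ whenever $|\varepsilon|<\kappa$. The loss $\mathcal L_0$ is globally $1$-strongly convex, because each residual term has $c_i\le0$ as in Proposition~\ref{prop:relu-erm}. Hence the sum is globally $(1+\kappa-|\varepsilon|)$-strongly convex, with modulus strictly above one. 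Genuine coupling follows from the regional matrices
\[
 M_S=-\operatorname{diag}(h_1,h_2)-S,\qquad h_i\in\{1,2\}.
\]
For $\varepsilon\ne0$ each $M_S$ has off-diagonal entry $\varepsilon\neq0$. For two regions whose diagonal differences differ (for example, before and after the first activation), a direct $2\times2$ computation gives a nonzero commutator. As in the E2 argument, this nonzero commutator rules out any fixed linear change of variables that decouples the regional equations. The loss is also still a finite-sample squared loss: the added quadratic is realized by linear residuals, since $S\succ0$ admits a square root.

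\emph{Persistence of itinerary and rankings.} At $(0,0)$ the flow from $\theta_0$ has two separated transverse activations on the fixed surfaces $\theta_i=0$, with both one-sided speeds at least one. It has no deactivation and positive margins elsewhere on $[0,1]$. The regional fields are affine with coefficients continuous in $(\kappa,\varepsilon)$. Continuous dependence of the regional flows on parameters over the compact time interval, together with the implicit function theorem for the hitting times (which uses the nonzero normal speeds, as in Lemma~\ref{lem:general-flow-differentiability}), makes the following data continuous in $(\kappa,\varepsilon)$: the event times, event states, normal speeds, and non-event margins. Consequently the same ordered itinerary, with separation and speeds near their base values, persists on a neighborhood. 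The products $g_{\reg}=J_{\reg}^\top\nabla q$ and $g_{\rm flow}=D\varphi_T^\top\nabla q$ are built from $\Phi_i$ and $\Xi_j$ as in Theorem~\ref{thm:main}, and they vary continuously as well. At $(0,0)$ the rankings are strict:
\[
 e^{-1.4}>e^{-1.6},\qquad \tfrac14e^{-1.4}<\tfrac34e^{-1.6}.
\]
These strict inequalities survive small perturbations. Intersecting this neighborhood with the open cone $\{|\varepsilon|<\kappa,\ \varepsilon\ne0\}$ gives a nonempty open set, which contains points with arbitrarily small $0<|\varepsilon_0|<\kappa_0$. Every property holds on a neighborhood of each such point. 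The finite-AD ranking then follows on fine nonresonant meshes from the convergence $g_\eta\to g_{\reg}$ in Theorem~\ref{thm:main}.

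\emph{Main obstacle.} I expect the delicate step to be persistence of the full itinerary: excluding new crossings, returns across $\theta_i=0$, or a change in event order once coupling is introduced. My first approach is a compactness argument: margins bounded below on the base path, plus uniform closeness of perturbed paths on $[0,1]$. If explicit control is wanted, I would instead redo the invariant-rectangle bounds from Appendix~\ref{app:coupled-response} with $O(\kappa+|\varepsilon|)$ slack in the field estimates.
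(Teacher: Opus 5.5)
Your proposal is correct and follows essentially the same route as the paper. The argument has three parts: the eigenvalue bound $\lambda_{\min}=1+\kappa-|\varepsilon|$ gives the strong convexity; continuity of the regional flows, together with the implicit function theorem at the two transverse events, carries all strict event and ranking margins over from the separable point $(0,0)$; and intersecting with the open cone $\{0<|\varepsilon|<\kappa\}$ gives the required nonzero coupling points and their neighborhoods. The differences are cosmetic. You realize the extra quadratic separately as $\tfrac12\|S^{1/2}\theta\|^2$ with zero targets, whereas the paper absorbs the whole quadratic part into $\tfrac12\|A\theta-b\|^2$ with $A^\top A=Q_{\kappa,\varepsilon}$. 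You also make ``genuinely coupled'' concrete through a nonzero regional commutator, which the paper leaves implicit in $\varepsilon\neq0$.
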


A local Taylor analysis relates directional disagreement to matched-radius
regret: Proposition~\ref{prop:matched-radius-local-regret} and its proof
are in Appendix~\ref{sec:parameter-adjoint}.

The constructions prove low-dimensional magnitude and directional effects on
explicit open sets, while Proposition~\ref{prop:matched-radius-local-regret} states only the
corresponding local decision consequence. It guarantees an ordering within a
sufficiently small radius controlled by its smoothness and curvature
assumptions, but no ordering at an arbitrary prescribed radius outside that
regime and no multi-step performance ordering. It makes no typicality claim.
The proofs of the convexity and ranking statements follow below; the scalar
squared-loss realization is established in Appendix~\ref{app:relu-erm}.

\subsubsection{Convexity and the sign of an event}

\paragraph{Proof of Theorem~\ref{thm:convex-events}.}
Let \(z\) be an event point and let \(\nu\) point from the minus to the plus
region.  For sufficiently small \(\varepsilon>0\), the points
\(x_\varepsilon=z-\varepsilon\nu\) and
\(y_\varepsilon=z+\varepsilon\nu\) lie in the adjacent smooth regions.
Monotonicity of the gradient of a differentiable convex function gives
\[
 \bigl(\nabla\mathcal L(y_\varepsilon)
       -\nabla\mathcal L(x_\varepsilon)\bigr)^\top
 (y_\varepsilon-x_\varepsilon)\ge0.
\]
Divide by \(2\varepsilon\) and pass to the one-sided traces.  Since
\([\nabla\mathcal L]=\beta\nu\), the limit is \(\beta\ge0\).

Gradient structure gives
\[
 \nu^\top f^+
 =\nu^\top f^- -\beta.
\]
Same-direction transversality makes both sides positive.  Therefore
\[
 r=\frac{\nu^\top f^+}{\nu^\top f^-}
 =1-\frac{\beta}{\nu^\top f^-}\in(0,1],
\]
with \(r<1\) exactly when \(\beta>0\).  This attenuates the direct normal
mode; tangential modes remain eigenvectors with eigenvalue one.

Every regional fundamental matrix is invertible and has positive determinant
by Liouville's formula.  The matrix determinant lemma gives
\(\det\Xi_j=r_j\).  Taking determinants in the saltation-interleaved product
and in \(J_{\reg}\) yields
\begin{equation}
 \frac{\det D\varphi_T}{\det J_{\reg}}=\prod_{j=1}^m r_j,
 \qquad
 \log|\det D\varphi_T|-\log|\det J_{\reg}|
 =\sum_{j=1}^m\log r_j.
 \label{eq:app-convex-log-volume}
\end{equation}
If any convex event is strict, one \(r_j<1\) while all other factors are at
most one and positive.  The product is then less than one, so
\(D\varphi_T\ne J_{\reg}\); exact transported cancellation is impossible.

If \(\beta<0\), the same speed identity gives \(r>1\) under a
same-direction crossing.  The trace monotonicity argument above also shows
that such a downward normal-gradient jump is incompatible with global
convexity.  This is negative singular interface curvature.  The converse
statement is deliberately local: \(\beta>0\) fixes the sign of the atomic
part but does not, without convex regional Hessians, prove global convexity.
\qed

\paragraph{Proof of Corollary~\ref{cor:convex-event-volume-bound}.}
Set \(\mathcal C_T=J_{\reg}^{-1}D\varphi_T\) and
\(\delta_T=\det\mathcal C_T=\prod_j r_j\).  All regional fundamental
matrices are invertible, and every \(\Xi_j\) is invertible because
\(\det\Xi_j=r_j>0\); hence \(J_{\reg}\), \(D\varphi_T\), and
\(\mathcal C_T\) are invertible.  If
\(s_{\min}=\sigma_{\min}(\mathcal C_T)\), then
\[
 s_{\min}^d
 \le \prod_{i=1}^d\sigma_i(\mathcal C_T)
 =|\det\mathcal C_T|=\delta_T,
\]
so \(s_{\min}\le\delta_T^{1/d}\le1\).  For a unit right singular vector
\(v\) associated with \(s_{\min}\),
\[
 \|\mathcal C_T-I\|_2
 \ge \|(\mathcal C_T-I)v\|_2
 \ge 1-\|\mathcal C_Tv\|_2
 =1-s_{\min}
 \ge1-\delta_T^{1/d}.
\]
The multiplication order is fixed by the definition:
\(D\varphi_T=J_{\reg}\mathcal C_T\).  Therefore
\[
 \|D\varphi_T-J_{\reg}\|_2
 =\|J_{\reg}(\mathcal C_T-I)\|_2
 \ge \sigma_{\min}(J_{\reg})\|\mathcal C_T-I\|_2,
\]
which proves \eqref{eq:convex-volume-gap-lower-bound}.
\qed

\subsubsection{A ReLU event in residual/downstream coordinates}

\begin{corollary}[Residual/downstream ReLU event law]
\label{cor:relu-event-law}
Suppose one preactivation \(s(\theta)\) switches while all other masks remain
fixed.  Let \(q=\nabla_\theta s\ne0\), write
\(F_{\rm act}-F_{\rm inact}=a\,s+o(\|\theta-z\|)\), let \(g_y\) be the loss
gradient at the common boundary output, and set
\(\chi=\langle g_y,a\rangle\).  With empirical averaging factor \(\gamma\)
and oriented normal \(n=\sigma q\),
\[
 [\nabla_\theta\mathcal L]=\gamma\chi n,\qquad
 \beta=\gamma\chi\|q\|.
\]
For an inactive-to-active crossing,
\[
 \lambda_\perp(\Xi)
 =1-\frac{\gamma\chi\|q\|^2}{q^\top f^-}.
\]
Under same-direction transversality, the sign of \(\chi\) gives attenuation,
invisibility, or amplification.  For squared loss, \(\chi\) is the residual
projected onto the switching unit's downstream output direction.
\end{corollary}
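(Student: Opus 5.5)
The plan is to compute the gradient jump directly from the two smooth regional formulas and then substitute it into the gradient form of the saltation matrix, \eqref{eq:gradient-saltation}. Write the empirical risk near $z$ as $\mathcal L=\gamma\sum\ell(F,y)$, where only the switching unit changes cells. On each side, $\mathcal L$ has the $C^2$ extension $\mathcal L_{\rm act/inact}=\gamma\,\ell(F_{\rm act/inact})$ (Proposition~\ref{prop:relu-objectives}). At the boundary point $z$ we have $s(z)=0$, so the two outputs agree and share the loss gradient $g_y$. The chain rule then gives
\[
 \nabla\mathcal L_{\rm act}(z)-\nabla\mathcal L_{\rm inact}(z)
 =\gamma\,\bigl(DF_{\rm act}(z)-DF_{\rm inact}(z)\bigr)^\top g_y .
\]

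The key step is identifying $D(F_{\rm act}-F_{\rm inact})(z)=a\,q^\top$. I would not differentiate the $o(\cdot)$ remainder directly. Instead I would argue as follows. The difference $F_{\rm act}-F_{\rm inact}$ is a smooth map that vanishes on the regular hypersurface $\{s=0\}$. It has this property because it is the switching unit's contribution, which is a smooth multiple of its preactivation. By Hadamard's lemma in the coordinates given by the implicit-function theorem, the difference can be written as $s(\theta)h(\theta)$ with $h$ smooth. The stated expansion then forces $h(z)=a$. Differentiating at $z$, where $s=0$, gives $a\,q^\top$. This is the step I expect to need the most care: the hypothesis is phrased with $o(\|\theta-z\|)$, and the factorization is what turns it into an exact Jacobian identity. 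The result is a jump of $\gamma q\,a^\top g_y=\gamma\chi q$, which is normal to the surface, in agreement with Proposition~\ref{prop:distributional-hessian}.

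Next comes orientation. The inactive side is $\{s<0\}$ and the active side is $\{s>0\}$. For an inactive-to-active crossing, the oriented normal from minus to plus is therefore $n=\sigma q$ with $\sigma=1$. For a general choice of $\sigma$, the plus/minus labelling flips together with $\sigma$, so the jump equals $\gamma\chi n$ in every case. Projecting onto $\nu=n/\|n\|$ gives $\beta=\gamma\chi\|q\|$. Substituting into \eqref{eq:gradient-saltation} gives
\[
\lambda_\perp(\Xi)=1-\frac{\beta}{\nu^\top f^-}=1-\frac{\gamma\chi\|q\|^2}{q^\top f^-},
\]
after multiplying numerator and denominator by $\|q\|$.

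Same-direction transversality (A4) gives $q^\top f^->0$. Hence $\chi>0$, $\chi=0$ and $\chi<0$ give $\lambda_\perp<1$, $\lambda_\perp=1$ and $\lambda_\perp>1$ respectively, that is, attenuation, invisibility and amplification; by (A4) applied to $f^+$, $\lambda_\perp$ is also positive. For squared loss, $g_y$ is the residual $F(z)-y$, so $\chi$ is that residual projected onto the downstream direction $a$ of the switching unit.
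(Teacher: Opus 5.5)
Your proposal is correct and follows the paper's proof step for step: the chain rule at the common boundary output, the identity $DF_{\rm act}(z)-DF_{\rm inact}(z)=aq^\top$, the orientation check that $n=\pm q$ flips sign together with the jump, and substitution into \eqref{eq:gradient-saltation}. The one place you diverge is the Hadamard factorization, which is an unnecessary detour that also uses extra structure (vanishing of $F_{\rm act}-F_{\rm inact}$ on the whole surface); the remainder $F_{\rm act}-F_{\rm inact}-as$ is differentiable at $z$ and $o(\|\theta-z\|)$, so uniqueness of the derivative already makes its Jacobian at $z$ vanish, which is exactly the paper's ``differentiation gives'' step.
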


\paragraph{Proof of Corollary~\ref{cor:relu-event-law}.}
Let \(s(\theta)\) be the changing preactivation and
\(q=\nabla_\theta s(z)\ne0\).  With every other mask fixed, write the active
and inactive network outputs near \(z\) as
\[
 F_{\rm act}(\theta)-F_{\rm inact}(\theta)
 =a\,s(\theta)+o(\|\theta-z\|).
\]
The two outputs agree at \(z\), while differentiation gives
\begin{equation}
 DF_{\rm act}(z)-DF_{\rm inact}(z)=a q^\top.
 \label{eq:app-relu-output-jump}
\end{equation}
Let \(g_y\) be the loss gradient at the common output and let \(\gamma\) be
the empirical averaging factor.  The parameter-gradient jump from inactive
to active is
\[
 \gamma\bigl(DF_{\rm act}-DF_{\rm inact}\bigr)^\top g_y
 =\gamma\langle g_y,a\rangle q.
\]
Set \(\chi=\langle g_y,a\rangle\).  For activation, the oriented normal is
\(n=q\).  For deactivation, incoming and outgoing sides are exchanged and
the oriented normal is \(n=-q\); both the gradient jump and \(n\) change
sign.  Thus, in either orientation,
\[
 [\nabla_\theta\mathcal L]=\gamma\chi n,
 \qquad
 \beta=\gamma\chi\|q\|.
\]
For an inactive-to-active event, \(n=q\), and substitution into
\eqref{eq:gradient-saltation} yields
\[
 \lambda_\perp(\Xi)
 =1-\frac{\gamma\chi\|q\|^2}{q^\top f^-}.
\]
Under same-direction transversality, the signs of \(\chi\) and \(\beta\)
therefore give attenuation, invisibility, and amplification as stated.  For
squared loss, \(g_y\) is the residual, so \(\chi\) is its inner product with
the downstream output direction \(a\).
\qed

\paragraph{Proof of Corollary~\ref{cor:hypergradient-ranking-reversal}.}
For positive \(a_i,r_i\), the two desired strict inequalities are
\(a_1/a_2>1\) and \(a_1/a_2<r_2/r_1\), proving
\eqref{eq:ranking-reversal-criterion}.  This interval is nonempty exactly
when \(r_2>r_1\).  Strict inequalities define an open subset of
\(\mathbb R_{>0}^4\).

Equation~\eqref{eq:ranking-reversal-loss} is the half-squared risk of the
one-sample, four-output network
\[
 F_\theta(1)=(\theta_1,\operatorname{ReLU}(\theta_1),
               \theta_2,\operatorname{ReLU}(\theta_2))
\]
with target \((4,-3,4/3,-1/3)\).  Each coordinate loss has \(c_i<0\), so
Proposition~\ref{prop:relu-erm} makes it globally 1-strongly convex.  Their sum
is globally 1-strongly convex on \(\mathbb R^2\).

The initialization formula in Proposition~\ref{prop:relu-erm} gives coordinate
event times \(s_1=0.6\) and \(s_2=0.4\).  The corresponding ratios are
\[
 r_1=\frac{m_1+c_1}{m_1}=\frac14,
 \qquad
 r_2=\frac{m_2+c_2}{m_2}=\frac34.
\]
Both outgoing speeds equal one; the incoming speeds are \(4\) and \(4/3\).
The events are therefore separated by \(0.2\) and have one-sided speed margin
one.  Since the system is separable, the event-free and flow initialization
Jacobians are diagonal, with entries
\[
 (J_{\reg})_{ii}=e^{-2T+s_i},
 \qquad
 (D\varphi_T)_{ii}=r_i e^{-2T+s_i}.
\]
For \(q(z)=z_1+z_2\), its terminal gradient is \((1,1)\), which proves
\eqref{eq:ranking-reversal-gradients}.  The regional ranking follows from
\(e^{-1.4}>e^{-1.6}\).  The flow ranking is reversed because
\(e^{0.2}<3\), so
\(\frac14e^{-1.4}<\frac34e^{-1.6}\).

At nonresonant meshes, the two scalar executed programs are locally smooth and
their exact derivatives converge coordinatewise to the regional values.
Because the regional inequality is strict, exact discrete AD has the same
ranking for all sufficiently small step sizes.  Finally, the event-time and
ratio formulas are continuous where \(m_i>0\), \(-m_i<c_i<0\), and the event
times lie strictly inside \((0,T)\).  The two ranking margins, event separation,
 and normal speeds are also strict.  They therefore persist on an open
 neighborhood relative to the displayed separable target--initialization
 parameter family.
\qed

\paragraph{Proof of Lemma~\ref{lem:ratio-conditioning-tradeoff}.}
Since \(\nu\) is a unit vector and both normal speeds are positive,
\[
 \frac1r=\frac{\nu^\top f^-}{\nu^\top f^+}
 \le\frac{\|f^-\|}{c_0}\le\frac{M}{c_0}.
\]
For \eqref{eq:bounded-data-ratio-family}, the parameters in
\eqref{eq:relu-erm-objective} are \(m=1\) and
\(c=-1+R^{-1}\).  Proposition~\ref{prop:relu-erm} gives global
1-strong convexity, incoming speed one, outgoing speed \(R^{-1}\), and
\(D\varphi_T=R^{-1}J_{\reg}\).  The target and initialization remain bounded
for fixed \(t_*\), but the transverse margin tends to zero.  In the
uniform-margin parameterization, \(m=Ra\), \(|c|=(R-1)a\), and
\(|\theta_0|=Ra(e^{t_*}-1)\), proving the asserted linear scale growth.
\qed

\paragraph{Proof of Corollary~\ref{cor:coupled-ranking-reversal}.}
On a fixed activation region, the smooth quadratic part of
\(\mathcal L_{\kappa,\varepsilon}\) has Hessian
\[
 Q_{\kappa,\varepsilon}
 =\begin{bmatrix}1+\kappa&-\varepsilon\\
                  -\varepsilon&1+\kappa\end{bmatrix},
 \qquad
 \lambda_{\min}(Q_{\kappa,\varepsilon})
 =1+\kappa-|\varepsilon|.
\]
The two residual-ReLU terms are convex because \(c_i<0\).  Hence
\(\mathcal L_{\kappa,\varepsilon}\) is globally
\((1+\kappa-|\varepsilon|)\)-strongly convex and is more than 1-strongly
convex whenever \(\kappa>|\varepsilon|\).

At \((\kappa,\varepsilon)=(0,0)\), the event times, normal speeds, event
separation, and both ranking margins computed above are strict.  Regional ODE
solutions depend continuously on \((\kappa,\varepsilon)\).  In each isolating
tube, the event equation has nonzero time derivative, so the
implicit-function theorem gives a unique nearby event time and state.
Induction over the two separated events shows that the itinerary, regional
fundamental matrices, saltation matrices, and both terminal gradients vary
continuously.  All strict inequalities therefore persist near the origin.
Choose a sufficiently small point with
\(0<|\varepsilon_0|<\kappa_0\), and then a ball around it small enough that
\(\varepsilon\ne0\), \(\kappa>|\varepsilon|\), and all event and ranking
margins persist.  This is an open set in the explicit coupled parameter
family.

Finally, let \(A^\top A=Q_{\kappa,\varepsilon}\) and
\(b=A^{-\top}m\), where \(m=(m_1,m_2)\).  Up to an additive constant,
\[
 \mathcal L_{\kappa,\varepsilon}(\theta)
 =\tfrac12\|A\theta-b\|^2
 +\tfrac12\sum_{i=1}^2
   (\operatorname{ReLU}(\theta_i)-c_i)^2.
\]
Thus the coupled objective is a finite-dimensional affine/residual-ReLU
squared loss.  This completes the proof.
\qed

\subsection{Multi-event cancellation criterion}
\label{app:first-variation-proof}
\label{app:multi-event}

Several nontrivial saltation matrices do not by themselves imply a matrix
mismatch: transported event effects may cancel.

\begin{proposition}[Transported saltation criterion]
\label{prop:general-mismatch}\label{prop:multi-event-quantitative}
For \(j=1,\ldots,m\), define
\begin{equation}
 Q_j=\Phi_{j-1}\Phi_{j-2}\cdots\Phi_0,
 \qquad
 \widetilde\Xi_j=Q_j^{-1}\Xi_jQ_j.
 \label{eq:general-transported-saltation}
\end{equation}
Then
\begin{equation}
 J_{\reg}^{-1}D\varphi_T
 =\widetilde\Xi_m\widetilde\Xi_{m-1}\cdots\widetilde\Xi_1.
 \label{eq:general-exact-mismatch-factorization}
\end{equation}
Consequently,
\begin{equation}
 D\varphi_T=J_{\reg}
 \quad\Longleftrightarrow\quad
 \widetilde\Xi_m\cdots\widetilde\Xi_1=I.
 \label{eq:general-exact-cancellation-criterion}
\end{equation}
A sufficient, but not necessary, condition for mismatch is
\begin{equation}
 \prod_{j=1}^m
 \frac{\nu_j^\top f_j^+}{\nu_j^\top f_j^-}\ne1.
 \label{eq:general-determinant-mismatch}
\end{equation}
Equality of the scalar product to one only makes the determinant of the
transported matrix product equal to one; it does not imply that the matrix
product is \(I\).  Regional propagation can rotate and couple the affected
directions before subsequent events.  For any submultiplicative matrix norm
normalized by \(\|I\|=1\), set
\begin{equation}
 S_j=\Xi_j-I,
 \qquad
 \widetilde S_j=Q_j^{-1}S_jQ_j,
 \qquad
 E_T=\sum_{j=1}^m\|\widetilde S_j\|.
 \label{eq:general-transported-event-budget}
\end{equation}
Then the exact log-volume correction is
\begin{equation}
 \log|\det D\varphi_T|-\log|\det J_{\reg}|
 =\sum_{j=1}^m\log r_j,
 \qquad
 r_j=\frac{\nu_j^\top f_j^+}{\nu_j^\top f_j^-}>0,
 \label{eq:general-log-volume-correction}
\end{equation}
and
\begin{align}
 \|J_{\reg}^{-1}D\varphi_T-I\|
 &\le\prod_{j=1}^m(1+\|\widetilde S_j\|)-1
 \le e^{E_T}-1,
 \label{eq:general-event-budget-bound}\\
 J_{\reg}^{-1}D\varphi_T
 &=I+\sum_{j=1}^m\widetilde S_j+R_2,
 \label{eq:general-weak-event-expansion}\\
 \|R_2\|
 &\le e^{E_T}-1-E_T
 \le\tfrac12e^{E_T}E_T^2.
 \label{eq:general-event-interaction-bound}
\end{align}
The first-order sum contains the transported event contributions; \(R_2\)
contains their ordered interactions.  These are upper bounds only.  No lower
bound follows from \(E_T\), because transported factors may cancel.
For one event,
\begin{equation}
 D\varphi_T-J_{\reg}=\Phi_1(\Xi_1-I)\Phi_0,
 \label{eq:general-single-event-gap}
\end{equation}
so equality holds if and only if \(\Xi_1=I\).  In the continuous-loss gradient
setting this is equivalent to equality of the one-sided gradients.  A single
genuine gradient jump therefore forces a matrix mismatch, although perturbation
directions satisfying \(\nu_1^\top\Phi_0v=0\) are unaffected by that event.
\end{proposition}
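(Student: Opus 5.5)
The factorization will come from writing $D\varphi_T$ and $J_{\reg}$ as the ordered products of Theorem~\ref{thm:general-first-variation} and inserting $Q_jQ_j^{-1}$ between consecutive factors. Each $\Phi_i$ solves a linear ODE, so it is invertible, and $Q_{j+1}=\Phi_jQ_j$. By induction on $j$,
\[
\Phi_j\Xi_j\Phi_{j-1}\cdots\Xi_1\Phi_0=Q_{j+1}\widetilde\Xi_j\cdots\widetilde\Xi_1 .
\]
The inductive step is $\Phi_j\Xi_jQ_j=\Phi_jQ_j\widetilde\Xi_j=Q_{j+1}\widetilde\Xi_j$, using $\Xi_jQ_j=Q_j\widetilde\Xi_j$. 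Taking $j=m$ gives $D\varphi_T=\Phi_mQ_m\widetilde\Xi_m\cdots\widetilde\Xi_1=J_{\reg}\widetilde\Xi_m\cdots\widetilde\Xi_1$, which is \eqref{eq:general-exact-mismatch-factorization}. The cancellation criterion \eqref{eq:general-exact-cancellation-criterion} then follows because $J_{\reg}$ is invertible.

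For the determinant statements I will use the matrix determinant lemma, which gives $\det\Xi_j=1+n_j^\top\Delta_j/(n_j^\top f_j^-)=n_j^\top f_j^+/(n_j^\top f_j^-)=r_j$. Both normal speeds are positive by (A4), so $r_j>0$. Conjugation preserves determinants, so $\det(J_{\reg}^{-1}D\varphi_T)=\prod_j r_j$. Taking logarithms of absolute values gives \eqref{eq:general-log-volume-correction}. If the product is not one, the determinant is not one, so the product of transported factors is not $I$; this is \eqref{eq:general-determinant-mismatch}. To show the condition is not necessary, a single remark suffices: determinant one does not force the identity. I will give a short example such as $\Xi_1$ and $\Xi_2$ with reciprocal normal ratios but different normals in $d\ge2$, or simply note that no converse is asserted.

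For the budget bounds I will expand $\prod_j(I+\widetilde S_j)$ over subsets, ordered chronologically. Submultiplicativity and $\|I\|=1$ bound the sum of all non-empty terms by $\prod_j(1+\|\widetilde S_j\|)-1\le e^{E_T}-1$, using $1+x\le e^x$; this gives \eqref{eq:general-event-budget-bound}. The singleton terms form $\sum_j\widetilde S_j$. The remaining terms $R_2$ are bounded by $\prod_j(1+\|\widetilde S_j\|)-1-E_T$. To reach \eqref{eq:general-event-interaction-bound}, I need the comparison $\prod_j(1+x_j)-1-\sum_j x_j\le e^{\sum_j x_j}-1-\sum_j x_j$. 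It holds coefficientwise, since each elementary symmetric polynomial satisfies $e_k\le(\sum_j x_j)^k/k!$. Taylor's remainder then gives $e^E-1-E\le\tfrac12E^2e^E$.

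The one-event case is \eqref{eq:general-single-event-gap}, namely $\Phi_1\Xi_1\Phi_0-\Phi_1\Phi_0=\Phi_1(\Xi_1-I)\Phi_0$. By invertibility, the gap vanishes iff $\Xi_1=I$. Since $\Xi_1-I=\Delta_1n_1^\top/(n_1^\top f_1^-)$ with $n_1\ne0$, this holds iff $\Delta_1=0$, that is, iff the one-sided gradients are equal. A direction $v$ is unaffected iff $n_1^\top\Phi_0v=0$, which is equivalent to $\nu_1^\top\Phi_0v=0$.

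The only mildly delicate step is the ordered subset expansion with its coefficientwise comparison; everything else is routine linear algebra.
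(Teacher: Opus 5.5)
Your proposal is correct and follows essentially the paper's argument: the ordered factorization obtained by peeling off regional factors, the matrix determinant lemma giving $\det\Xi_j=r_j$, the chronological subset expansion with submultiplicativity, and the one-event identity together with invertibility of $\Phi_0,\Phi_1$. The step you call delicate is in fact immediate, because $\prod_j(1+\|\widetilde S_j\|)\le e^{E_T}$ follows from $1+x\le e^x$, which is exactly how the paper argues; your explicit two-event example with reciprocal normal ratios (for instance with constant regional fields, so that $Q_j=I$) supplies a justification of non-necessity that the paper's proof leaves implicit.
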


\begin{proof}
Every regional fundamental matrix is invertible.  Expanding
\(J_{\reg}^{-1}D\varphi_T\) and cancelling regional factors from the outside
inward gives
\[
 J_{\reg}^{-1}D\varphi_T
 =(Q_m^{-1}\Xi_mQ_m)\cdots(Q_1^{-1}\Xi_1Q_1),
\]
which proves the factorization and exact cancellation criterion.  Moreover,
\[
 \det\Xi_j
 =1+\frac{\nu_j^\top(f_j^+-f_j^-)}{\nu_j^\top f_j^-}
 =\frac{\nu_j^\top f_j^+}{\nu_j^\top f_j^-},
\]
so the determinant condition follows.  Under (A4), every ratio is positive;
taking logarithms proves \eqref{eq:general-log-volume-correction}.

Expanding the ordered product in
\eqref{eq:general-exact-mismatch-factorization} gives
\[
 (I+\widetilde S_m)\cdots(I+\widetilde S_1)
 =I+\sum_j\widetilde S_j
  +\sum_{k=2}^{m}
    \sum_{j_k>\cdots>j_1}
    \widetilde S_{j_k}\cdots\widetilde S_{j_1}.
\]
Submultiplicativity therefore yields
\begin{align*}
 \left\|(I+\widetilde S_m)\cdots(I+\widetilde S_1)-I\right\|
 &\le\prod_j(1+\|\widetilde S_j\|)-1,\\
 \|R_2\|
 &\le\prod_j(1+\|\widetilde S_j\|)-1
      -\sum_j\|\widetilde S_j\|.
\end{align*}
Using \(1+x\le e^x\) proves
\eqref{eq:general-event-budget-bound} and the first inequality in
\eqref{eq:general-event-interaction-bound}.  The final inequality follows
from \(e^x-1-x\le\frac12e^x x^2\) for \(x\ge0\).

For one event, invertibility of
\(\Phi_0\) and \(\Phi_1\) makes \eqref{eq:general-single-event-gap} zero exactly
when \(\Xi_1-I=0\).  Since the denominator is nonzero, the latter is equivalent
to \(f_1^+=f_1^-\), or \(\beta_1=0\).
\end{proof}

\subsection{Arbitrary-gap construction}
\label{app:arbitrary-gap}

\begin{corollary}[Arbitrarily large discrete-to-flow derivative gap]
\label{cor:general-two-bowl-gap}
Let \(0<a_-<a_+\) and, for
\(\theta=(x,y)\in\mathbb R\times\mathbb R^{d-1}\), define
\begin{equation}
 \mathcal L_{a_-,a_+}(x,y)
 =\frac12\|y\|^2+
 \begin{cases}
  \frac12(x-a_-)^2, & x\le0,\\[1mm]
  \frac12(x-a_+)^2+\frac12(a_-^2-a_+^2), & x\ge0.
 \end{cases}
 \label{eq:general-two-bowl-loss}
\end{equation}
For every \(T>0\), the open strip
\begin{equation}
 U_T=\left\{(x_0,y_0):a_-(1-e^T)<x_0<0\right\}
 \label{eq:general-two-bowl-strip}
\end{equation}
has a stable one-event itinerary before time \(T\).  On \(U_T\), both regional
Hessians equal \(I_d\), and
\begin{align}
 J_{\reg}&=e^{-T}I_d,
 \label{eq:general-two-bowl-regional}\\
 D\varphi_T
 &=e^{-T}\operatorname{diag}
   \left(\frac{a_+}{a_-},I_{d-1}\right).
 \label{eq:general-two-bowl-true-derivative}
\end{align}
Consequently,
\begin{align}
 \|D\varphi_T-J_{\reg}\|_2
 &=e^{-T}\left(\frac{a_+}{a_-}-1\right),
 \label{eq:general-two-bowl-additive-gap}\\
 \frac{\|D\varphi_T\|_2}{\|J_{\reg}\|_2}
 &=\frac{a_+}{a_-}.
 \label{eq:general-two-bowl-multiplicative-gap}
\end{align}
Thus any multiplicative gap \(R>1\) is obtained by taking \(a_+/a_-=R\), and
any additive gap \(G>0\) by taking \(a_+/a_-=1+Ge^T\).  If
\(a_+/a_->e^T\), the true flow is expansive although ordinary branchwise AD
converges to the strict contraction \(e^{-T}I_d\).

For \(\eta=T/N\) and any step size with no exact interface landing,
\begin{equation}
 D\Theta_{\eta,T}
 =J_{\eta,T}^{\AD}
 =(1-\eta)^NI_d\longrightarrow e^{-T}I_d.
 \label{eq:general-two-bowl-discrete-ad}
\end{equation}
Inserting the exact event matrix once gives
\begin{equation}
 J_{\eta,T}^{\SC}
 =(1-\eta)^N
 \operatorname{diag}\left(\frac{a_+}{a_-},I_{d-1}\right)
 =D\varphi_T+O(\eta).
 \label{eq:general-two-bowl-corrected}
\end{equation}
Here the inserted saltation matrix is exact.  Since \(T=N\eta\),
\((1-\eta)^N=e^{-T}+O(\eta)\), so the entire \(O(\eta)\) difference in
\eqref{eq:general-two-bowl-corrected} is the ordinary Euler error of the
regional dynamics.  There is no event-localization or one-sided-data error in
this construction; those additional errors are represented by
\(\varepsilon_\eta\) and \(\delta_\eta\) in
Theorem~\ref{thm:general-event-aware-consistency}.
The loss is regionwise strongly convex but not globally convex.
\end{corollary}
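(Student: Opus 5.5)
The corollary is an explicit one-event computation, so I will verify each claim directly rather than invoke the general machinery, and then compare with Theorem~\ref{thm:general-first-variation} and Proposition~\ref{prop:one-event-geometry} as a check. First I check the geometry on the strip \(U_T\). The loss is continuous across \(x=0\), because the two branches agree there: \(\tfrac12a_-^2=\tfrac12a_+^2+\tfrac12(a_-^2-a_+^2)\). The regional fields are \(f^-=(a_--x,-y)\) and \(f^+=(a_+-x,-y)\). Since \(y\) decouples, the \(x\)-coordinate solves \(x(t)=a_-+(x_0-a_-)e^{-t}\) while \(x\le0\). It therefore hits zero at \(t_*=\log((a_--x_0)/a_-)\), and the strip condition is exactly \(0<t_*<T\). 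The one-sided normal speeds at the event are \(a_-\) and \(a_+\), both positive, so the crossing is same-direction transverse. After the crossing, \(x(t)=a_+(1-e^{-(t-t_*)})>0\), so there is no recrossing. Stability of the itinerary on \(U_T\) follows because \(t_*\) depends continuously on \(x_0\), as does the margin from \(T\).

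Next I compute the derivatives. Both regional Hessians equal \(I_d\), so each \(\Phi_i=e^{-L_i}I_d\) and \(J_{\reg}=e^{-T}I_d\). The jump is \(\Delta=(a_+-a_-)e_1\) with \(n=e_1\) and \(n^\top f^-=a_-\). Hence the saltation matrix is \(\Xi=\operatorname{diag}(a_+/a_-,I_{d-1})\). Because all the \(\Phi_i\) are scalar multiples of the identity, they commute with \(\Xi\), and \eqref{eq:general-saltation-product} gives \eqref{eq:general-two-bowl-true-derivative}. Taking the spectral norm of a diagonal matrix yields \eqref{eq:general-two-bowl-additive-gap}–\eqref{eq:general-two-bowl-multiplicative-gap}. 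The choices of \(a_+/a_-\) that realize a prescribed multiplicative or additive gap, and the expansivity threshold \(a_+/a_->e^T\), then follow by arithmetic.

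For the discrete statement, every executed regional Euler step has Jacobian \(I+\eta(-I_d)=(1-\eta)I_d\), whichever side it is on. At a mesh with no exact landing, the signs persist in a neighborhood, so the classical derivative equals the selected product \((1-\eta)^NI_d\). The limit follows from \(N\log(1-\eta)=-T+O(\eta)\). Inserting \(\Xi\) once gives \eqref{eq:general-two-bowl-corrected}; it commutes with the scalar factors, so its placement is irrelevant, and its difference from \(D\varphi_T\) is \(((1-\eta)^N-e^{-T})\Xi=O(\eta)\). Finally, the loss is not globally convex. The gradient jump at \(x=0\) has normal component \(\beta=(a_--a_+)<0\), since \(\partial_x\) jumps from \(-a_-\) to \(-a_+\). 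The monotonicity argument in the proof of Theorem~\ref{thm:convex-events} then forbids global convexity, while each region is strongly convex with Hessian \(I_d\).

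The only step needing care is the discrete itinerary. I must confirm that the Euler path crosses exactly once and never returns, so that "no exact landing" really makes the executed program a fixed composition of smooth maps near \(\theta_0\). Before the crossing, \(x_k=a_-+(x_0-a_-)(1-\eta)^k\) increases. After it, \(x_{k+1}=(1-\eta)x_k+\eta a_+\) is a convex combination of positive numbers for \(\eta<1\). This mirrors the argument in Proposition~\ref{prop:relu-erm}.
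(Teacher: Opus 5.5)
Your proposal is correct and follows essentially the same route as the paper: continuity of the traces, the explicit crossing time \(t_*=\log((a_--x_0)/a_-)\), scalar regional propagators commuting with \(\Xi=\operatorname{diag}(a_+/a_-,I_{d-1})\), the branch-independent Euler factor \((1-\eta)I_d\), and the downward jump of the normal derivative from \(-a_-\) to \(-a_+\) to rule out global convexity. Your closing discrete-itinerary check is harmless but unnecessary, and as written it only covers \(\eta<1\). Strict signs at a nonresonant mesh persist locally for any finite program, and every Euler factor is \((1-\eta)I_d\) however many times the path crosses; this is why the paper states the identity holds irrespective of the branch count.
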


\begin{proof}
The additive constant makes the two traces agree at \(x=0\).  On the left,
\(\dot x=a_--x\), so a point with \(x_0<0\) reaches the interface at
\begin{equation}
 t_*(x_0)=\log\frac{a_--x_0}{a_-}.
 \label{eq:general-two-bowl-event-time}
\end{equation}
The strict strip inequalities are exactly \(0<t_*(x_0)<T\).  Both one-sided
speeds at impact are positive, namely \(a_-\) and \(a_+\), so the crossing is
stable and same-direction transverse.  The regional fundamental matrices are
scalar contractions whose total duration is \(T\), while
\[
 \Xi=\operatorname{diag}\left(\frac{a_+}{a_-},I_{d-1}\right).
\]
This proves the regional and true derivatives and both gap formulas.  Since
both regional Hessians are \(I_d\), every executed Euler Jacobian is
\((1-\eta)I_d\), irrespective of the branch count.  The regional factors
commute with \(\Xi\), proving the corrected formula.  Finally, the normal
derivative of \(\mathcal L\) jumps downward from \(-a_-\) to \(-a_+\), so the
continuous loss has negative singular interface curvature and is not globally
convex.
\end{proof}

\section{Parameters, adjoints, and terminal-objective consequences}
\label{sec:parameter-adjoint}
\subsection{Hyperparameter sensitivity and reverse mode}
\label{app:parameter-adjoint}

This section proves Theorem~\ref{thm:parameter-limit} and
Corollary~\ref{cor:adjoint-events}.  The formulas are classical hybrid
sensitivity identities; their role here is to compare the derivative of a
hard-ReLU training flow with the limit of exact derivatives of its
nonresonant discrete programs.

\subsubsection{Parameter-dependent assumptions}

Let \(\lambda\in\mathbb R^p\).  In addition to (A1)--(A5), suppose that each
regional field \(f_\alpha(\theta,\lambda)\) is \(C^1\) jointly in state and
parameter on the relevant compact tube.  Write
\[
 A_\alpha=D_\theta f_\alpha\in\mathbb R^{d\times d},
 \qquad
 B_\alpha=D_\lambda f_\alpha\in\mathbb R^{d\times p}.
\]
The matrices \(A_\alpha\) and \(B_\alpha\) are uniformly continuous on that
tube.  Each event function \(g_j(\theta,\lambda)\) is \(C^2\), and the
initialization is \(h(\lambda)\) with \(h\in C^1\).  The event count, order,
separation, and same-direction transversality persist on a parameter
neighborhood of the reference value.  The mesh, horizon, state dimension,
and program depth are held fixed while differentiating with respect to
\(\lambda\).

Let
\[
 Z(t)=D_\lambda\theta(t;\lambda)\in\mathbb R^{d\times p},
 \quad
 n_j=\nabla_\theta g_j\in\mathbb R^d,
 \quad
 g_{\lambda,j}=\nabla_\lambda g_j\in\mathbb R^p.
\]

\begin{lemma}[Joint branch constancy for a fixed program]
\label{lem:parameter-branch-constancy}
Fix a base parameter \(\lambda_0\) and a finite Euler/GD program.  If every
activation function evaluated by the executed program is nonzero at
\(\lambda_0\), then all executed branch signs are constant on a neighborhood
of \(\lambda_0\).  On that neighborhood the endpoint is \(C^1\), and its
classical parameter derivative is the branchwise AD recursion
\eqref{eq:app-discrete-parameter-recursion}.
\end{lemma}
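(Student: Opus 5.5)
The argument treats the executed program as a finite composition of maps, each jointly continuous in the initial state and the parameter, and then uses strict inequalities that persist under small perturbations. Fix the mesh $\eta$, the horizon $N$ and the base parameter $\lambda_0$. Write the executed program as the recursion $\theta_0=h(\lambda)$ and $\theta_{k+1}=\theta_k+\eta f_{\alpha_k}(\theta_k,\lambda)$. Here $\alpha_k$ is read off from the signs of the finitely many activation functions $g_s(\theta_k,\lambda)$ that the program evaluates at step $k$. By hypothesis every one of these numbers is nonzero at $\lambda_0$. Let $\gamma>0$ be their minimum absolute value over all $k\le N$ and all evaluated $s$; this is a minimum over finitely many strictly nonzero values.

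\textbf{Branch constancy.} This is proved by finite induction on $k$. Define the \emph{frozen} recursion $\widehat\theta_k(\lambda)$ by always using the base word $(\alpha_k)$, that is, the regional $C^1$ extensions chosen at $\lambda_0$. Each map $\lambda\mapsto\widehat\theta_k(\lambda)$ is a finite composition of $C^1$ maps, and is therefore continuous and in fact $C^1$. Continuity of the $g_s$ then gives a neighborhood $V_k$ of $\lambda_0$ on which $|g_s(\widehat\theta_k(\lambda),\lambda)|>\gamma/2$ with the base sign, for each evaluated $s$. Set $V=\bigcap_{k\le N}V_k$, which is a finite intersection and hence open.

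On $V$ the frozen and actual recursions coincide, again by induction. Both start from $h(\lambda)$. If they agree up to step $k$, the actual program evaluates the same signs as the frozen one, selects $\alpha_k$, and produces the same next state. The only point needing care is that the list of evaluated functions at step $k$ is itself determined by the earlier branches, so it is fixed once the earlier signs are fixed. Consequently all executed signs are constant on $V$ and strictly nonzero, so no zero-sign convention is ever invoked.

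\textbf{Differentiability and the AD identity.} On $V$ the endpoint equals $\widehat\theta_N(\lambda)$, which is $C^1$ by the joint $C^1$ regularity of the regional fields and of $h$. The chain rule applied to $\widehat\theta_{k+1}=\widehat\theta_k+\eta f_{\alpha_k}(\widehat\theta_k,\lambda)$ gives
\[
Z_{k+1}=(I+\eta A_{\alpha_k})Z_k+\eta B_{\alpha_k},\qquad Z_0=Dh(\lambda),
\]
with $A_{\alpha_k}$ and $B_{\alpha_k}$ evaluated at $(\widehat\theta_k,\lambda)$. This is exactly the branchwise recursion \eqref{eq:app-discrete-parameter-recursion} that AD executes, so the classical derivative and the AD output agree.

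\textbf{Main obstacle.} The hardest part is making sure the induction is not circular. Sign constancy must be established on the frozen $C^1$ extension first, and only then used to identify the frozen recursion with the actual program. I would also state explicitly that the regional extensions exist on a neighborhood of each $\widehat\theta_k(\lambda_0)$; shrinking $V$ if necessary keeps every $\widehat\theta_k(\lambda)$ inside those extension domains.
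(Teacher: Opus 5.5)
Your proof is correct and takes the same approach as the paper's: finitely many strict signs persist under continuous dependence on $\lambda$, a finite induction over program depth fixes the branch word, and the chain rule applied to the resulting composition of jointly $C^1$ regional maps gives the recursion. Your explicit frozen-word recursion makes the non-circularity of that induction visible, which the paper's brief argument leaves implicit.
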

\begin{proof}
The initialization and all regional update maps are continuous in
\(\lambda\).  A nonzero finite list of activation values retains its signs on
a sufficiently small neighborhood.  Induction over the finite program depth
therefore fixes the branch word.  The endpoint is then a composition of
jointly \(C^1\) regional maps, and the chain rule gives the stated recursion.
\end{proof}

\begin{theorem}[Parameter derivatives and the vanishing-step limit]
\label{thm:parameter-limit}
Fix a base parameter \(\lambda_0\), and assume the conditions above, including
persistence of the stable separated itinerary on a neighborhood of
\(\lambda_0\).  The horizon, mesh, and program depth are held fixed while
taking \(D_\lambda\).  Regionally,
\[
 \dot Z=A_\alpha Z+B_\alpha,\qquad Z(0)=Dh(\lambda).
\]
At event \(j\),
\[
 Z_j^+=\Xi_jZ_j^-+C_j,\qquad
 C_j=\frac{(f_j^+-f_j^-)g_{\lambda,j}^\top}{n_j^\top f_j^-}.
\]
At each mesh that is nonresonant at \(\lambda_0\), branchwise AD is the
classical parameter derivative.  Along any such vanishing-mesh sequence it
converges to the regional inhomogeneous sensitivity continued without event
jumps, whereas the limiting flow derivative obeys the displayed affine event
transfer.
\end{theorem}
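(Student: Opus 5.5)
The plan is to reduce the parameter statement to the initialization case by state augmentation, and then read off the event transfer by implicit differentiation. Put $y=(\theta,\lambda)$ with augmented regional fields $F_\alpha(y)=(f_\alpha(\theta,\lambda),0)$ and augmented guards $G_j(y)=g_j(\theta,\lambda)$. The joint $C^1$ hypotheses on $f_\alpha$ and the $C^2$ guards make (A1)--(A2) hold for this system. The augmented normal is $(n_j,g_{\lambda,j})$, and its normal speed is unchanged, $(n_j,g_{\lambda,j})^\top F_\alpha=n_j^\top f_\alpha$, because the $\lambda$-component of the field vanishes. Hence same-direction transversality, separation and the isolating tubes transfer from the assumed persistence of the itinerary on a parameter neighborhood. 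The initialization is $y_0=(h(\lambda),\lambda)$, so $D_\lambda y(t)=\mathcal{J}(t)\,(Dh,I_p)^\top$, where $\mathcal{J}$ is the augmented state Jacobian.

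\textbf{Step 1 (flow derivative).} Lemma~\ref{lem:general-flow-differentiability}, applied to the augmented system, gives the regional variational equation. Its block form is
\[
 \frac{\dd}{\dd t}\begin{pmatrix} Z\\ I\end{pmatrix}
 =\begin{pmatrix} A_\alpha & B_\alpha\\ 0&0\end{pmatrix}
 \begin{pmatrix} Z\\ I\end{pmatrix},
\]
so $\dot Z=A_\alpha Z+B_\alpha$ with $Z(0)=Dh$. At an event, the augmented saltation matrix is $I+\frac{(\Delta_j,0)(n_j^\top,g_{\lambda,j}^\top)}{n_j^\top f_j^-}$. Multiplying it against $(Z_j^-,I)$ gives $Z_j^+=\Xi_jZ_j^-+C_j$ with exactly the stated $C_j$. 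I would also check this directly: differentiating $g_j(\theta(\tau_j),\lambda)=0$ gives $\delta\tau_j=-(n_j^\top Z_j^-+g_{\lambda,j}^\top)/(n_j^\top f_j^-)$, and the fixed-clock restart then contributes $-\Delta_j\delta\tau_j$.

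\textbf{Step 2 (fixed program).} Lemma~\ref{lem:parameter-branch-constancy} already identifies branchwise AD with the classical parameter derivative at a nonresonant mesh. The recursion is $Z_{k+1}=(I+\eta A_{\alpha_k})Z_k+\eta B_{\alpha_k}$, which is exactly the augmented selected product applied to $(Dh,I)$.

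\textbf{Step 3 (vanishing-step limit).} Apply Theorem~\ref{thm:general-first-variation} to the augmented system. The selected product converges to the event-free augmented regional product $\Phi^{\rm aug}_m\cdots\Phi^{\rm aug}_0$. The top block of this product applied to $(Dh,I)$ is the inhomogeneous regional sensitivity continued with no jump. The convergence argument goes through Lemmas~\ref{lem:general-branch-mismatch}--\ref{lem:general-fundamental}, which only need the $L^1$ coefficient estimate. That estimate holds because $B_\alpha$ is bounded and uniformly continuous and the branch-mismatch set has measure $O(\eta)$. Finally, comparing the two limits shows they differ by the transported $(\Xi_j-I)Z_j^-+C_j$ terms.

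The main obstacle is confirming that the augmented system genuinely satisfies the hypotheses of Theorem~\ref{thm:general-first-variation} as stated on the state space. Two points need care. First, the compact tube now lives in $(\theta,\lambda)$-space. Second, the guards depend on $\lambda$, so the interfaces are surfaces in the product space; their regularity and isolating neighborhoods must come from the parameter-uniform persistence assumption. Everything else is a block-matrix bookkeeping of results already proved.
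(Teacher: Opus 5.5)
Your proposal is correct and follows essentially the paper's route: implicit differentiation of the guard, cross-checked by the augmented saltation block \(\bigl[\begin{smallmatrix}\Xi_j&C_j\\0&I_p\end{smallmatrix}\bigr]\), for the flow; Lemma~\ref{lem:parameter-branch-constancy} for the fixed program; and the \(L^1\) estimate for the augmented coefficient \(\bigl[\begin{smallmatrix}A&B\\0&0\end{smallmatrix}\bigr]\) fed into the fundamental-matrix comparison for the vanishing-step limit. The obstacle you flag is milder than you expect: \(\lambda\) is constant along both the exact and Euler paths, so the paper reuses the fixed-\(\lambda_0\) state and branch-mismatch estimates, enlarges only the coefficient matrix, and never re-verifies (A1)--(A5) in product space.
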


\subsubsection{Forward jump and the discrete learning limit}

\paragraph{Proof of Theorem~\ref{thm:parameter-limit}.}
Inside a smooth region, ordinary differentiation gives
\begin{equation}
 \dot Z=A_\alpha Z+B_\alpha,
 \qquad
 Z(0)=Dh(\lambda).
 \label{eq:app-parameter-regional}
\end{equation}
At event \(j\), perturb the parameter by \(\delta\lambda\) and denote the
resulting event-time change by \(\delta t_j\).  Differentiating
\(g_j(\theta(t_j+\delta t_j;\lambda+\delta\lambda),
\lambda+\delta\lambda)=0\) on the incoming side gives
\[
 \delta t_j
 =-\frac{(n_j^\top Z_j^-+g_{\lambda,j}^\top)\delta\lambda}
          {n_j^\top f_j^-}.
\]
Returning the outgoing perturbation to the nominal clock yields
\begin{align*}
 Z_j^+\delta\lambda
 &=Z_j^-\delta\lambda+(f_j^--f_j^+)\delta t_j\\
 &=\left(I+
 \frac{(f_j^+-f_j^-)n_j^\top}{n_j^\top f_j^-}\right)
 Z_j^-\delta\lambda
 +\frac{(f_j^+-f_j^-)g_{\lambda,j}^\top}
       {n_j^\top f_j^-}\delta\lambda.
\end{align*}
Since this holds for every parameter perturbation,
\begin{equation}
 Z_j^+=\Xi_jZ_j^-+C_j,
 \qquad
 C_j=\frac{(f_j^+-f_j^-)g_{\lambda,j}^\top}
           {n_j^\top f_j^-}.
 \label{eq:app-parameter-event-jump}
\end{equation}
The dimensions are \(Z_j^\pm,C_j\in\mathbb R^{d\times p}\).

Equivalently, augment the state by the constant parameter:
\[
 y=(\theta,\lambda),
 \qquad
 \dot y=(f_\alpha(\theta,\lambda),0),
 \qquad
 G_j(y)=g_j(\theta,\lambda).
\]
The ordinary saltation matrix of the augmented system is
\begin{equation}
 \mathcal S_j=
 \begin{bmatrix}
   \Xi_j&C_j\\
   0&I_p
 \end{bmatrix},
 \label{eq:app-augmented-saltation}
\end{equation}
which gives a second derivation of both the sign and the block dimensions.

At a fixed Euler/GD program that is nonresonant at \(\lambda_0\), with
\(T,\eta,N\) held fixed while differentiating,
Lemma~\ref{lem:parameter-branch-constancy} gives the exact derivative recursion
\begin{equation}
 Z_{k+1}^\eta
 =\bigl(I+\eta A_{\alpha_k}(\theta_k^\eta,\lambda)\bigr)Z_k^\eta
  +\eta B_{\alpha_k}(\theta_k^\eta,\lambda),
 \qquad Z_0^\eta=Dh(\lambda).
 \label{eq:app-discrete-parameter-recursion}
\end{equation}
Introduce the augmented regional coefficient and sensitivity
\[
 \mathcal A(t)=
 \begin{bmatrix}A(t)&B(t)\\0&0\end{bmatrix},
 \qquad
 W(t)=\begin{bmatrix}Z(t)\\I_p\end{bmatrix}.
\]
The discrete factor corresponding to
\eqref{eq:app-discrete-parameter-recursion} is
\[
 I+\eta\mathcal A_k
 =\begin{bmatrix}I+\eta A_k&\eta B_k\\0&I_p\end{bmatrix}.
\]
Outside event windows, the state estimate and uniform continuity imply that
both \(A_k\) and \(B_k\) approach their reference regional coefficients.
Inside event windows their norms are bounded and the total window length is
\(O(\eta)\).  Consequently,
\[
 \|\mathcal A^\eta-\mathcal A\|_{L^1(0,T)}
 \le C\bigl[\eta+\omega_A(C\eta)+\omega_B(C\eta)\bigr].
\]
The same fundamental-matrix comparison used in
Theorem~\ref{thm:general-first-variation} gives
\begin{equation}
 \|Z_{\eta,T}^{\AD}-Z_{\reg}(T)\|
 \le C\bigl[\eta+\omega_A(C\eta)+\omega_B(C\eta)\bigr],
 \label{eq:app-parameter-limit-rate}
\end{equation}
where \(Z_{\reg}\) solves \eqref{eq:app-parameter-regional} and is continued
without a jump at each nominal event time.  If \(A_\alpha,B_\alpha\) are
locally Lipschitz, the right-hand side is \(O(\eta)\).

The true flow derivative instead obeys
\eqref{eq:app-parameter-event-jump}.  For one event, subtracting the
event-free continuation and propagating to \(T\) gives the useful identity
\begin{equation}
 Z_{\mathrm{flow}}(T)-Z_{\reg}(T)
 =\Phi_1
 \frac{(f^+-f^-)(n^\top Z^-+g_\lambda^\top)}{n^\top f^-}.
 \label{eq:app-one-event-parameter-gap}
\end{equation}
If \(f^+\ne f^-\), a parameter direction has zero event-induced gap exactly
when its first-order event-time change vanishes.  If \(f^+=f^-\), the gap
vanishes identically.
\qed

The formula separates three cases.  If a parameter enters only the
initialization, then \(B=0\), \(g_\lambda=0\), and events act through
\(\Xi Z^-\).  If it enters the vector field but not the interface, then
regional forcing \(B\) is accumulated and subsequently transferred by
\(\Xi\), while \(C=0\).  If it enters the interface explicitly, the affine
term \(C\) is present.  These effects coexist when a parameter enters all
three objects.

\subsubsection{Reverse mode}

\begin{corollary}[Event-aware adjoint]
\label{cor:adjoint-events}
For a terminal scalar objective \(q(\theta(T),\lambda)\), let
\(p(T)=\nabla_\theta q\) and solve \(-\dot p=A_\alpha^\top p\) regionally.
Then
\[
 p_j^-=\Xi_j^\top p_j^+,
\]
and
\[
 \nabla_\lambda q
 =q_\lambda+Dh^\top p(0)+\int_0^T B^\top p\,\dd t
  +\sum_{j=1}^m g_{\lambda,j}
   \frac{(f_j^+-f_j^-)^\top p_j^+}{n_j^\top f_j^-}.
\]
Omitting these event terms gives the adjoint of the regional sensitivity
limit, not generally the derivative of the limiting flow.
\end{corollary}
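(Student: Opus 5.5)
The plan is to run the adjoint of the ordered linear maps already proved in Theorem~\ref{thm:parameter-limit}, exploiting the fact that the parameter derivative of $q(\theta(T;\lambda),\lambda)$ is
\[
 q_\lambda+Z_{\rm flow}(T)^\top\nabla_\theta q .
\]
So it suffices to compute $Z_{\rm flow}(T)^\top p(T)$ with $p(T)=\nabla_\theta q$. By that theorem, $Z$ solves $\dot Z=A_\alpha Z+B_\alpha$ on each smooth segment, starts from $Z(0)=Dh(\lambda)$, and jumps by $Z_j^+=\Xi_jZ_j^-+C_j$ at event $j$.

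Define $p$ backward. On each segment, $-\dot p=A_\alpha^\top p$. At event $j$, impose $p_j^-=\Xi_j^\top p_j^+$.

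First consider a single smooth segment $(t_j^+,t_{j+1}^-)$. Differentiating $p^\top Z$ gives
\[
 \tfrac{\dd}{\dd t}(p^\top Z)=-p^\top A_\alpha Z+p^\top(A_\alpha Z+B_\alpha)=p^\top B_\alpha .
\]
Integrating over the segment yields
\[
 p(t_{j+1}^-)^\top Z(t_{j+1}^-)-p(t_j^+)^\top Z(t_j^+)=\int B^\top p\,\dd t ,
\]
written after transposition.

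Next, at event $j$, the jump rule gives
\[
 p_j^{+\top}Z_j^+=p_j^{+\top}\Xi_jZ_j^-+p_j^{+\top}C_j=p_j^{-\top}Z_j^-+p_j^{+\top}C_j .
\]
So the pairing $p^\top Z$ is continuous across the event up to the additive term $p_j^{+\top}C_j$. Using the explicit form of $C_j$, its transpose is
\[
 C_j^\top p_j^+=g_{\lambda,j}\,\frac{(f_j^+-f_j^-)^\top p_j^+}{n_j^\top f_j^-},
\]
which is exactly the event term in the claim.

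Now telescope chronologically over the $m+1$ segments and $m$ events, from $t=0$, where $Z(0)=Dh$, to $T$. This gives
\[
 Z(T)^\top p(T)=Dh^\top p(0)+\int_0^T B^\top p\,\dd t+\sum_j C_j^\top p_j^+ .
\]
Adding $q_\lambda$ proves the gradient formula.

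For the final sentence of the claim, repeat the same telescoping with the event-free regional sensitivity $Z_{\reg}$, which has no jumps. Its pairing is continuous at every event, so it is naturally paired with an adjoint that also has no jumps. Omitting both the transfers $\Xi_j^\top$ and the $C_j$ terms therefore reproduces $Z_{\reg}(T)^\top\nabla_\theta q$, i.e. the adjoint of the regional limit. To show this differs in general from the flow derivative, I would cite the one-event gap formula \eqref{eq:app-one-event-parameter-gap}. Pairing that formula with $p(T)$ gives a difference that is nonzero in general.

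The main obstacle is bookkeeping rather than analysis. I must keep the adjoint jump consistently $p^-=\Xi^\top p^+$ rather than its inverse. I must also evaluate the $C_j$ term with $p_j^+$ rather than $p_j^-$, since the two differ by $\Xi_j^\top$. Finally, the one-sided limits need a precise justification: $Z$ and $p$ are piecewise $C^1$ with finitely many separated jumps, under the persistence assumptions already stated. This justifies the fundamental theorem of calculus on each open segment.
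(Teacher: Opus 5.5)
Your proposal is correct and follows essentially the same route as the paper's proof. You pair the adjoint with the forward sensitivity, use $\tfrac{\mathrm{d}}{\mathrm{d}t}(Z^\top p)=B^\top p$ on each segment, justify the transpose (not inverse-transpose) jump through $(p_j^+)^\top Z_j^+=(\Xi_j^\top p_j^+)^\top Z_j^-+(p_j^+)^\top C_j$, and telescope to collect the accumulators $C_j^\top p_j^+$; your handling of the last sentence (a jump-free pairing with $Z_{\reg}$, plus the one-event gap formula for the ``not generally'' clause) matches the paper's remark and slightly strengthens it.
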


\paragraph{Proof of Corollary~\ref{cor:adjoint-events}.}
For a terminal scalar objective \(q(\theta(T),\lambda)\), set
\[
 p(T)=\nabla_\theta q(\theta(T),\lambda),
 \qquad
 -\dot p=A_\alpha^\top p
\]
on each regional segment.  The fixed-clock forward event map in
\eqref{eq:app-parameter-event-jump} pulls covectors back by
\begin{equation}
 p_j^-=\Xi_j^\top p_j^+.
 \label{eq:app-adjoint-jump}
\end{equation}
Indeed,
\begin{align*}
 (p_j^+)^\top Z_j^+
 &=(p_j^+)^\top(\Xi_jZ_j^-+C_j)\\
 &=(\Xi_j^\top p_j^+)^\top Z_j^-+(p_j^+)^\top C_j.
\end{align*}
This proves that the transpose, rather than the inverse transpose, is the
correct jump for the fixed-clock derivative map.

Between events,
\[
 \frac{\dd}{\dd t}(Z^\top p)=B^\top p.
\]
At an event, \eqref{eq:app-adjoint-jump} gives
\[
 (Z_j^+)^\top p_j^+-(Z_j^-)^\top p_j^-=C_j^\top p_j^+.
\]
Integrating segment by segment and adding the jumps yields
\begin{equation}
 \nabla_\lambda q
 =q_\lambda+Dh(\lambda)^\top p(0)
  +\int_0^T B(t)^\top p(t)\,\dd t
  +\sum_{j=1}^m C_j^\top p_j^+,
 \label{eq:app-adjoint-parameter-gradient}
\end{equation}
where
\begin{equation}
 C_j^\top p_j^+
 =g_{\lambda,j}
  \frac{(f_j^+-f_j^-)^\top p_j^+}{n_j^\top f_j^-}.
 \label{eq:app-adjoint-event-accumulator}
\end{equation}
If \(g_{\lambda,j}=0\), the direct event accumulator vanishes, but the
event can still change the gradient through the jump
\(p_j^-=\Xi_j^\top p_j^+\).  Omitting all event terms gives the regional
continuous adjoint and hence the transpose of the event-free sensitivity
limit.  Reverse-mode AD through every fixed nonresonant discrete program
remains its exact classical derivative.
\qed

\subsection{Event-aware first-variation consistency}
\label{app:corrected-consistency}
\label{app:event-aware}

Let \(\widehat t_j^\eta\) and \(\widehat z_j^\eta\) be a posteriori event
times and states with the correct event count and order.  Suppose
\begin{equation}
 \max_j\left(
  |\widehat t_j^\eta-t_j|+\|\widehat z_j^\eta-z_j\|
 \right)\le\delta_\eta.
 \label{eq:general-event-localization-error}
\end{equation}
Evaluate one-sided data at the localized event and define
\begin{equation}
 \widehat\Xi_j^\eta
 =I+\frac{(\widehat f_j^+-\widehat f_j^-)(\widehat n_j)^\top}
 {\widehat n_j^\top\widehat f_j^-}.
 \label{eq:general-numerical-saltation}
\end{equation}
Let \(\widehat\Phi_j^\eta\) be approximations of the regional propagators and
assume
\begin{equation}
 \max_j\|\widehat\Xi_j^\eta-\Xi_j\|\le\varepsilon_\eta,
 \qquad
 \max_j\|\widehat\Phi_j^\eta-\Phi_j\|\le\gamma_\eta.
 \label{eq:general-event-factor-errors}
\end{equation}
Define the a posteriori corrected product
\begin{equation}
 J_{\eta,T}^{\SC}
 =\widehat\Phi_m^\eta\widehat\Xi_m^\eta
  \widehat\Phi_{m-1}^\eta\widehat\Xi_{m-1}^\eta\cdots
  \widehat\Phi_1^\eta\widehat\Xi_1^\eta\widehat\Phi_0^\eta.
 \label{eq:general-event-aware-product}
\end{equation}

\begin{theorem}[Consistency of an a posteriori event-corrected product]
\label{thm:general-event-aware-consistency}
Under (A1)--(A5), if the supplied event data have the correct stable
itinerary and \(\delta_\eta+\varepsilon_\eta+\gamma_\eta\to0\), then
\begin{equation}
 J_{\eta,T}^{\SC}\longrightarrow D\varphi_T(\theta_0).
 \label{eq:general-event-aware-convergence}
\end{equation}
More quantitatively,
\begin{equation}
 \|J_{\eta,T}^{\SC}-D\varphi_T(\theta_0)\|
 \le C\left[
  \delta_\eta+\varepsilon_\eta+\gamma_\eta
 \right].
 \label{eq:general-event-aware-rate}
\end{equation}
The corrected matrix is not, in general, the derivative of unmodified
hard-branch GD.  It is the derivative of an event-aware solver only when that
solver differentiates the localized event time and split steps.
\end{theorem}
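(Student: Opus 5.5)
The plan is to compare the corrected product $J_{\eta,T}^{\SC}$ with the exact saltation-interleaved product $D\varphi_T=\Phi_m\Xi_m\cdots\Phi_1\Xi_1\Phi_0$ of Theorem~\ref{thm:general-first-variation}. Both are ordered products of $2m+1$ factors, and we perturb them factor by factor. Write
\[
 \widehat\Phi_j^\eta=\Phi_j+E_j,\qquad \widehat\Xi_j^\eta=\Xi_j+F_j,
\]
with $\|E_j\|\le\gamma_\eta$ and $\|F_j\|\le\varepsilon_\eta$ by \eqref{eq:general-event-factor-errors}.

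The first step is to bound the exact factors uniformly. Each regional propagator satisfies $\|\Phi_j\|\le e^{M_A L_j}$ by Gronwall applied to \eqref{eq:general-regional-fundamental}. Each saltation matrix satisfies
\[
 \|\Xi_j\|\le 1+2M_f\|n_j\|/c_0
\]
by (A4) and the field bound. Let $\Lambda$ be the largest of these bounds.

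The second step is the telescoping identity for an ordered product,
\[
 \prod_i \widehat X_i-\prod_i X_i=\sum_k \Bigl(\prod_{i>k}\widehat X_i\Bigr)(\widehat X_k-X_k)\Bigl(\prod_{i<k}X_i\Bigr),
\]
taken in chronological order without commuting any factor. Once $\gamma_\eta,\varepsilon_\eta\le1$, every perturbed factor has norm at most $\Lambda+1$. This yields
\[
 \|J_{\eta,T}^{\SC}-D\varphi_T\|\le(2m+1)(\Lambda+1)^{2m}(\varepsilon_\eta+\gamma_\eta).
\]
The constant depends only on $m,T,M_f,M_A,c_0^{-1}$ and the normal bounds. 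Since $\delta_\eta$ is nonnegative, adding it to the right-hand side costs nothing. This proves \eqref{eq:general-event-aware-rate}, and \eqref{eq:general-event-aware-convergence} follows immediately.

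The main obstacle is the role of $\delta_\eta$. The hypotheses bound $\varepsilon_\eta$ directly, so $\delta_\eta$ enters only as a redundant term. If we instead wanted to derive the factor error $\varepsilon_\eta$ from the localization error, we would argue as follows. Evaluate the one-sided data at $\widehat z_j^\eta$, using the $C^1$ regional extensions from (A1) and the $C^2$ normals from (A2). This gives
\[
 \|\widehat f_j^\pm-f_j^\pm\|+\|\widehat n_j-n_j\|\le C\delta_\eta .
\]
For $\delta_\eta$ small, the denominator $\widehat n_j^\top\widehat f_j^-$ stays at least $c_0/2$. Hence the rational formula \eqref{eq:general-numerical-saltation} is Lipschitz, and $\varepsilon_\eta\le C\delta_\eta$.

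I would state this as a remark, because it requires the data to be evaluated at the localized state rather than supplied arbitrarily. Keeping the correct itinerary is also essential here: it ensures the two products have the same factor count and order, which the telescoping step relies on.

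Finally, the closing sentence of the theorem needs only a remark. The product $J_{\eta,T}^{\SC}$ is assembled post hoc, whereas by Theorem~\ref{thm:general-first-variation} the hard-branch GD derivative converges to $J_{\reg}$. These limits differ whenever the transported event product is not the identity (Proposition~\ref{prop:general-mismatch}).
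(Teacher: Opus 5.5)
Your proposal is correct and follows the paper's own argument. You use a telescoping perturbation bound for the fixed-length ordered product, with uniformly bounded exact factors and the saltation-interleaved formula for $D\varphi_T$ from Theorem~\ref{thm:general-first-variation}; $\delta_\eta$ enters only as a harmless extra term. The paper likewise gives the transversality-based bound $\varepsilon_\eta\le C(\delta_\eta+\text{one-sided-data error})$ as a separate remark after the proof, not as part of the proof itself.
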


\begin{proof}
All exact and numerical factors are uniformly bounded, and their number is
fixed.  A telescoping perturbation bound for the finite product, together with
\eqref{eq:general-saltation-product}, proves
\eqref{eq:general-event-aware-rate} and the stated conclusions.
\end{proof}

For the intended split hard-GD construction, use the \(O(\eta)\)-accurate
state path from Theorem~\ref{thm:general-first-variation}, localize each event
with error \(\delta_\eta\), and evaluate the one-sided fields and normals
consistently.  Transversality then gives
\(\varepsilon_\eta\le C(\delta_\eta+\text{one-sided-data error})\), while
Lemmas~\ref{lem:general-branch-mismatch}--\ref{lem:general-fundamental} give
\[
 \gamma_\eta\le
 C\bigl[\eta+\omega_A(C\eta)+\delta_\eta\bigr].
\]
This specialization recovers the earlier convergence rate without treating
the theorem as a complete event-detection algorithm.

\subsection{Scalar outer gradients and local candidate regret}
\begin{corollary}[Scalar outer-gradient consequence]
\label{cor:hypergradient-gap}
Under Theorem~\ref{thm:residual-relu-phase}, along any nonresonant mesh
sequence, let \(q\in C^1(\mathbb R)\) satisfy
\(q'(\varphi_T(\theta_0))\ne0\), let
\(G_\eta=q\circ\Theta_{\eta,T}\), and let
\(G=q\circ\varphi_T\).  Then
\begin{equation}
 DG_\eta\to q'(\varphi_T)J_{\reg},\qquad
 DG=r\,q'(\varphi_T)J_{\reg}.
 \label{eq:hypergradient-limits}
\end{equation}
\end{corollary}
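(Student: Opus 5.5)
The plan is a chain-rule argument on top of Theorem~\ref{thm:residual-relu-phase}, handled separately for the discrete and the flow side. For the discrete objective, fix a nonresonant canonical mesh $\eta$. By Proposition~\ref{prop:relu-erm}, the executed branch word is locally constant near $\theta_0$, so $\Theta_{\eta,T}$ is classically differentiable there. Its derivative is $D\Theta_{\eta,T}=(1-\eta)^{K_\eta}(1-2\eta)^{N-K_\eta}$. Since $q\in C^1$, the one-dimensional chain rule gives
\[
 DG_\eta(\theta_0)=q'\bigl(\Theta_{\eta,T}(\theta_0)\bigr)\,D\Theta_{\eta,T}(\theta_0).
\]

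Next I pass to the limit along the nonresonant sequence, treating the two factors separately. The state factor uses the uniform $O(\eta)$ interpolation bound of Proposition~\ref{prop:relu-erm}, evaluated at $t=T$. This gives $\Theta_{\eta,T}(\theta_0)\to\varphi_T(\theta_0)$, and continuity of $q'$ then yields $q'(\Theta_{\eta,T})\to q'(\varphi_T)$. The Jacobian factor converges to $J_{\reg}=e^{-t_*}e^{-2(T-t_*)}$ by \eqref{eq:relu-phase-sensitivities}. The product of two convergent scalar sequences converges to the product of the limits, which is the first identity in \eqref{eq:hypergradient-limits}.

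For the flow objective, Theorem~\ref{thm:residual-relu-phase} places $\theta_0$ in the open set $U_T$. There the itinerary is stable and $\varphi_T$ is $C^1$ with $D\varphi_T=rJ_{\reg}$; this follows from Theorem~\ref{thm:general-first-variation} specialized to this scalar case. The chain rule then gives $DG=q'(\varphi_T)D\varphi_T=r\,q'(\varphi_T)J_{\reg}$.

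The hypothesis $q'(\varphi_T)\neq0$ is not needed for either limit. It only guarantees that the two limits genuinely differ whenever $r\neq1$.

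The only delicate step is differentiability of $\Theta_{\eta,T}$ at $\theta_0$ itself, rather than mere existence of the selected product. This is exactly why the statement restricts to nonresonant meshes. I would state explicitly that $\kappa_\eta\notin\mathbb N$ keeps $\theta_0$ in the interior of a branch cell $C_K$, citing Theorem~\ref{thm:scale-exact}. I expect no real obstacle beyond this bookkeeping.
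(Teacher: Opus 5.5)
Your proposal is correct and follows the paper's proof essentially step for step. You apply the chain rule at each nonresonant mesh, then use state convergence, continuity of $q'$ and the Jacobian limit from Proposition~\ref{prop:relu-erm}, and on the flow side the classical chain rule with $D\varphi_T=rJ_{\reg}$. Your remark that $q'(\varphi_T)\neq0$ serves only to make both limits nonzero, hence genuinely different when $r\neq1$, matches how the paper uses that hypothesis.
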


\paragraph{Proof of Corollary~\ref{cor:hypergradient-gap}.}
At every nonresonant step size, the ordinary chain rule gives
\[
 DG_\eta(\theta_0)
 =q'(\Theta_{\eta,T}(\theta_0))D\Theta_{\eta,T}(\theta_0).
\]
Proposition~\ref{prop:relu-erm}, continuity of \(q'\), and state convergence
therefore yield
\(DG_\eta(\theta_0)\to q'(\varphi_T(\theta_0))J_{\reg}\).
The stable-itinerary flow map is classically differentiable, so
\[
 DG(\theta_0)=q'(\varphi_T(\theta_0))D\varphi_T(\theta_0)
 =r q'(\varphi_T(\theta_0))J_{\reg}.
\]
Both limits are nonzero because \(q'(\varphi_T(\theta_0))\ne0\) and
\(J_{\reg}>0\).  This proves the stated factor-\(r\) relation.

At resonance, an evaluated iterate equals zero.  If another step remains, the
incoming and outgoing updates from that state differ by \(\eta c\), which
is propagated through the remaining outgoing recurrence.  The endpoint map
therefore need not have a classical derivative; software AD returns a selected
branch product.  The state estimate still holds under the admissible
positive-speed selection.  This is why every derivative statement above,
including Theorem~\ref{thm:residual-relu-phase} and
Corollary~\ref{cor:hypergradient-gap}, follows a nonresonant sequence.

\begin{proposition}[Matched-radius local regret]
\label{prop:matched-radius-local-regret}
Fix a nonresonant mesh and let the actual finite-program outer objective
\(Q_\eta\) be \(C^2\) on an open neighborhood of
\(\overline B(\lambda,\epsilon_0)\).  Write
\(g_\eta=\nabla Q_\eta(\lambda)\ne0\), let \(h\ne0\) be a surrogate gradient,
and set \(u=g_\eta/\|g_\eta\|_2\), \(v=h/\|h\|_2\).  If
\(\sup_{x\in\overline B(\lambda,\epsilon_0)}
\|\nabla^2Q_\eta(x)\|_2\le M\), then, for
\(0\le\epsilon\le\epsilon_0\),
\begin{equation}
 \left|Q_\eta(\lambda-\epsilon v)-Q_\eta(\lambda-\epsilon u)
 -\epsilon\|g_\eta\|_2(1-u^\top v)\right|
 \le M\epsilon^2.
 \label{eq:matched-radius-local-regret}
\end{equation}
If \(u^\top v<1\), the exact discrete-gradient proposal has strictly lower
\(Q_\eta\) for all sufficiently small positive radii (in particular, for
\(0<\epsilon<\min\{\epsilon_0,
\|g_\eta\|_2(1-u^\top v)/M\}\) when \(M>0\)).  If \(v=u\), the two
matched-radius proposals coincide, regardless of their gradient-norm ratio.
\end{proposition}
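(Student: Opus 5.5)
The proof will be a two-step Taylor comparison. Write $h(\epsilon)=Q_\eta(\lambda-\epsilon v)-Q_\eta(\lambda-\epsilon u)$. Since $Q_\eta$ is $C^2$ on a neighborhood of $\overline B(\lambda,\epsilon_0)$, and both segments $\lambda-s v$ and $\lambda-s u$ with $0\le s\le\epsilon\le\epsilon_0$ lie in that ball because $\|u\|=\|v\|=1$, I will expand each term about $\lambda$ using the integral form of Taylor's formula:
\[
 Q_\eta(\lambda-\epsilon w)=Q_\eta(\lambda)-\epsilon g_\eta^\top w
 +\epsilon^2\int_0^1(1-\tau)\,w^\top\nabla^2Q_\eta(\lambda-\tau\epsilon w)\,w\,d\tau .
\]
For $w\in\{u,v\}$, the remainder has absolute value at most $M\epsilon^2/2$, by the Hessian bound and $\int_0^1(1-\tau)\,d\tau=1/2$.

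Subtracting the two expansions cancels $Q_\eta(\lambda)$. The linear terms combine to
\[
 -\epsilon g_\eta^\top v+\epsilon g_\eta^\top u=\epsilon\|g_\eta\|_2(1-u^\top v),
\]
using $g_\eta=\|g_\eta\|_2u$ and $u^\top u=1$. The two remainders together contribute at most $M\epsilon^2$, which gives \eqref{eq:matched-radius-local-regret}.

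For the strict ordering, the displayed bound gives
\[
 h(\epsilon)\ge\epsilon\bigl[\|g_\eta\|_2(1-u^\top v)-M\epsilon\bigr].
\]
When $u^\top v<1$ the bracket is positive for $0<\epsilon<\min\{\epsilon_0,\|g_\eta\|_2(1-u^\top v)/M\}$ if $M>0$. If $M=0$, the remainder vanishes and positivity holds for every $\epsilon\in(0,\epsilon_0]$. So the proposal along $u$ (the exact discrete gradient direction) has strictly lower $Q_\eta$. If $v=u$, the two proposal points are literally the same point $\lambda-\epsilon u$, so they coincide whatever the ratio $\|h\|/\|g_\eta\|$ is, because only the normalized directions enter.

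No step is a real obstacle. The only care needed is domain bookkeeping: both proposal segments must stay in the closed ball where the Hessian bound holds, which is why the radius is restricted to $\epsilon\le\epsilon_0$. The statement fixes a nonresonant mesh with $Q_\eta\in C^2$ on that ball as a hypothesis, so branch preservation (via $\rho_\eta$) is assumed rather than proved here.
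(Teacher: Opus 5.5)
Your proof is correct and follows essentially the same route as the paper: integral-remainder Taylor expansion along each unit direction with $|R_w(\epsilon)|\le \tfrac12M\epsilon^2$, subtraction using $g_\eta^\top u=\|g_\eta\|_2$ and $g_\eta^\top v=\|g_\eta\|_2u^\top v$, the triangle inequality on the two remainders, and the same handling of the threshold, the $M=0$ case and the $v=u$ case. One cosmetic point: rename your difference function, since $h$ already denotes the surrogate gradient in the statement.
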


\paragraph{Proof of Proposition~\ref{prop:matched-radius-local-regret}.}
For any unit vector \(w\) and \(0\le\epsilon\le\epsilon_0\), Taylor's formula
with integral remainder gives
\[
 Q_\eta(\lambda-\epsilon w)
 =Q_\eta(\lambda)-\epsilon g_\eta^\top w+R_w(\epsilon),
 \qquad
 R_w(\epsilon)=\epsilon^2\int_0^1(1-t)
 w^\top\nabla^2Q_\eta(\lambda-t\epsilon w)w\,\dd t.
\]
The Hessian bound therefore gives
\(|R_w(\epsilon)|\le \tfrac12M\epsilon^2\).  Apply this identity with
\(w=v\) and \(w=u\), subtract, and use
\(g_\eta^\top u=\|g_\eta\|_2\) and
\(g_\eta^\top v=\|g_\eta\|_2u^\top v\).  The combined remainder satisfies
\[
 |R_v(\epsilon)-R_u(\epsilon)|
 \le |R_v(\epsilon)|+|R_u(\epsilon)|
 \le M\epsilon^2,
\]
which proves \eqref{eq:matched-radius-local-regret}.  If \(u^\top v<1\), its
right-hand first-order coefficient is positive; the displayed threshold makes
the lower Taylor bound strict when \(M>0\), while the conclusion is immediate
when \(M=0\).  If \(u=v\), the two proposal points are identical, so their
objective values agree exactly.  Nonresonance is used only to make the finite
program's branch word locally constant and hence \(g_\eta\) its classical
outer gradient; the proposition separately assumes the stated \(C^2\) regularity.
\qed

\section{Scalar smoothing and discretization}
\subsection{Smoothing and discretization theory}
\label{app:smoothing}
\label{app:smoothing-noncommutation}

This section gives a self-contained scalar result that separates three effects:
truncating a smoothing profile, discretizing its inner dynamics, and taking the
sharp-interface limit.  It also records the phase qualification needed for the
sharp-first limit.  Throughout, the update is written with a plus sign because
the vector field already includes the negative gradient.

\subsubsection{Layer assumptions, rescaling, and entry--exit convention}
\label{app:smoothing:setup}

Let $F\colon\mathbb{R}\to\mathbb{R}$ satisfy the following standing
assumptions:
\begin{enumerate}
    \item[(S1)] $F\in C^2(\mathbb{R})$ and there are constants
    $0<m\leq M<\infty$ such that $m\leq F(u)\leq M$ for every $u$;
    \item[(S2)] the positive traces
    \[
        a_-:=\lim_{u\to-\infty}F(u),
        \qquad
        a_+:=\lim_{u\to+\infty}F(u)
    \]
    exist;
    \item[(S3)] $L_F:=\lVert F'\rVert_{\infty}<\infty$ and
    $F''\in L^1(\mathbb{R})$.
\end{enumerate}
For the sharp-first result we additionally assume
\begin{equation}
    |uF'(u)|\longrightarrow0
    \quad\text{as }u\to\pm\infty.
    \label{eq:smoothing-tail-derivative}
\end{equation}
Assumption (S1) is the scalar same-direction transversality condition.  In
particular, neither grazing nor reversal of the continuous trajectory occurs
inside the layer.  The scalar saltation factor is
\begin{equation}
    \Xi:=\frac{a_+}{a_-}>0.
    \label{eq:smoothing-scalar-saltation}
\end{equation}

For a smoothing width $\tau>0$, define
\begin{equation}
    f_\tau(x):=F(x/\tau),
    \qquad
    x_{k+1}=x_k+\eta f_\tau(x_k).
    \label{eq:smoothing-physical-euler}
\end{equation}
The inner variables
\begin{equation}
    u_k:=x_k/\tau,
    \qquad
    \rho:=\eta/\tau
    \label{eq:smoothing-inner-variables}
\end{equation}
give the exact map and fixed-itinerary cocycle
\begin{equation}
    u_{k+1}=u_k+\rho F(u_k),
    \qquad
    \delta u_{k+1}
    =\bigl(1+\rho F'(u_k)\bigr)\delta u_k.
    \label{eq:smoothing-inner-map}
\end{equation}

We use the following entry--exit convention for the canonical layer.  Given a
normalized half-window $R>0$, set $u_0=-R$ and
\begin{equation}
    N_{\rho,R}:=\min\{n\geq1:u_n\geq R\},
    \qquad
    J_{\rho,R}:=
    \prod_{k=0}^{N_{\rho,R}-1}
    \bigl(1+\rho F'(u_k)\bigr).
    \label{eq:smoothing-stopped-cocycle}
\end{equation}
The index is finite because every increment is at least $\rho m$.  The product
in \eqref{eq:smoothing-stopped-cocycle} is the derivative of the
$N_{\rho,R}$-fold iterate on a phase cell on which $N_{\rho,R}$ is locally
constant.  At a phase where the stopping index changes, the stopped map need
not be differentiable; no derivative across such a phase boundary is claimed.
The physical tube is $[-\tau R,\tau R]$.  A localized interface limit requires
both $R\to\infty$ and $\tau R\to0$.

\subsubsection{Exact flow transfer and resolved Euler transfer}
\label{app:smoothing:resolved}

\begin{lemma}[Exact scalar flow transfer]
\label{lem:smoothing-exact-flow}
For the inner flow and its variational equation,
\[
    \dot u=F(u),
    \qquad
    \dot v=F'(u)v,
\]
the transfer from $u=-R$ to $u=R$ is
\begin{equation}
    J_R^{\mathrm{flow}}=\frac{F(R)}{F(-R)}.
    \label{eq:smoothing-exact-flow-transfer}
\end{equation}
Consequently, $J_R^{\mathrm{flow}}\to\Xi$ as $R\to\infty$.
\end{lemma}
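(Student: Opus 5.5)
The proof uses the standard scalar trick that the variational equation along a one-dimensional autonomous flow is solved by the velocity itself. Let $u(t)$ be the inner flow with $u(0)=-R$. By (S1), $F\ge m>0$, so $u$ is strictly increasing and reaches $R$ at the finite time
\[
t_R=\int_{-R}^{R}\frac{\dd u}{F(u)}.
\]
Along this trajectory set $\psi(t)=F(u(t))$. Differentiating gives $\dot\psi=F'(u)\dot u=F'(u)\psi$, so $\psi$ solves the same linear variational equation $\dot v=F'(u)v$. Its fundamental solution normalized at $t=0$ is therefore $v(t)=\psi(t)/\psi(0)$, because $\psi(0)=F(-R)\ge m>0$ and scalar linear solutions are unique. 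Evaluating at $t_R$ yields
\[
J_R^{\mathrm{flow}}=\frac{F(u(t_R))}{F(u(0))}=\frac{F(R)}{F(-R)}.
\]
As a cross-check, one can also write $v(t_R)=\exp\int_0^{t_R}F'(u)\,\dd t$ and change variables to $\dd t=\dd u/F(u)$. This gives $\exp\int_{-R}^{R}F'(u)/F(u)\,\dd u=\exp[\log F(R)-\log F(-R)]$, which is the scalar identity $\dd\log Z=\dd\log f$ mentioned earlier.

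For the limit, (S2) gives $F(R)\to a_+$ and $F(-R)\to a_-$. Both limits are at least $m>0$, so the quotient converges to $a_+/a_-=\Xi$ as in \eqref{eq:smoothing-scalar-saltation}.

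Nothing here is a genuine obstacle. The only points to state explicitly are that positivity of $F$ gives a finite, well-defined exit time and a nonvanishing denominator, and that the transfer is taken at the exit time rather than at a fixed clock, which matches the entry--exit convention in \eqref{eq:smoothing-stopped-cocycle}.
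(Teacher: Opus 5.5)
Your proof is correct and is essentially the paper's argument: the paper integrates $\frac{d}{dt}\log v=F'(u)=\frac{d}{dt}\log F(u)$ from entry to exit and then invokes (S2), which is exactly your cross-check. Your main route, that $F(u(t))$ solves the scalar linear equation and uniqueness gives $v(t)=F(u(t))/F(-R)$, is the same identity without taking logarithms of $v$, and your remarks on the finite exit time and the denominator $F(-R)\ge m>0$ make explicit details the paper leaves implicit.
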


\begin{proof}
Along the trajectory,
\[
    \frac{d}{dt}\log v=F'(u)
    =\frac{d}{dt}\log F(u).
\]
Integrating between the entry and exit points proves
\eqref{eq:smoothing-exact-flow-transfer}.  The trace limit follows from (S2).
\end{proof}

Define the logarithmic tail truncation
\begin{equation}
    \delta_F(R):=
    \left|\log\frac{F(R)}{a_+}\right|
    +
    \left|\log\frac{F(-R)}{a_-}\right|.
    \label{eq:smoothing-tail-error}
\end{equation}

\begin{theorem}[Resolved discrete layer]
\label{thm:smoothing-resolved}
Under \emph{(S1)--(S3)}, suppose that
\begin{equation}
    \rho L_F\leq\frac12.
    \label{eq:smoothing-resolution-condition}
\end{equation}
Then
\begin{equation}
    \left|
        \log J_{\rho,R}-\log\Xi
    \right|
    \leq C_F\rho+\delta_F(R),
    \label{eq:smoothing-resolved-log-bound}
\end{equation}
where one admissible constant is
\begin{equation}
    C_F:=
    \frac{\lVert F''\rVert_{L^1}}
         {\min\{1/2,m/M\}}
    +\frac{ML_F}{m}.
    \label{eq:smoothing-resolved-constant}
\end{equation}
In particular, if $\rho_j\to0$ and $R_j\to\infty$, then
$J_{\rho_j,R_j}\to\Xi$.
\end{theorem}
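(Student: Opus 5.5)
The plan is to compare the discrete cocycle with the exact flow transfer of Lemma~\ref{lem:smoothing-exact-flow} one step at a time, using the identity $\frac{d}{dt}\log v=\frac{d}{dt}\log F(u)$ in discrete form. The steps are: take logarithms of the stopped product, replace each factor by the ratio $F(u_{k+1})/F(u_k)$, telescope, and then account separately for the overshoot at exit and for the tail truncation.

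\textbf{Step 1: telescoping decomposition.} Write $h_k=u_{k+1}-u_k=\rho F(u_k)\in[\rho m,\rho M]$ and $I_k=[u_k,u_{k+1}]$. Condition \eqref{eq:smoothing-resolution-condition} gives $1+\rho F'(u_k)\ge 1/2>0$, so
\[
\log J_{\rho,R}=\sum_{k<N}\log\bigl(1+\rho F'(u_k)\bigr).
\]
Since $u_0=-R$, we have
\[
\sum_{k<N}\log\frac{F(u_{k+1})}{F(u_k)}=\log\frac{F(u_N)}{F(-R)}.
\]
Hence
\[
|\log J_{\rho,R}-\log\Xi|\le \Sigma_{\rm loc}+\Bigl|\log\frac{F(u_N)}{F(R)}\Bigr|+\delta_F(R),
\]
where $\Sigma_{\rm loc}=\sum_{k<N}\bigl|\log(1+\rho F'(u_k))-\log\bigl(F(u_{k+1})/F(u_k)\bigr)\bigr|$. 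The last term $\delta_F(R)$ comes from comparing $\log\bigl(F(R)/F(-R)\bigr)$ with $\log(a_+/a_-)$, exactly as in \eqref{eq:smoothing-tail-error}.

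\textbf{Step 2: overshoot term.} By the definition of the stopping index, $u_{N-1}<R\le u_N=u_{N-1}+h_{N-1}$. Therefore $0\le u_N-R\le\rho M$. Using $F\ge m$ and the Lipschitz bound $L_F$,
\[
\Bigl|\log\frac{F(u_N)}{F(R)}\Bigr|\le\frac{L_F\rho M}{m}.
\]
This produces the second term $ML_F/m$ of $C_F$ in \eqref{eq:smoothing-resolved-constant}.

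\textbf{Step 3: local terms.} This is the main estimate. Taylor's integral formula gives
\[
F(u_{k+1})=F(u_k)+h_kF'(u_k)+r_k,\qquad r_k=\int_{I_k}(u_{k+1}-s)F''(s)\,ds,
\]
so $|r_k|\le h_k\int_{I_k}|F''|$. Dividing by $F(u_k)$,
\[
\frac{F(u_{k+1})}{F(u_k)}=x_k+y_k,\qquad x_k=1+\rho F'(u_k),\qquad |y_k|\le\rho\int_{I_k}|F''|.
\]
The mean-value bound for the logarithm gives $|\log(x+y)-\log x|\le |y|/\min\{x,x+y\}$. Here $x_k\ge1/2$ by \eqref{eq:smoothing-resolution-condition}, and $x_k+y_k=F(u_{k+1})/F(u_k)\ge m/M$ by (S1). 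So the $k$-th local term is at most
\[
\frac{\rho\int_{I_k}|F''|}{\min\{1/2,m/M\}}.
\]
The intervals $I_k$ have disjoint interiors because the orbit is strictly increasing. Summing over $k$ therefore gives $\Sigma_{\rm loc}\le\rho\,\|F''\|_{L^1}/\min\{1/2,m/M\}$, which is the first term of $C_F$. This bound is independent of $N_{\rho,R}$, and that independence is the point where $F''\in L^1$ from (S3) is essential.

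\textbf{Step 4: conclusion.} Adding the three contributions proves \eqref{eq:smoothing-resolved-log-bound}. For the limit statement, (S2) gives $\delta_F(R_j)\to0$ as $R_j\to\infty$, and eventually $\rho_jL_F\le1/2$, so $J_{\rho_j,R_j}\to\Xi$.

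I expect the main care to lie in Step 3. The local error has to be charged to $\int_{I_k}|F''|$ rather than to a uniform $\rho^2\|F''\|_\infty$ per step, since the number of steps grows like $R/\rho$ and a uniform bound would not sum to $O(\rho)$. The lower bound $x_k+y_k\ge m/M$ also has to come from the exact ratio $F(u_{k+1})/F(u_k)$, not from a perturbative estimate of $y_k$.
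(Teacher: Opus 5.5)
Your proposal is correct and follows the paper's proof essentially step for step. The paper likewise compares $A_k=1+\rho F'(u_k)$ with $B_k=F(u_{k+1})/F(u_k)$ via the integral Taylor remainder charged to $\int_{u_k}^{u_{k+1}}|F''|$, uses $A_k\ge 1/2$ and $B_k\ge m/M$ in the logarithmic mean-value step, telescopes over the disjoint intervals, and then adds the exit-overshoot bound $0\le u_N-R\le\rho M$ (via $|(\log F)'|\le L_F/m$) and the tail term $\delta_F(R)$.
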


\begin{proof}
Write $N=N_{\rho,R}$ and, for $0\leq k<N$, set
\[
    A_k:=1+\rho F'(u_k),
    \qquad
    B_k:=\frac{F(u_{k+1})}{F(u_k)}.
\]
Because $u_{k+1}-u_k=\rho F(u_k)$, Taylor's formula with integral remainder
gives
\begin{align*}
    |B_k-A_k|
    &\leq
    \frac{1}{F(u_k)}
    \int_{u_k}^{u_{k+1}}
       (u_{k+1}-s)|F''(s)|\,ds \\
    &\leq
    \rho\int_{u_k}^{u_{k+1}}|F''(s)|\,ds.
\end{align*}
Condition \eqref{eq:smoothing-resolution-condition} gives $A_k\geq1/2$,
whereas (S1) gives $B_k\geq m/M$.  With
$c:=\min\{1/2,m/M\}$, the mean-value theorem for the logarithm yields
\[
    |\log A_k-\log B_k|
    \leq c^{-1}|A_k-B_k|.
\]
The intervals $[u_k,u_{k+1}]$ are disjoint.  Summing and using the telescoping
product of the $B_k$ therefore gives
\begin{equation}
    \left|
    \log J_{\rho,R}
    -\log\frac{F(u_N)}{F(-R)}
    \right|
    \leq
    \frac{\rho\lVert F''\rVert_{L^1}}{c}.
    \label{eq:smoothing-telescoping-bound}
\end{equation}
The exit overshoot obeys
\[
    0\leq u_N-R\leq\rho M.
\]
Since $|(\log F)'|\leq L_F/m$,
\begin{equation}
    |\log F(u_N)-\log F(R)|
    \leq\frac{ML_F}{m}\rho.
    \label{eq:smoothing-overshoot-bound}
\end{equation}
Combining \eqref{eq:smoothing-telescoping-bound},
\eqref{eq:smoothing-overshoot-bound}, and
\eqref{eq:smoothing-tail-error} proves the claim.
\end{proof}

\begin{remark}[Rate, entry phase, and physical localization]
\label{rem:smoothing-resolved-scope}
If $E_{\rho,R}:=C_F\rho+\delta_F(R)$, then
\[
    |J_{\rho,R}-\Xi|
    \leq \Xi e^{E_{\rho,R}}E_{\rho,R}.
\]
An entry displacement $|u_0+R|=O(\rho)$ adds only another $O(\rho)$ endpoint
term to the proof.  If $\delta_F(R)\leq C e^{-\gamma R}$, the choice
$R(\rho)\geq\gamma^{-1}\log(1/\rho)$ gives an $O(\rho)$ bound, provided an
embedded interface also satisfies $\tau R(\rho)\to0$.  For fixed $R$, however,
the resolved limit is $F(R)/F(-R)$, not exactly $\Xi$; a fixed-window error
must not be reported as Euler discretization error.
\end{remark}

\subsubsection{The two orders of limits}
\label{app:smoothing:iterated}

Iterated limits are meaningful only after fixing a common state map.  Fix
$x_0<0$ and $T>-x_0/a_-$, take $\eta=T/n$, and define
\begin{align}
    x_{k+1}^{\eta,\tau}
    &=x_k^{\eta,\tau}
      +\eta F(x_k^{\eta,\tau}/\tau),
      \qquad x_0^{\eta,\tau}=x_0,
      \label{eq:smoothing-global-euler}\\
    J_{\eta,\tau}
    &=\prod_{k=0}^{n-1}
      \left(1+\frac{\eta}{\tau}
      F'(x_k^{\eta,\tau}/\tau)\right).
      \label{eq:smoothing-global-jacobian}
\end{align}
The horizon assumption places the limiting sharp trajectory strictly on the
positive side at time $T$.

\begin{theorem}[Smooth-first iterated limit]
\label{thm:smoothing-smooth-first}
Under \emph{(S1)--(S3)},
\begin{equation}
    \lim_{\tau\downarrow0}
    \left[
        \lim_{n\to\infty}J_{T/n,\tau}
    \right]
    =\frac{a_+}{a_-}=\Xi.
    \label{eq:smoothing-smooth-first-limit}
\end{equation}
\end{theorem}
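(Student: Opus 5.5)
The plan is to compute the inner limit exactly for each fixed $\tau$ by identifying it with the derivative of the smooth flow $\dot x=F(x/\tau)$. That derivative has a closed form via the same logarithmic identity as in Lemma~\ref{lem:smoothing-exact-flow}. The outer limit $\tau\downarrow0$ then reduces to a trace statement about where the smooth trajectory sits at time $T$.

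\emph{Step 1 (inner limit, fixed $\tau$).}
The field $f_\tau(x)=F(x/\tau)$ is $C^2$ and globally Lipschitz with constant $L_F/\tau$. Its derivative $f_\tau'(x)=\tau^{-1}F'(x/\tau)$ is bounded and uniformly continuous. This follows from (S3): since $F''\in L^1$, $F'$ is absolutely continuous with $|F'(u)-F'(v)|\le\int_u^v|F''|$, which tends to zero uniformly as $|u-v|\to0$. Standard Euler stability, i.e.\ Lemma~\ref{lem:general-smooth-segment} with no events, gives $\sup_k|x_k^{\eta,\tau}-x^\tau(k\eta)|\le C_\tau\eta$.

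The Jacobian \eqref{eq:smoothing-global-jacobian} is the Euler cocycle of the stepwise coefficient $f_\tau'(x_k^{\eta,\tau})$. The argument of Lemma~\ref{lem:general-fundamental}, with an $L^1$ coefficient error bounded by $T\,\omega_{f_\tau'}(C_\tau\eta)+O(\eta)$ and no branch-mismatch set, shows $J_{T/n,\tau}\to v^\tau(T)$. Here $v^\tau$ solves $\dot v=f_\tau'(x^\tau)v$ with $v(0)=1$. Note that this product is smooth in $x_0$, so no resonance issue arises.

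Since $\frac{d}{dt}\log v=\frac{d}{dt}\log F(x^\tau/\tau)$, exactly as in Lemma~\ref{lem:smoothing-exact-flow}, we get
\[
 \lim_{n\to\infty}J_{T/n,\tau}=\frac{F(x^\tau(T)/\tau)}{F(x_0/\tau)}.
\]

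\emph{Step 2 (outer limit).}
The denominator tends to $a_-$ because $x_0<0$ forces $x_0/\tau\to-\infty$, and (S2) applies. For the numerator, the key claim is that there is $\delta>0$ with $x^\tau(T)\ge\delta$ for all small $\tau$. This gives $x^\tau(T)/\tau\to+\infty$ and hence a numerator limit of $a_+$.

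To prove the claim, use the slack $T>-x_0/a_-$ and choose $\delta>0$ and $\epsilon\in(0,a_-)$ such that
\[
 \frac{|x_0|}{a_--\epsilon}+\frac{2\delta}{m}<T.
\]
Then take $\tau$ so small that $F(u)\ge a_--\epsilon$ for $u\le-\delta/\tau$. The trajectory is strictly increasing, with speed at least $m$. It reaches $-\delta$ within time $|x_0|/(a_--\epsilon)$, since its speed on $x\le-\delta$ is at least $a_--\epsilon$. It then crosses $[-\delta,\delta]$ within time $2\delta/m$. Monotonicity keeps it above $\delta$ thereafter, so $x^\tau(T)\ge\delta$.

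\emph{Main obstacle.}
The only delicate point is Step 1's justification that the Euler Jacobian product converges to the variational solution using only $C^2$ regularity with $F''\in L^1$, rather than bounded $F''$. I would handle it through the uniform-continuity modulus of $F'$ derived from $F''\in L^1$, inserted into the fundamental-matrix comparison. All constants are allowed to depend on $\tau$, since the limits are iterated with $\tau$ fixed in the inner limit.
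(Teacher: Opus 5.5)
Your proposal is correct and has the same structure as the paper's proof: both identify the inner limit with the smooth-flow derivative $F(\varphi_T^\tau(x_0)/\tau)/F(x_0/\tau)$, using Euler convergence of the joint state--variational system and the logarithmic identity of Lemma~\ref{lem:smoothing-exact-flow}, and then send $\tau\downarrow0$. The only difference is the outer step: the paper applies dominated convergence to the time-of-flight integral $\int_{x_0}^{x}dz/F(z/\tau)$ to show the endpoint converges to the sharp value $a_+(T+x_0/a_-)>0$, whereas your two-phase speed comparison more elementarily gives just the needed bound $x^\tau(T)\ge\delta>0$ (and your worry about $F''\in L^1$ is moot, since $F\in C^2$ makes $F''$ bounded on the compact interval $[x_0,x_0+MT]$ visited for fixed $\tau$).
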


\begin{proof}
For fixed $\tau$, the vector field $f_\tau$ is smooth and globally Lipschitz.
Standard Euler convergence applied jointly to the state and variational
equations gives
\[
    \lim_{n\to\infty}J_{T/n,\tau}
    =D\varphi_T^\tau(x_0),
\]
where $\varphi_t^\tau$ is the flow of $\dot x=F(x/\tau)$.  The scalar
autonomous identity from Lemma~\ref{lem:smoothing-exact-flow} gives
\begin{equation}
    D\varphi_T^\tau(x_0)
    =
    \frac{F(\varphi_T^\tau(x_0)/\tau)}{F(x_0/\tau)}.
    \label{eq:smoothing-smooth-flow-derivative}
\end{equation}

For any reached point $x$, elapsed time is characterized by
\[
    t=\int_{x_0}^{x}\frac{dz}{F(z/\tau)}.
\]
The integrand is bounded by $1/m$ and converges pointwise away from zero to
$a_-^{-1}$ on the negative side and $a_+^{-1}$ on the positive side.
Dominated convergence, followed by monotonic inversion of the time-of-flight
map, shows that $\varphi_T^\tau(x_0)$ converges to the sharp trajectory endpoint
\[
    a_+\left(T+\frac{x_0}{a_-}\right)>0.
\]
Thus the denominator and numerator in
\eqref{eq:smoothing-smooth-flow-derivative} converge to $a_-$ and $a_+$,
respectively.
\end{proof}

For the opposite order, define the sharp Euler orbit
\begin{equation}
    y_{k+1}=y_k+\eta f_0(y_k),
    \qquad
    f_0(y)=
    \begin{cases}
       a_-,&y<0,\\
       a_+,&y>0,
    \end{cases}
    \qquad y_0=x_0.
    \label{eq:smoothing-sharp-euler}
\end{equation}
The value of $f_0(0)$ is irrelevant when the orbit is nonresonant.

\begin{theorem}[Nonresonant sharp-first limit]
\label{thm:smoothing-sharp-first}
Assume \emph{(S1)--(S3)} and
\eqref{eq:smoothing-tail-derivative}.  Fix $\eta=T/n$ and suppose that
\begin{equation}
    y_k\neq0,
    \qquad 0\leq k\leq n-1.
    \label{eq:smoothing-nonresonance}
\end{equation}
Then
\begin{equation}
    \lim_{\tau\downarrow0}J_{\eta,\tau}=1.
    \label{eq:smoothing-fixed-eta-sharp-limit}
\end{equation}
Consequently, for the set $\mathcal{E}_{\mathrm{nr}}$ of step sizes satisfying
\eqref{eq:smoothing-nonresonance},
\begin{equation}
    \lim_{\substack{\eta\downarrow0\\
                     \eta\in\mathcal{E}_{\mathrm{nr}}}}
    \left[
       \lim_{\tau\downarrow0}J_{\eta,\tau}
    \right]
    =1.
    \label{eq:smoothing-sharp-first-limit}
\end{equation}
Before the first crossing, condition \eqref{eq:smoothing-nonresonance} is
equivalent to $-x_0/(\eta a_-)\notin\mathbb{N}$.
\end{theorem}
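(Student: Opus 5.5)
The approach is to work at a fixed nonresonant step size $\eta=T/n$. Everything then reduces to finitely many scalar limits as $\tau\downarrow0$, and no uniformity in $n$ is needed. The plan has three steps.

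\emph{Step 1: convergence of the smoothed orbit.} I will show by induction on $k=0,\ldots,n$ that $x_k^{\eta,\tau}\to y_k$ as $\tau\downarrow0$. The case $k=0$ is trivial. Suppose $x_k^{\eta,\tau}\to y_k$ with $k\le n-1$. By \eqref{eq:smoothing-nonresonance}, $y_k\ne0$. Hence for small $\tau$ the iterate $x_k^{\eta,\tau}$ stays in a fixed neighborhood of $y_k$ that avoids zero, and $u_k:=x_k^{\eta,\tau}/\tau\to\operatorname{sgn}(y_k)\cdot\infty$. By (S2), $F(u_k)\to f_0(y_k)$, so
\[
 x_{k+1}^{\eta,\tau}=x_k^{\eta,\tau}+\eta F(u_k)\to y_k+\eta f_0(y_k)=y_{k+1}.
\]
The value $f_0(0)$ never enters, again by nonresonance.

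\emph{Step 2: each cocycle factor tends to one.} For $0\le k\le n-1$ I rewrite
\[
 \frac{\eta}{\tau}F'(u_k)=\frac{\eta}{x_k^{\eta,\tau}}\,u_kF'(u_k).
\]
The first factor converges to $\eta/y_k$, which is finite because $y_k\ne0$. The second tends to zero by the tail condition \eqref{eq:smoothing-tail-derivative}, since $|u_k|\to\infty$. So every factor $1+\rho F'(u_k)$ in \eqref{eq:smoothing-global-jacobian} tends to $1$. A product of $n$ factors, with $n$ fixed, then gives \eqref{eq:smoothing-fixed-eta-sharp-limit}.

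The iterated limit \eqref{eq:smoothing-sharp-first-limit} follows immediately: the inner limit equals $1$ for every $\eta\in\mathcal E_{\rm nr}$, so the outer limit along that set is $1$. The set $\mathcal E_{\rm nr}$ accumulates at zero, by the same integer-difference argument as in Proposition~\ref{prop:relu-erm-resonance-set}. This accumulation is only needed to make the outer limit meaningful.

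\emph{Step 3: the nonresonance characterization.} While $y_k\le0$ we have $y_k=x_0+k\eta a_-$. Therefore $y_k=0$ for some $k$ exactly when $k=-x_0/(\eta a_-)\in\mathbb N$. Such a $k$ satisfies $k<T/\eta=n$ because $T>-x_0/a_-$, so it lies in the range $0\le k\le n-1$. After the first positive iterate, every increment is $\eta a_+>0$, so no later iterate vanishes. This gives the stated equivalence.

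The main care point is Step 2. The naive bound $\rho L_F$ is of order $\eta/\tau\to\infty$, so it is useless. The factor only tends to one because the orbit stays a distance of order one from the interface, which places $u_k$ at distance of order $1/\tau$. It is precisely the decay $uF'(u)\to0$ that beats the amplification $\eta/\tau$. This is also where nonresonance is essential: if some $y_k=0$, then $u_k$ need not diverge and the corresponding factor can remain of size $\eta/\tau$.
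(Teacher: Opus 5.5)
Your proposal is correct and follows essentially the same route as the paper's proof. Both arguments establish $x_k^{\eta,\tau}\to y_k$ by induction using the trace limits (S2) away from zero, rewrite each factor as $\frac{\eta}{x_k^{\eta,\tau}}\,u_kF'(u_k)$, apply \eqref{eq:smoothing-tail-derivative}, and conclude because the product has finitely many factors. Your Step~3 and the accumulation remark go slightly further: they verify the nonresonance characterization and show that $\mathcal{E}_{\mathrm{nr}}$ accumulates at zero, two points the paper's proof states without separate argument.
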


\begin{proof}
For fixed $\eta$, the sharp orbit has finitely many states and
\[
    d_\eta:=\min_{0\leq k<n}|y_k|>0.
\]
Induction in \eqref{eq:smoothing-global-euler}, using the trace convergence in
(S2) at each state away from zero, gives
$x_k^{\eta,\tau}\to y_k$ for every $k$.  Hence
$|x_k^{\eta,\tau}|\geq d_\eta/2$ for all sufficiently small $\tau$.
For each of the finitely many Jacobian factors,
\begin{align*}
    \left|\frac{\eta}{\tau}
      F'(x_k^{\eta,\tau}/\tau)\right|
    &=
    \frac{\eta}{|x_k^{\eta,\tau}|}
    \left|
      \frac{x_k^{\eta,\tau}}{\tau}
      F'(x_k^{\eta,\tau}/\tau)
    \right|\\
    &\longrightarrow0
\end{align*}
by \eqref{eq:smoothing-tail-derivative}.  Their product therefore converges to
one.  The outer limit in \eqref{eq:smoothing-sharp-first-limit} is then the
limit of the constant inner-limit value one over nonresonant step sizes.
\end{proof}

\begin{remark}[Why the phase qualification is necessary]
\label{rem:smoothing-resonance}
For the tanh profile below, $F'(0)\neq0$ and the tails are exponential.  If a
sharp iterate satisfies $y_j=0$, all preceding smooth-versus-sharp state errors
are exponentially small in $1/\tau$, so
$x_j^{\eta,\tau}/\tau\to0$.  The corresponding factor behaves as
\[
    1+\frac{\eta}{\tau}F'(0),
\]
whose magnitude diverges.  Resonant step sizes can accumulate at zero.  Thus
an unrestricted statement
$\lim_{\eta\to0}\lim_{\tau\to0}J_{\eta,\tau}=1$ is generally false.  The
proved replacement is the directed nonresonant limit
\eqref{eq:smoothing-sharp-first-limit}.
\end{remark}

\subsubsection{Finite-ratio transfer depends on phase and profile}
\label{app:smoothing:critical}

At fixed $\rho$, equation \eqref{eq:smoothing-inner-map}, rather than the
trace ratio alone, is the limiting inner problem.

\begin{proposition}[Phase and profile are genuine data]
\label{prop:smoothing-phase-profile}
At finite $\rho$, the fixed-itinerary cocycle need not be determined by
$(a_-,a_+,\rho)$.
\end{proposition}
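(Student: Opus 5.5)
The proof is by explicit counterexample. We exhibit two admissible profiles (or one profile at two entry phases) that share the same traces $a_\pm$ and the same inner ratio $\rho$, yet produce different stopped cocycles $J_{\rho,R}$ in the limit $R\to\infty$. Everything happens in the exact inner map \eqref{eq:smoothing-inner-map} with the entry--exit convention \eqref{eq:smoothing-stopped-cocycle}. The rescaling \eqref{eq:smoothing-inner-variables} shows that at fixed $\rho$ this inner problem is the whole limiting object, so a discrepancy there is a genuine discrepancy of the finite-ratio transfer.

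\textbf{Phase dependence.} Take one profile $F$ satisfying (S1)--(S3) with nonconstant core, for instance $F(u)=a_-+(a_+-a_-)(1+\tanh u)/2$. Fix $\rho$ with $\rho L_F<1$, so that every factor $1+\rho F'(u_k)$ is positive. Entering the layer at $u_0=-R+\sigma\rho$ with phase $\sigma\in[0,1)$ translates the whole orbit. In the tails $F'$ is exponentially small, so the factors from the tails tend to one as $R\to\infty$. The limit therefore reduces to a product $\Pi(\sigma)=\prod_k(1+\rho F'(u_k(\sigma)))$ over the finitely many steps that actually sample the core, and this product is continuous in $\sigma$ away from stopping-index changes.

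To show that $\Pi$ is not constant in $\sigma$, I will use a discrete version of the argument in Theorem~\ref{thm:smoothing-resolved}. That argument shows the product equals $F(u_N)/F(u_0)$ times $\prod_k A_k/B_k$, where $A_k/B_k=1-(\text{Taylor remainder})/F(u_k)$. This correction factor is a nonconstant analytic function of $\sigma$ for tanh. Since the phase acts by translation of the sampling points, it suffices to show that the $\sigma$-derivative of $\sum_k\log(A_k/B_k)$ is nonzero at a single phase. I will check this to first order in $\rho$ by expanding the remainder in $\rho$ and comparing it with the Euler--Maclaurin average. Alternatively, I would take $\rho$ large (still below $1/L_F$) so that only one or two samples fall in the core. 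Then $\Pi(\sigma)$ is essentially $1+\rho F'(\sigma\rho+\text{const})$, which visibly varies with $\sigma$.

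\textbf{Profile dependence.} Keep $a_\pm$ and $\rho$ fixed and replace $F$ by a second admissible profile $\widetilde F$, for example a dilated core $F(u/s)$ with $s\neq1$, or a two-step profile. Average the limiting $\log\Pi$ over the uniform phase. The leading $\rho$-correction to $\log\Xi$ is an integral functional of the profile, roughly $\int (F')^2/F\,du$ up to a constant multiple of $\rho$. Under dilation this functional scales like $s^{-1}$, so the phase averages differ for two dilations. Hence the pointwise cocycles differ for some phase.

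\textbf{Main obstacle.} The hard step is making the second-order expansion, or the concrete large-$\rho$ evaluation, rigorous and nonvanishing without grinding. If the expansion becomes messy, I would switch to a fully explicit piecewise-smooth-looking but $C^2$ profile. The idea is a profile that is exactly affine on a core interval and exactly constant (equal to $a_\pm$) outside, with mollified corners. For such a profile the core orbit is affine and the product can be computed in closed form. Comparing two explicit phases, or two core widths, then gives unequal numbers with explicit margins. The mollification only perturbs the result by an amount that can be made smaller than those margins.
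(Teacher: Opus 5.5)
The central gap is that both of your main arguments are small-$\rho$ expansions, and in the $R\to\infty$ limit you use, these expansions carry no phase or profile information. Take $A_k=1+\rho F'(u_k)$ and $B_k=F(u_{k+1})/F(u_k)$ as in the proof of Theorem~\ref{thm:smoothing-resolved}. Taylor expansion gives $\log A_k-\log B_k=-\tfrac12\rho^2F(u_k)F''(u_k)+O(\rho^3)$. Since the steps are $\rho F(u_k)$, the sum is $-\tfrac{\rho}{2}\int_{u_0}^{u_N}F''\,du+O(\rho^2)=-\tfrac{\rho}{2}[F'(u_N)-F'(u_0)]+O(\rho^2)$, which tends to zero as $R\to\infty$ for every profile and every entry phase.

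The functional $\int (F')^2/F\,du$ you have in mind does come out of $-\tfrac12\sum_k\rho^2F'(u_k)^2$. However, it is cancelled exactly by the Riemann-sum correction $+\tfrac{\rho}{2}\int(F')^2/F\,du$ hidden in $\sum_k\rho F'(u_k)$. So your first-order phase check returns zero, and your dilation argument compares two coefficients that both vanish.

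Going to higher order does not help. By backward error analysis, the Euler map agrees to any order in $\rho$ with the time-$\rho$ flow of a modified field $\widetilde F_\rho=F-\tfrac{\rho}{2}FF'+\cdots$, every correction of which contains a derivative of $F$. Hence $\widetilde F_\rho\to a_\pm$ in the tails, and Lemma~\ref{lem:smoothing-exact-flow} gives the flow cocycle $\widetilde F_\rho(u_N)/\widetilde F_\rho(u_0)\to\Xi$. For tanh, or any smooth profile with decaying derivatives, phase and profile effects in your $R\to\infty$ object are therefore beyond all orders in $\rho$ (exponentially small for analytic profiles). No perturbative computation can prove the proposition.

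What does work is to leave the resolved regime. Your large-$\rho$ variant is the right idea, but as written it is a heuristic. Under your self-imposed constraint $\rho L_F<1$, which positivity of the factors does not require for this statement, the slow-side core step of your tanh is at most $\rho a_-<2a_-/(a_+-a_-)$. So ``one or two core samples'' needs small trace contrast, and you still have to evaluate two phases explicitly. The paper drops the constraint. For \eqref{eq:smoothing-tanh-profile} with $\rho=100$ ($\rho L_F=45$) and exit threshold $20$, entry at $u_0=-20$ lands essentially on the maximizer of $F'$ and gives $J\approx46$, while $u_0=-10$ gives $J\approx1.016$; see \eqref{eq:smoothing-tanh-phase-comparison}. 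You also need no $R\to\infty$ limit, which at fixed $\sigma$ is ill-posed anyway because the core phase depends on $R$ modulo the tail step; the statement concerns one finite itinerary.

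Your mollified piecewise-affine fallback is rigorous and should be the main argument. On a core of slope $\kappa$, every core factor is $\lambda=1+\rho\kappa$, and $F$ is multiplied by $\lambda$ per core step. Hence $J=\lambda^n$ with $n\in\{\lfloor\ell\rfloor,\lceil\ell\rceil\}$ and $\ell=\log\Xi/\log\lambda\notin\mathbb Z$, and both values occur as the entry phase varies. Mollifying the corners away from the finitely many samples changes nothing. The corners are precisely what destroys the all-orders cancellation.

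For profile dependence, the paper's device is simpler and works at any $\rho$ and itinerary. Add $\varepsilon h$ with $h$ supported near $u_0$ and away from later samples, $h(u_0)=0$ and $h'(u_0)\neq0$. Samples, stopping index and traces are unchanged, but the first factor changes, so $J$ changes whenever the remaining factors have nonzero product.
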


\begin{proof}
Phase dependence follows already from the explicit tanh calculations in
Subsection~\ref{app:smoothing:tanh}: the same profile and $\rho$ give transfers
near $46$ and $1$ for two different incoming phases.

For profile dependence, fix a finite itinerary
$u_0,\ldots,u_N$ away from a stopping-index boundary.  Choose a smooth bump
$h$ supported in a small neighborhood of $u_0$, disjoint from all later
sampled states, such that $h(u_0)=0$ and $h'(u_0)\neq0$.  For sufficiently
small $\varepsilon$, the function
$\widetilde F=F+\varepsilon h$ remains positive and has the same traces.  Every
sampled state is unchanged because $\widetilde F(u_k)=F(u_k)$, but the first
cocycle factor changes because
$\widetilde F'(u_0)\neq F'(u_0)$.  Choose the base itinerary so that the
product of the remaining factors is nonzero---for example, for the increasing
tanh profile below all factors are positive.  The total transfer then changes
while the traces do not.
\end{proof}

\subsubsection{Tanh-profile constructions}
\label{app:smoothing:tanh}

Consider
\begin{equation}
    F(u)=a_-+(a_+-a_-)
    \frac{1+\tanh(u/2)}{2},
    \qquad
    F'(u)=\frac{a_+-a_-}{4}\,\mathrm{sech}^2(u/2).
    \label{eq:smoothing-tanh-profile}
\end{equation}
This profile satisfies all assumptions above.  In the expanding example
$a_-=0.2$ and $a_+=2$, so $\Xi=10$ and
$\lVert F'\rVert_\infty=0.45$.

\paragraph{Skip.}
With $u_0=-20$, exit threshold $20$, and $\rho=205$,
\[
    u_1=-20+205F(-20)
       =21.00000076056569>20.
\]
There is one factor and
\begin{equation}
    J=1+205F'(-20)=1.0000007605656835.
    \label{eq:smoothing-tanh-skip}
\end{equation}
This is a constructive skip phase, not a universal large-$\rho$ limit.

\paragraph{Tuned hit.}
Choose
\begin{equation}
    \rho_*:=\frac{20}{F(-20)}
    =99.99999814496176.
    \label{eq:smoothing-tanh-hit-rho}
\end{equation}
Then $u_1=0$ and
$u_2=\rho_*F(0)=109.99999795946>20$.  Thus
\begin{equation}
    J=
    \bigl(1+\rho_*F'(-20)\bigr)
    \bigl(1+\rho_*F'(0)\bigr)
    =46.0000162315841.
    \label{eq:smoothing-tanh-hit}
\end{equation}
More generally, taking $u_0=-R$ and
$\rho_R=R/F(-R)$ lands exactly at the maximizer $u=0$.  For the increasing
profile the following step exits $+R$ for all sufficiently large $R$, and
\[
    J_{\rho_R,R}\sim\rho_R F'(0)\longrightarrow\infty.
\]
The phase is retuned with $R$ (and hence with $\rho_R$); this is not a
fixed-phase divergence statement.

\paragraph{Same ratio, different phases.}
At $\rho=100$ with exit threshold $20$, direct iteration gives
\begin{equation}
    u_0=-20:\quad J=46.00001706635,
    \qquad
    u_0=-10:\quad J=1.01634222248.
    \label{eq:smoothing-tanh-phase-comparison}
\end{equation}
This supplies the phase comparison used in
Proposition~\ref{prop:smoothing-phase-profile}.

\paragraph{Orientation reversal of the discrete cocycle.}
For the contracting twin $a_-=2$, $a_+=0.2$, take $u_0=-20$, $R=20$, and
$\rho=5$.  The orbit contains
\[
    u_1=-10.00000001855,
    \qquad
    u_2=-0.00040859936,
    \qquad
    u_3=5.50051074919.
\]
At the near-central sample,
\begin{equation}
    1+5F'(u_2)=-1.24999990609<0.
    \label{eq:smoothing-tanh-negative-factor}
\end{equation}
The two preceding factors are positive.  For $u\geq u_3$, the magnitude of
the negative derivative in \eqref{eq:smoothing-tanh-profile} decreases, and
all remaining factors are at least $0.9635$.  Therefore exactly one sampled
factor is negative and
\begin{equation}
    J=-1.17941975827.
    \label{eq:smoothing-tanh-orientation}
\end{equation}
The continuous same-direction saltation factor is $a_+/a_-=0.1>0$; the sign
reversal is a finite-step critical-regime effect.  The inequality
$\rho\max(-F')>1$ merely makes a negative factor possible.  It does not force a
given phase to sample that factor, nor does it determine the parity of all
negative factors in a longer itinerary.

\subsubsection{Effective resolution and scope beyond the scalar flat layer}
\label{app:smoothing:scope}

The local dimensionless stiffness is
\begin{equation}
    \rho_{\mathrm{eff}}
    :=\eta\lVert f_\tau'\rVert_\infty
    =\rho\lVert F'\rVert_\infty.
    \label{eq:smoothing-effective-rho}
\end{equation}
For \eqref{eq:smoothing-tanh-profile} with traces $0.2$ and $2$,
$\rho_{\mathrm{eff}}=0.45\rho$.  The condition
$\rho_{\mathrm{eff}}\leq1/2$ is a sufficient condition for
Theorem~\ref{thm:smoothing-resolved}; it is not a necessary condition for an
accidental finite-$\rho$ value to lie near $\Xi$.  Across a family of profiles,
$\rho_{\mathrm{eff}}$ also does not replace the tail error, positivity bounds,
or $\lVert F''\rVert_{L^1}$ appearing in
\eqref{eq:smoothing-resolved-log-bound}.  The contrast quantity
$\rho|a_+-a_-|/a_-$ measures net trace gain, not local spike resolution.

The preceding proofs are scalar.  They extend without new ideas only when a
multidimensional system genuinely reduces to an autonomous normal coordinate,
for example at a flat interface with fixed normal, a rank-one normal jump, and
separately controlled tangential dynamics.  In that setting the scalar normal
factor is embedded in the usual matrix
\[
    I+\frac{(f^+-f^-)\nu^\top}{\nu^\top f^-}.
\]
The scalar proof does not establish a general matrix theorem: matrix cocycles
do not telescope through a scalar speed ratio, normal--tangential blocks need
not commute, and tangential drift can change both interface location and
incoming phase.  For a curved interface, flattening changes the normal across
the physical tube of radius $\varepsilon=\tau R$ and introduces curvature
terms.  Any claimed $O(\varepsilon)+O(\eta/\tau)$ bound therefore requires a
separate geometric argument with bounded curvature, transverse speed,
tangential regularity, and $\varepsilon\to0$.  Grazing, sliding, chattering,
and simultaneous interface hits are outside the scope of this section.

\paragraph{Logical scope.}
The results above do not assert an unrestricted sharp-first iterated limit, an
exact saltation value at fixed $R$, an $O(\rho)$ error without a tail term, or
universality of $\eta/\tau$ over arbitrary smoothing families.  They also do
not assert that every unresolved phase skips the layer, that the tuned-hit
divergence holds at fixed phase, or that
$\rho\max(-F')>1$ forces orientation reversal.  Finally, the scalar argument
does not by itself prove a curved-interface or general multidimensional
extension.

\section{Experimental protocols and evidence coverage}
\subsection{Finite-radius experiments: configuration, inputs and populations}
\label{app:response-experiments}
All experiments compare the same terminal objective at two specified
initializations. The datasets, seeds, objectives and directions are fixed.
Deterministic mesh populations and seed--mesh quartiles are descriptive
summaries, not independent statistical samples or confidence intervals.
Results, scripts, full error distributions and file manifests accompany
the paper. Scalar and affine analyses reuse preserved arrays. The smooth
network response is newly computed from the same reference trajectories
and the same previously evaluated candidate pairs.

\paragraph{E1: exact scalar response.}
The scalar risk has target $a=4$, residual target $c=-3$, $T=1$,
initialization $x=4(1-e^{\sqrt2/4})$, terminal objective
$q(y)=(y-1)^2/2$ and direction one. All $N=200,\ldots,10000$ are
used at $\epsilon=0.3/N$: 9,801 total rows, 9,801 finite responses,
and 1,034 proposal reversals. The limiting negative and positive levels
are $-0.1228170958$ and $0.7517137144$; their predicted positive
frequency is $0.1053282752$, versus $1034/9801$ observed.
The complete phase-count classification, rather than near-resonant
quotient growth, determines this comparison. The smallest critical-endpoint
margins select 20 precision checks; independent 60-digit Euler differs
from the recorded response by at most $2.27\times10^{-12}$.
Integer-length tests use $\ell=1,2,3$ at
$N=200,1000,10000,100000$. At the finest grid the response errors are
below $10^{-5}$. Absolute objective differences remain in every row.

\paragraph{E2: fixed coupled risk and feedback-active subset.}
The two-event risk and its proven geometry are in
Appendix~\ref{app:coupled-response}. The coupling values are $0$ and
$0.4$, direction $e_1$, horizon one and
$q(\theta)=(\theta_1-1)^2/2$. The original 72 comparisons cross
$N=200,500,1000,2000,5000,10000,20000,50000,100000$ with
$\zeta=0.1,0.3,1,3$. The dense evaluation crosses all
$N=1000,\ldots,5000$ with $\zeta=0.3,1,3$: 24,006 rows.
The coarse evaluation crosses $N=20,30,50,75,100,150,200,300,500,750,1000$
with $\epsilon=0.001,0.003,0.01,0.03,0.1$: 110 rows.
These populations were fixed before their respective evaluations. The
second-order refinement was developed after the first-order results,
using the whole population and the unchanged task; it is a method
development study rather than an independent confirmatory replication.

The no-upstream-feedback ablation keeps the regional propagators, Euler
bias and transported endpoint contributions but removes the first-event
rounding contribution from the second threshold. The informative subset
is defined by a difference in the predicted signed second-index count
$k_2(z_+)-k_2(z_-)$ between full and ablated recursions. This test uses
no rerun objective. It selects 224 of 12,003 coupled rows. The full
recursion has lower error on 223, the ablation on one; there are no ties
at tolerance $10^{-12}$. Full mean/median absolute error is
$5.93829\times10^{-5}/3.87803\times10^{-5}$, versus
$8.16781\times10^{-3}/2.52606\times10^{-3}$ for the ablation.
The full and ablated sign-error counts are zero and eight. The exception
favoring the ablation has a failed predicted word and is retained.
On the 223 valid informative words, every row favors the full recursion.

\begin{center}
\begin{tabular}{lrrrrr}
\toprule
Population & Total & Valid word & Certified & Incorrect & Unresolved\\
\midrule
Original E2 &72&71&71&0&1\\
Dense E2, both couplings &24006&23997&23997&0&9\\
Dense E2, coupled &12003&11999&11999&0&4\\
Feedback-active coupled &224&223&223&0&1\\
Coarse E2 &110&101&94&0&16\\
\bottomrule
\end{tabular}
\end{center}
Every row is eligible for evaluation. Unresolved includes failed words;
the categories are not additive. The coarse unresolved set contains nine
failed words and seven valid words with insufficient sign margins.
Certificate columns refer to the second-order deterministic test, whereas
the feedback-error comparison isolates the first-order recursive law.
The initial homogenized certificate gave 7/72 and 0/24006. Contractive
first-order bounds give 70/72, 23997/24006 and 71/110 without changing
that predictor; second order gives the displayed counts.

\paragraph{What the finite decisions change.}
All 19 coupled dense comparisons opposing AD and flow are certified;
their actual gaps are about $1.3\times10^{-6}$--$5.1\times10^{-6}$.
The coarse coupled cases at $N=150,200$, $\epsilon=0.001$, have
actual positive responses $0.035408$ and $0.010055$, while both
infinitesimal predictors are negative. The second-order response errors
are bounded by $0.001118$ and $0.000371$. Guaranteed loss gaps rounded
down are $6.85\times10^{-5}$ and $1.93\times10^{-5}$, below the
actual gaps $7.08\times10^{-5}$ and $2.01\times10^{-5}$.
The coupled disagreement at $N=75$, $\epsilon=0.003$ remains unresolved.
The largest relative candidate gaps in this coarse scan agree with the
infinitesimal predictions; larger objective effects are not automatically
an advantage for the new predictor.

\paragraph{Controlled smooth-field and curved-surface check.}
\label{app:curved-check}
Apply $h(x)=(x_1,x_2+x_1^2/4)$ to the fixed coupled flow and set
$f_y(h(x))=Dh(x)f_x(x)$. Its guards are $g_1(y)=y_1$ and
$g_2(y)=y_2-y_1^2/4$. The diffeomorphism carries the proven isolating
tubes and invariant region to bounded tubes with the same event order
and one-sided normal speeds. This checks the abstract piecewise-$C^2$
theorem; it does not preserve Euclidean gradient structure. Euler is
stepped directly in $y$, since
$h(x+\eta f_x)-h(x)-\eta Dh(x)f_x=(0,\eta^2f_{x,1}^2/4)$.
We fix $y_0=(-.4,-.96)$, direction $e_1$, $q(y)=(y_1-1)^2/2$,
$N=200,500,1000,2000,5000,10000$ and $z=-1,0,1$ before execution.
All 18 rows retain the predicted word; 11 meet the preset recursive
phase margin $0.05$. Across all rows the observed endpoint error divided
by $\eta^2$ ranges from $0.0561$ to $0.298$. Three of six central pairs
have both endpoints margin-eligible; two meshes also include the base
point under that margin. These are phase checks, not a numerical
certification of the theorem's sufficient mesh threshold. The six
response errors divided by $\eta$ are at most $0.141$; none reverses
the prediction's sign. These are observed ratios, not certified constants.
Three 60-digit direct-$y$ Euler checks at the geometrically selected
closest-margin mesh $N=10000$ agree in event indices and differ from
float64 endpoints by at most $1.24\times10^{-15}$.
No finite nonlinear certificate is applied.

\paragraph{E3: nonlinear training and smooth response inputs.}
The original width-4 network has 17 parameters. All seeds $0,\ldots,19$,
$N=500,2000$, and radii $10^{-5},10^{-4},10^{-3},10^{-2}$ are kept.
The direction is the first initialization coordinate, and the objective
is the frozen validation half-MSE. Seed 16 has no admissible flow reference,
leaving 152 comparable reference rows out of 160. Every plotted radius
uses the same 19 seeds and 38 seed--mesh observations.

The smooth predictor takes the frozen base event times and directed
surface sequence, reconstructs regional masks, and integrates the state,
regional variational matrix and \eqref{eq:main-smooth-bias} with DOP853
at relative/absolute tolerances $2\times10^{-11}$ and $2\times10^{-13}$.
The reconstructed endpoint, regional product and saltation-bearing
derivative differ from the preserved reference by at most
$1.42\times10^{-14}$, $3.20\times10^{-13}$ and $3.29\times10^{-13}$.
Independent Euler step-halving on each seed's longest regional segment
checks the smooth bias, with observed error ratios below $0.500$.
Only this base information constructs the response. Actual perturbed
endpoints, masks and objectives are used afterward to evaluate it.

For both perturbed candidates, 90/152 proposed index sequences are strict
and ordered; 81/152 match the complete rerun training word. Terminal
validation masks match the reference branch on 138/152 rows. These are
computed diagnostics, not a proof of all uniform-neighborhood hypotheses.
All 152 predictions, including failures, enter the main comparison.
There is no finite nonlinear certificate: certified/incorrect-certificate
counts are inapplicable, rather than zero passed tests.

\begin{center}
\begin{tabular}{rrrrr}
\toprule
Radius & AD median & Flow median & Smooth median & Matched training word\\
\midrule
$10^{-5}$&$9.56\cdot10^{-12}$&$1.57\cdot10^{-3}$&$9.92\cdot10^{-6}$&29/38\\
$10^{-4}$&$1.96\cdot10^{-3}$&$1.88\cdot10^{-3}$&$8.32\cdot10^{-6}$&29/38\\
$10^{-3}$&$2.78\cdot10^{-3}$&$3.86\cdot10^{-4}$&$1.22\cdot10^{-5}$&17/38\\
$10^{-2}$&$1.72\cdot10^{-3}$&$1.87\cdot10^{-4}$&$1.43\cdot10^{-4}$&6/38\\
\bottomrule
\end{tabular}
\end{center}
Entries are absolute response errors. No network sign reversal occurs
relative to AD, flow or the smooth prediction. The smooth predictor's
median at $10^{-4}$ conceals a failed-word maximum error $0.0486928$;
its mean is $1.29792\times10^{-3}$. At $10^{-2}$ the smooth and flow
means are $1.02079\times10^{-3}$ and $1.10035\times10^{-3}$.
All quartiles, tail errors and the regional limit are included in the
supplementary arrays. Small medians do not validate incorrect words.

\paragraph{Sign policy, validation and reproducibility.}
Descriptive sign comparisons classify values of absolute magnitude at
most $10^{-12}$ as zero, never as a reversal. Error differences within
$10^{-12}$ are ties. The deterministic certificate uses its strict
unrounded sign margin, independently of this descriptive tolerance.
Its inequalities and strict word tests are evaluated in floating point.
Independent checks include 24 distinct 60-digit Euler comparisons,
353 high-precision matrix-power state bounds, and 1,200 planar extrema
tests including interior extrema and repeated eigenvalues. They validate
the implementation numerically; formal interval arithmetic is not used.
A separate algebra check covers 24 ordered affine-block examples in
dimensions two and three, with zero, one, three and five event slots.
All 65-digit exact matrix-power errors satisfy the finite second-order
remainder (maximum error/bound $0.10441$). These prescribed blocks check
the finite-event algebra; they do not verify event geometry. Hash checks
preserve all 951 earlier experiment files and 13 certificate-stage files.

The supplied scripts reproduce the calculations and vector figures.
From the paper directory, run
\begin{quote}\ttfamily
python scripts/network\_smooth\_response.py\\
python scripts/validate\_curved\_response.py\\
python scripts/final\_figures\_analysis.py\\
python scripts/validate\_final\_finite\_events.py
\end{quote}
The evidence package also includes the original scalar, coupled and
network generators, their protocols and raw arrays. Detailed hashes and
build manifests are supplementary artifacts rather than mathematical assumptions.

\subsection{Frozen network outputs: resolution, tails, and decision evidence}
\label{app:resolution-network}
This appendix reanalyzes saved outputs without running training, perturbed trajectories, or a reference solver. The descriptive protocol was fixed before aggregation. E3 retains all 20 seeds at both meshes $N\in\{500,2000\}$ and four radii $\epsilon\in\{10^{-5},10^{-4},10^{-3},10^{-2}\}$. Eight seed-16 rows lack a reference; four-method comparisons use the same 19 seeds and 38 seed--mesh observations at each radius. These repeated observations are not independent trials.
\paragraph{Definitions and raw-output checks.}
For both candidates, the complete predicted training word is compared with the saved training-word hash. The three disjoint reference strata are M: both complete words match (81 rows); O: both predicted words are strictly ordered, but at least one complete word differs (9); and U: at least one predicted word is not strictly ordered (62). Terminal validation-mask agreement is recorded separately and does not remove rows. These outcome-based strata explain errors; they are not prediction-time eligibility tests. The uncovered eight rows remain explicit in the full-population table.
Only 79/81 complete-training-word-matched rows also retain both reference terminal validation masks. Matching the training word alone therefore does not establish the smooth-terminal hypothesis. Across the increasing radii, the matched counts are 29, 29, 17, 6. The pooled strata differ in radius and event-count composition; their error contrast is descriptive and cannot isolate a causal effect of word correctness.
The stored flow and regional objective gradients use the validation gradient at the frozen continuous endpoint: $D\phi_T^\top\nabla q(\phi_T)$ and $J_{\rm reg}^\top\nabla q(\phi_T)$. Discrete AD uses the finite endpoint. Recomputing all three full vectors for 19 seeds and four historical meshes reproduces every stored component exactly in this environment; the suspected use of a finite terminal gradient for the flow object is therefore rejected. Saved $Q_+$ and $Q_-$ reconstruct every central response; saved endpoint-error vectors reconstruct every branch prediction; all stored word, phase-margin and terminal flags agree with their raw records. This checks data lineage and arithmetic, not an independent trajectory solve.
\begin{table}[t]
\centering
\begin{tabular}{llrrrrrr}
\toprule
Stratum & Method & $n$ & Median & Mean & 90th & 95th & Maximum\\
\midrule
M & AD & 81 & 0.000457 & 0.00958 & 0.024 & 0.0588 & 0.157\\
M & Flow & 81 & 0.000652 & 0.00642 & 0.00832 & 0.0249 & 0.135\\
M & Regional & 81 & 0.000453 & 0.00959 & 0.024 & 0.0588 & 0.157\\
M & Branch & 81 & 9.03e-06 & 2.85e-05 & 4.12e-05 & 4.99e-05 & 0.000777\\
\addlinespace
O & AD & 9 & 0.00143 & 0.0044 & 0.00991 & 0.016 & 0.0222\\
O & Flow & 9 & 7.49e-05 & 8.79e-05 & 0.000179 & 0.000204 & 0.00023\\
O & Regional & 9 & 0.00139 & 0.00438 & 0.00991 & 0.016 & 0.0222\\
O & Branch & 9 & 8.16e-05 & 0.000123 & 0.000241 & 0.000384 & 0.000527\\
\addlinespace
U & AD & 62 & 0.00314 & 0.0146 & 0.0549 & 0.0606 & 0.155\\
U & Flow & 62 & 0.000613 & 0.00769 & 0.0112 & 0.0279 & 0.133\\
U & Regional & 62 & 0.00311 & 0.0146 & 0.0549 & 0.0605 & 0.155\\
U & Branch & 62 & 7.17e-05 & 0.00171 & 0.00459 & 0.0076 & 0.0487\\
\bottomrule
\end{tabular}
\caption{Absolute finite-response errors, pooled across the frozen radii and meshes, in the three exact strata defined above. All rows are retained. Scientific notation uses e for powers of ten. These quantiles describe this finite, correlated cohort.}
\label{tab:resolution-network-strata}
\end{table}
\paragraph{Radius comparisons do not identify a universal winner.}
\begin{table}[t]
\centering
\begin{tabular}{rrrrrrrr}
\toprule
$\epsilon$ & AD wins & Flow & Regional & Branch & Branch mean & 95th & Maximum\\
\midrule
1e-05 & 25/38 & 0/38 & 0/38 & 13/38 & 2.76e-05 & 0.000101 & 0.000242\\
0.0001 & 12/38 & 1/38 & 0/38 & 25/38 & 0.0013 & 6.19e-05 & 0.0487\\
0.001 & 2/38 & 6/38 & 0/38 & 30/38 & 0.000527 & 0.00447 & 0.00767\\
0.01 & 1/38 & 11/38 & 4/38 & 22/38 & 0.00102 & 0.00751 & 0.00777\\
\bottomrule
\end{tabular}
\caption{Lowest absolute-error counts among the four methods on the same 38 observations per radius. The prespecified absolute tie tolerance is $10^{-12}$; no ties occur. Means and upper tails accompany the radius plot rather than reporting medians alone.}
\label{tab:resolution-network-winners}
\end{table}
At the largest radius, branch-response mean error is 0.00102 versus flow 0.0011; its 95th percentile and maximum are slightly larger. Thus neither the means nor the winner counts support a claim that flow uniformly dominates at the largest tested radius. The largest branch error, 0.0487, occurs at seed 9, $N=500$, $\epsilon=0.0001$, in stratum U; its recursive margin is 0.00246 and the actual absolute two-candidate objective difference is 2.74e-05. All four predictors preserve the sign on the fixed E3 direction in all 152 reference-covered rows. This establishes no optimization benefit or direction reversal.
\paragraph{Phase margin, event count, and mesh resolution.}
The recursive margin is the minimum distance to an integer of every event ceiling argument, over both candidates. It includes upstream feedback in the prediction, but it is not a measured distance to the exact nonlinear discrete branch boundary. Strict order also requires $0<K_1<\cdots<K_m<N$. The reported shortest reference segment includes the initial-to-first and last-to-terminal intervals; it is not silently replaced by an inter-event-only minimum. Event counts are grouped by their exact integer value, and mesh summaries use the same seeds.
\begin{table}[t]
\centering
\begin{tabular}{rrrrrrr}
\toprule
$N$ & Rows & Word matched & Unordered & Margin median & IQR & AUC\\
\midrule
500 & 76 & 34 & 38 & 0.0077 & [0.00299, 0.0265] & 0.676\\
2000 & 76 & 47 & 24 & 0.0111 & [0.00341, 0.0321] & 0.597\\
\bottomrule
\end{tabular}
\caption{Descriptive discrimination of complete word matching by a larger recursive margin. AUC is the rank probability with half credit for ties; no threshold is fitted, and no independence-based uncertainty interval is claimed.}
\label{tab:resolution-network-margin}
\end{table}
\begin{figure}[t]
\centering
\includegraphics[width=\textwidth]{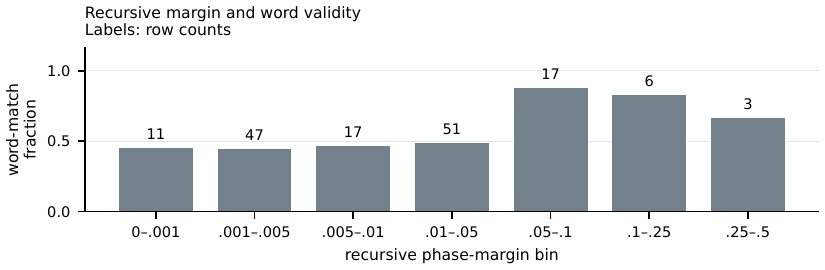}
\caption{Exact complete training-word match fractions in the prespecified recursive-margin bins for all 152 reference-covered width-4 rows. Labels give dependent seed--mesh--radius row counts, not independent trials. The eight uncovered rows have no margin. No bin is chosen or removed after observing the errors. The margin is a descriptive diagnostic and is not a nonlinear certificate.}
\label{fig:resolution-margin}
\end{figure}
Reference event counts range from 1 to 46 (median 23). The pooled margin AUC is 0.647 and the pooled Spearman correlation with branch-response error is -0.156. The prespecified $N=2000$, $\epsilon=10^{-4}$ subgroup has AUC 0.324; hence even the direction of association is not uniform across the tested groups. The fixed bins are $[0,.001),[.001,.005),[.005,.01),[.01,.05),[.05,.1),[.1,.25),[.25,.5]$. Every bin count, error distribution, exact-event-count group, and paired seed--mesh radius transition is retained. Recursive margins alone neither certify the event word nor control the nonlinear remainder.
\paragraph{Full initialization gradients versus finite vector decisions.}
On the same 38 E3 seed--mesh pairs, the full flow gradient differs from AD in at least one component sign on 16/38 (8/19 unique seeds), in the largest absolute coordinate on 6/38 (3/19 seeds), and in the ordered two largest absolute coordinates on 16/38 (8/19 seeds). The minimum cosine is 0.858397, the maximum normalized-direction distance is 0.532172, and its median is 0.300401. Regional gradients change none of these signs or ranks and have maximum normalized-direction distance 0.001488. These are objective-gradient and normalized-direction diagnostics, distinct from matrix discrepancies. They do not evaluate finite proposals along these full-vector directions. The 76-row, four-mesh historical diagnostic is reported separately, rather than doubling the E3 denominator.
\paragraph{Existing outer-variable proposals and preserved negative results.}
The separate frozen four-dimensional training-group-weight audit starts with 20 seeds, retains 15 eligible seeds and 14 fixed common-itinerary seeds across all primary radii, and stores 660 proposals including boundary probes. No proposals are rerun here. Historical method names are retained: these are not finite proposals generated by the new branch-response predictor.
Flow outer gradients differ from AD in some component sign on 2/15, in ordered top-two absolute-coordinate ranks on 1/15, and in top-one ranks on 0/15.
Event-aware outer gradients differ from AD in some component sign on 2/15, in ordered top-two absolute-coordinate ranks on 1/15, and in top-one ranks on 0/15.
At the three smallest primary radii, flow-minus-AD proposal-objective median differences are 9.96e-10, 1.99e-09, 3.99e-09, all positive in all 14 common seeds. At radius .005 the signs split 7 lower/7 higher; the median is 8.39e-10 and the largest absolute difference is 4.5e-06. After the stored five matched outer steps, flow achieves a lower final objective on 13/15 completed seeds and 9/11 seeds satisfying every stored itinerary flag; on the latter fixed subset the median flow-minus-AD final-objective difference is -2.41e-07. Thus gradient/rank differences do not entail uniformly better finite proposals, and the negative result for a universal AD advantage is preserved.
\paragraph{Reproduction and unavailable extensions.}
From the paper directory, run:
\begin{verbatim}
python -X utf8 scripts/resolution_network_analysis.py
\end{verbatim}
The frozen analysis protocol, row IDs, source hashes, exact numeric CSV tables, and figure inputs are under \path{experiments/resolution_revision_20260906/}. The script reads saved arrays and raw JSON only. The current E3 candidate-response and recursive-margin outputs exist only for width 4. Widths 8 and 16 retain historical base-path resonance and matrix checks, which are distinct quantities and are not presented as wider-network recursive-margin validation. Finite vector proposals driven by the critical response, a nonlinear network certificate, and computational savings over two direct candidate evaluations are not established by these outputs.

\subsection{Frozen network population and reference construction}
\label{app:repro}
The supporting matrix audit uses the frozen two-moons split, with 64
training and 64 validation samples, noise $0.08$, standardized training
inputs and targets in $\{-1,+1\}$. Full-batch hard-ReLU squared-loss
training uses float64, $T=1$, widths $4,8,16$ (respectively $17,33,65$
parameters) and seeds $0,\ldots,19$ at every width. The reference-admissible
counts are $19/20,20/20,18/20$: 57 of the fixed 60 configurations,
evaluated at $N=200,500,1000,2000$ for 228 mesh rows.
Width-4 seed 16 has a numerically grazing one-sided speed; width-16
seeds 4 and 15 have opposing one-sided normal velocities. Zero cases
are event-admissible but fail the subsequent reference comparison.
No sensitivity-gap threshold selects this population.

Within each activation region, DOP853 integrates the state, regional
variational equation and four group-weight sensitivity columns. Each
sample--unit preactivation is projected onto the accepted step's dense
polynomial. A Bernstein same-sign test rejects root-free polynomials;
otherwise derivative roots partition the step into monotonicity cells.
Companion roots and dense evaluations cross-check candidates, ambiguities
fail closed, and continuation stops at the earliest root before restarting
under the outgoing field. The 8-, 32- and 128-point scans check this
inventory afterward; they are not independent root locators.
Coarse/medium/tight $(\mathrm{rtol},\mathrm{atol},h_{\max})$ are
$(3\cdot10^{-8},3\cdot10^{-10},.005)$,
$(10^{-9},10^{-11},.003)$ and
$(3\cdot10^{-11},3\cdot10^{-13},.002)$.

Admissibility requires unit-normal speeds $\geq10^{-5}$, event separation
$\geq2\cdot10^{-7}$, normalized non-event margin $\geq10^{-9}$, and
absolute/relative root residuals $\leq2\cdot10^{-9}$. Initial/terminal
events, near-simultaneity, opposing speeds and numerical grazing fail
closed. The normalized margin is $|s_i|/\max(1,\|n_i\|_2)$.
All three solves must agree on directed event signatures; medium/tight
endpoints, event times, regional and flow Jacobians, parameter
sensitivities and transported products are also compared.
These are floating-point checks on numerical interpolants, not an
interval proof of the exact ODE event inventory.

The 57 references contain 2,357 events. Maximum root residual and
polynomial/evaluator discrepancy are $3.33\cdot10^{-16}$ and
$1.33\cdot10^{-15}$; minimum unit-normal speed, event separation and
normalized non-event margin are $5.70\cdot10^{-4}$,
$1.18\cdot10^{-6}$ and $3.04\cdot10^{-8}$.
All signatures agree across tolerances and no inventoried root is missed
by the auxiliary scans. Across 228 GD rows no evaluated zero activation
is detected; the minimum dimensionless resonance distance is
\[
 \min_{k,i}\frac{|s_i(\theta_k)|}
 {\eta\max(|n_i^\top f(\theta_k)|,10^{-14})}
 =2.09\cdot10^{-5}.
\]
This supports numerical nonresonance of those evaluated programs.

\subsection{Correction sources and independent evidence}
\label{app:correction-sources}

\begin{table}[!htbp]
\centering\small
\caption{Sources in the frozen multi-width implementation. Reference and
coarse-GD columns describe separate computations.}
\label{tab:correction-sources}
\begin{tabular}{@{}p{.19\textwidth}p{.36\textwidth}p{.39\textwidth}@{}}
\toprule
Quantity & Reference computation & Coarse-GD correction computation\\
\midrule
Event count/order & Dense DOP853 roots; earliest root followed by restart
& All changed affine preactivations on each Euler segment, sorted by fractional crossing location\\
Event time/state & Root-localized dense state & Linear interpolation of the actual Euler segment\\
Normals, one-sided fields & Analytic affine normals; fields at reference root under adjacent masks
& Same formulas at interpolated coarse state, toggling masks in estimated order\\
Regional propagation & DOP853 variational solve between reference events
& Entire-step factors $I+\eta A(\theta_k)$; no within-step regional subintegration\\
$J_{\reg}$ & Reference regional matrices without saltation & Not used to construct $J^{\SC}$\\
$D\varphi_T$ & Reference regional matrices and reference transfers & Comparison target only\\
$J^{\SC}$ & Not supplied by reference inventory & Full coarse factor followed by all estimated transfers at that step\\
\bottomrule
\end{tabular}
\end{table}

The final driver passes initialization, weights, mesh, dataset, width and
configuration to \texttt{gd\_with\_derivatives}, but no reference events.
Six replays (seed zero at each width, $N=200,2000$) reproduce saved
$J^{\AD}$ and $J^{\SC}$ arrays exactly in the recorded environment.
Thus labeling this network correction reference-assisted reconstruction
would be inaccurate. Reference events are used in the separate
transported-product identities and exact scalar correction controls.

\paragraph{Multiple events within one step.}
For this shallow network each preactivation is affine in all parameters,
hence affine in fractional time on a straight Euler segment. With strict
endpoints it has at most one zero, exactly characterized by opposite signs.
The code processes every changed sample--unit pair, sorts
$-s_i(\theta_k)/(s_i(\theta_{k+1})-s_i(\theta_k))$, and applies all transfers
in that order. The width-4 $N=200$ replay has one two-event step; width 16
has two multi-event steps, with up to three events in a step.
Completeness holds for these straight segments, not for the true flow,
which may recross between grid times. It does not extend to general
deep-network preactivations along parameter segments.

The reference locator audits each regional dense polynomial, advances only
to its earliest root, switches the field, and restarts. Subsequent candidates
are recomputed under outgoing dynamics. Its solver step need not be below
the smallest reference event separation $1.18\times10^{-6}$.
Coarse endpoint masks alone do not supply that inventory.

\paragraph{Retained itinerary mismatches.}
All 228 historical rows pass the correction's local speed and near-simultaneity
checks. Only 165/228 stored event-identity sequences match the reference:
$31,38,46,50$ of 57 at $N=200,500,1000,2000$. All 63 mismatched rows remain
in the frozen summaries. The historical flag \texttt{correction\_covered}
does not establish the correct-itinerary hypothesis of
Theorem~\ref{thm:general-event-aware-consistency}.
The seven unmatched finest-grid cases remain in the full-cohort summary;
the separate matched-cohort comparison uses a fixed intersection across
all four meshes (Appendix~\ref{app:derivative-coverage}).

For matched seed-zero replays, maximum event-time errors at $N=200$ are
$7.77\times10^{-3}$ (width 4) and $2.59\times10^{-3}$ (width 8); at
$N=2000$ they are $5.18\times10^{-4},3.32\times10^{-4},2.25\times10^{-4}$
by width. Width 16 at $N=200$ has unequal order despite equal counts (43);
no paired localization error is assigned. A small corrected matrix error
is a controlled consistency diagnostic, not certified event recovery
or the derivative of the unmodified finite GD program.

\paragraph{Three validation claims.}
The 57 transported-product and determinant tests are algebraic
self-consistency. Cross-tolerance root, state and variational comparisons
test reference stability. Perturb-and-rerun checks in
Table~\ref{tab:derivative-coverage} validate selected responses, not full
Jacobians. Leading singular-gap directions are deliberately chosen
worst-direction checks, not typical learning evidence. The new coordinate
scan is reported separately and changes no historical coverage denominator.

\subsection{Derivative coverage and fixed-cohort matrix summaries}
\label{app:derivative-coverage}
Relative errors divide by $\max(\|\text{reference}\|,10^{-14})$,
using Euclidean vector and spectral matrix norms. State errors use
endpoints, not the path supremum. The independent perturbation coverage is:
\begin{table}[!htbp]
\centering
\caption{Independent checks retain their eligible denominators. A passed
directional test is not a full-Jacobian check; uncovered perturbations
remain in the checked count.}
\label{tab:derivative-coverage}
\begin{tabular}{lrrrr}
\toprule
Check & Eligible & Checked & Passed & Maximum relative error\\
\midrule
Discrete Rademacher JVP &228&4&4&$4.42\cdot10^{-8}$\\
Flow Rademacher JVP &57&1&1&$3.69\cdot10^{-8}$\\
Discrete singular-gap JVP &228&36&36&$1.96\cdot10^{-7}$\\
Flow singular-gap JVP &57&9&8&$7.60\cdot10^{-8}$\\
Initialization full Jacobian &57&0&0&--\\
Discrete group hypergradient &15&15&15&$5.79\cdot10^{-9}$\\
Network forward--adjoint &57&0&0&--\\
\bottomrule
\end{tabular}
\end{table}
The first two rows use eight normalized Rademacher directions and overlap
only width-4 seed 0 at $T=1$ with four meshes. The additional audit fixes
three cases per width by minimum normal speed, maximum limiting gap,
and upper-median limiting gap (ties by seed), then tests one leading
right singular-gap direction per case. This is a worst-direction
diagnostic. Discrete perturbations shrink from $10^{-6}$ to $10^{-10}$
to preserve the full training word; flow perturbations use $2\cdot10^{-6}$
and require the same directed itinerary. The positive width-8 seed-3
flow perturbation changes event order and remains uncovered.
In total, 40/228 distinct meshes and 10/57 references have independent
initialization checks attempted; nine references have covered passing
checks. No full-Jacobian or network adjoint coverage is inferred.

At $N=2000$, pooled median endpoint, AD-to-regional and corrected-to-flow
relative errors are $3.58\cdot10^{-5}$, $1.31\cdot10^{-4}$ and
$1.51\cdot10^{-4}$. Median AD-to-flow and regional-to-flow gaps are
$0.384737$ and $0.384733$; their absolute difference has median
$3.45\cdot10^{-6}$. Median per-case slopes on $N=500,1000,2000$ are
$0.998,1.001,0.997$ for the first three errors and
$3.50\cdot10^{-6}$ for AD-to-flow error.
The maximum transported-product, flow-reconstruction and
determinant/speed-ratio residuals are $2.00\cdot10^{-14}$,
$7.35\cdot10^{-15}$ and $2.67\cdot10^{-14}$.
These last checks are algebraic identities on reference quantities.

All 57 configurations and 228 rows remain in the full summaries.
The fixed intersection matching the reference itinerary at every mesh
contains 31 configurations. Slopes of the final-three-mesh
\emph{pooled median} corrected error are $1.0094$ for all 57 and
$0.9986$ for the fixed 31. These are distinct from the preceding
median-of-case-slopes statistic. Each curve uses the same IDs at every
mesh; \texttt{correction\_fixed\_cohort.json} records them.
The per-width matrix summaries are retained in
Figure~\ref{fig:width-geometry-full}.
\begin{table}[!htbp]
\centering
\caption{Fixed-width summaries at $N=2000$. Event counts and relative
matrix errors are medians over the stated reference population.}
\label{tab:width-audit}
\begin{tabular}{rrrrrrr}
\toprule
Width & Parameters & References & Events & $G_\infty$ & $e_{\rm reg}$ & $e_{\rm corr}$\\
\midrule
4 &17&19/20&23.0&.346&$1.12\cdot10^{-4}$&$1.42\cdot10^{-4}$\\
8 &33&20/20&36.0&.397&$1.29\cdot10^{-4}$&$1.27\cdot10^{-4}$\\
16&65&18/20&53.5&.413&$1.39\cdot10^{-4}$&$3.13\cdot10^{-4}$\\
\bottomrule
\end{tabular}
\end{table}

Gate sensitivity retains the original 60-case denominator. Normal-speed
thresholds $10^{-6},10^{-5},10^{-4}$ retain 57 references, and $10^{-3}$
retains 53. Separation thresholds $2\cdot10^{-8},2\cdot10^{-7},10^{-6}$
retain 57, while $2\cdot10^{-6},10^{-5}$ retain 56. Normalized
near-touch thresholds through $10^{-8}$ retain 57, and $10^{-7}$
retains 54; root-residual thresholds $10^{-10}$ through $10^{-7}$
leave the count unchanged. Conditional median limiting gaps range
from $0.376$ to $0.386$. These are deterministic screening checks,
not prevalence estimates.

\subsection{Outer-weight provenance and negative optimization result}
With $\ell_i(\theta)=\tfrac12(f_\theta(x_i)-y_i)^2$, four groups defined
by class and the sign of the first standardized coordinate have counts
$24,8,9,23$. The implemented objectives are
\begin{align}
 L_{\rm train}(\theta,\lambda)
 &=\frac1{64}\sum_{i\in\mathcal T}\lambda_{g(i)}\ell_i(\theta),
 \label{eq:implemented-group-loss}\\
 Q_{\rm disc}(\lambda)
 &=\frac1{64}\sum_{i\in\mathcal V}
           \ell_i(\Theta_{\eta,T}(\theta_0;\lambda)).
 \label{eq:implemented-outer-loss}
\end{align}
The denominator is the sample count, not the sum of weights.
Gradients are with respect to $\lambda$, with base point $(1,1,1,1)$.
Proposals are $\lambda-\epsilon g/\|g\|_2$ without projection.
Nonpositive weights reject a proposal or stop a path. Every evaluation
reruns all $N=2000$, $T=1$ inner steps from the same initialization;
there is no warm start.

The width-4 population contains 19 final references, but its outer cohort
contains 15 because earlier gates excluded seeds 5 and 18 by separation,
9 by reference validation and 12 by the flow finite-difference gate.
Seed 16 also fails the final grazing audit. This 19-to-15 reduction is
not a new application of the final gates. The 15 retained references
agree with the final pipeline in event signature and in all compared
endpoint, initialization and group-sensitivity arrays to relative error
below $8.66\cdot10^{-13}$. All 660 one-step proposals and every five-step
path were rerun with those final references.

\begin{table}[!htbp]
\centering
\caption{All 20 width-4 seeds. Calibration checks the executed
training word at one radius, not a complete smooth neighborhood.}
\label{tab:outer-provenance}
\begin{tabular}{@{}p{.21\linewidth}ccc p{.26\linewidth}@{}}
\toprule
Seeds & Reference & Outer cohort & Calibration & Restriction\\
\midrule
2, 4, 7, 10, 11, 13, 15 & yes&yes&yes&None\\
0, 1, 3, 6, 8, 17, 19 &yes&yes&no&Training word changes\\
14 &yes&yes&no&Proposal itinerary changes\\
5,18 &yes&no&--&Earlier separation gate\\
9 &yes&no&--&Earlier reference gate\\
12 &yes&no&--&Earlier flow FD gate\\
16 &no&no&--&Grazing; earlier separation gate\\
\bottomrule
\end{tabular}
\end{table}
Every outer-cohort seed passes the reexecuted four-coordinate discrete
hypergradient finite differences, with maximum relative error
$5.79\cdot10^{-9}$. Four proposals (AD, regional limit, hybrid flow,
and corrected flow) use eight primary radii from $5\cdot10^{-5}$ to
$5\cdot10^{-3}$ and three boundary probes. Fourteen of 15 seeds retain
the common continuous and discrete event identities at all primary radii;
seed 14 fails this condition. The stricter calibration has seven seeds
at $\epsilon=5\cdot10^{-5}$. Its hashes cover training masks only, not
terminal validation masks or every point between evaluated endpoints.
Thus the full provenance is 57 references across widths, 19 at width 4,
15 outer-cohort seeds, 14 common-itinerary seeds, and seven calibration
seeds; these are different denominators and tests.

For the three smallest radii, hybrid-minus-AD median loss differences
are $9.96\cdot10^{-10}$, $1.99\cdot10^{-9}$ and
$3.99\cdot10^{-9}$, positive on all 14 seeds; signs mix at larger radii.
Base hybrid/discrete gradients differ in a component sign on 2/15 seeds,
an ordered top-two choice on 1/15, and no top-one choice.
The five-step audit recomputes each derivative and uses radius $0.005$.
Eleven of 15 seeds retain a common initial itinerary along all four
complete paths. Median final loss differences relative to AD are
$1.69\cdot10^{-9}$ for the regional path,
$-2.41\cdot10^{-7}$ for hybrid flow and
$-2.52\cdot10^{-7}$ for correction. Hybrid and corrected paths each
attain lower loss on 9/11 seeds. This negative optimization result is
retained in Figure~\ref{fig:single-objective-full}; a derivative identity
does not rank nonlinear multi-step optimizers.

\subsection{Supporting scalar, parameter, and resolution checks}
\label{app:scale-experiment}
The curvature-phase control fixes $T=.8$, $t_*=.3$ and event ratios
$r\in\{1/40,1/20,1/10,1/5,1/2,1,2,5,10,20,40\}$, with
$m=1/\min(1,r)$, $c=(r-1)m$ and $\theta_0=m(1-e^{.3})$.
Eight step sizes $.04,.02,.01,.005,.002,.001,.0005,.0002$ yield 88
nonresonant one-crossing rows. The objective is $q(z)=z^2/2$ and the
scalar correction inserts the exact event ratio. Minimum speed,
dimensionless resonance distance and branch radius are $1$, $0.150005$
and $4.05\cdot10^{-5}$. Maximum endpoint/outer AD discrepancies are
$3.82\cdot10^{-14}$/$4.11\cdot10^{-13}$; corresponding finite-difference
discrepancies are $3.66\cdot10^{-9}$/$3.37\cdot10^{-8}$.

The earlier scalar radius grid contains 78 rows and six record-low
near-resonance probes, in addition to the complete phase population in
E1. The probe $N=9801$, $\rho_\eta=8.74\cdot10^{-10}$,
$\epsilon=\sqrt{\rho_\eta\eta}=2.99\cdot10^{-7}$ has response $89.53$
but absolute objective difference only $5.35\cdot10^{-5}$.
It deliberately probes resonance, not typical impact.
Within-cell response agrees with AD to $6.15\cdot10^{-13}$; the
integral-plus-jump residual is at most $1.69\cdot10^{-9}$.
An independent Euler loop checks 18 endpoints and derivatives.
The earlier width-4 seed-0 coordinate scan retains all 24 mesh--radius
comparisons and 12 perturbed reference flows. Every response is negative.
Training and terminal validation branches can change separately;
no interval-certified branch radius is claimed.

The two-parameter control uses
$F_{(u,v),\lambda}(1)=(u,v,\operatorname{ReLU}(u-\lambda)+.5v)$,
target $(1,.2,1)$, $T=.8$, $\lambda=0$ and
$(u_0,v_0)=(1-e^{.3},0)$. Its one event is at $.3$, state
$(0,.1751180039)$, with speeds $1,1.9124409981$ and
$\beta=-.9124409981$. On the same eight step sizes, maximum PyTorch,
discrete finite-difference and flow finite-difference discrepancies are
$4.87\cdot10^{-14}$, $2.59\cdot10^{-8}$ and $1.62\cdot10^{-9}$.
Continuous and independently implemented discrete forward--adjoint
pairings agree within $1.39\cdot10^{-17}$ and $1.43\cdot10^{-14}$.
These full-Jacobian/adjoint checks do not increase network coverage.

The scalar smoothing control uses
$F(u)=.2+1.8[1+\tanh(u/2)]/2$ on $[-20,20]$, with 25 logarithmic
values each of $\eta,\tau$ from $10^{-5}$ to $10^{-1}$: 625 float64
cells. The transfer error is $|J-10|$ and the truncation floor is
$2.04\cdot10^{-7}$. This fixed profile supports the scalar sufficient
resolution condition, not a universal phase boundary.

\section{Prior work and the mathematical increment}
\label{app:prior-art-comparison}
\paragraph{Primary-text locations.}
For \citet{stewart2010optimal}, we inspected the authors' June 6, 2005
preprint ANL/MCS-P1258-0605: printed pp.~4--5, Section~2, (2.3)--(2.4),
and pp.~7--8, Lemma~3.1. These are preprint pages, not an inferred offset
into the journal version, pp.~653--695, whose subscription full text was
unavailable for version comparison. The publisher abstract confirms the
crossing-sensitivity obstruction. For \citet{nurkanovic2024finite}, we
inspected the published Theorem~19 and proof, pp.~1150--1152, and
Section~4.6/Figure~9, pp.~1152--1154.

Stewart--Anitescu's crossing dynamics is
$\dot x\in(1+\alpha)-\mathrm{Sgn}(x)$, $\alpha>0$. They explicitly
calculate the flow sensitivity $\alpha/(2+\alpha)$ and persistent errors
of differentiated discretizations. Their theta-method includes implicit
and partly implicit branches; their derivatives are not all the Euler
regional limit. Lemma~3.1 explicitly obtains a nonpositive normal field
jump from one-sided Lipschitzness. The obstruction and attenuation geometry
therefore have direct precedents.

\paragraph{Convex-potential deduction and what strong convexification changes.}
Our algebraic deduction from their field is
\[
 V_\alpha(x)=|x|-(1+\alpha)x
 =2\operatorname{ReLU}(x)-(2+\alpha)x,\qquad
 -\partial V_\alpha=(1+\alpha)-\mathrm{Sgn}(x).
\]
This potential is convex, but unbounded below for $\alpha>0$, with zero
regional Hessians. It is a deduction, not a convex-ReLU theorem stated by
the prior authors. Expanding our residuals gives exactly
\begin{equation}
 \mathcal L_{\alpha+2,-2}(x)=V_\alpha(x)+\tfrac12x^2
 +\tfrac12\operatorname{ReLU}(x)^2+C_\alpha,
 \qquad C_\alpha=\tfrac12(\alpha+2)^2+2.
 \label{eq:prior-convex-embedding}
\end{equation}
The added term is $C^1$, globally 1-strongly convex, and has zero derivative
at zero. It preserves the interface, jump magnitude $2$, one-sided speeds
$(\alpha+2,\alpha)$, and saltation ratio. It changes the regional fields
from constants to $\alpha+2-x$ and $\alpha-2x$, and Hessians from $(0,0)$
to $(1,2)$. The crossing clock changes from $-x_0/(\alpha+2)$ to
$\log((\alpha+2-x_0)/(\alpha+2))$; regional propagation is no longer
the identity. The new potential is coercive with unique minimizer
$\alpha/2$. Thus the full trajectories and discrete branches are changed.
Every reciprocal ratio $R>1$ already occurs as $R=(\alpha+2)/\alpha$;
our fixed-speed family also rescales the instance. Strong convexity and
the explicit conditioning calculation are extensions of the example,
not a new inconsistency mechanism.

\paragraph{Mathematical versus application distinctions.}
Fixed-time GD refinement is explicit Euler refinement of a discontinuous
field. Calling depth $T/\eta$ optimization time changes the learning
interpretation, not that numerical limit. Fixed-program AD has different
quantifiers: it identifies the returned derivative at each finite program
without uniformly controlling shrinking cells
\citep[Section~3]{lee2020correctness}. Fixed-point differentiation genuinely
changes the limiting object: a fixed map is iterated to equilibrium, rather
than an $\eta$-dependent map evolved for time $T$
\citep[Assumption~1 and Theorem~2, pp.~3 and~5]{bolte2022iterative}.
Architectural Euler refinement can share the same numerical structure;
substantive differences require specifying fields, parameter sharing,
regularity and derivative targets. Smooth high-order ODE-net corrections
\citep[Propositions~2.2--2.3]{xu2023correcting} concern different
discretization weights. Gao et al.'s smooth gradient theorem
\citep[Proposition~2, p.~5]{gao2025global} and ReLU experiment do not
identify this optimizer sensitivity, but terminology alone supplies no
priority. Event-aware ReLU neural modeling also has a direct precedent in
\citet{brandle2026continuous}.

\subsection{Finite perturbation analysis and the mathematical increment}
Classical perturbation analysis already propagates finite changes along a
sample path, including changes of subsequent event decisions
\citep{ho1983finitepa,ho1991perturbation,ho1992concepts,wardi2017pa}.
For example, the Lindley recursion in \citet[author-hosted preprint,
pp.~3084--3085, Eqs.~(1)--(3)]{wardi2017pa} gives exactly
\[
 \delta D_k=\delta Z_k+
 \max\{I_k,\delta D_{k-1}\}-\max\{I_k,0\},
 \qquad I_k=A_k-D_{k-1}.
\]
An upstream finite perturbation therefore changes a downstream branch
when it crosses the corresponding idle-time threshold. This mechanism,
and the distinction between finite propagation and its branchwise
infinitesimal approximation, are not new here. Hybrid IPA likewise
propagates state and endogenous event-time derivatives recursively
\citep[pp.~645--647, Eqs.~(6), (7), (11)]{cassandras2010ipa}.
The ordered variational and saltation factors are classical
\citep[Sec.~7.1]{burden2016event}.

Encoding each Euler step as a clock event places the entire finite
training program within this broad perturbation-analysis framework.
Consequently, exact finite propagation alone cannot establish novelty.
The additional result concerns a changing family of these programs:
the step size and perturbation radius vanish together. We derive the
integer crossing indices from grid signs, including the smooth Euler
bias and the earlier crossings' effects on later boundaries, and prove
a uniform local endpoint expansion away from the resulting recursive
boundaries. For the contractive affine subclass, verified branch words
also support an explicit finite-grid remainder bound. Neither the
continuous-flow saltation formula nor the exact finite Lindley rule
provides these mesh-uniform estimates without this additional analysis.
This is a numerical asymptotic result within the scope of perturbation
analysis, not a new general framework for event propagation.

\begin{table}[t]
\centering
\caption{Closest prior results and the restricted mathematical increment.
Locators refer to the inspected versions; detailed access records and
the full comparison matrix accompany the evidence package.}
\label{tab:fpa-audit}\label{tab:novelty-comparison}
\small
\begin{tabular}{@{}>{\raggedright\arraybackslash}p{.13\linewidth}>{\raggedright\arraybackslash}p{.16\linewidth}>{\raggedright\arraybackslash}p{.14\linewidth}>{\raggedright\arraybackslash}p{.12\linewidth}>{\raggedright\arraybackslash}p{.15\linewidth}>{\raggedright\arraybackslash}p{.10\linewidth}@{}}
\toprule
Source & Object and locator & Already established & Limit considered & Additional proof here & Access\\
\midrule
Ho et al.; Ho--Cao & FPA, 1983; book Ch.~6, pp.~185--227 & Finite sample-path reconstruction; event-order changes & Finite perturbation of a model & Uniform refining-grid theorem not verified there & Abstract and contents only\\
Ho (1992) & GSMP and FPA, pp.~231--238 & Clock-event representation; cut-and-paste propagation & Finite or infinitesimal path changes & Bias-corrected crossing-index asymptotics & Full text\\
Wardi et al. (2017) & Lindley rule, preprint pp.~3084--3085 & Exact finite thresholds; downstream feedback & Finite queue perturbation & Uniform Euler-scale expansion & Full text\\
Cassandras et al. (2010) & Hybrid IPA, pp.~645--647 & Recursive endogenous/induced event sensitivities & Infinitesimal parameter change & Derive integer events and surviving mesh phases & Relevant pages\\
Burden et al. (2016) & Sec.~7.1, pp.~1250--1254 & Ordered saltation; linearized guards; flow selections & Directional flow derivative & Discontinuous, nonhomogeneous grid-scale response & Relevant sections\\
Zhang--Chan (2007) & Thms.~1--3; Sec.~3 & Local critical regions and finite error analysis & Queue perturbation within a basis & Recursive grid cells and cross-cell response & Full text\\
Stewart--Anitescu; FESD & Crossing example; Thm.~19 & Discretization sensitivity failure; event-aligned recovery & Fixed-grid or adapted ODE discretization & Coupled finite-response law and verified comparison & Full relevant proofs\\
\bottomrule
\end{tabular}
\end{table}

The full 1983 article, the 1985 event-domain article, and the full 1991
FPA chapter were not accessible in this audit. Their available primary
abstracts and bibliographic records support historical attribution but
do not support theorem-level exclusion claims. In particular, we make
no claim that no formulation of classical FPA can encode the recurrence.

\subsection{Finite outer learning and the resolution question}
\citet{maclaurin2015reversible} differentiate a complete finite training
run (PDF pp.~1--2, Algorithms~1--2). Their Section~4/Figure~10 (PDF p.~7)
already shows that exact hypergradients can fail to describe a broader
objective trend under unstable learning dynamics. The warning that a
derivative is local is therefore not an additional contribution here.
Our transverse-event analysis gives a quantitative mesh-dependent
response law, including strongly convex examples, without requiring
that instability mechanism.

\citet{franceschi2018bilevel} explicitly define the finite-inner objective
in Eqs.~(5)--(10) (PDF pp.~3--5), while their convergence theorems
compare increasing inner iteration counts with an inner optimum.
This differs from refining the Euler step at fixed physical horizon.
\citet{finn2017maml} update an initialization through a finite adaptation
step (Eq.~(1), Algorithms~1--2, PDF pp.~2--4).
\citet{ren2018reweight} obtain validation-guided example weights through
a one-step training surrogate and subsequent rectification/normalization
(Eqs.~(4)--(9), PDF pp.~3--4).
These sources motivate finite outer actions; they do not establish the
current event geometry for their stochastic, momentum or online algorithms.
Continuous outer parameters can be incorporated under the augmented-state
conditions in Appendix~\ref{app:resolution-synthesis}. Discrete changes
of architecture or program length require a separate model.

The contribution boundary is thus precise. Smooth Euler bias, chronological
propagation, and converting an error bound into a sign test are standard
ingredients. The main independent argument derives the selected integers
and closes their uniform endpoint expansion together, with a phase-dependent
mesh threshold. The affine theorem supplies an explicit verified
remainder. The three-radius statement and scalar phase moments are
deductions used to interpret those results, not extra novelty claims.

\FloatBarrier
\clearpage
\section{Additional distributions and supporting figures}
\label{app:additional-figures}
\begin{figure}[!htbp]
\centering\includegraphics[width=\textwidth]{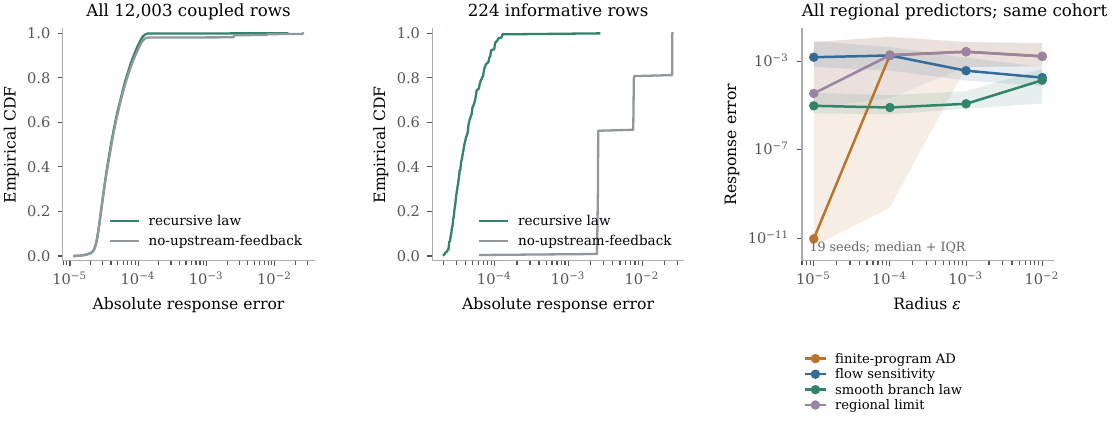}
\caption{\textbf{Full error distributions and the common network cohort.}
Left: the empirical error CDF includes all 12,003 coupled dense-grid
rows, including four failed words. Middle: the 224-row subset is defined
by a difference in the predicted signed second-event crossing count,
without consulting endpoint objectives. It retains the one failed word
where the ablation is more accurate. Right: the same 19 reference seeds
and both meshes at every radius, now also showing the regional limit.
Bands are interquartile ranges across 38 seed--mesh observations per
radius. The smooth branch predictor is evaluated even when its proposed
word fails; it has no finite-grid network certificate. Mean, median,
tail errors and all failure counts are reported in the accompanying
tables rather than replacing the cohort with successful subsets.}
\label{fig:final-supplement}\label{fig:network-response}
\end{figure}

\begin{figure}[!ht]
 \centering
 \includegraphics[width=\textwidth]{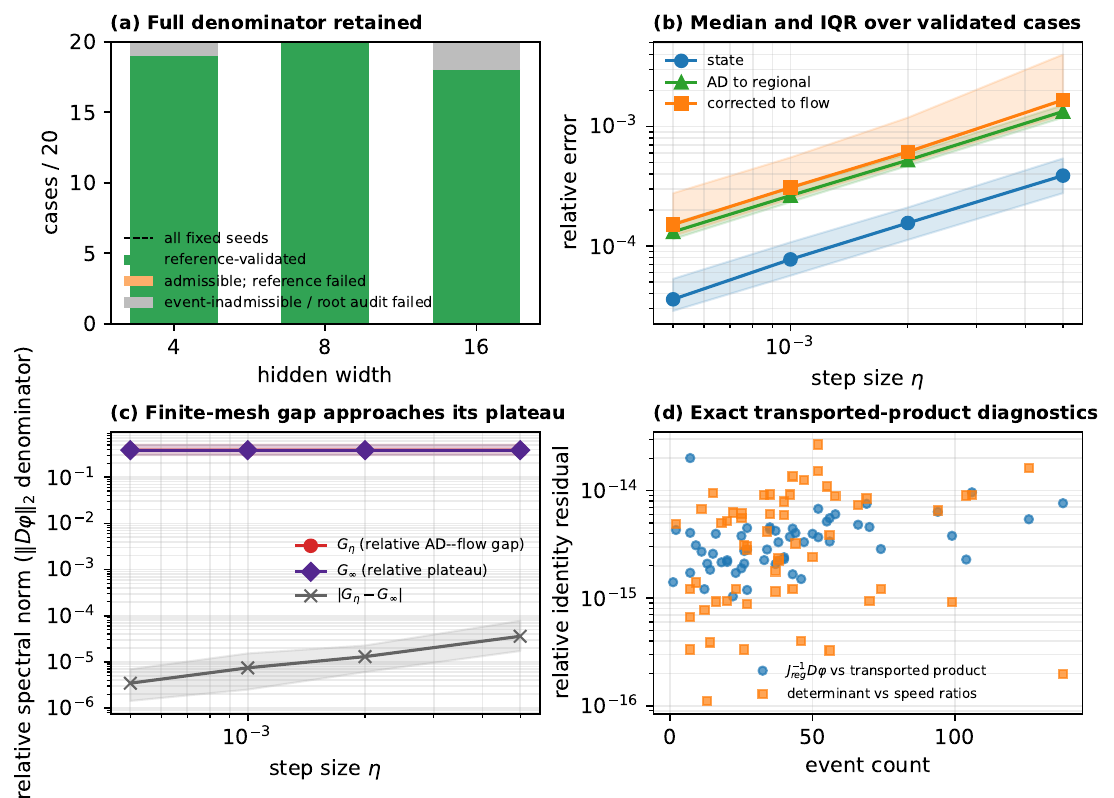}
 \caption{\textbf{Root-audited all-seed diagnostics.}
 (a) Exact counts retain all 20 prespecified seeds per width; no Wilson or other
 sampling interval is attached.  (b) Over the 57 reference-validated cases,
 state, AD-to-regional, and coarse-path corrected-to-flow errors decrease under mesh
 refinement; thin curves show individual cases and thick curves show medians
 with IQR bands.  (c) The relative finite-mesh gap \(G_\eta\) approaches the
 event-predicted plateau \(G_\infty\), while \(\Delta_G\) decreases.  (d) The
 transported-product and determinant/speed-ratio identities hold to numerical
 precision. Reference coverage is not full-Jacobian independent coverage.
 The 63/228 coarse itinerary mismatches remain in these pooled summaries;
 Appendix~\ref{app:derivative-coverage} separately gives the same 31 matched configurations
 across all meshes. This controlled grid is not a prevalence study.}
 \label{fig:width-geometry-full}
\end{figure}

\begin{figure}[tbp]
 \centering
 \includegraphics[width=\textwidth]{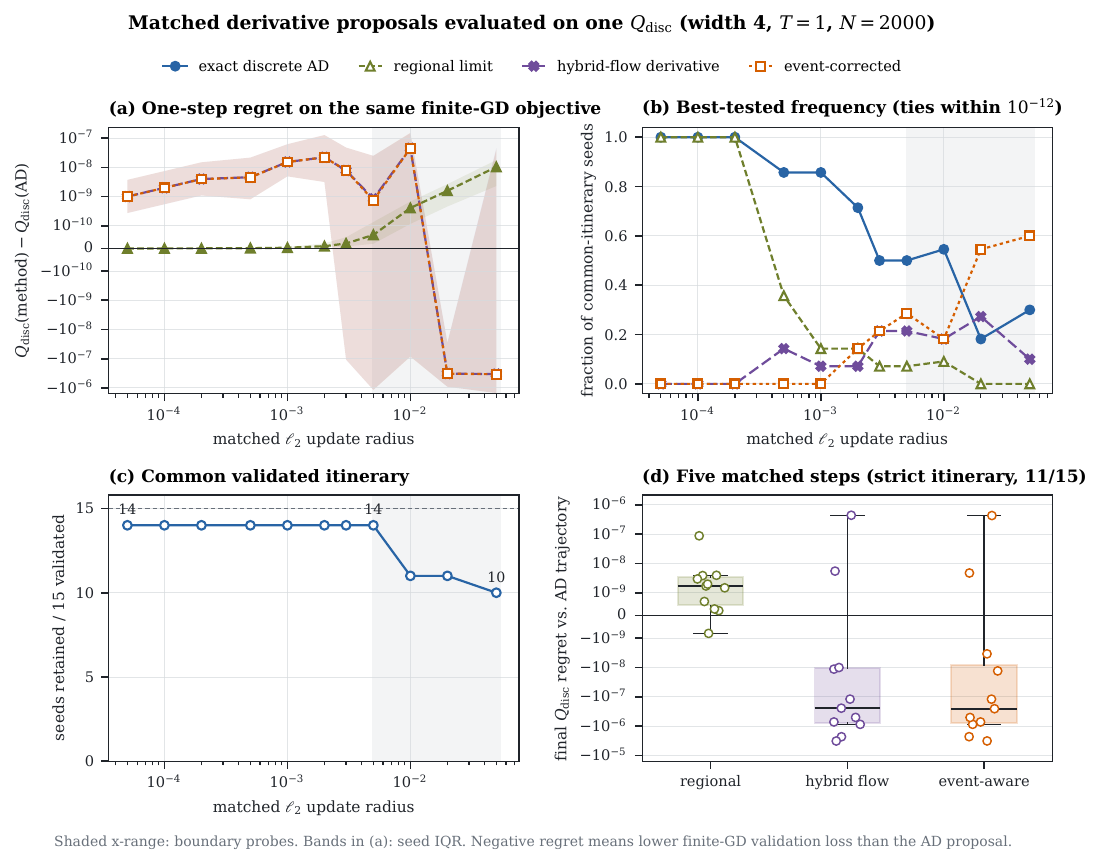}
 \caption{\textbf{All derivative proposals evaluated on the same finite-GD
 objective.}  Signed differences are measured against the exact discrete-AD
 proposal; negative values mean that a method happened to achieve a lower
 finite-radius loss.  The local exact-gradient advantage is measurable but
 small, and it does not imply a multi-step performance ordering.  Shading and
 denominators mark the strict common-itinerary subset.}
 \label{fig:single-objective-full}
\end{figure}

\end{document}